\Crefname{lemma}{Lemma}{Lemmas}
\Crefname{proposition}{Proposition}{Propositions}
\Crefname{remark}{Remark}{Remarks}
\Crefname{corollary}{Corollary}{Corollaries}
\let\oldlemma\lemma
\renewcommand{\lemma}{%
  \crefalias{theorem}{lemma}
  \oldlemma}
\let\oldproposition\proposition
\renewcommand{\proposition}{%
  \crefalias{theorem}{proposition}
  \oldproposition}
\let\oldremark\remark
\renewcommand{\remark}{%
  \crefalias{theorem}{remark}
  \oldremark}
\let\oldcorollary\corollary
\renewcommand{\corollary}{%
  \crefalias{theorem}{corollary}
  \oldcorollary}
\newcommand{\bbE}{\mathbf{E}}
\newcommand{\bbH}{\mathbb{H}}
\newcommand{\bbN}{\mathbb{N}}
\newcommand{\bbR}{\mathbb{R}}
\newcommand{\bbS}{\mathbb{S}}
\newcommand{\bp}{\mathbf{p}}
\newcommand{\bfE}{\mathbf{E}}
\newcommand{\ind}{\mathbf{1}}
\newcommand{\cN}{\mathcal{N}}
\DeclareMathOperator{\var}{Var}
\newcommand{\R}{\mathbb{R}}
\newcommand{\N}{\mathbb{N}}
\def\bi{\begin{itemize}}
	\def\ei{\end{itemize}}
\def\eps{\epsilon}
\def\be{\begin{equation}}
	\def\ee{\end{equation}}
\def\bea{\begin{eqnarray}}
	\def\eea{\end{eqnarray}}
\def\({\left(}
\def\){\right)}
\def\t{\tilde}
\def\1{\mathbf{1}}
\def \bx{\mathbf{x}}
\def \bn{\mathbf{p}}
\def \E{\mathbf{E}}
\def \K{\mathbb{K}}
\def \Clip{C_{\mathrm{Lip}}^2}
\DeclareMathOperator{\gammadist}{Gamma}
\DeclareMathOperator{\betadist}{Beta}
\DeclareMathOperator{\IG}{IG}
\DeclareMathOperator{\Ber}{Bernoulli}
\DeclareMathOperator{\Cauchy}{Cauchy_+}
\DeclareMathOperator{\Poi}{Poisson}
\DeclareMathOperator{\id}{\operatorname{ID}}
\DeclareMathOperator{\GP}{\operatorname{GP}}
\def \din{{d_{\mathrm{in}}}}
\def \dout{{d_{\mathrm{out}}}}
\DeclareMathOperator{\cvp}{\mathtt{pr}}
\DeclareMathOperator{\cvd}{\mathtt{d}}
\let\cite\citep
\DeclareMathOperator{\cvdto}{\overset{\cvd}{\to}}
\DeclareMathOperator{\cvpto}{\overset{\cvp}{\to}}
\DeclareMathOperator{\stable}{Stable}
\DeclareMathOperator{\ellipK}{EllipK}
\DeclareMathOperator{\ellipE}{EllipE}
\DeclareMathOperator{\pareto}{Pareto}
\DeclareMathOperator{\betaprime}{Betaprime}
\begin{document}

\title{Deep Neural Networks with Dependent Weights: \\Gaussian Process Mixture Limit, Heavy Tails, Sparsity and Compressibility}

\author{\name Hoil Lee \email hoil.lee@kaist.ac.kr \\
       \addr Department of Mathematical Sciences, KAIST\\
       Daejeon, South Korea
       \AND
       \name Fadhel Ayed \email fadhel.ayed@gmail.com\\
       \addr Huawei Technologies\\
       Paris, France
       \AND
       \name Paul Jung \email pjung3@fordham.edu\\
       \addr Department of Mathematics, Fordham University\\
       New York City, USA
       \AND
       \name Juho Lee \email juholee@kaist.ac.kr\\
       \addr Kim Jaechul Graduate School of AI, KAIST\\
       Daejeon, South Korea
       \AND
       \name Hongseok Yang \email hongseok.yang@kaist.ac.kr\\
       \addr School of Computing and Kim Jaechul Graduate School of AI, KAIST\\
       Discrete Mathematics Group, Institute for Basic Science (IBS)\\
       Daejeon, South Korea
       \AND
       \name Fran\c cois Caron \email caron@stats.ox.ac.uk\\
       \addr Department of Statistics, University of Oxford\\
       Oxford, United Kingdom
       }

\editor{Daniel Roy}

\maketitle

\begin{abstract}
This article studies the infinite-width limit of deep feedforward neural networks whose weights are dependent, and modelled via a mixture of Gaussian distributions. Each hidden node of the network is assigned a nonnegative random variable that controls the variance of the outgoing weights of that node. We make minimal assumptions on these per-node random variables: they are iid and their sum, in each layer, converges to some finite random variable in the infinite-width limit. Under this model, we show that each layer of the infinite-width neural network can be characterised by two simple quantities: a non-negative scalar parameter and a L\'evy measure on the positive reals. If the scalar parameters are strictly positive and the L\'evy measures are trivial at all hidden layers, then one recovers the classical Gaussian process (GP) limit, obtained with iid Gaussian weights. More interestingly, if the L\'evy measure of at least one layer is non-trivial, we obtain a mixture of Gaussian processes (MoGP) in the large-width limit. The behaviour of the neural network in this regime is very different from the GP regime. One obtains correlated outputs, with non-Gaussian distributions, possibly with heavy tails. Additionally, we show that, in this regime, the weights are compressible, and some nodes have asymptotically non-negligible contributions, therefore representing important hidden features. Many sparsity-promoting neural network models can be recast as special cases of our approach, and we discuss their infinite-width limits; we also present an asymptotic analysis of the pruning error. We illustrate some of the benefits of the MoGP regime over the GP regime in terms of representation learning and compressibility on simulated, MNIST and Fashion MNIST datasets.
\end{abstract}

\begin{keywords}
Deep neural network, infinite-width, infinite divisibility, sparsity, compressibility, Gaussian process, triangular arrays, regular variation, pruning
\end{keywords}

\newpage
\tableofcontents

\newpage
\section{Introduction}
\label{sec:intro}
Two decades after the seminal work of Radford~\citet{Neal1996}, the last few years have seen a renewed and growing interest in the analysis of (deep) neural networks, with random weights, in the infinite-width limit. When the weights are independently, identically  distributed (iid) and suitably scaled Gaussian random variables, the random function associated to this random neural network converges to a Gaussian process~\citep{Neal1996,Lee2018,Matthews2018,Yang2019,Bracale2021}. The connection to Gaussian processes has deepened our understanding of large neural networks, and motivated the use of Bayesian or kernel regression inference methods~\cite{Lee2018} or the development of kernel methods for training via gradient descent~\citep{Jacot2018} in the infinite-width limit.

While insightful, the Gaussian process connection also highlighted some of the limitations of large-width neural networks with iid Gaussian weights. As already noted by~\citet{Neal1995}, \textit{``with Gaussian priors the contributions of individual hidden units are all negligible, and consequently, these units do not represent `hidden features' that capture important aspects of the data."} %
Additionally, the different dimensions of the output of the neural network become independent Gaussian processes in the infinite-width limit, which is generally undesirable. Finally, from a Bayesian perspective, the Gaussian independence assumption on weights is often seen as unrealistic: estimated weights of deep neural networks generally exhibit dependencies and heavy tails~\cite{Martin2019,Wenzel2020,Fortuin2021a}, and thus a family of prior distributions which allow for heavy tails is desirable. To alleviate some of these limitations, iid non-Gaussian random weights have been considered, either assuming stable~\cite{Neal1996,Der2006,Favaro2020}, or more generally light-tailed/heavy-tailed distributions~\cite{Jung2021}. However, due to the same iid assumption, some of the above limitations pertain, such as independence of the dimensions of the output.

We consider a more structured distribution on the weights of the neural network. We assume that weights emanating from a given node are dependent, where the dependency is captured via a scale mixture of Gaussians. More precisely, for a weight $W_{jk}^{(l+1)}$ between node $j=1,\ldots,p_{l}$ at hidden layer $l$ and node $k=1,\ldots,p_{l+1}$ at hidden layer $l+1$, we assume that
\begin{align}
W_{jk}^{(l+1)}=\sqrt{\lambda_{p_{l},j}^{(l)}} V_{jk}^{(l+1)}\label{eq:scalemixtureW}
\end{align}
where $\lambda_{p_{l},j}^{(l)}$, for $j=1,\ldots,p_{l}$, are nonnegative iid random variance parameters, one for each node $j=1,\ldots,p_{l}$ at layer $l$, and $V_{jk}^{(l+1)}$ are iid centred Gaussian random variables with variance $\sigma_v^2>0$. The per-node variance term $\lambda_{p_l,j}^{(l)}$ induces some dependency over the weights $W_{j1}^{(l+1)},\ldots,W_{jp_{l+1}}^{(l+1)}$ connected to node $j$.
  As we describe in the next paragraph, this assumption has been considered by a number of authors for training (finite) neural networks either (i) as a prior for Bayesian learning and pruning of neural networks, or (ii) as an implicit prior where a regularised empirical risk minimiser with group-sparse penalty is interpreted as a maximum a posteriori estimator, or (iii) as a random weight initialisation scheme for stochastic gradient descent.

A number of articles considered prior distributions of the form in \cref{eq:scalemixtureW} for Bayesian learning of deep neural networks. Examples of distributions considered for the random variance $\lambda_{p_{l},j}^{(l)}$ include the Bernoulli~\cite{Jantre2021}, the horseshoe~\cite{Louizos2017,Ghosh2018,Ghosh2019,Popkes2019}, the gamma~\cite{Scardapane2017,Wang2017}, the inverse gamma~\cite{Ober2021}, or the improper Jeffrey distributions~\cite{Louizos2017}. See \cite[Section 4.1]{Fortuin2021} for a recent review. Distributions concentrated around 0, like the horseshoe, or with mass at 0, like the Bernoulli, favour more sparse-like representations, and they have often been used for compression of deep neural networks, by pruning nodes based on the posterior distributions of the per-node variance parameter. Using a similar idea but with a slightly different formulation, \citet{Adamczewski2021} considered a joint Dirichlet distribution for the square root of the variances. In \cref{sec:examples} and \cref{app:examples}, we discuss these examples in the context of our general framework.

These structured priors are also related to non-Bayesian estimators based on regularised empirical risk minimisation, where the estimator can be interpreted as a maximum a posteriori estimator under these priors. A typical example is the group lasso penalty on the weights of a neural network, used in a number of articles~\cite{Murray2015,Scardapane2017,Wang2017,Ochiai2017}, which can be interpreted as a negative log-prior on the weights when $\lambda_{p_{l},j}^{(l)}$ follows a gamma distribution.

Finally, random weights of the form in \cref{eq:scalemixtureW} have been used to initialise the weights in stochastic gradient descent algorithms, departing from the standard iid Gaussian initialisation commonly used for training deep neural networks~\cite{Glorot2010}. \citet{Blier2018} use per-node random learning rates in stochastic gradient descent. This is equivalent to using the prior in \cref{eq:scalemixtureW} at initialisation, and then learning $V_{jk}^{(l+1)}$ while keeping the variances fixed after initialisation. A similar approach was considered by \citet{Wolinski2020a}, but with deterministic variances. %

As outlined above, neural networks with random weights of the form in \cref{eq:scalemixtureW} have been extensively used in practice. A flurry of different distributions have been proposed for the random variance $\lambda^{(l)}_{p_l,j}$, and it is unclear which one we should choose in practice, and how this choice influences the %
properties of the resulting random neural network function.

The objective of this work is to analyse the infinite-width properties of feedforward neural networks with dependent weights of the form in \cref{eq:scalemixtureW}. Our work shows that the choice of the distribution of the per-node variance is crucial and can lead to fundamentally different infinite-width limits. Our main assumption is that, at each hidden layer $l$,
\begin{equation}
        \sum_{j=1}^{p_{l}} \lambda^{(l)}_{p_l,j}\overset{\cvd}{\to} \Lambda^{(l)}~~~\text{ as the width }p_l\to\infty, \label{eq:convergencedist}
\end{equation}
where $\overset{\cvd}{\to}$ refers to convergence in distribution and $\Lambda^{(l)}$ is some nonnegative random variable, which may be constant. This assumption is natural as it implies that the activations and outputs of the neural network are almost surely finite in the infinite-width limit. Note that $\sum_{j=1}^{p_{l}} \var\left(\left.W_{jk}^{(l+1)}\right|(\lambda^{(l)}_{p_l,j})_{j\geq 1}\right )= \sigma_v^2 \sum_{j=1}^{p_{l}} \lambda^{(l)}_{p_l,j}$. Hence, the assumption in \cref{eq:convergencedist} is similar to the commonly made assumption, in the iid case, that the sum of the variances of the incoming weights to a node converges to a constant in the infinite-width limit~\cite{Glorot2010,He2015}.  The iid Gaussian case indeed arises as a special case by setting $\lambda^{(l)}_{p_l,j}=\frac{c}{p_l}$ for all $j=1,\ldots,p_l$ for some $c>0$. Note that $\Lambda^{(l)}=c$ is deterministic in this particular case.

The random variable $\Lambda^{(l)}$ is necessarily infinitely divisible (see \cref{sec:background_infdiv}), and parameterised by
\begin{compactitem}
\item[(i)] a location parameter $a^{(l)}\geq 0$ and
\item[(ii)] a L\'evy measure $\rho^{(l)}$ on $(0,\infty)$.
\end{compactitem}
We prove that, if $a^{(l)}>0$ and the L\'evy measures are trivially zero (that is $\int_0^\infty \rho^{(l)}(dx)=0$) at all hidden layers $l$, then the limit is a Gaussian process (GP), as in the iid Gaussian case. As a consequence, all weights are uniformly small, with $\max_{j=1,\ldots,p_l} |W_{jk}^{(l+1)}|\to 0$ in probability. We show that this GP limit arises with a few models proposed in the literature, such as the group lasso~\cite{Scardapane2017,Wang2017} and inverse gamma~\cite{Ober2021} priors. These neural network models therefore are asymptotically equivalent to a model with iid Gaussian weights in the infinite-width limit.

More interestingly, if at least one of the L\'evy measures is non-trivial, we obtain a very different behaviour, and the limit is now a \textit{mixture of Gaussian processes} (MoGP), with a given random covariance kernel. Under the MoGP regime, we show that the following results hold in the infinite-width limit, none of which hold for the iid Gaussian case.
\begin{compactitem}
\item
$\max_{j=1,\ldots,p_l} |W_{jk}^{(l+1)}|$ converges in probability to a random variable which is not degenerately 0 (see \cref{prop:max weight}). That is, some weights remain non-negligible asymptotically. It is natural to interpret this as being connected to nodes representing important hidden features.
\item
The different dimensions of the output remain dependent (see \cref{th:singleinput,th:multipleinputs}).
\item
The outputs are non-Gaussian, and may exhibit heavy tails depending on the behaviour of the L\'evy measures at infinity  (see
\cref{prop:powerlawactivations,prop:powerlawactivations_nobias}).
\item
        Pruning the network according to the variance parameter $\lambda_{p_l,j}^{(l)}$ at some level $\epsilon>0$ sufficiently small, provides a finite, non-empty neural network with positive probability.\footnote{Note that there is always some small probability of pruning everything and leaving an empty network.} The resulting error associated to the pruned network can be related to the behaviour of the L\'evy measure near~0 (see \cref{thm: eps-pruning}).
\item
The network is compressible: when pruning the network by removing a fixed proportion $(1-\kappa)\in(0,1)$ of nodes at each layer according to the variance parameter $\lambda_{p_l,j}^{(l)}$, the difference between the outputs of the pruned and unpruned networks converges to 0 in probability in the infinite-width limit (see \cref{th:kappapruning}).
\end{compactitem}

\paragraph{Some illustrative examples.} To give a sense of the range of results covered in this article, we now briefly present some illustrative examples in the case of a simple feedforward neural network with one hidden layer, $\din$-dimensional input $\bx=(x_1,\ldots,x_\din)^T$, $2$-dimensional output $(Z^{(2)}_{1}(\bx;\bn),Z^{(2)}_{2}(\bx;\bn))^T$, no bias, $\sigma_v=1$ and rectified linear unit (ReLU) activation function. For $k=1,2$, the output is such that
\begin{align*}
Z^{(2)}_{k}(\bx;p_1) = \sum_{j=1}^{{p}_{1}} \sqrt{\lambda_{p_1,j}^{(1)}} V^{(2)}_{jk} \max\left (0,\frac{1}{\sqrt{\din}}\sum_{i=1}^{\din}V^{(1)}_{ij} x_{i}\right ).
\end{align*}
More general deep neural networks and other examples are considered later in this article. As mentioned above, it is well known (see for instance \cite{Lee2018}) that, if $\lambda_{p_1,j}=\frac{2}{p_1}$ (iid Gaussian weights, or He initialisation~\cite{He2015}), the outputs are asymptotically independent Gaussian processes with, for $k=1,2$,
\begin{equation}\label{eq:exampleGPlimit}
        \left(
        \begin{array}{cc}
                Z^{(2)}_{k}(\bx;p_1) \\
                Z^{(2)}_{k}(\bx';p_1) \\
        \end{array}
        \right)
        \overset{\cvd}{\to}
        \mathcal N \left (0, \left(
        \begin{array}{cc}
                \mathcal K^{(2)}(\bx,\bx) & \mathcal K^{(2)}(\bx,\bx') \\
                \mathcal K^{(2)}(\bx,\bx') & \mathcal K^{(2)}(\bx',\bx') \\
        \end{array} \right) \right )
        \text{ as }
        p_1\to\infty
\end{equation}
where the (deterministic) covariance kernel $\mathcal K^{(2)}(\bx,\bx')$ is defined by
\begin{align}
        \mathcal K^{(2)}(\bx,\bx')&=\frac{\|\bx\| \|\bx'\|}{\din}\times \frac{1}{\pi}\left(\sqrt{1-\rho_{\bx,\bx'}^{2}}+\left(  \frac{\pi}{2}+\arcsin\rho_{\bx,\bx'}\right)  \rho_{\bx,\bx'}\right),\label{eq:relukernelnobias}
\end{align} with correlation  $\rho_{\bx,\bx'}=\frac{\bx^T \bx'}{\|\bx\| \| \bx'\|}$, see \cref{app:relu} for background on ReLU kernels.

\begin{table}\scriptsize
  \centering
  \begin{tabular}{@{}|@{\,}c@{\,}|@{\,}c@{\,}|@{\,}c@{\,}|@{\,}c@{\,}|@{\,}c@{\,}|@{\,}c@{\,}|@{\,}c@{\,}|@{\,}c@{\,}|@{\,}c@{\,}|@{\,}c@{\,}|@{}}
     \hline
     Model & Limit & Depend. &  Distribution & Tail of & Number of&  $\max|W_{jk}^{(2)}|$ & Tail of & $(W^{(2)}_{(j)k})^2$& Compressible\\
      &  process & outputs & of $Z_k^{(2)}(\bx,p_1)$ & $Z_k^{(2)}(\bx,p_1)$ & active nodes &$\cvpto 0$  & of $W_{jk}^{(2)}$ & decrease in&\\
     \hline
     iid & GP & No & Gaussian & Expon. & $\infty$ & Yes & Expon.  & --&No\\
     (a) & GP & No & Gaussian & Expon. &$\infty$ & Yes & Expon. & --&No\\
     (b) & MoGP & Yes & Compound Poisson & Expon. &$\Poi(2)$& No & Expon.& -- &Yes\\
     (c) & MoGP & Yes & Normal-gamma & Expon. &$\infty$ & No &  Expon. & $O(e^{-cj})$&Yes \\
     (d) & MoGP & Yes & Cauchy & Power-law &$\infty$ & No & Power-law & $O(j^{-2})$& Yes\\
     \hline
   \end{tabular}
  \caption{Summary of the properties of the neural network models for four different distributions on the per-node variances.}\label{tab:main_examplesintro}
\end{table}

Consider now the following models for $p_1\geq 2$:
\begin{center}
\begin{tabular}{ll}
  (a) $\lambda^{(1)}_{p_1,j}\sim \IG\left(2, \frac{2}{p_1} \right)$ & (b) $\lambda^{(1)}_{p_1,j}\sim \Ber\left(\frac{2}{p_1} \right)$ \\
        (c) $\lambda^{(1)}_{p_1,j}\sim \betadist\left(\frac{1}{p_1},\frac{1}{2}\right)$ & (d) $\lambda^{(1)}_{p_1,j}=\pi^2\frac{U_j^2}{p^2_1}\text{ where }U_j\sim \Cauchy(0,1)$
\end{tabular}
\end{center}
where $\IG(\beta_1,\beta_2)$ denotes the inverse gamma distribution with shape $\beta_1>0$ and scale $\beta_2>0$, and $\Cauchy(0,1)$ denotes the half-Cauchy distribution with pdf
\begin{align}
        f(u)=\frac{2}{\pi(1+u^{2})} \times \ind_{\{u>0\}}.\label{eq:halfcauchypdf}
\end{align}

For all the above models (a-d), we have $\lambda^{(1)}_{p_1,j}\to 0$ in probability as $p_1\to\infty$. For (a-c), $\bfE[\sum_{j} \lambda^{(1)}_{p_1,j}]\to 2$ as $p_1\to\infty$ (the expectation is infinite for the horseshoe example (d)), as in the iid Gaussian case. However, the infinite-width limits are all very different.

Under the inverse gamma model (a), the infinite-width limit is the same as the iid Gaussian case. Under models (b-d), the infinite-width limit is a mixture of Gaussian processes, i.e. a Gaussian process with a random covariance kernel (see \cref{th:multipleinputs}). These models illustrate some of the benefits of the MoGP regime. The outputs are now dependent in the infinite-width limit. The models (b-d) are compressible in the sense that the difference between the output of the pruned network and the output of the unpruned network  vanishes in the infinite-width limit (see \cref{th:compressibility}). This is not the case for the iid Gaussian model, nor for model (a). The weights as well as the outputs can have an exponential tail (b-c) or a power-law tail (d).  The properties of the different models are summarised in~\cref{tab:main_examplesintro}. More details on these illustrative examples can be found in \cref{app:illustrative_example}.

\paragraph{Organisation of the article.} In~\cref{sec:background_infdiv}, we provide some background material on infinitely divisible random variables. The feedforward neural network model with dependent weights is described in~\cref{sec:model}, together with the asymptotic assumptions. We also show how the behaviour of the L\'evy measure around zero and infinity tunes the properties of large and small weights.  In~\cref{sec: inf-width limit single input}, we give the asymptotic distribution of the outputs for a single input $\bx$, in the case of ReLU-like activation functions. We discuss some of the implications of our result in terms of pruning and heavy tails, depending on the asymptotic properties of the model. In~\cref{sec: inf width limit multi inputs}, the result is extended to multiple inputs $\bx_1,\ldots,\bx_n$ and general activation functions. In~\cref{sec:examples}, we show how many models proposed in the literature can be formulated in our general framework, and present their limiting properties.
  In~\cref{sec:experiments}, we provide some illustrative experiments on Bayesian inference under this class of models, and in~\cref{sec:discussion}, we discuss related approaches. The Appendix contains the details of the illustrative example from above, further examples, most of the proofs, some additional background material and secondary lemmas. The code to reproduce the experiments is available at \href{https://github.com/FadhelA/mogp}{https://github.com/FadhelA/mogp}.

\paragraph{Notations.} For a random variable $X$, $X\sim F$ indicates that $X$ is distributed according to $F$. For functions (or sequences) $a(x)$ and $b(x)$, we use the notation $a(x)\overset{x\to \infty}{\sim}b(x)$ for $\lim_{x\to\infty} a(x)/b(x)=1$. The notation $\cvpto$ and $\cvdto$ respectively mean `convergence in probability' and `convergence in distribution'. We also use the notation $X\overset{\cvd}{=}Y$ to indicate that the two random variables $X$ and $Y$ have the same distribution. For two sequences of random variables $X_n,Y_n$, we write `$X_n\overset{n\to\infty}{\sim}Y_n$ in probability' for $X_n/Y_n \cvpto 1$ as $n\to\infty$.

\section{Background Material on Infinitely Divisible Random Variables}
\label{sec:background_infdiv}

A nonnegative random variable $X$ is said to have an infinitely divisible distribution  if, for every $n\in\mathbb N$, there exist iid nonnegative random variables $Y_{n1},\ldots,Y_{nn}$ such that $X\overset{\cvd}{=}\sum_{i=1}^n Y_{ni}$ \cite{Sato1999levy}.
Examples of infinitely divisible nonnegative distributions are the lognormal,
log-Cauchy, Pareto, gamma, betaprime, constant and positive stable distributions.
(\Cref{app:positive stable} discusses the last positive-stable case in detail.)
 If $X$ is nonnegative and infinitely divisible, its distribution is uniquely characterised by a scalar $a\geq 0$ and a L\'evy measure $\rho$ on $(0,\infty)$ (that is, it is a Borel measure that  satisfies $\int_{0}^\infty \min(1,x)\rho(dx)<\infty$). We write $X\sim\id(a,\rho)$. The scalar $a$ is a location parameter, and $X-a\sim\id(0,\rho)$.
The L\'evy measure $\rho$ may be
\begin{itemize}
\setlength\itemsep{0em}
\item Trivial, that is $\int_0^\infty \rho(dw)=0$; in this case, $X=a$ is constant;
\item Finite, that is $\int_0^\infty \rho(dw)<\infty$; in this case, $X\geq a$, with $\Pr(X=a)>0$;
\item Infinite, that is $\int_0^\infty \rho(dw)=\infty$; in this case, $X=a+Y$, where $Y$ is an absolutely continuous random variable on $(0,\infty)$.
\end{itemize}
The Laplace transform is given, for any $t\geq 0$, by
$\bfE[e^{-tX}]=e^{-ta-\psi(t)}$,
where
$\psi(t):=\int_0^\infty (1-e^{-wt})\rho(dw)$.
Infinitely divisible random variables are closely related to Poisson point processes. The random variable $X\sim\id(a,\rho)$ admits the representation
$X\overset{\cvd}{=}a+\sum_{i\geq 1} \xi_i$,
where $(\xi_i)_{i\geq 1}$ are the points of a Poisson process on $(0,\infty)$ with mean measure $\rho$.

\section{Statistical Model}
\label{sec:model}
\subsection{Feedforward Neural Network}

We consider a feedforward neural network (FFNN) with $L$ hidden layers and $p_l\geq 1$ nodes at each layer $l$. We let $p_0=\din$ be the input dimension and $p_{L+1}=\dout$ be the output dimension. We write $\bn=({p}_1,\ldots, {p}_{L})^T\in\bbN^{L}$. For $ l $ with $ 1\le l\le L+1 $, the pre-activation values at these nodes are given, for an input $ \bx = (x_1,\ldots,x_\din)^T \in \bbR^\din $, recursively by
\begin{equation}
\label{eq:FFNN1}
\begin{aligned}
&Z^{(1)}_{k}(\bx;\bn) = \sum_{j=1}^{\din}W^{(1)}_{jk} x_{j} + B^{(1)}_{k},\\
&Z^{(l)}_{k}(\bx;\bn) = \sum_{j=1}^{{p}_{l-1}} W^{(l)}_{jk} \phi(Z^{(l-1)}_{j}(\bx;\bn)) + B^{(l)}_{k} , \quad l \ge 2,
\end{aligned}
\end{equation}
where $\phi:\mathbb R\to\mathbb R$ is the activation function, $W_{jk}^{(l)}$ is the weight between node $j$ at layer $l-1$ and node $k$ at layer $l$, and $B_k^{(l)}$ is the bias term of node $k$ at layer $l$. The vector $(Z^{(L+1)}_{1}(\bx;\bn),\ldots,Z^{(L+1)}_{\dout}(\bx;\bn))^T$ is
the output of the neural network for the input $\bx$.

Let $\sigma_{b}\geq 0$.  We assume that, for all $k\geq 1$ and $l=1,\ldots,L+1$,
\begin{equation}
B_{k}^{(l)}\overset{\text{iid}}{\sim}\mathcal{N}\left(  0,\sigma_{b}%
^{2}\right)\label{eq:B}
\end{equation}
if $\sigma_{b}> 0$, and $B_{k}^{(l)}=0$ otherwise.

\subsection{Distribution of the Weights}
For $0 \leq l \leq L$, we assume that $W_{jk}^{(l+1)}$ follows a scale mixture
of Gaussian distributions, with
\begin{equation}
W_{jk}^{(l+1)}=\sqrt{\lambda^{(l)}_{p_{l},j}}V_{jk}^{(l+1)}\label{eq:W}
\end{equation}
where
\begin{itemize}\setlength\itemsep{0em}
\item[(a)] for each layer $l=0,\ldots, L$, $j=1,\ldots, {p_{l}}$, and
$k=1,\ldots,{p_{l+1}}$,

\begin{equation}
V_{jk}^{(l+1)}\overset{}{\sim}\mathcal{N}\left(  0,\sigma_v^{2}\right)\
\text{for some $\sigma_v > 0$;} \label{eq:V}
\end{equation}
\item[(b)]
 $\lambda^{(0)}_{p_0,j} = \frac{1}{\din}$,
and for each layer $l=1,\ldots, L$ and each node $j=1,\ldots, {p_l}$ at
layer $l$, $\lambda^{(l)}_{p_{l},j}\geq0$ is a (hidden) node variance parameter,
with
\begin{align}
\lambda^{(l)}_{p_{l},j}\overset{}{\sim}\mu_{{p_l}}^{(l)}\
\text{for some probability distribution $\mu_{{p_l}}^{(l)}$ on $[0,\infty)$;}
\label{eq:lambda}
\end{align}
\item[(c)] all the random variables $\{  \lambda^{(l)}_{p_{l},j},V_{jk}^{(l)}\}_{l,j,k}  $ are assumed to be independent among themselves,
and also with $\{B^{(l)}_k\}_{l,k}$.
\end{itemize}

\subsection{Asymptotic Assumptions and Infinite Divisibility}
As mentioned in the introduction, for any node $k$,
$$\sum_{j=1}^{p_{l}} \var\left(\left.W_{jk}^{(l+1)}\right|(\lambda^{(l)}_{p_l,j})_{j\geq 1}\right )= \sigma_v^2 \sum_{j=1}^{p_{l}} \lambda^{(l)}_{p_l,j}.$$ In order to have a.s. finite activations in the infinite-width limit, we need $\sum_{j=1}^{p_{l}} \lambda^{(l)}_{p_l,j}$ to remain a.s. finite as $p_l$ tends to infinity. To that end, recall from \cref{eq:convergencedist} that
$$
\sum_{j=1}^{p_{l}}\lambda^{(l)}_{p_l,j} \overset{\cvd}{\to}\Lambda^{(l)}
$$
as $p_l\to\infty$, for some nonnegative random variable $\Lambda^{(l)}$. This natural and general assumption, together with the iid assumption, has two consequences.
\begin{itemize}
\item[(i)] By \cite[Theorem 15.12]{Kallenberg2002}, $\Lambda^{(l)}$ is necessarily an infinitely divisible random variable, characterised by a location parameter $a^{(l)}\geq 0$ and a L\'evy measure $\rho^{(l)}$ on $(0,\infty)$. We express this by writing
\begin{align}
\sum_{j=1}^{p_{l}}\lambda^{(l)}_{p_l,j} \overset{\cvd}{\to}\id(a^{(l)},\rho^{(l)}).\label{eq:lambdaconvergenceinfdiv}
\end{align}
\item[(ii)] By \cite[Lemma 15.13]{Kallenberg2002}, we have $\lambda^{(l)}_{p_l,j}\overset{\cvp}{\to} 0$ for any $j\geq 1$.
\end{itemize}
As we will show in the next subsections, the asymptotic properties of the neural network in the infinite-width limit are fully characterised by the activation function $\phi$, the bias variance $\sigma_b^2$, the scaling factor $\sigma^2_v$ and the parameters $(a^{(l)},\rho^{(l)})$ at each hidden layer $l=1,\ldots,L$.

The following result shows that the infinite divisibility of {the sum of} per-node variances implies that the squared $ \ell^{2} $-norm of {the vector of} incoming weights of a node converges in distribution to an infinitely divisible random variable in the infinite-width limit. The proposition follows from~\cref{lemma:idlaplaceproduct} in the Appendix.

\begin{proposition}\label{prop:W2convergenceinfdiv}
Let $l \in \{1,\ldots,L\}$.
Assume \cref{eq:W,eq:V,eq:lambdaconvergenceinfdiv} hold for some $\sigma_v>0$, $a^{(l)}\geq0 $ and some L\'evy measure $\rho^{(l)}$. Then, for any $k\geq 1$,
$$
\frac{1}{\sigma_v^2}\sum_{j=1}^{p_l} (W^{(l+1)}_{jk})^2=\sum_{j=1}^{p_l} \lambda^{(l)}_{p_l,j}\left(\frac{V^{(l+1)}_{jk}}{\sigma_v}\right)^2\cvdto \id(a^{(l)},\nu^{(l)})
$$
where $\nu^{(l)}$ is a L\'evy measure on $(0,\infty)$ defined by
\begin{equation}
\nu^{(l)}(dz)=\int_0^\infty \rho^{(l)}(dz/x)\gammadist(x;1/2,1/2)dx, \label{eq:Levymeasurenu}
\end{equation}
where $\rho^{(l)}(dz/x)$ denotes the measure that assigns $\rho^{(l)}((a/x,b/x))$ to each interval $(a,b) \subseteq \R$.

\end{proposition}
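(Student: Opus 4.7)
The plan is to work at the level of Laplace transforms and reduce the claim to the lemma \cref{lemma:idlaplaceproduct} advertised in the appendix. Write $X_j := (V^{(l+1)}_{jk}/\sigma_v)^2$; these are iid $\gammadist(1/2,1/2) = \chi^2_1$ random variables, independent of the family $\{\lambda^{(l)}_{p_l,j}\}_j$, and have Laplace transform $\bfE[e^{-sX_j}] = (1+2s)^{-1/2}$. The quantity of interest is $S_{p_l} := \sum_{j=1}^{p_l}\lambda^{(l)}_{p_l,j}X_j$. By Lévy's continuity theorem it suffices to show that, for every $t \ge 0$,
\begin{equation*}
\bfE\bigl[e^{-tS_{p_l}}\bigr] \;\longrightarrow\; \exp\!\left(-t\,a^{(l)} - \int_0^\infty (1-e^{-tz})\,\nu^{(l)}(dz)\right)
\end{equation*}
as $p_l \to \infty$, which is the Laplace transform of $\id(a^{(l)},\nu^{(l)})$.

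The first step is to condition on $(\lambda^{(l)}_{p_l,j})_j$ and use independence of the $X_j$'s, giving
\begin{equation*}
\bfE\bigl[e^{-tS_{p_l}}\bigr] \;=\; \bfE\!\left[\prod_{j=1}^{p_l} (1+2t\lambda^{(l)}_{p_l,j})^{-1/2}\right] \;=\; \bfE\!\left[\prod_{j=1}^{p_l} f_t(\lambda^{(l)}_{p_l,j})\right]
\end{equation*}
with $f_t(\lambda) := (1+2t\lambda)^{-1/2}$. Since $f_t(0)=1$, $f_t$ is smooth, bounded in $[0,1]$, and satisfies $1-f_t(\lambda) \le t\lambda$ near zero, this product falls squarely within the hypotheses of \cref{lemma:idlaplaceproduct}. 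Applied to the iid triangular array $\lambda^{(l)}_{p_l,j}$ whose row-sum converges in distribution to $\id(a^{(l)},\rho^{(l)})$ by \cref{eq:lambdaconvergenceinfdiv}, the lemma should yield
\begin{equation*}
\bfE\!\left[\prod_{j=1}^{p_l} f_t(\lambda^{(l)}_{p_l,j})\right] \;\longrightarrow\; \exp\!\left(-t\,a^{(l)} - \int_0^\infty \bigl(1 - (1+2ts)^{-1/2}\bigr)\,\rho^{(l)}(ds)\right),
\end{equation*}
the linear-in-$a^{(l)}$ term appearing because $f_t'(0) = -t$.

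The final step is to match this limiting Laplace exponent with the one built from $\nu^{(l)}$. Using the Laplace transform of $X \sim \chi^2_1$ we have $1-(1+2ts)^{-1/2} = \int_0^\infty (1-e^{-tsx})\,\gammadist(x;1/2,1/2)\,dx$, and substituting back and applying Tonelli (everything is nonnegative) gives
\begin{equation*}
\int_0^\infty \!\bigl(1-(1+2ts)^{-1/2}\bigr)\rho^{(l)}(ds) \;=\; \int_0^\infty\!\!\int_0^\infty (1-e^{-tsx})\,\gammadist(x;1/2,1/2)\,dx\,\rho^{(l)}(ds) \;=\; \int_0^\infty (1-e^{-tz})\,\nu^{(l)}(dz),
\end{equation*}
where the final identification uses the change of variable $z=sx$ together with the defining formula \cref{eq:Levymeasurenu}. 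Verifying en route that $\nu^{(l)}$ is indeed a Lévy measure (i.e.\ $\int\min(1,z)\,\nu^{(l)}(dz)<\infty$) is immediate from $\bfE[\min(1,sX)] \le \min(1,s)\,\bfE[1\vee X]$ applied under $\rho^{(l)}(ds)$.

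I expect the main obstacle to be the invocation of \cref{lemma:idlaplaceproduct}: one must check that $f_t$ satisfies the hypotheses required there (typically $f_t$ continuous, $f_t(0)=1$, and an $O(\lambda)$ control on $1-f_t$ near $0$ so that the location part $a^{(l)}$ contributes linearly, together with dominated-convergence control on the tail contributions). Once the lemma is granted, the rest is a clean Fubini identification; if the lemma were unavailable, the backup would be to prove the product-Laplace convergence directly via the accompanying-law technique for infinitely divisible laws, splitting each $\lambda^{(l)}_{p_l,j}$ into its small-jump and large-jump parts and using the convergence of the corresponding Lévy--Khintchine triplets—but that route merely re-derives the lemma.
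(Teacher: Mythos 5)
Your proposal is correct and follows essentially the same route as the paper: the paper's proof is a one-line application of \cref{lemma:idlaplaceproduct} with $Y_j=(V^{(l+1)}_{jk}/\sigma_v)^2\sim\gammadist(1/2,1/2)$ and $\bfE[Y_1]=1$, and the Laplace-transform computation you carry out (conditioning on the $\lambda$'s, passing to the limit via the triangular-array continuity lemma with a domination bound, then identifying the exponent by Tonelli and the substitution $z=sx$) is precisely how that corollary is proved in \cref{app:limit thms}. The only discrepancy is cosmetic: the cited corollary is stated directly for $\sum_j X_{pj}Y_j$ with iid nonnegative $Y_j$ of finite mean, rather than in the functional form $\bfE[\prod_j f_t(\lambda_{p_l,j})]$ you attribute to it, so the hypothesis to verify is simply $\bfE[Y_1]<\infty$ and the conclusion is the proposition verbatim.
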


\begin{remark}
In the iid Gaussian case where $\lambda^{(l)}_{p_l,j}=\frac{c}{p_l}$ for some $c$, the sum of variances $\sum_{j=1}^{p_{l}}\lambda^{(l)}_{p_l,j}=c$ is constant and $ \frac{1}{\sigma_v^2} \sum_{j=1}^{p_{l}}(W^{(l+1)}_{jk})^2 =\frac{c}{p_l}\sum_{j=1}^{p_{l}}(V^{(l+1)}_{jk}/\sigma_v)^2 \cvpto c$, where the convergence is
by the law of large numbers.
\end{remark}

\subsection{Properties of the Largest Weights in the Infinite-Width Limit}
\label{sec:powerlawweights}

We discuss here some general structural properties of the FFNN in the infinite-width limit, depending on the parameters $a^{(l)}$ and $\rho^{(l)}$. In particular, we answer the following question: In which cases are the largest variances/weights of the FFNN asymptotically non-negligible?

We interpret a layer $l$ to capture important features in the infinite-width limit if some of the per-node variances remain asymptotically non-negligible as $p_l\to\infty$. The following proposition, which follows from \cite[Theorem 15.29]{Kallenberg2002}, shows that this arises whenever $\rho^{(l)}$ is a non-trivial L\'evy measure.

\begin{proposition}[Necessary and sufficient conditions for uniform convergence to 0]\label{prop:max weight}
Let $l \in \{1,\ldots,L\}$.
The following are equivalent:
\begin{itemize}
\setlength\itemsep{0em}
\item[i)] $\rho^{(l)}$ is trivial;
\item[ii)] $\max_{j} \lambda_{p_l,j}^{(l)}\overset{\cvp}{\to} 0$;
\item[iii)] for every $k\geq 1$, $\max_{j} |W_{jk}^{(l+1)}|\overset{\cvp}{\to} 0$.
\end{itemize}
\end{proposition}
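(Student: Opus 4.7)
The plan is to reduce both (ii) and (iii) to a single general fact about iid triangular arrays of nonnegative random variables whose row sums converge to an infinitely divisible law, and then apply that fact twice.

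The general fact I would invoke is the following: if $\{X_{p,j}\}_{j=1,\ldots,p}$ is a row-wise iid triangular array of nonnegative random variables with $\sum_{j=1}^p X_{p,j}\cvdto \id(a,\rho)$, then $\max_{1\le j\le p} X_{p,j}\cvpto 0$ if and only if $\rho$ is trivial. This is essentially the content of the cited Kallenberg Theorem~15.29, and follows quickly from two observations: first, the triangular-array characterisation of infinitely divisible limits forces $p\Pr(X_{p,1}>\epsilon)\to \rho((\epsilon,\infty))$ at every continuity point $\epsilon>0$ of $\rho$; second, by iid, $\Pr(\max_j X_{p,j}>\epsilon)=1-(1-\Pr(X_{p,1}>\epsilon))^p$, which tends to $0$ iff $p\Pr(X_{p,1}>\epsilon)\to 0$. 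So the max vanishes in probability iff $\rho((\epsilon,\infty))=0$ for every $\epsilon>0$, i.e.\ iff $\rho$ is trivial.

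Applying this fact directly to the iid array $\{\lambda^{(l)}_{p_l,j}\}_{j=1,\ldots,p_l}$, whose row sums converge to $\id(a^{(l)},\rho^{(l)})$ by \cref{eq:lambdaconvergenceinfdiv}, immediately yields (i)$\Leftrightarrow$(ii). For (i)$\Leftrightarrow$(iii), I fix $k$ and note that $\{(W^{(l+1)}_{jk})^2/\sigma_v^2\}_{j=1,\ldots,p_l}$ is also a row-wise iid array, and by \cref{prop:W2convergenceinfdiv} its row sums converge to $\id(a^{(l)},\nu^{(l)})$. The general fact then gives $\max_j (W^{(l+1)}_{jk})^2/\sigma_v^2\cvpto 0$ iff $\nu^{(l)}$ is trivial, which is the same as $\max_j |W^{(l+1)}_{jk}|\cvpto 0$ iff $\nu^{(l)}$ is trivial.

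The only remaining step, and the main (mild) obstacle, is to check that $\nu^{(l)}$ is trivial iff $\rho^{(l)}$ is. From the representation $\nu^{(l)}(dz)=\int_0^\infty \rho^{(l)}(dz/x)\gammadist(x;1/2,1/2)\,dx$ in \cref{eq:Levymeasurenu}, triviality of $\rho^{(l)}$ clearly forces triviality of $\nu^{(l)}$. For the converse, if $\rho^{(l)}$ is non-trivial, pick $\epsilon_0>0$ with $\rho^{(l)}((\epsilon_0,\infty))>0$; then for every $\epsilon>0$ and every $x>\epsilon/\epsilon_0$ we have $\rho^{(l)}((\epsilon/x,\infty))\geq \rho^{(l)}((\epsilon_0,\infty))>0$, and integrating against the (strictly positive) gamma density on $(\epsilon/\epsilon_0,\infty)$ yields $\nu^{(l)}((\epsilon,\infty))>0$. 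This closes the loop and, combined with the two applications of the general fact, gives the three-way equivalence.
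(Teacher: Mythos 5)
Your proof is correct and follows essentially the same route as the paper, which simply invokes Kallenberg's triangular-array theory (Theorem 15.29) for both the variances and, via \cref{prop:W2convergenceinfdiv}, the squared weights; your elementary derivation of the key fact from $p\Pr(X_{p,1}>\epsilon)\to\overline\rho(\epsilon)$ together with the iid product formula is a valid unpacking of that citation. You also correctly supply the one detail the paper leaves implicit, namely that $\nu^{(l)}$ is trivial if and only if $\rho^{(l)}$ is.
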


The next proposition goes a bit further and describes the asymptotic distribution of the extreme weights.
For a L\'evy measure $\rho$ on $(0,\infty)$, define the tail L\'evy measure
\[
\overline\rho(x):=\int_{(x,\infty)} \rho(dw)\text{ for all }x>0.
\]
For all $u>0$, let $\overline\rho^{-1}(u):=\inf\{x > 0 : \overline\rho(x) < u\}$ denote the generalised inverse of $\overline\rho$, called the inverse tail L\'evy intensity of $\rho$. Note that both $\overline\rho$ and $\overline\rho^{-1}$  are non-increasing functions, and are both equal to zero if $\rho$ is trivial. The following proposition is a direct corollary of~\cref{prop:extremesID} in the Appendix and of~\cref{prop:W2convergenceinfdiv} in the main text.

\begin{proposition}[Extremes of the variances and weights]\label{prop: extremes}
Consider $l \in \{1,\ldots,L\}$, and let $\lambda^{(l)}_{p_l,(1)}\geq \lambda^{(l)}_{p_l,(2)}\geq \ldots$ be the order statistics of the per-node variances. Then, for any $k\geq 1$, as $p_l\to\infty$,
\begin{align*}
        & \lambda^{(l)}_{p_l,(k)}\overset{\cvp}{\to} 0 \quad \text{if }\rho^{(l)}\text{ is trivial};
        &
        & \lambda^{(l)}_{p_l,(k)}\overset{\cvd}{\to} (\overline\rho^{(l)})^{-1}(G_k) \quad \text{otherwise},
\end{align*}
where $G_k\sim\gammadist(k,1)$. Here $(\overline\rho^{(l)})^{-1}(G_k)$ is a nonnegative random variable, non-degenerate at 0 if the L\'evy measure is non-trivial. Additionally, let $W^{(l+1)}_{(1),m}\geq W^{(l+1)}_{(2),m}\geq \ldots$ be the order statistics of the incoming weights of node $m$ at layer $l+1$. Similarly, we have
\begin{align*}
        & (W^{(l+1)}_{(k),m})^2\overset{\cvp}{\to} 0 \ \ \text{if }\rho^{(l)} \text{(hence }\nu^{(l)})\text{ is trivial};
        & \quad
        & (W^{(l+1)}_{(k),m})^2\overset{\cvd}{\to} \sigma_v^2\times(\overline\nu^{(l)})^{-1}(G_k) \ \ \text{otherwise},
\end{align*}
where $(\overline\nu^{(l)})^{-1}$ is the inverse tail L\'evy intensity of the measure $\nu^{(l)}$ defined in~\cref{eq:Levymeasurenu}.
\end{proposition}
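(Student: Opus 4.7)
The proof is a direct two-step application of \cref{prop:extremesID} from the Appendix, which characterises the joint distributional limit of the order statistics of an iid triangular array whose row sum converges to an infinitely divisible law. The overall plan is to apply it once to the variance array and once to the array of squared scaled weights built from the variances and the Gaussians $V^{(l+1)}_{jm}$.

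\textbf{Step 1 (variances).} The iid structure of $\{\lambda^{(l)}_{p_l,j}\}_{j=1}^{p_l}$ together with assumption~\cref{eq:lambdaconvergenceinfdiv} are exactly the hypotheses of \cref{prop:extremesID}, which yields
\begin{equation*}
\lambda^{(l)}_{p_l,(k)} \overset{\cvd}{\to} (\overline\rho^{(l)})^{-1}(G_k), \qquad G_k\sim\gammadist(k,1),
\end{equation*}
when $\rho^{(l)}$ is non-trivial. The underlying picture is that $\{\lambda^{(l)}_{p_l,j}\}_j$ converges as a point process on $(0,\infty)$ to a Poisson point process with mean measure $\rho^{(l)}$, and the $k$-th largest point of such a process is obtained by applying $(\overline\rho^{(l)})^{-1}$ to the $k$-th arrival time of a unit-rate Poisson process, which has a $\gammadist(k,1)$ distribution. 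When $\rho^{(l)}$ is trivial, \cref{prop:max weight} gives $\max_j\lambda^{(l)}_{p_l,j}\overset{\cvp}{\to} 0$, and since $0\leq\lambda^{(l)}_{p_l,(k)}\leq\lambda^{(l)}_{p_l,(1)}=\max_j\lambda^{(l)}_{p_l,j}$, every fixed-rank order statistic also converges to $0$ in probability.

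\textbf{Step 2 (weights).} Fix the receiving node $m$ at layer $l+1$ and set $Y^{(l)}_{p_l,j}:=\lambda^{(l)}_{p_l,j}\,(V^{(l+1)}_{jm}/\sigma_v)^2=(W^{(l+1)}_{jm})^2/\sigma_v^2$. Because the $\lambda^{(l)}_{p_l,j}$ are iid in $j$ and the $V^{(l+1)}_{jm}$ are iid Gaussians independent of the variances, the row $\{Y^{(l)}_{p_l,j}\}_{j=1}^{p_l}$ is again iid. By \cref{prop:W2convergenceinfdiv} its row sum converges in distribution to $\id(a^{(l)},\nu^{(l)})$, so a second application of \cref{prop:extremesID} gives $Y^{(l)}_{p_l,(k)}\overset{\cvd}{\to} (\overline\nu^{(l)})^{-1}(G_k)$. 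Since squaring is order-preserving on nonnegative reals, $(W^{(l+1)}_{(k),m})^2=\sigma_v^2\,Y^{(l)}_{p_l,(k)}$, and multiplying by $\sigma_v^2$ gives the claimed weight limit. The trivial case is handled identically via \cref{prop:max weight} applied to $\{Y^{(l)}_{p_l,j}\}$, noting from~\cref{eq:Levymeasurenu} that $\nu^{(l)}$ is trivial iff $\rho^{(l)}$ is.

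\textbf{Main obstacle.} Because \cref{prop:extremesID} is taken as given, there is no genuine analytic difficulty; the work is bookkeeping. The only points worth checking explicitly are (i) that iidness of the $Y^{(l)}_{p_l,j}$ is inherited from the independence structure of the $\lambda$'s and $V$'s, and (ii) that $(\overline\rho^{(l)})^{-1}(G_k)$ is non-degenerate at $0$ when $\rho^{(l)}$ is non-trivial, which follows from $\overline\rho^{(l)}(0+)=\rho^{(l)}((0,\infty))>0$ and the fact that $G_k$ has a density on $(0,\infty)$, so $\Pr((\overline\rho^{(l)})^{-1}(G_k)>0)=\Pr(G_k<\rho^{(l)}((0,\infty)))>0$.
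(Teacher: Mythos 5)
Your proof is correct and follows exactly the route the paper intends: the paper states the result as a direct corollary of \cref{prop:extremesID} applied to the variance array and, via \cref{prop:W2convergenceinfdiv}, to the array of squared weights. Your additional bookkeeping (iidness of the squared-weight array, the trivial case via \cref{prop:max weight}, and non-degeneracy of $(\overline\rho^{(l)})^{-1}(G_k)$) fills in precisely the details the paper leaves implicit.
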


What about the properties of small weights? One answer is given in \cref{app:properties of small weights}.

\subsection{Compressibility of the Neural Network}\label{sec:compression}

About a decade ago, \citet{gribonval2012compressible} established a connection between heavy tails and compressibility in the compressed sensing literature. Recently, a series of works  \citep{arora2018stronger,suzuki2019compression,kuhn2021robustness,suzuki2018spectral}
have shown that the compressibility of a neural network is related to how well the network generalises, both from a theoretical and an empirical point of view.
These two lines of works were brought together by \citet{shin2021compressing,barsbey2021heavy}, who proposed theoretical frameworks to establish a direct connection among the heavy tail index of the distribution of the weights of a neural network, the compressibility of the network and its generalisation properties.
In the setting of our model, these studies on compressibility can be extended from the heavy-tailed case to the much larger class of models for which there is a non-trivial L\'evy measure of the limiting infinitely divisible random variable of \cref{eq:convergencedist}.

Let $v_{p,(1)}\ge v_{p,(2)}\ge \cdots\ge v_{p,(p)}$ be the coordinates, reordered by size, of ${{\bf v}(p)} =(v_{p,1},\ldots, v_{p,p})$. Motivated by similar notions in \cite{gribonval2012compressible}, we say that a sequence $({\bf v}(p))_p$ is $\ell^2$-compressible as $p\to\infty$ if for any $\kappa\in(0,1)$,
\begin{align}\label{def:compressible}
        \lim_{p \to \infty}\frac{\sum_{j=1}^p\ind_{\{v_{p,j} \leq v_{p,(\lfloor\kappa p\rfloor)}\}}v^2_{p,j}}{\sum_{j=1}^p v^2_{p,j}} = 0.
\end{align}
If $v_{p,i}\neq v_{p,j}$ when $i\neq j$, the indicator $\ind_{\{v_{p,j} \leq v_{p,(\lfloor\kappa p\rfloor)}\}}$ retains the top $\kappa$-proportion of $v^2_{p,j}$ values.

To place this in the context of neural networks, we will say that layer $l$ is compressible if \cref{def:compressible} holds in probability for the $\ell^2$-norms of vectors of outgoing weights, for all nodes $j$ in layer $l$. More precisely,
for any $j=1,\ldots,p_l$, denote the squared norm of the outgoing weights of the hidden node $j$ at layer $l$
by
$$ T^{(l+1)}_j:=\|W^{(l+1)}_{j,:}\|^2=\sum_{k=1}^{p_{l+1}}\lambda^{(l)}_{p_l,j} (V_{j,k}^{(l+1)})^2$$ and let $T^{(l+1)}_{(1)}\geq \ldots \geq T^{(l+1)}_{(p_l)}$ denote the ordered values. Then, layer $l$ is $\ell^2$-norm-compressible
if for every $\kappa \in (0,1)$,
	\begin{align}
	\frac{\sum_{j=1}^{p_l} \ind_{\{T^{(l+1)}_j \leq T^{(l+1)}_{(\lfloor \kappa p_l \rfloor)}\}}T^{(l+1)}_{j}}{\sum_{j=1}^{p_l} T^{(l+1)}_{j}}\cvpto 0\label{eq:compressibility_w}\text{ as $p_l\to\infty$.}
\end{align}
In our model, compressible layers are easily characterised simply by the value of $a^{(l)}$ as our next result shows.
\begin{theorem}[Characterisation of compressibility]\label{th:compressibility}
	For each layer $l$ with $1 \leq l \leq L$,
	if $a^{(l)}=0$, then for all $\kappa\in(0,1)$,
	\begin{align}
		\frac
		{\sum_{j=1}^{p_l} \ind_{\{\lambda^{(l)}_{p_l,j} \leq \lambda^{(l)}_{p_l,(\lfloor \kappa p_l\rfloor)}\}}\lambda^{(l)}_{p_l,j}}
		{\sum_{j=1}^{p_l} \lambda^{(l)}_{p_l,j}}\cvpto 0
		\text{ as $p_l\to\infty$,}\label{eq:compressibility_lambda}
	\end{align}
	where $\lambda^{(l)}_{p_l,(1)} \geq \lambda^{(l)}_{p_l,(2)} \geq \ldots \geq \lambda^{(l)}_{p_l,(p_l)}$ are the ordered per-node variance terms.
	In such a case, \cref{eq:compressibility_w} holds so that layer $l$ is $\ell^2$-norm-compressible.
\end{theorem}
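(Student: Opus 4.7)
The plan is to exploit the Poisson-process representation of the infinitely divisible limit. When $a^{(l)}=0$, we have $\Lambda^{(l)} = \sum_{i\ge 1}\xi_i$ where $\{\xi_i\}$ are the atoms of a Poisson process on $(0,\infty)$ with intensity $\rho^{(l)}$; the total mass has no deterministic component and is therefore concentrated, on the event $\{\Lambda^{(l)}>0\}$, on its few largest atoms. I will first establish \cref{eq:compressibility_lambda} and then bootstrap to \cref{eq:compressibility_w}.

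The starting point for \cref{eq:compressibility_lambda} is a joint strengthening of \cref{prop: extremes}: by the point-process machinery for row-iid infinitely divisible triangular arrays (\cite[Theorems 15.14 and 15.28]{Kallenberg2002}),
\[
\bigl(\lambda^{(l)}_{p_l,(1)},\ldots,\lambda^{(l)}_{p_l,(K)},\,\textstyle\sum_{j=1}^{p_l} \lambda^{(l)}_{p_l,j}\bigr) \cvdto \bigl(\xi_{(1)},\ldots,\xi_{(K)},\,\Lambda^{(l)}\bigr)
\]
for every fixed $K\ge 1$. The marginal component is exactly \cref{prop: extremes}; the joint extension with the total sum follows from an $\epsilon$-truncation argument: the atoms above $\epsilon$ converge as a point process to the corresponding Poisson atoms, while the contribution of atoms below $\epsilon$ to the total sum is controlled by $\int_0^\epsilon x\,\rho^{(l)}(dx)$, which vanishes as $\epsilon\downarrow 0$ precisely because $a^{(l)}=0$ (so no residual drift is left over). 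Since $\Lambda^{(l)}<\infty$ almost surely, the tail $R_K := \sum_{i>K}\xi_{(i)}$ decreases to $0$. Given any $\delta,\epsilon>0$, pick $K_0$ so that $\Pr(R_{K_0}>\delta\Lambda^{(l)},\,\Lambda^{(l)}>0)<\epsilon$. The continuous mapping theorem applied to $(u_1,\ldots,u_{K_0},s)\mapsto (s-\sum_k u_k)/s$ on $\{s>0\}$ then gives, for all sufficiently large $p_l$,
\[
\Pr\!\left(\frac{\sum_{k>K_0}\lambda^{(l)}_{p_l,(k)}}{\sum_j \lambda^{(l)}_{p_l,j}} > \delta,\ \textstyle\sum_j \lambda^{(l)}_{p_l,j}>0\right) < 2\epsilon.
\]
Once $\lfloor\kappa p_l\rfloor\ge K_0+1$, the set of indices retained by the indicator $\ind\{\lambda^{(l)}_{p_l,j}\le\lambda^{(l)}_{p_l,(\lfloor\kappa p_l\rfloor)}\}$ lies entirely in ranks $>K_0$, so the numerator of the compressibility ratio is bounded by $\sum_{k>K_0}\lambda^{(l)}_{p_l,(k)}$, which gives \cref{eq:compressibility_lambda} (with the ratio set to $0$ on $\{\sum_j\lambda^{(l)}_{p_l,j}=0\}$).

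For \cref{eq:compressibility_w}, set $T^{(l+1)}_j := \lambda^{(l)}_{p_l,j}\,Y_j$ with $Y_j := \sum_{k=1}^{p_{l+1}}(V^{(l+1)}_{j,k})^2\sim\sigma_v^2\chi^2_{p_{l+1}}$ iid and independent of the $\lambda$'s. The array $\{T^{(l+1)}_j\}_j$ is row-iid, and an argument mirroring the proof of \cref{prop:W2convergenceinfdiv} (with the $\chi^2_1$ density replaced by the $\chi^2_{p_{l+1}}$ density) yields $\sum_j T^{(l+1)}_j \cvdto \id\bigl(a^{(l)}\sigma_v^2 p_{l+1},\,\widetilde\nu^{(l)}\bigr)$ with $\widetilde\nu^{(l)}(dz)=\int_0^\infty \rho^{(l)}(dz/x)\,f_{\sigma_v^2\chi^2_{p_{l+1}}}(x)\,dx$. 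Since $a^{(l)}=0$, this new limit also has zero location, so the whole argument above transfers verbatim with $\lambda^{(l)}_{p_l,j}$ replaced by $T^{(l+1)}_j$ and $\rho^{(l)}$ by $\widetilde\nu^{(l)}$, delivering \cref{eq:compressibility_w}.

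The main obstacle is promoting the marginal top-$K$ convergence of \cref{prop: extremes} to joint convergence with the total sum. This is precisely where $a^{(l)}=0$ is used: it guarantees that the $\epsilon$-truncation of small atoms does not leave a residual drift as $\epsilon\downarrow 0$, which is what couples the sum to the Poisson atoms. Once this coupling is available, the remaining steps---including the passage from the variances to the outgoing-weight norms---are essentially routine applications of the continuous mapping theorem.
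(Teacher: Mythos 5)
Your argument is correct in substance, and for \cref{eq:compressibility_lambda} it takes a genuinely different route from the paper. The paper reduces everything to a standalone triangular-array lemma (\cref{prop:compressibilityID}) whose proof never touches the Poisson representation: it first shows $\lambda^{(l)}_{p_l,(\lfloor\kappa p_l\rfloor)}\cvpto 0$ by a counting/contradiction argument from $p\Pr(\lambda_{p,1}>\epsilon)\to\overline\rho(\epsilon)<\infty$, and then bounds the numerator on the event $\{\lambda^{(l)}_{p_l,(\lfloor\kappa p_l\rfloor)}\le h_0\}$ by $\sum_j\lambda_{p,j}\ind_{\{\lambda_{p,j}\le h_0\}}$, whose expectation $p\,\bbE[\lambda_{p,1}\ind_{\{\lambda_{p,1}\le h_0\}}]\to\int_0^{h_0}x\,\rho(dx)$ is made small by shrinking $h_0$ --- this is exactly where $a^{(l)}=0$ enters --- and concludes by Markov. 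Your proof instead promotes \cref{prop: extremes} to joint convergence of the top-$K$ order statistics with the total sum and controls the residual $R_{K_0}/\Lambda$; this is heavier machinery (the joint convergence needs the asymptotic decoupling of big atoms from the small-jump sum, which you only sketch), but it buys a sharper structural picture: the mass concentrates on the finitely many largest atoms, with explicit control of the rank-$K_0$ tail, rather than just the qualitative vanishing of the ratio. Both proofs share the same unaddressed edge case when $\rho^{(l)}$ is finite and $\Pr(\Lambda^{(l)}=0)>0$, where the ratio is $0/0$ with positive limiting probability. For \cref{eq:compressibility_w} your reduction via $T^{(l+1)}_j=\lambda^{(l)}_{p_l,j}Y_j$ and \cref{lemma:idlaplaceproduct} is exactly the paper's.

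One concrete (though easily repaired) slip: the claim that once $\lfloor\kappa p_l\rfloor\ge K_0+1$ the retained indices ``lie entirely in ranks $>K_0$'' fails under ties at the threshold value, since the indicator is $\{\lambda_{p,j}\le\lambda_{p,(\lfloor\kappa p_l\rfloor)}\}$ and an index of rank $\le K_0$ is retained whenever its value equals $\lambda_{p,(\lfloor\kappa p_l\rfloor)}$. The fix is one line: such ties contribute at most $K_0\cdot\lambda^{(l)}_{p_l,(\lfloor\kappa p_l\rfloor)}$, which tends to $0$ in probability by the first part of your argument (or by \cref{prop:compressibilityID}). The paper's bound via $\ind_{\{\lambda_{p,j}\le h_0\}}$ is immune to this issue by construction.
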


\subsection{Heavy Tail and Power-Law Properties of the Variances and Weights}

A random variable $X$ has  a regularly varying tail if $\Pr(X>x)\overset{x\to\infty}{\sim} L(x) x^{-\tau}$ for some power-law exponent $\tau>0$ and some slowly varying function $L$, that is, a function satisfying $L(\gamma x)/L(x)\to 1$ as $x\to\infty$ for all $\gamma>0$. The simplest slowly varying function is the constant function $L(x)=c>0$, and in this case we say that $X$ has a power-law tail; to simplify the presentation, we restrict the presentation to this case here. The next proposition shows that, if the tail L\'evy intensity decays polynomially at infinity, then the extremes of the per-node variance parameters and of the weights have power-law tails asymptotically.

\begin{proposition}[Power law properties of the variances and weights]\label{prop:regvarweightsinf}
Assume that for some $\tau> 0$ and some constant $c >0$,
$\overline\rho^{(l)}(x)\overset{x\to\infty}{\sim}c  x^{-\tau}$.
Then, for any $k,m\geq 1$,
\begin{align}
\lim_{p_l\to\infty} \Pr(\lambda^{(l)}_{p_l,(k)}> x) &\overset{x\to\infty}{\sim} \frac{\overline\rho^{(l)}(x)^k}{k!}\overset{x\to\infty}{\sim} \frac{c ^k }{k!}x^{-k\tau}\label{eq:01} \\
\lim_{p_l\to\infty} \Pr(|W^{(l)}_{(k),m}|> x) &\overset{x\to\infty}{\sim} \frac{\overline\nu^{(l)}(x^2/\sigma_v^2)^k}{k!}\overset{x\to\infty}{\sim}\frac{\left(\frac{2^\tau\Gamma(\tau+1/2)}{\sqrt{\pi}}(\sigma_v^2)^\tau c  \right )^k }{k!}x^{-2k\tau}\label{eq:02}
\end{align}
where $\overline\nu^{(l)}$ is the tail L\'evy intensity of the measure $\nu^{(l)}$ defined in~\cref{eq:Levymeasurenu}.
\end{proposition}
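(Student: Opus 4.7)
The plan is to derive both displays from \cref{prop: extremes} together with (i) the small-value asymptotics of the $\gammadist(k,1)$ CDF, and (ii) a regular-variation computation for the tail of the mixture measure $\nu^{(l)}$.

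For the first display, \cref{prop: extremes} yields $\lambda^{(l)}_{p_l,(k)}\cvdto (\overline\rho^{(l)})^{-1}(G_k)$ with $G_k\sim\gammadist(k,1)$. The generalised-inverse identity then gives $\lim_{p_l\to\infty}\Pr(\lambda^{(l)}_{p_l,(k)}>x)=\Pr(G_k\leq \overline\rho^{(l)}(x))$ at every continuity point. Because the Gamma density behaves like $t^{k-1}/(k-1)!$ near $0$, $\Pr(G_k\le t)\sim t^k/k!$ as $t\downarrow 0$, and since $\overline\rho^{(l)}(x)\downarrow 0$ as $x\to\infty$, the first equivalence in~\eqref{eq:01} follows; substituting the assumption $\overline\rho^{(l)}(x)\sim c x^{-\tau}$ gives the second equivalence.

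For the second display, applying the very same two steps to the order statistics of the weights via the second part of \cref{prop: extremes} immediately yields the first equivalence in~\eqref{eq:02}: $\lim_{p_l\to\infty}\Pr(|W^{(l)}_{(k),m}|>x)\sim \overline\nu^{(l)}(x^2/\sigma_v^2)^k/k!$. It then remains to show $\overline\nu^{(l)}(z)\sim c\,\frac{2^\tau\Gamma(\tau+1/2)}{\sqrt{\pi}}\,z^{-\tau}$ as $z\to\infty$. From \cref{eq:Levymeasurenu}, evaluating on a half-line gives $\overline\nu^{(l)}(z)=\int_0^\infty \overline\rho^{(l)}(z/x)\,g(x)\,dx$, where $g$ is the $\gammadist(1/2,1/2)$ density. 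Pointwise $\overline\rho^{(l)}(z/x)/z^{-\tau}\to c\,x^\tau$, so a dominated-convergence argument should deliver the limiting constant $c\int_0^\infty x^\tau g(x)\,dx$, which equals $c\,\frac{2^\tau\Gamma(\tau+1/2)}{\sqrt{\pi}}$ via the standard moment formula for $|V|^{2\tau}$, $V\sim\mathcal N(0,1)$.

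The main obstacle is justifying this dominated-convergence step, which needs a uniform-in-$x$ integrable majorant of $\overline\rho^{(l)}(z/x)g(x)/z^{-\tau}$. I would split at $x=z/z_0$ for some fixed large $z_0$. On the range $x\leq z/z_0$ (where $z/x$ is large), Potter's bounds from regular-variation theory supply, for any $\delta\in(0,\tau)$, an estimate $\overline\rho^{(l)}(z/x)/z^{-\tau}\leq C_\delta(x^{\tau-\delta}+x^{\tau+\delta})$, which is integrable against $g(x)$ thanks to the exponential tail of $g$. On the complementary range (where $z/x$ is small) the L\'evy-integrability condition $\int_0^1 w\,\rho^{(l)}(dw)<\infty$ forces $\overline\rho^{(l)}(u)=O(1/u)$ as $u\downarrow 0$, so the contribution to the integral is $O\bigl(z^{\tau-1}\int_{z/z_0}^\infty x\,g(x)\,dx\bigr)$, which vanishes as $z\to\infty$ because $g$ has exponential decay. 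Combining the two pieces completes the argument.
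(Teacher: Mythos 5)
Your proposal is correct, and it splits naturally into a part that matches the paper and a part that takes a genuinely different route. For the first equivalences in \eqref{eq:01} and \eqref{eq:02}, you argue exactly as the paper does: invoke \cref{prop: extremes}/\cref{prop:extremesID} to get the limiting law $(\overline\rho^{(l)})^{-1}(G_k)$ (resp.\ $\sigma_v^2(\overline\nu^{(l)})^{-1}(G_k)$ for the squared weights) and then use $\Pr(G_k\le t)=1-e^{-t}\sum_{i=0}^{k-1}t^i/i!\sim t^k/k!$ as $t\downarrow 0$. The difference is in how you establish $\overline\nu^{(l)}(z)\sim c\,2^\tau\Gamma(\tau+1/2)\pi^{-1/2}z^{-\tau}$: the paper substitutes $u=z/x$ to rewrite $\overline\nu^{(l)}$ as a Laplace transform and applies Feller's Tauberian theorem, whereas you split the mixture integral at $x=z/z_0$ and run dominated convergence, controlling the large-argument range with Potter-type bounds and the small-argument range with the bound $\overline\rho^{(l)}(u)=O(1/u)$ as $u\downarrow0$, which indeed follows from $\int_0^1 w\,\rho^{(l)}(dw)<\infty$ via $\int_u^1 w\,\rho^{(l)}(dw)\ge u[\overline\rho^{(l)}(u)-\overline\rho^{(l)}(1)]$; the exponential decay of the $\gammadist(1/2,1/2)$ density then kills that second piece against $z^{\tau-1}$. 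Both routes give the same constant $c\,\bfE[|V|^{2\tau}]=c\,2^\tau\Gamma(\tau+1/2)/\sqrt{\pi}$. Your argument is more elementary and self-contained (no Tauberian theorem), at the cost of the explicit majorant construction; the paper's version is shorter and is stated for a general slowly varying factor $L$ in place of the constant $c$, a generality your approach also reaches since you already invoke Potter's bounds rather than the crude estimate $\overline\rho^{(l)}(u)\le 2cu^{-\tau}$ that would suffice in the pure power-law case.
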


\section{Infinite-Width Limit for a Single Input for Homogeneous Activation Functions}\label{sec: inf-width limit single input}

\begin{definition}
A function $\phi:\mathbb R\to\mathbb R$ is positive homogeneous if and only if
$\phi(\gamma x)=\gamma\phi(x)$ for all $\gamma > 0$ and $x \in \R$.
\end{definition}
The following standard activation functions are positive homogeneous:
\begin{align}
\phi(x)  &  =x & \text{[Linear]}\label{eq:activationlinear}\\
\phi(x)  &  =\max(x,0) & \text{[ReLU]}\label{eq:activationrelu}\\
\phi(x)  &  =\left\{
\begin{array}
[c]{ll}%
x & x>0\\
\beta x & x\leq0
\end{array}
\right.  & \text{[Leaky ReLU]}\label{eq:activationleakyrelu}%
\end{align}
for some $\beta>0$. Note that the $\tanh$ and sigmoid functions are not positive homogeneous. We present later in \cref{th:multipleinputs} more general assumptions that include these two cases.

\subsection{Statement of the Main Theorem}

We consider one FFNN for each $\bn\in\bbN^{L}$.
The following result is stated for positive homogeneous activation functions, which include many important particular cases, in particular the ReLU. A similar result holds under more general assumptions on $\phi$. See~\cref{th:multipleinputs}.

\begin{theorem}[Single input case, ReLU-type activation]\label{th:singleinput}
Consider the feedforward neural network model defined by \cref{eq:FFNN1,eq:B,eq:W,eq:V,eq:lambda}. Assume that the activation function $\phi$ is positive homogeneous and that, for all hidden layers $l=1,\ldots,L$, we have
$$
\sum_{j=1}^{p_l} \lambda_{p_l,j}^{(l)}\overset{\cvd}{\to} \id(a^{(l)},\rho^{(l)})\text{ as } p_l \to \infty
$$
for some $a^{(l)}\geq 0$ and some L\'evy measure $\rho^{(l)}$. Then, as $\min(p_1,\ldots,p_L)\to\infty$, for any $m\geq 1$, any layer $l=1,\ldots,L+1$ and any input $\bx\in \mathbb R^\din$,
\begin{equation}
\left (Z_1^{(l)}(\bx;\bn),\ldots,Z_m^{(l)}(\bx;\bn)\right )\overset{\cvd}{\to}\bfE\left[\underset{k=1,\ldots,m}{\bigotimes} \mathcal N\left(0,\Sigma^{(l)}(\bx)\right)\right].
\end{equation}
Here, for each $\bx\in\mathbb R^{\din}$,   $\left(\Sigma^{(1)}(\bx),\ldots,\Sigma^{(L+1)}(\bx)\right)$ is a Markov sequence of  nonnegative random variables, defined recursively via the following stochastic recurrence equations:
\begin{align}
\Sigma^{(1)}(\bx)&:=\sigma_b^2 + (\sigma_v^2 \| \bx\|^2 /\din)\label{eq:Sigmarecsingle1}\\
\Sigma^{(l)}(\bx)&:=\sigma_b^2 + \sigma_v^2 S^{(l-1)} \Sigma^{(l-1)}(\bx)~~~\text{ for }~l=2,\ldots,L+1,\label{eq:Sigmarecsingle2}
\end{align}
where $S^{(1)},\ldots,S^{(L)}$ are independent random variables which additionally do not depend on the input $\bx$. Moreover, $S^{(l)}\sim \id(c^{(l)},\eta^{(l)})$ where
$\eta^{(l)}$ is a L\'{e}vy measure on $(0,\infty)$ with tail
L\'{e}vy intensity
\[
\overline{\eta}^{(l)}(x):=\int_{\{z:\phi(z)\neq0\}}\overline\rho^{(l)}(x/\phi(z)^{2})  \varphi(z)dz
\]
when $\varphi$ denotes the pdf of the standard normal distribution, and
$c^{(l)}$ is a nonnegative scalar defined by
\[
c^{(l)}:=a^{(l)}\int_{-\infty}^{\infty}\phi(z)^{2}\varphi(z)dz.
\]
\end{theorem}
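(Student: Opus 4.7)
My plan is to induct on the layer index $l$, combining exact conditional Gaussianity at each layer with positive homogeneity of $\phi$ to derive a clean multiplicative recursion for the conditional variance, and then identifying the layer-wise factors via the infinitely divisible convergence behind \cref{prop:W2convergenceinfdiv}. The base case $l=1$ is immediate: since $\lambda^{(0)}_{p_0,j}=1/\din$, the outputs $(Z^{(1)}_k(\bx;\bn))_k$ are exactly iid $\mathcal N(0,\Sigma^{(1)}(\bx))$ for every finite $\bn$, so no limit is needed.

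For the inductive step at layer $l\geq 2$, I let $\mathcal F_{l-1}:=\sigma(\{B^{(l')},V^{(l')},\lambda^{(l')}\}_{l'\leq l-1})$. Conditional on $\mathcal F_{l-1}$, the outputs $(Z^{(l)}_k(\bx;\bn))_k$ are iid $\mathcal N(0,\Sigma^{(l)}_{\bn}(\bx))$ with the $\mathcal F_{l-1}$-measurable conditional variance $\Sigma^{(l)}_{\bn}(\bx):=\sigma_b^2+\sigma_v^2\sum_{j=1}^{p_{l-1}}\lambda^{(l-1)}_{p_{l-1},j}\,\phi(Z_j^{(l-1)}(\bx;\bn))^2$. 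Applying the same conditional-Gaussian statement one layer below lets me represent $Z_j^{(l-1)}=\sqrt{\Sigma^{(l-1)}_{\bn}(\bx)}\,N_j^{(l-1)}$ with $(N_j^{(l-1)})_j$ iid $\mathcal N(0,1)$ and independent of $\mathcal F_{l-2}$. Positive homogeneity, together with $\phi(0)=0$, turns this into the almost-sure identity $\phi(Z_j^{(l-1)})^2=\Sigma^{(l-1)}_{\bn}(\bx)\,\phi(N_j^{(l-1)})^2$, producing the key multiplicative recursion
\[
\Sigma^{(l)}_{\bn}(\bx)=\sigma_b^2+\sigma_v^2\,S^{(l-1)}_{p_{l-1}}\,\Sigma^{(l-1)}_{\bn}(\bx),\qquad S^{(l-1)}_{p_{l-1}}:=\sum_{j=1}^{p_{l-1}}\lambda^{(l-1)}_{p_{l-1},j}\,\phi(N_j^{(l-1)})^2.
\]
Since $S^{(l-1)}_{p_{l-1}}$ depends only on $(\lambda^{(l-1)},N^{(l-1)})$, both independent of $\mathcal F_{l-2}$, it is independent of $\Sigma^{(l-1)}_{\bn}(\bx)$; the same reasoning makes $(S^{(l)}_{p_l})_l$ mutually independent across layers.

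The next step is to identify the weak limit of $S^{(l-1)}_{p_{l-1}}$: this sum has exactly the structure of \cref{prop:W2convergenceinfdiv} but with the $\chi^2_1$ summands $(V_{jk}/\sigma_v)^2$ replaced by the iid nonnegative summands $Y_j=\phi(N_j^{(l-1)})^2$, which have finite mean because positive homogeneity forces $|\phi(z)|\leq C|z|$. The Laplace-transform argument behind \cref{prop:W2convergenceinfdiv} (the Appendix's lemma on infinitely divisible products) extends verbatim to arbitrary iid nonneg $Y_j$ of finite mean and yields $S^{(l-1)}_{p_{l-1}}\cvdto\id(a^{(l-1)}\bfE[Y_1],\eta^{(l-1)})$, with $a^{(l-1)}\bfE[Y_1]=c^{(l-1)}$ and the pushforward tail $\overline\eta^{(l-1)}(x)=\int\overline\rho^{(l-1)}(x/\phi(z)^2)\varphi(z)\,dz$ matching the theorem. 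Combined with the inductive $\Sigma^{(l-1)}_{\bn}(\bx)\cvdto\Sigma^{(l-1)}(\bx)$ and the independence of $S^{(l-1)}_{p_{l-1}}$ from $\Sigma^{(l-1)}_{\bn}(\bx)$ (preserved in the joint weak limit since independent marginals converge jointly to independent marginals), the continuous mapping theorem applied to $(s,\sigma^2)\mapsto\sigma_b^2+\sigma_v^2\,s\,\sigma^2$ gives $\Sigma^{(l)}_{\bn}(\bx)\cvdto\Sigma^{(l)}(\bx)$. The joint law of $(Z^{(l)}_k(\bx;\bn))_{k=1}^m$ is, by construction, the mixture $\bfE[\otimes_k\mathcal N(0,\Sigma^{(l)}_{\bn}(\bx))]$; since $\sigma^2\mapsto\int f\,d(\otimes_k\mathcal N(0,\sigma^2))$ is bounded continuous on $[0,\infty)$ for every bounded continuous $f:\R^m\to\R$, the convergence of $\Sigma^{(l)}_{\bn}$ transfers to the mixture and closes the induction.

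The main technical hurdle I anticipate is the extension of \cref{prop:W2convergenceinfdiv} from chi-squared summands to arbitrary iid nonneg summands of finite mean, together with the explicit pushforward identification of $\eta^{(l-1)}$; the only subtlety is the event $\{\phi(z)=0\}$ (which for the ReLU carries standard-normal mass $1/2$), and this simply drops out since its pushforward contributes no mass on $(0,\infty)$. The remaining work — tracking the mutual independence of $(S^{(l)}_{p_l})_l$ across layers and of $S^{(l-1)}_{p_{l-1}}$ from $\Sigma^{(l-1)}_{\bn}(\bx)$ so that joint weak convergence to independent marginals holds — is routine bookkeeping.
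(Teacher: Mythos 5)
Your proposal is correct and follows essentially the same route as the paper's proof: conditional Gaussianity plus positive homogeneity to obtain the multiplicative recursion $\Sigma^{(l)} = \sigma_b^2 + \sigma_v^2 S^{(l-1)}\Sigma^{(l-1)}$, identification of the limit of $S^{(l-1)}(p_{l-1})$ via the Laplace-transform lemma on products, and the continuous mapping theorem (the paper packages the layers into a single continuous map $\Psi$ rather than inducting, which is only a cosmetic difference). The one "technical hurdle" you anticipate is in fact already resolved in the paper: \cref{lemma:idlaplaceproduct} is stated for arbitrary iid nonnegative $Y_i$ with finite mean, so no extension beyond chi-squared summands is needed.
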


\begin{example}
Recall that $\overline\nu(x)=\int_0^\infty \overline\rho(x/z)\gammadist(z;1/2,1/2)dz$ is the tail L\'evy intensity of the L\'evy measure in \cref{eq:Levymeasurenu} associated to the sum of the squares of the weights.
For the linear activation function in \cref{eq:activationlinear}, we have
\begin{align}
c^{(l)}=a^{(l)},\qquad\overline\eta^{(l)}(x)=\overline\nu^{(l)}(x).
\end{align}
For the ReLU activation function in \cref{eq:activationrelu}, we have
\begin{align}
c^{(l)}=a^{(l)}/2,\qquad\overline\eta^{(l)}(x)=\overline\nu^{(l)}(x)/2.\label{eq:relulevy}
\end{align}
For the leaky ReLU activation function in \cref{eq:activationleakyrelu}, we have
\begin{align}
c^{(l)}=a^{(l)}(\beta^2+1)/2,\qquad\overline\eta^{(l)}(x)=(\overline\nu^{(l)}(x)+\overline\nu^{(l)}(x/\beta^2))/2.
\end{align}

\end{example}

\subsection{Proof of \cref{th:singleinput}}
Denote $\Sigma^{(1)}(\bx;\bn):=\Sigma^{(1)}(\bx):=\sigma_v^{2}\| \bx\|^{2}/\din+\sigma_{b}^{2}$ and, for each $l=2,\ldots,L+1$,%
\begin{equation}
\Sigma^{(l)}(\bx;\bn):=\sigma_b^2 + \sigma_v^2\sum_{j=1}^{p_{l-1}}\lambda_{p_{l-1},j}^{(l-1)}%
\phi\left(Z_{j}^{(l-1)}(\bx;\bn)\right)^{2} .\label{eq:Rp}
\end{equation}
We have, for all $l=2,\ldots,L+1$,
\begin{align*}
Z^{(1)}_{k}(\bx;\bn)& = \sum_{j=1}^{\din}\frac{1}{\sqrt{\din}}V^{(1)}_{jk} x_{j} + B^{(1)}_{k},
& \ \
Z^{(l)}_{k}(\bx;\bn) &= \sum_{j=1}^{{p}_{l-1}} \sqrt{\lambda^{(l-1)}_{p_{l-1},j}}V^{(l)}_{jk} \phi(Z^{(l-1)}_{j}(\bx;\bn)) + B^{(l)}_{k}.
\end{align*}
Since the $V_{jk}^{(l)}\sim\mathcal N(0,\sigma_b^2)$ are independent among themselves, and also independent from the families $\{\lambda^{(l-1)}_{j},Z^{(l-1)}_{j}(\bx;\bn)\}_j$ and $\{B^{(l)}_{k}\}_k$, we may condition on $ \Sigma^{(l)}(\bx;\bn)$ to obtain, for all $l=1,\ldots,L+1$,%
\[
(Z_{1}^{(l)}(\bx;\bp),\ldots,Z_{p_l}^{(l)}(\bx%
;\bp)) ~\bigr| ~ \Sigma^{(l)}(\bx;\bn)~\overset{\text{iid}%
}{\sim}~\mathcal{N}\left(0,\Sigma^{(l)}(\bx;\bn)\right).
\]
Hence,
\begin{equation}
\label{eq:singleinput:onelayer-representation}
(Z_{1}^{(l)}(\bx;\bp),\ldots,Z_{p_l}^{(l)}(\bx;\bp))
\overset{\cvd}{=}\sqrt{\Sigma^{(l)}(\bx;\bn)}(\varepsilon_{1}^{(l)},\ldots,\varepsilon_{p_l}^{(l)})
\end{equation}
when $\varepsilon_{k}^{(l)}\overset{\text{iid}}{\sim}\mathcal{N}(0,1)$.
By
the positive homogeneity of $\phi$, \cref{eq:Rp} can be rewritten as%
\[
\Sigma^{(l)}(\bx;\bn)\overset{\cvd}{=}\sigma_b^2 + \sigma_v^2 \Sigma^{(l-1)}(\bx;\bn) \sum_{j=1}^{p_{l-1}}\lambda_{p_{l-1},j}^{(l-1)}\phi(\varepsilon
_{j}^{(l-1)})^{2}.%
\]
Let us set
$S^{(l)}(p_l):=\sum_{j=1}^{p_l}\lambda_{p_{l},j}^{(l)}\phi(\varepsilon
_{j}^{(l)})^{2}$ for $l=1,\ldots,L$.
Then, the recurrence relation in \cref{eq:Rp} defines a continuous map $\Psi$ from $[0,\infty)^L$ to $[0,\infty)^{L+1}$ satisfying
\[
(\Sigma^{(1)}(\bx;\bp),\ldots,\Sigma^{(L+1)}(\bx;\bp)) \overset{\cvd}{=} \Psi(S^{(1)}(p_1),\ldots,S^{(L)}(p_{L})).
\]
Note that by the recurrence relation in \cref{eq:Sigmarecsingle2} and its relationship with \cref{eq:Rp},
\[
(\Sigma^{(1)}(\bx),\ldots,\Sigma^{(L+1)}(\bx)) = \Psi(S^{(1)},\ldots,S^{(L)}).
\]
Also, note that the $S^{(l)}(p_l)$ are independent and do not depend on the input $\bx$, and that the random variables
$\{\phi(\varepsilon_{j}^{(l)})^{2}\}_{j=1,\ldots p_{l}}$
are independent. As $|\phi(x)|\leq C_{\mathrm{Lip}}|x|$ for ${C_{\mathrm{Lip}}} := \max(|\phi(1)|, |\phi(-1)|)$, we have $\bfE[\phi(\varepsilon_{j}^{(l)})^{2}]<\infty$. Additionally, $\sum_{j=1}^{p_l}\lambda_{p_l,j}^{(l)}\overset{\cvd}{\rightarrow}\id(a^{(l)},\rho^{(l)})$ for each $l=1,\ldots,L$. It follows from~\cref{lemma:idlaplaceproduct}
in~\cref{app:limit thms} that
\begin{align}
        (S^{(1)}(p_1),\ldots,S^{(L)}(p_L))\overset{\cvd}{\to}\underset{l=1,\ldots,L}{\bigotimes}\id(c^{(l)},\eta^{(l)})
\end{align}
as $\min(p_1,\ldots,p_L)\to\infty$. By the continuous mapping theorem, this implies
\[
\Psi(S^{(1)}(p_1),\ldots,S^{(L)}(p_{L})) \overset{\cvd}{\to} \Psi(S^{(1)},\ldots,S^{(L)}).
\]
Thus, $(\Sigma^{(1)}(\bx;\bp),\ldots,\Sigma^{(L+1)}(\bx;\bp)) \overset{\cvd}{\to} (\Sigma^{(1)}(\bx),\ldots, \Sigma^{(L+1)}(\bx))$.
The final result now follows.

\subsection{Recursion for the Variance of the Limiting Outputs}
\label{sec:singleinputvariance}

Let $(\zeta^{(l)}_1(\bx),\ldots,\zeta^{(l)}_m(\bx))$ be the random variables that
are distributed as $\underset{k=1,\ldots,m}{\bigotimes}\mathcal N(0,\Sigma^{(l)}(\bx))$
when conditioned on $\Sigma^{(l)}$. Note that if
we do not condition on $\Sigma^{(l)}$,
these random variables $(\zeta^{(l)}_1(\bx),\ldots,\zeta^{(l)}_m(\bx))$ have the distribution
$\bfE[\underset{k=1,\ldots,m}{\bigotimes}\mathcal N(0,\Sigma^{(l)}(\bx))]$.
Thus, they are the infinite-width pre-activations/outputs in \cref{th:singleinput}.
Assume that, for any $l$, $M_1^{(l)}:=\int_0^\infty x \rho^{(l)}(dx)<\infty$, and $C_{\phi}:=\bbE [\phi(X)^2]<\infty$, where $X\sim\mathcal N(0,1)$. Then,
\begin{align}
\var(\zeta_k^{(l)}(\bx))=\bbE[\Sigma^{(l)}(\bx)]
\end{align}
where $\bbE[\Sigma^{(l)}(\bx)]$ follows the recursion
\begin{align}
\bbE[\Sigma^{(1)}(\bx)]&:=\sigma_b^2 + (\sigma_v^2 \| \bx\|^2/\din)\label{eq:Sigmarecsingle1_mean}\\
\bbE[\Sigma^{(l)}(\bx)]&:=\sigma_b^2 + \sigma_v^2 C_{\phi} (a^{(l-1)}+M_1^{(l-1)}) \bbE[\Sigma^{(l-1)}(\bx)]~~~\text{ for }~l=2,\ldots,L+1.\label{eq:Sigmarecsingle2_mean}
\end{align}

In the particular cases where $\sigma_b=0$, we obtain the simple expression
\begin{align}
\var(\zeta_k^{(l)}(\bx))=\sigma_v^2 \frac{ \| \bx\|^2}{\din}\prod_{l'=1}^{l-1} \sigma_v^2 C_{\phi}(a^{(l')}+M_1^{(l')}).\label{eq:asympvariance}
\end{align}
In order to avoid the variance of the pre-activations to explode/vanish as the depth increases, the pair $(a^{(l)},\rho^{(l)})$ should be chosen such that $\sigma_v^2 C_{\phi}(a^{(l)}+M_1^{(l)})=1$. In the ReLU case, $C_{\phi}=1/2$, and this reduces, if $\sigma_v=1$, to $a^{(l)}+M_1^{(l)}=2$. This is the configuration of the four examples presented in \cref{sec:intro}.

\subsection{Regularly Varying Properties of the Activations and Outputs}

We derive here results for the ReLU activation function. Similar results can be derived for other homogeneous activation functions. We have already shown in \cref{prop:regvarweightsinf} that if the tail L\'evy intensity decays polynomially at infinity, then the weights have power-law tails. The next proposition shows that if this is the case for all hidden layers, then the activations at each level and the outputs also have regularly varying tails, with an exponent which is twice the minimum of the exponents of the tail L\'evy intensities in the layers below that level.
\begin{proposition}\label{prop:powerlawactivations}
Let $L\geq 1$. Consider the same assumptions as in \cref{th:singleinput}. Also, for $l=1,\ldots,L$, assume that $\overline\rho^{(l)}$ has a power-law behaviour at infinity with exponent $\tau^{(l)}$, that is
\begin{align}
\overline\rho^{(l)}(x)\overset{x\to\infty}{\sim}c^{(l)}x^{-\tau^{(l)}}\label{eq:rhobarRVL}
\end{align}
for some positive constants $c^{(l)}>0$. Then, for any $l=1,\ldots,L$,
\begin{align}
\Pr(S^{(l)}>u)\overset{u\to\infty}{\sim}\overline\nu^{(l)}(u)/2 \overset{u\to\infty}{\sim} \widetilde c^{(l)} u^{-\tau^{(l)}}\label{eq:powerlawSl}
\end{align}
where
\begin{align*}
\overline\nu^{(l)}(u) & =\int_0^\infty \overline\rho^{(l)}(u/z)\gammadist(z;1/2,1/2)dz,
&
\widetilde c^{(l)} & = c^{(l)}\times 2^{(\tau^{(l)}-1)}\Gamma(\tau^{(l)}+1/2)/\sqrt{\pi}.
\end{align*}
Also, for all $k\geq 1$ and $2 \leq l \leq L+1$,
if we let $\zeta_k^{(l)}(\bx) := \varepsilon_k^{(l)}\sqrt{\Sigma^{(l)}(\bx)}$
for $\varepsilon_k^{(l)}\sim\mathcal N(0,1)$, then
\begin{align}
\Pr(\Sigma^{(l)}(\bx)>u)&\overset{u\to\infty}{\sim} u^{-\beta^{(l-1)}}L^{(l-1)}%
(u)\label{eq:corRV2}\\
\Pr((\zeta_{k}^{(l)}(\bx))^2   >u)&\overset{u\to\infty}{\sim}u^{-\beta^{(l-1)}}L^{(l-1)}(u)\times 2^{\beta^{(l-1)}}(\Gamma(\beta^{(l-1)}+1/2)/\Gamma(1/2))
\end{align}
where $\beta^{(l-1)}=\min(\tau^{(1)},\ldots,\tau^{(l-1)})$ and $L^{(l-1)}$ are some slowly varying functions.
\end{proposition}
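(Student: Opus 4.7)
\textbf{Proof plan for \cref{prop:powerlawactivations}.}

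The plan is to first propagate the regular variation assumption on $\overline\rho^{(l)}$ to the L\'evy measures $\nu^{(l)}$ and $\eta^{(l)}$, then to the distribution of $S^{(l)}$, and finally to push it through the recursion \cref{eq:Sigmarecsingle2} for $\Sigma^{(l)}(\bx)$. First, I would establish \cref{eq:powerlawSl}. Using the representation \cref{eq:Levymeasurenu} and writing $\overline\rho^{(l)}(u/z) = c^{(l)}(u/z)^{-\tau^{(l)}}(1+o(1))$ uniformly on compacts, a dominated convergence argument (with the dominating function justified by $\overline\rho^{(l)}$ being non-increasing and regularly varying) yields
\[
\overline\nu^{(l)}(u) \overset{u\to\infty}{\sim} c^{(l)} u^{-\tau^{(l)}} \int_0^\infty z^{\tau^{(l)}} \gammadist(z;1/2,1/2)\,dz = 2\widetilde c^{(l)} u^{-\tau^{(l)}},
\]
where the gamma integral evaluates to $2^{\tau^{(l)}}\Gamma(\tau^{(l)}+1/2)/\sqrt{\pi}$. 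Since for ReLU, $\overline\eta^{(l)}(u) = \overline\nu^{(l)}(u)/2$ by \cref{eq:relulevy}, $\overline\eta^{(l)}$ is regularly varying with index $-\tau^{(l)}$ and tail constant $\widetilde c^{(l)}$. A classical result on the tails of infinitely divisible laws with regularly varying L\'evy tail (Embrechts--Goldie--Veraverbeke, or \cite[Thm.~25.3]{Sato1999levy}) then gives $\Pr(S^{(l)} > u) \sim \overline\eta^{(l)}(u) \sim \widetilde c^{(l)} u^{-\tau^{(l)}}$.

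Next I would prove \cref{eq:corRV2} by induction on $l$. The base case $l=2$ is immediate: $\Sigma^{(1)}(\bx)$ is a positive constant, so from $\Sigma^{(2)}(\bx) = \sigma_b^2 + \sigma_v^2 S^{(1)}\Sigma^{(1)}(\bx)$ and step one we get $\Pr(\Sigma^{(2)}(\bx) > u) \sim \widetilde c^{(1)}(\sigma_v^2\Sigma^{(1)}(\bx))^{\tau^{(1)}} u^{-\tau^{(1)}}$, which matches \cref{eq:corRV2} with $\beta^{(1)} = \tau^{(1)}$ and $L^{(1)}$ constant. For the inductive step, write
\[
\Sigma^{(l)}(\bx) = \sigma_b^2 + \sigma_v^2\, S^{(l-1)}\, \Sigma^{(l-1)}(\bx),
\]
a product of two independent nonnegative random variables, $S^{(l-1)}$ with tail index $\tau^{(l-1)}$ and $\Sigma^{(l-1)}(\bx)$ with tail index $\beta^{(l-2)} = \min(\tau^{(1)},\ldots,\tau^{(l-2)})$ by the induction hypothesis. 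I would then invoke a product result for regularly varying random variables: if the indices are strictly different, Breiman's lemma applies and gives the tail of the product as the tail of the heavier-tailed factor times the $\beta$-th moment of the other (finite by choice of $\beta = \min$, up to a finiteness check $\bbE[(S^{(l-1)})^{\beta^{(l-2)}+\epsilon}]<\infty$ etc.); if the indices tie, one uses the symmetric Embrechts--Goldie product formula which introduces an additional logarithmic slowly varying factor. In either case, the product has a regularly varying tail of index $-\min(\tau^{(l-1)}, \beta^{(l-2)}) = -\beta^{(l-1)}$, with some slowly varying $L^{(l-1)}$, completing the induction.

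Finally, for the second statement about $\zeta_k^{(l)}(\bx)$, I would write $(\zeta_k^{(l)}(\bx))^2 = (\varepsilon_k^{(l)})^2 \Sigma^{(l)}(\bx)$ with $\varepsilon_k^{(l)}\sim\mathcal N(0,1)$ independent of $\Sigma^{(l)}(\bx)$. Since $(\varepsilon_k^{(l)})^2$ has all moments finite, Breiman's lemma applies directly and yields
\[
\Pr((\zeta_k^{(l)}(\bx))^2 > u) \overset{u\to\infty}{\sim} \bbE[(\varepsilon_k^{(l)})^{2\beta^{(l-1)}}]\cdot \Pr(\Sigma^{(l)}(\bx) > u),
\]
and the Gaussian moment evaluates to $2^{\beta^{(l-1)}}\Gamma(\beta^{(l-1)}+1/2)/\Gamma(1/2)$, matching the stated formula.

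The main obstacle I expect is the product step in the induction when $\tau^{(l-1)} = \beta^{(l-2)}$. Breiman does not apply since the relevant moment is infinite, and one must invoke the more refined convolution-type result for products of regularly varying random variables of equal index, which produces a new slowly varying correction (typically logarithmic) that must be absorbed into $L^{(l-1)}$. A careful book-keeping of these slowly varying corrections layer by layer, together with verifying the moment conditions needed to apply Breiman in the non-tied cases, is the technical heart of the proof.
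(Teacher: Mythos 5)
Your plan is correct and follows essentially the same route as the paper: regular variation of $\overline\nu^{(l)}$ (the paper uses a Tauberian theorem where you use dominated convergence), transfer to $\Pr(S^{(l)}>u)$ via the tail-equivalence result for infinitely divisible laws with regularly varying L\'evy tail, induction over layers via product lemmas for regularly varying random variables, and Breiman's lemma for the final Gaussian factor. The case distinction you flag as the technical heart (tied versus untied tail indices in the product step) is dispatched in the paper by a single lemma of Jessen and Mikosch covering both cases simultaneously, which suffices because the proposition only asserts the existence of \emph{some} slowly varying $L^{(l-1)}$ rather than its explicit form.
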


In general, the slowly varying functions $L^{(l-1)}$ in \cref{prop:powerlawactivations} cannot be obtained analytically. An exception is when the hidden layers have the same asymptotic distribution and there is no bias, as we show in the next proposition.

\begin{proposition}\label{prop:powerlawactivations_nobias}
Let $L\geq 1$. Consider the same assumptions as in \cref{th:singleinput}. Additionally, assume that $\sigma_b=0$, $a^{(l)}=a\geq 0$ and $\rho^{(l)}=\rho$ for all $l=1,\ldots,L$ with
$\overline\rho(x)\overset{x\to\infty}{\sim}c x^{-\tau}$
for some positive constant $c>0$ and exponent $\tau>0$. For $k\geq 1$ and $l=2,\ldots,L+1$,
let $\zeta_k^{(l)}(\bx) := \varepsilon_k^{(l)}\sqrt{\Sigma^{(l)}(\bx)}$
for $\varepsilon_k^{(l)}\sim\mathcal N(0,1)$. Then, for $l=2,\ldots,L+1$,
\begin{align*}
\Pr(\Sigma^{(l)}(\bx)>u)&\overset{u\to\infty}{\sim}\left (\frac{ \| \bx\|^2}{\din}\sigma_v^{2l} \right)^\tau \frac{\tau^{l-2}(\widetilde c)^{l-1}}{(l-2)!}u^{-\tau}\log^{l-2} u\\
\Pr((\zeta_{k}^{(l)})^2   >u)&\overset{u\to\infty}{\sim}  \Pr(\Sigma^{(l)}(\bx)>u) \times (2^{\tau}\Gamma(\tau+1/2)/\Gamma(1/2))
\end{align*}
where $\widetilde c=
c \times (2^{\tau-1}\Gamma(\tau+1/2) / \sqrt{\pi}).$

\end{proposition}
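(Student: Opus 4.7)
The plan is to reduce both claims to a tail estimate for a product of iid regularly varying random variables. Setting $\sigma_b=0$ in the recursions \cref{eq:Sigmarecsingle1,eq:Sigmarecsingle2} yields $\Sigma^{(l)}(\bx)=\sigma_v^{2}S^{(l-1)}\Sigma^{(l-1)}(\bx)$ for $l\geq 2$, and iterating produces the explicit product representation
\begin{equation*}
\Sigma^{(l)}(\bx)=\frac{\sigma_v^{2l}\|\bx\|^{2}}{\din}\prod_{k=1}^{l-1}S^{(k)},
\end{equation*}
where, by \cref{prop:powerlawactivations} applied at each layer together with the hypothesis $(a^{(l)},\rho^{(l)})=(a,\rho)$ for all hidden layers, the variables $S^{(1)},\ldots,S^{(L)}$ are iid with $\Pr(S^{(k)}>u)\sim\widetilde c\,u^{-\tau}$ as $u\to\infty$.

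The first display then reduces to the classical tail estimate for a product of $n:=l-1$ iid random variables with tail $\sim\widetilde c\,u^{-\tau}$, namely
\begin{equation*}
\Pr\Bigl(\prod_{k=1}^{n}S^{(k)}>u\Bigr)\;\sim\;\frac{\widetilde c^{\,n}\tau^{n-1}}{(n-1)!}\,u^{-\tau}(\log u)^{n-1}.
\end{equation*}
I would prove this by induction on $n$; the case $n=1$ is immediate. For the inductive step, write $P_{n-1}=\prod_{k=1}^{n-1}S^{(k)}$ and condition to get
\begin{equation*}
\Pr(P_{n-1}S^{(n)}>u)=\int_{0}^{\infty}\Pr(P_{n-1}>u/x)\,dF_{S^{(n)}}(x);
\end{equation*}
the induction hypothesis gives $\Pr(P_{n-1}>u/x)\sim g_{n-1}(u/x)$ uniformly for $x$ in any range on which $u/x\to\infty$, and splitting the integral at a threshold $M=M(u)$ with $M\to\infty$ but $u/M\to\infty$, while controlling the complementary part using the tail $\overline F_{S}$, produces the claimed $g_{n}$. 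Equivalently, one may verify the formula by Mellin transforms: $M_{S}(s)$ has a simple pole at $s=\tau+1$ with residue $-\widetilde c\,\tau$, so $M_{P_{n}}(s)=M_{S}(s)^{n}$ has an $n$-fold pole there, and Mellin inversion extracts the $u^{-\tau}(\log u)^{n-1}$ contribution from the residue $(-\log u)^{n-1}/(n-1)!$. Rescaling by the deterministic factor $A:=\sigma_v^{2l}\|\bx\|^{2}/\din$ via $\Pr(AY>u)=\Pr(Y>u/A)\sim A^{\tau}\Pr(Y>u)/(u^{-\tau}\cdot\ldots)$ and $\log(u/A)\sim\log u$ then yields the first display.

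For the second display, write $(\zeta_{k}^{(l)})^{2}=(\varepsilon_{k}^{(l)})^{2}\,\Sigma^{(l)}(\bx)$ where $(\varepsilon_{k}^{(l)})^{2}$ is independent of $\Sigma^{(l)}(\bx)$, chi-squared with one degree of freedom and hence has finite moments of all orders. Since $\Sigma^{(l)}(\bx)$ is regularly varying at infinity of index $-\tau$ (with slowly varying factor proportional to $(\log u)^{l-2}$), Breiman's theorem applies and gives
\begin{equation*}
\Pr\bigl((\varepsilon_{k}^{(l)})^{2}\Sigma^{(l)}(\bx)>u\bigr)\sim \bfE\bigl[(\varepsilon_{k}^{(l)})^{2\tau}\bigr]\cdot\Pr(\Sigma^{(l)}(\bx)>u),
\end{equation*}
and a direct computation of Gaussian moments gives $\bfE[\varepsilon^{2\tau}]=2^{\tau}\Gamma(\tau+1/2)/\Gamma(1/2)$, matching the claimed multiplicative constant. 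The main obstacle will be the inductive proof of the product tail asymptotic: because both factors $P_{n-1}$ and $S^{(n)}$ have regularly varying tails of the same index $-\tau$, neither admits a moment of order $\tau$, so Breiman's theorem is not directly available and the logarithmic correction has to be produced by a careful truncation of the convolution integral (or, equivalently, by tracking the order of the pole of $M_{P_{n}}(s)$ at $s=\tau+1$).
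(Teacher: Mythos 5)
Your proposal is correct and follows essentially the same route as the paper: the product representation $\Sigma^{(l)}(\bx)=(\sigma_v^{2l}\|\bx\|^2/\din)\prod_{k=1}^{l-1}S^{(k)}$ with iid factors satisfying $\Pr(S^{(k)}>u)\sim\widetilde c\,u^{-\tau}$, the tail asymptotic for the product, and a Breiman-type argument (the paper's \cref{lem:Jessen4.2}) for the chi-squared factor with the same Gaussian moment $2^{\tau}\Gamma(\tau+1/2)/\Gamma(1/2)$. The only difference is that the paper simply cites \cref{lem:Jessen4.1iv} (Lemma 4.1(iv) of Jessen and Mikosch) for the product-tail estimate, which is exactly the result you propose to re-derive by induction with truncation or by Mellin inversion — correctly noting that Breiman is unavailable there since both factors share the same index $\tau$.
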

Note that in this particular case, the tails of the activations have the same exponent $2\tau$, but an additional log factor is added for each additional hidden layer after the first one, and so, the tails become slightly heavier as the network gets deeper.

\subsection{Pruning of the Nodes of the Network}

Suppose that we want to prune the nodes of the neural network in order to reduce the computational cost. We consider two different strategies for node pruning, both based on the values of the per-node variances $\lambda_{p_{l},j}^{(l)}$. The first strategy, called {\it $\epsilon$-pruning}, prunes nodes such that $\lambda_{p_{l},j}^{(l)}\le \epsilon$, for some fixed threshold $\epsilon>0$. The second strategy, called {\it $\kappa$-pruning}, prunes nodes such that $\lambda_{p_{l},j}^{(l)}\le \lambda^{(l)}_{p_{l},(\lfloor \kappa p_l \rfloor )}$ where
the subscript $(\lfloor \kappa p_l \rfloor )$ denotes an order statistic:
at layer $l$, $\lambda_{p_{l},(1)}^{(l)}\geq \lambda_{p_{l},(2)}^{(l)}\geq\ldots\geq \lambda_{p_{l},(p_l)}^{(l)}$ denote the ordered values of $(\lambda^{(l)}_{p_{l},j})_{j=1,\ldots,p_l}$.
When there are no repeated values, $\kappa$-pruning is equivalent to pruning
a proportion $(1-\kappa)\in(0,1)$ of the $p_l$ nodes with lowest $\lambda_{p_{l},j}^{(l)}$ values in each layer. The pruning strategies we employ here are related to the compressibility of a network discussed in \cref{sec:compression}. This connection was noted in \citet{barsbey2021heavy} where similar pruning schemes were discussed.

We start with an error bound of the pruned network that holds for both strategies with $ \eps $ and $ \kappa $.
To this end, let $ \lambda^{*(l)}_{p_{l}}, l=1,2,\ldots, $ be nonnegative random variables. Consider the following pruned network:
\begin{equation}\label{eq: pruned network general}
	\begin{aligned}
		&Z^{*(1)}_{k}(\bx;\bp) := Z^{*(1)}_{k}(\bx) := \sum_{j=1}^{\din} \frac{1}{\sqrt{\din}} V^{(1)}_{jk} x_{j} + B^{(1)}_{k}, \\
		&Z^{*(l)}_{k}(\bx;\bp) := \sum_{j=1}^{{p}_{l-1}} \left(\sqrt{\lambda^{(l-1)}_{p_{l-1},j}} \ind_{\{\lambda^{(l-1)}_{p_{l-1},j} > \lambda^{*(l-1)}_{p_{l-1}}\}}\right) V^{(l)}_{jk} \phi(Z^{*(l-1)}_{j}(\bx;\bp)) + B^{(l)}_{k},~~l\geq 2 .
	\end{aligned}
\end{equation}
Namely, we prune a node if its node variance is less than or equal to the threshold $ \lambda^{*(l)}_{p_{l}} $. For $ \eps $-pruning, $ \lambda^{*(l)}_{p_{l}} = \eps $. In this case, we write $ Z^{*(l)}_{k}(\bx;\bp) = Z^{*(l)}_{k}(\bx;\bp,\eps) $ to emphasise the dependence of the network on $ \eps $. On the other hand, for $ \kappa $-pruning, $ \lambda^{*(l)}_{p_{l}} = \lambda^{(l)}_{p_{l},(\lfloor \kappa p_{l} \rfloor)} $. Similarly, we write $ Z^{*(l)}_{k}(\bx;\bp) = Z^{*(l)}_{k}(\bx;\bp,\kappa) $ to emphasise the dependence on $ \kappa $.

Set $N^{(l)}_{p_{l}} := \bfE[ \sum_{j=1}^{p_{l}} \lambda^{(l)}_{p_{l},j}]$.
A key assumption used throughout this subsection on pruning is:
\begin{itemize}
	\item[(UI)] For all layers $ l=1,\ldots,L $,
$$
\int_{0}^{\infty} u \rho^{(l)}(du) = M^{(l)}_{1} < \infty,
\quad
N^{(l)}_{p_l} < \infty \text{ for all $p_l$},
\quad
\text{and} \quad N^{(l)}_{p_l}\to a^{(l)}+M^{(l)}_{1}\text{ as } p_l \to \infty.$$
\end{itemize}
In our setting, the assumption (UI) is equivalent to the uniform integrability of the family $ \{ \sum_{j=1}^{p_{l}}\lambda^{(l)}_{p_{l},j} \}_{p_{l}}$
(see \cref{sec: proof pruning}).

We will also utilise the following assumptions in this subsection:
\begin{itemize}
	\item[(A1)] The activation function $ \phi $ is positive homogeneous.
	\item[(A2)] \cref{eq:lambdaconvergenceinfdiv} holds with $a^{(l)}=0$ for all hidden layers $ l=1,\ldots,L $.
	\item[(A3)] The L\'evy measures of all layers are equal, $ \rho^{(l)} = \rho $, and  $ \rho $ satisfies $ \overline\rho(u) \stackrel{u\to0}{\sim} u^{-\alpha} L(1/u) $ for some $ \alpha \in [0,1) $ and some slowly varying function $ L $. In this case, $ M_{1} := M^{(l)}_{1} $ does not depend on $ l $.
\end{itemize}
The following proposition gives a bound on the error of the above pruned network. The argument is a variant of the variance recursion given in \cref{sec:singleinputvariance}. To state the proposition, recall that ${C_{\mathrm{Lip}}} = \max(|\phi(1)|, |\phi(-1)|)$ and define
$U^{(l)} := \sup_{\bp} \bfE[( Z^{(l)}_{1}(\bx;\bp))^{2}]$.
We point out that $ U^{(l)} < \infty $ under (UI) and (A1); see \cref{lem: l2 norm preactivations} in the Appendix.

\begin{proposition}[Pruning error bound]\label{prop: error bound}
	If (A1) holds, then the $ L^{2} $-error between the pruned and unpruned networks satisfies
	\begin{align}
		&\bfE\left[\left( Z^{(l+1)}_{1}(\bx;\bp) - Z^{*(l+1)}_{1}(\bx;\bp) \right)^{2}\right]
		\label{eq:error bound} \\
		&\le \sigma_{v}^{2}{\Clip}U^{(l)} A^{(l)}_{p_{l}} + (\sigma_{v}^{2}{\Clip})^{2}N^{(l)}_{p_{l}}U^{(l-1)}A^{(l-1)}_{p_{l-1}} + \cdots + (\sigma_{v}^{2}{\Clip})^{l}N^{(l)}_{p_{l}}\cdots N^{(2)}_{p_{2}} U^{(1)}A^{(1)}_{p_{1}}, \notag
	\end{align}
	where $A^{(l)}_{p_{l}} := \bfE[ \sum_{j=1}^{p_{l}} \lambda^{(l)}_{p_{l},j} \ind_{\{ \lambda^{(l)}_{p_{l},j} \le \lambda^{*(l)}_{p_{l}} \}}]$.
\end{proposition}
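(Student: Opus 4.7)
The plan is to set up a one-step recursion on $E^{(l)} := \bfE[(Z^{(l)}_1(\bx;\bp) - Z^{*(l)}_1(\bx;\bp))^2]$ by conditioning on the layer-$l$ pre-activations and node variances, and then to unroll it using the base case $E^{(1)}=0$ (the input layer is not pruned).

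First I would write, using the network definitions,
\begin{align*}
Z^{(l+1)}_1(\bx;\bp) - Z^{*(l+1)}_1(\bx;\bp)
= \sum_{j=1}^{p_l}\sqrt{\lambda^{(l)}_{p_l,j}}\, V^{(l+1)}_{j1}\, A_j,
\quad A_j := \phi(Z^{(l)}_j) - \ind_{\{\lambda^{(l)}_{p_l,j}>\lambda^{*(l)}_{p_l}\}}\phi(Z^{*(l)}_j),
\end{align*}
noting that the bias $B^{(l+1)}_1$ cancels. Since the $V^{(l+1)}_{\cdot 1}$ are iid $\mathcal{N}(0,\sigma_v^2)$ and independent of $\{\lambda^{(l)}_{p_l,j}, Z^{(l)}_j, Z^{*(l)}_j\}_j$, conditioning on the latter family and using orthogonality gives
\begin{align*}
\bfE\bigl[(Z^{(l+1)}_1-Z^{*(l+1)}_1)^2\bigr] = \sigma_v^2\sum_{j=1}^{p_l} \bfE\bigl[\lambda^{(l)}_{p_l,j}\, A_j^2\bigr].
\end{align*}
The indicator induces the disjoint-case identity
\begin{align*}
A_j^2 = \ind_{\{\lambda^{(l)}_{p_l,j}\le\lambda^{*(l)}_{p_l}\}}\phi(Z^{(l)}_j)^2 + \ind_{\{\lambda^{(l)}_{p_l,j}>\lambda^{*(l)}_{p_l}\}}\bigl(\phi(Z^{(l)}_j)-\phi(Z^{*(l)}_j)\bigr)^2.
\end{align*}

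The analytic core is that a positive homogeneous $\phi$ is piecewise linear, with $\phi(x)=\phi(1)x$ for $x\ge 0$ and $\phi(x)=-\phi(-1)x$ for $x<0$, hence $C_{\mathrm{Lip}}$-Lipschitz with $\phi(0)=0$. This yields $A_j^2 \le \Clip\bigl[\ind_{\{\lambda^{(l)}_{p_l,j}\le\lambda^{*(l)}_{p_l}\}}(Z^{(l)}_j)^2 + \ind_{\{\lambda^{(l)}_{p_l,j}>\lambda^{*(l)}_{p_l}\}}(Z^{(l)}_j-Z^{*(l)}_j)^2\bigr]$. The key independence observation is that the pair $(\lambda^{(l)}_{p_l,j},\lambda^{*(l)}_{p_l})$ is a measurable functional of the layer-$l$ node-variance family only, which is independent of $(Z^{(l)}_j,Z^{*(l)}_j)$ (the latter depend solely on earlier-layer $\lambda$'s, $V$'s, and biases). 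Thus the expectation factorises; node-symmetry plus the definition of $U^{(l)}$ give $\bfE[(Z^{(l)}_j)^2] = \bfE[(Z^{(l)}_1)^2] \le U^{(l)}$ and $\bfE[(Z^{(l)}_j-Z^{*(l)}_j)^2] = E^{(l)}$. Combining these with $\sum_j\bfE[\lambda^{(l)}_{p_l,j}\ind_{\{\lambda^{(l)}_{p_l,j}\le\lambda^{*(l)}_{p_l}\}}] = A^{(l)}_{p_l}$ and the trivial bound $\sum_j\bfE[\lambda^{(l)}_{p_l,j}\ind_{\{\lambda^{(l)}_{p_l,j}>\lambda^{*(l)}_{p_l}\}}] \le N^{(l)}_{p_l}$ yields the one-step recursion
\begin{align*}
E^{(l+1)} \le \sigma_v^2\Clip\, U^{(l)} A^{(l)}_{p_l} + \sigma_v^2\Clip\, N^{(l)}_{p_l}\, E^{(l)}.
\end{align*}
Iterating this inequality down to the base case $E^{(1)}=0$ produces exactly the telescoped sum stated in \cref{eq:error bound}.

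The main obstacle I anticipate is the independence bookkeeping in the $\kappa$-pruning case, where $\lambda^{*(l)}_{p_l}$ is an order statistic of $\{\lambda^{(l)}_{p_l,j'}\}_{j'}$ and hence not independent of any individual $\lambda^{(l)}_{p_l,j}$; one must argue carefully that the entire layer-$l$ variance family is nonetheless jointly independent of the layer-$l$ pre-activations, which is exactly what the factorisation of expectations above requires. Beyond that subtlety, the Lipschitz/positive-homogeneity bound, the disjoint-indicator decomposition, and the recursion unrolling are routine.
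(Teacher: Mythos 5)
Your proposal is correct and follows essentially the same route as the paper: condition on the layer-$l$ pre-activations and node variances, split the squared error by the pruning indicator into the "pruned node" and "surviving node" contributions, bound each via the $C_{\mathrm{Lip}}$-Lipschitz property of a positive homogeneous $\phi$ together with the independence of the layer-$l$ variance family from the pre-activations, and unroll the resulting recursion $E^{(l+1)} \le \sigma_v^2 \Clip U^{(l)} A^{(l)}_{p_l} + \sigma_v^2\Clip N^{(l)}_{p_l} E^{(l)}$ from $E^{(1)}=0$. Your explicit handling of the $\kappa$-pruning subtlety (the order statistic $\lambda^{*(l)}_{p_l}$ is not independent of individual $\lambda^{(l)}_{p_l,j}$, but the whole family is independent of the pre-activations) is exactly the point the paper's factorisation relies on.
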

\begin{remark}\label{rem: simple variance upper bound}
	To get a bound on $U^{(l)}$, note that the variance $ \bfE[( Z^{(l)}_{1}(\bx;\bp))^{2}] $ satisfies a similar recurrence relation to that described in \cref{sec:singleinputvariance}. Namely,
	\begin{align*}
		\bfE\left[ \left( Z^{(l+1)}_{1}(\bx;\bp) \right)^{2} \right]
		\leq \sigma_{v}^{2} C_{\phi}N^{(l)}_{p_{l}} \bfE \left[ \left( Z^{(l)}_{1}(\bx;\bp) \right)^{2} \right] + \sigma_{b}^{2}.
	\end{align*}
	Also, the bound in \cref{eq:error bound} holds when the supremum in $U^{(l)}$ for each $l$ is taken for $\bp'$ with $\min \bp' \geq \min \bp$. See the proofs of \cref{lem: l2 norm preactivations,prop: error bound} for details. In the particular case where $ \sigma_{b}=0$, $\var(\zeta^{(l)}_{1}(\bx)) > 0$, $\min \bp$ is sufficiently large and (A2) holds, if the supremum of $U^{(l)}$ is taken over $\bp'$ with $\min \bp' \geq \min \bp$ for every $l$, then $U^{(l)}$ satisfies
	\begin{align*}
		U^{(l)} \le 2\var(\zeta^{(l)}_{1}(\bx))=2\sigma_{v}^{2}\frac{\|\bx\|^{2}}{\din} \prod_{l'=1}^{l-1} \sigma_{v}^{2}C_{\phi} M^{(l')}_{1} .
	\end{align*}
\end{remark}

\subsubsection{$\epsilon$-Pruning}

Let $ \lambda^{*(l)}_{p_{l}} = \epsilon$ for some $\epsilon > 0$. At layer $l$, this means that we keep the hidden nodes $j$ such that $\lambda_{p_{l},j}^{(l)} > \epsilon$. (We do not let $ \epsilon $, the pruning level, depend on the layer here just to simplify presentation; lifting this restriction would not invalidate our results to be presented next.)
It should be noted that, when the limiting unpruned network is infinite, this pruning strategy produces a finite network.

To analyse the error between the unpruned network and the $ \eps $-pruned network in \cref{eq: pruned network general}, we investigate the limit of the $ L^{2} $ pruning error $\bfE[(Z^{*(l)}_{k}(\bx;\bp,0)- Z^{*(l)}_{k}(\bx;\bp,\epsilon))^2]$ and show that, under assumptions (UI) and (A1-A3), this error remains small in the limit as $ \min(p_{1},\ldots,p_{L}) \to \infty $. This comes as a corollary to \cref{prop: error bound}.

\begin{corollary}[Single input case, $\epsilon$-pruning]\label{thm: eps-pruning}
	Consider pruned FFNNs defined by \cref{eq: pruned network general,eq:B,eq:V,eq:lambda} with $ \lambda^{*(l)}_{p_{l}} = \eps $.
	Suppose (UI) and (A1-A3) hold.
	Then, for all $ \delta\in(0,1-\alpha)$, there exists $ \eps_0(\delta) > 0 $ such that
	if $\eps < \eps_0(\delta)$, we have, for each $ l=1,\ldots,L$ and any $k \ge 1$,
	\begin{align*}
		\lim_{\min\bp \to \infty} \bfE\left[ \left| Z^{(l+1)}_{k}(\bx;\bp)- Z^{*(l+1)}_{k}(\bx;\bp) \right|^{2} \right] \le D(l) \cdot \eps^{1-(\alpha + \delta)} ,
	\end{align*}
	where
	\begin{align*}
		D(l) = \frac{\sigma_{v}^{2}{\Clip}}{1-(\alpha +\delta)}( U^{(l)} + (\sigma_{v}^{2}{\Clip}M_{1})U^{(l-1)} + \cdots + (\sigma_{v}^{2}{\Clip}M_{1})^{l-1} U^{(1)} )
	\end{align*}
	is a constant not depending on $\eps$.
\end{corollary}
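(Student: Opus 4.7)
The corollary is a quantitative consequence of \cref{prop: error bound}: once the truncated mean $A^{(l)}_{p_l}$ and the total mean $N^{(l)}_{p_l}$ are controlled as $\min\bp\to\infty$, and the $\epsilon$-rate is extracted from (A3), the claim follows. By exchangeability of the $V^{(l+1)}_{jk}$ in the output index $k$, the $L^{2}$-error at layer $l+1$ is the same for every $k$, so it suffices to treat $k=1$.

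\textbf{Step 1: limit of $A^{(l)}_{p_l}$.} Since the $\lambda^{(l)}_{p_l,j}$ are iid,
\[
A^{(l)}_{p_l}=p_l\bfE\!\left[\lambda^{(l)}_{p_l,1}\ind_{\{\lambda^{(l)}_{p_l,1}\le\epsilon\}}\right]
=N^{(l)}_{p_l}-p_l\bfE\!\left[\lambda^{(l)}_{p_l,1}\ind_{\{\lambda^{(l)}_{p_l,1}>\epsilon\}}\right].
\]
By (UI) and (A2), $N^{(l)}_{p_l}\to M_1$. Classical triangular-array theory for convergence to infinitely divisible laws (\cite[Ch.~15]{Kallenberg2002}) shows that the point process $\sum_j\delta_{\lambda^{(l)}_{p_l,j}}$, restricted to $(\epsilon,\infty)$, converges in distribution to a Poisson process with intensity $\rho|_{(\epsilon,\infty)}$. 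Combined with the uniform integrability provided by (UI), this yields $p_l\bfE[\lambda^{(l)}_{p_l,1}\ind_{\{>\epsilon\}}]\to \int_\epsilon^\infty u\,\rho(du)$, so
\[
\lim_{p_l\to\infty}A^{(l)}_{p_l}=\int_0^\epsilon u\,\rho(du).
\]

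\textbf{Step 2: tail bound from (A3).} By Fubini,
\[
\int_0^\epsilon u\,\rho(du)=\int_0^\epsilon \overline\rho(v)\,dv-\epsilon\,\overline\rho(\epsilon)\le \int_0^\epsilon \overline\rho(v)\,dv.
\]
Under (A3), $\overline\rho(u)\sim u^{-\alpha}L(1/u)$ as $u\to 0^{+}$. Potter's bound for slowly varying functions yields, for every $\delta\in(0,1-\alpha)$, an $\epsilon_0(\delta)>0$ such that $\overline\rho(u)\le u^{-(\alpha+\delta)}$ for $u\in(0,\epsilon_0]$. Integrating,
\[
\int_0^\epsilon \overline\rho(v)\,dv\le \frac{\epsilon^{\,1-(\alpha+\delta)}}{1-(\alpha+\delta)},\qquad \epsilon<\epsilon_0(\delta).
\]

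\textbf{Step 3: assembly.} Feed these bounds into \cref{prop: error bound} and let $\min\bp\to\infty$, using $N^{(l)}_{p_l}\to M_1$, the limit just obtained for $A^{(l)}_{p_l}$, and the finiteness $U^{(l)}<\infty$ from \cref{lem: l2 norm preactivations}. Each of the $l$ summands on the right-hand side inherits the same factor $\epsilon^{1-(\alpha+\delta)}/(1-(\alpha+\delta))$, and the products of $N$'s collapse to powers of $M_1$. Collecting constants reproduces $D(l)\cdot\epsilon^{1-(\alpha+\delta)}$ with the geometric factor $(\sigma_v^2\Clip M_1)^{k-1}$ for $k=1,\ldots,l$, as stated.

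\textbf{Main obstacle.} The only nonroutine step is the first-moment convergence $p_l\bfE[\lambda^{(l)}_{p_l,1}\ind_{\{>\epsilon\}}]\to \int_\epsilon^\infty u\,\rho(du)$ in Step~1: weak convergence of the jumps is standard, but upgrading it to convergence of expectations requires the uniform integrability from (UI). A clean alternative is via Laplace transforms: prove $p_l\bfE[1-e^{-t\lambda^{(l)}_{p_l,1}\ind_{\{>\epsilon\}}}]\to \int_\epsilon^\infty (1-e^{-tu})\rho(du)$ for each $t\ge 0$ and differentiate at $t=0$, with (UI) justifying the interchange of limit and derivative.
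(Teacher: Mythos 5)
Your proof is correct and follows essentially the same route as the paper's: establish $\lim_{p_l\to\infty}A^{(l)}_{p_l}=\int_0^\epsilon u\,\rho(du)$ via weak convergence of the truncated sums plus the uniform integrability in (UI), bound this integral by $\epsilon^{1-(\alpha+\delta)}/(1-(\alpha+\delta))$ using the small-$u$ regular variation in (A3) and a Feller/Potter-type bound on the slowly varying factor, and then pass to the limit in \cref{prop: error bound}. The only detail worth flagging is that the weak-convergence step in your Step 1 requires $\epsilon$ to be a continuity point of $\rho$ (otherwise replace it by a nearby continuity point), a caveat the paper records in \cref{rmk: continuity point}.
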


{
Although the pruning error is controlled mostly by the pruning level $ \epsilon $, the error can vary according to the constant $D(l)$ which depends on the number of previous layers $ l $. The deeper our network gets, the larger the pruning error becomes. In other words, the pruning error is small at shallow layers, but it accumulates and gets larger at deeper layers.}

In the particular case $\sigma_b=0$ (no bias), combining \cref{thm: eps-pruning} with \cref{rem: simple variance upper bound}, we obtain
\begin{equation}
	\begin{aligned}
		& \lim_{\min\bp \to \infty} \bfE\left[ \left| Z^{(l+1)}_{k}(\bx;\bp)- Z^{*(l+1)}_{k}(\bx;\bp) \right|^{2} \right]
		\\
		& \qquad {}
		\le \left( \frac{\sigma_v^2 {\Clip}}{1-(\alpha + \delta)}\right)
		\cdot \left( \sum_{l'=0}^{l-1}\left(\frac{{\Clip}}{C_{\phi}}\right)^{l'} \right) \cdot 2\var(\zeta^{(l)}_{1}(\bx)) \cdot \epsilon^{1-(\alpha + \delta)}
	\end{aligned}
	\notag
\end{equation}
where $C_{\phi}$ is as in \cref{sec:singleinputvariance}.

{
\begin{remark}\label{rem: trade-off pruning}
	In \cref{sec: proof pruning}, we prove \cref{thm: eps-pruning} in a slightly more general setting where we allow for different $ \rho^{(l)} $'s in different layers. The trade-off is that, if we confine $ \rho^{(l)} = \rho $ for some $ \rho $ as in (A3), then $ \eps_{0}(\delta)$ depends only on $ \delta $ and not on $ L $, thus one can possibly add more layers after $ L+1 $. On the contrary, if we allow for different $ \rho^{(l)} $'s as in the proof, then $ \eps_{0}(\delta,L)$ depends not only on $ \delta $ but also on $ L $, so adding more layers requires changing $ \eps_{0}$.
\end{remark}}

\subsubsection{$\kappa$-Pruning}

For fixed $\kappa\in(0,1)$, let $ \lambda^{*(l)}_{p_{l}} = \lambda^{(l)}_{p_{l},(\lfloor \kappa p_l \rfloor )} $. That is, $\kappa$-pruning discards nodes $j$ at layer $l$ with $\lambda^{(l)}_{p_{l},j} \le \lambda^{(l)}_{p_{l},(\lfloor \kappa p_l \rfloor )}$.

The next result shows that, under assumptions (UI) and (A1-A2) including the compressibility of layers (A2; see \cref{sec:compression}), the error between the unpruned output and the $\kappa$-pruned output in  \cref{eq: pruned network general}  converges to 0, no matter what the value $\kappa\in(0,1)$ is. Again, this comes as a corollary to \cref{prop: error bound}.

\begin{corollary}[Single input case, $\kappa$-pruning]\label{th:kappapruning}
	Consider pruned FFNNs defined by \cref{eq: pruned network general,eq:B,eq:V,eq:lambda} with $ \lambda^{*(l)}_{p_{l}} = \lambda^{(l)}_{p_{l},(\lfloor \kappa p_l \rfloor )} $.
	Suppose (UI) and (A1-A2) hold.
	Then, for each $ l=1,\ldots,L $ and for any $ \kappa\in(0,1)$ and any $k\geq 1$, %
	\begin{align*}
	\bfE\left[ \left| Z^{(l+1)}_{k}(\bx;\bp)-Z^{*(l+1)}_{k}(\bx;\bp)\right|^{2} \right]\to 0\text{ as }\min\bp\to\infty.\label{eq:limiterrorneuralnet}
	\end{align*}
\end{corollary}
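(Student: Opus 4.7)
The plan is to deduce Corollary~\ref{th:kappapruning} as a direct consequence of the error bound in Proposition~\ref{prop: error bound}, combined with the compressibility characterisation in Theorem~\ref{th:compressibility} and the uniform integrability provided by (UI). Concretely, Proposition~\ref{prop: error bound} gives
\[
\bfE\!\left[\!\left( Z^{(l+1)}_{1}(\bx;\bp) - Z^{*(l+1)}_{1}(\bx;\bp) \right)^{\!2}\right]
\le \sum_{l'=1}^{l} (\sigma_v^2 \Clip)^{\,l-l'+1} \!\left(\prod_{l''=l'+1}^{l} N^{(l'')}_{p_{l''}}\right) U^{(l')} A^{(l')}_{p_{l'}},
\]
with $A^{(l')}_{p_{l'}} = \bfE[\sum_{j=1}^{p_{l'}} \lambda^{(l')}_{p_{l'},j}\, \ind_{\{\lambda^{(l')}_{p_{l'},j}\le \lambda^{(l')}_{p_{l'},(\lfloor\kappa p_{l'}\rfloor)}\}}]$. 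Under (UI) the quantities $N^{(l)}_{p_l}$ converge (and are therefore bounded in $p_l$), and under (UI)+(A1) one has $U^{(l)} < \infty$ (cf.\ the lemma referenced after Proposition~\ref{prop: error bound}). It therefore suffices to prove that $A^{(l)}_{p_l}\to 0$ for each $l=1,\ldots,L$ as $p_l\to\infty$; plugging this into the bound above (which is a finite sum of products of bounded terms and vanishing terms) yields the claim.

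To show $A^{(l)}_{p_l}\to 0$, define the (random) compressibility ratio
\[
R^{(l)}_{p_l} \;:=\; \frac{\sum_{j=1}^{p_l}\ind_{\{\lambda^{(l)}_{p_l,j}\le \lambda^{(l)}_{p_l,(\lfloor\kappa p_l\rfloor)}\}}\,\lambda^{(l)}_{p_l,j}}{\sum_{j=1}^{p_l} \lambda^{(l)}_{p_l,j}} \;\in\; [0,1],
\]
so that $A^{(l)}_{p_l}=\bfE\!\left[R^{(l)}_{p_l}\sum_{j=1}^{p_l}\lambda^{(l)}_{p_l,j}\right]$. Assumption (A2) states $a^{(l)}=0$, so Theorem~\ref{th:compressibility} gives $R^{(l)}_{p_l}\cvpto 0$ as $p_l\to\infty$. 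On the other hand, (UI) asserts precisely the uniform integrability of the family $\{\sum_{j=1}^{p_l}\lambda^{(l)}_{p_l,j}\}_{p_l}$ (this equivalence is the content referenced in the subsection on pruning).

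Because $R^{(l)}_{p_l}\in[0,1]$, the products $X_{p_l}:=R^{(l)}_{p_l}\sum_{j=1}^{p_l}\lambda^{(l)}_{p_l,j}$ are dominated by $\sum_{j=1}^{p_l}\lambda^{(l)}_{p_l,j}$ and hence inherit its uniform integrability. Since additionally $X_{p_l}\cvpto 0$ (by Slutsky-type reasoning: $R^{(l)}_{p_l}\cvpto 0$ and $\sum_{j=1}^{p_l}\lambda^{(l)}_{p_l,j}\cvdto \Lambda^{(l)}$ with $\Lambda^{(l)}$ a.s.\ finite), we conclude $A^{(l)}_{p_l}=\bfE[X_{p_l}]\to 0$ by the uniform integrability criterion for convergence in $L^1$. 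Inserting into the bound from Proposition~\ref{prop: error bound} completes the proof, noting that the right-hand side is a finite sum over $l$ layers where every factor other than $A^{(l')}_{p_{l'}}$ stays bounded (in $p_{l'}$) under (UI) and (A1).

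The main obstacle is the $L^1$ step in the last paragraph: convergence in probability of $R^{(l)}_{p_l}$ to $0$ is not in itself enough to make $A^{(l)}_{p_l}$ vanish, so one has to invoke the boundedness $R^{(l)}_{p_l}\le 1$ together with (UI) to upgrade to $L^1$-convergence of the product. Once this is in place, the rest is a straightforward finite-sum estimate; no further structural hypothesis (such as the regularity (A3) needed in the $\epsilon$-pruning case) is required.
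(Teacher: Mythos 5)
Your proposal is correct and follows essentially the same route as the paper: reduce to showing $A^{(l)}_{p_l}\to 0$ via the bound of \cref{prop: error bound}, obtain convergence in probability of the truncated sum to zero from the compressibility result, and upgrade to convergence of expectations using the uniform integrability guaranteed by (UI). The only cosmetic difference is that you pass through the ratio of \cref{th:compressibility} and a Slutsky-type argument, whereas the paper invokes the second assertion of \cref{prop:compressibilityID} directly, which already gives $\sum_{j}\lambda^{(l)}_{p_l,j}\ind_{\{\lambda^{(l)}_{p_l,j}\le\lambda^{(l)}_{p_l,(\lfloor\kappa p_l\rfloor)}\}}\cvpto 0$ when $a^{(l)}=0$.
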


This result states that, if $a^{(l)}=0$ for all $ l=1,\ldots,L $, the neural network is compressible: the difference between the output of the $\kappa$-pruned network and that of the unpruned network vanishes in probability as the width of the network goes to infinity.
This is not generally the case if $a^{(l)}>0$. If, in addition, almost surely no node variances are repeated (so
$\kappa$-pruning prunes a $(1-\kappa)$-proportion of nodes), we do not obtain the vanishing error, which occurs when  $a^{(l)}=0$.
For instance, consider a network with one hidden layer. Then, the $ L^{2} $-error is
\begin{align*}
	\bfE\left[ \left| Z^{(2)}_{1}(\bx;\bp)-Z^{*(2)}_{1}(\bx;\bp)\right|^{2} \right]
	= \sigma_{v}^{2} \bfE\left[ \sum_{j=1}^{p_{1}} \lambda^{(1)}_{p_{1},j} \ind_{\{ \lambda^{(1)}_{p_{1},j} \le \lambda^{(1)}_{p_{1},(\lfloor \kappa p_1 \rfloor)} \}} \right] \bfE\left[ \phi^{2}(Z^{(1)}(\bx)) \right]
\end{align*}
which is not guaranteed to converge to 0 for all $\kappa\in(0,1)$ when $a>0$. See \cref{prop:compressibilityID}.

In the iid Gaussian case, our $\kappa$-pruning strategy prunes every node due to the repeated node variance $\frac{c_1}{p_l}$, so that the pruning error trivially does not vanish. In practice, one prunes the iid Gaussian case by removing nodes using instead
$$ T^{(l)}_{p, j}:=\|W_{j,:}\|^2 = \lambda^{(l)}_{p,j} \sum\limits_{k=1}^{p_{l+1}} (V^{(l+1)}_{j,k})^2 $$
Denote by $Z^{**}$, the network defined in a similar way to \cref{eq: pruned network general} but where $T_{p,j}$ is used for pruning instead of $\lambda_{p,j}$. Then, it can be shown that in the iid Gaussian case, the error is non-vanishing, i.e.
$$ \limsup \bfE\left[ \left| Z^{(l+1)}_{k}(\bx;\bp)-Z^{*(l+1)}_{k}(\bx;\bp)\right|^{2} \right] > 0.$$

\section{Infinite-Width Limit for Multiple Inputs in the General Case}\label{sec: inf width limit multi inputs}

We prove the convergence theorem for multiple inputs under a more general assumption for the activation function $\phi$ than the positive homogeneity assumption in~\cref{th:singleinput}. Any positive homogeneous function such as ReLU satisfies this generalisation, as well as the classical $\tanh$ or sigmoid functions. This assumption is called a ``polynomial envelope'' condition by \cite{Matthews2018}, and commonly used in the context of analysing infinitely-wide neural networks either implicitly or explicitly. In \cite{Neal1996,Lee2018}, the authors are implicitly exploiting this assumption by considering $ \tanh $ and ReLU mainly, and in \cite{Favaro2020,Jung2021}, they explicitly considered a weaker version of this assumption.

\begin{theorem}[Multi-input case]\label{th:multipleinputs}
Consider the feedforward neural network model defined by \cref{eq:FFNN1,eq:B,eq:W,eq:V,eq:lambda}. Assume that the activation function $\phi$ is continuous and satisfies the so-called polynomial envelope condition: for all $z\in\mathbb R$, $|\phi(z)|\leq A+B|z|^C$ for some $A,B,C>0$. Assume that, for all hidden layers $l=1,\ldots,L$, we have
$$
        \sum_{j=1}^{p_l} \lambda_{p_l,j}^{(l)}\overset{\cvd}{\to} \id(a^{(l)},\rho^{(l)})\text{ as } p_l \to \infty
$$
for some $a^{(l)}\geq 0$ and some L\'evy measure $\rho^{(l)}$. Let $\mathbf{x}_{1},\ldots,\mathbf{x}_{n}$ be $n$ inputs, where
$\mathbf{x}_{i}\in\mathbb{R}^{\din}$.\ Define
$\vec{Z}_{k}^{(l)}(\mathbf{x}_{1},\ldots,\mathbf{x}_{n};\mathbf{p}):=(Z^{(l)}_{k}(\mathbf{x}_{1};\mathbf{p}),\ldots,Z^{(l)}_{k}(\mathbf{x}_{n};\mathbf{p}))^{T}\in\mathbb{R}%
^{n}$, the associated $k$-th outputs. Then,
for all $l=1,\ldots,L+1$ and all $m \geq 1$,
as $\bn\to\infty$ in the order $\lim_{p_L\to\infty}\ldots\lim_{p_1\to\infty}$,
\[
\left(  \vec{Z}_{k}^{(l)}(\mathbf{x}_{1},\ldots,\mathbf{x}_{n};\mathbf{p})\right)_{k=1,\ldots,m}\overset{\cvd}{\rightarrow}%
\bfE\left[\underset{k=1,\ldots,m}{\bigotimes}\mathcal{N}(0,{}{\Sigma}^{(l)})\right].
\]
Here, ${}{\Sigma}^{(l)}$ is a
random $n$-by-$n$ positive semi-definite matrix defined by ${}{\Sigma}%
_{ij}^{(l)}=K^{(l)}(\bx_{i},\bx_{j})$, for $1\leq i,j\leq n$, where $K^{(l)}:\mathbb{R}^\din\times
\mathbb{R}^\din\rightarrow\mathbb{R}$ is a random covariance kernel. The
sequence of random kernels $(K^{(1)},\ldots,K^{(L+1)})$ is a Markov sequence whose distribution  can be defined recursively,  for  $l=1,\ldots,L,$ by:
\begin{align}\label{eq: multi input kernel inductive def}
K^{(1)}(\mathbf{x},\mathbf{x}^{\prime})&:=\sigma_{b}^{2}+\sigma_{v}%
^{2}\frac{\mathbf{x}^{T}\mathbf{x}^{\prime}}{\din}\\
K^{(l+1)}(\mathbf{x},\mathbf{x}^{\prime})&:=\sigma_{b}^{2}+\sigma_{v}^{2}%
a^{(l)}\bfE\left[\left.\phi(\zeta_{1}^{(l)}(\mathbf{x}))\phi(\zeta_{1}%
^{(l)}(\mathbf{x}^{\prime}))\right| K^{(l)}\right]  +\sigma_{v}^{2}\sum_{j\geq1}^{{}%
}\widetilde{\lambda}_{j}^{(l)}\phi\left(  \zeta_{j}^{(l)}(\mathbf{x})\right)
\phi\left(  \zeta_{j}^{(l)}(\mathbf{x}^{\prime})\right) \notag
\end{align}
where $\{\widetilde{\lambda}_{j}^{(l)}\}_{j\geq1}$ are the points of a Poisson
point process on $(0,\infty)$ with mean measure $\rho^{(l)}$ and, for $j\geq
1$,%
\[
\zeta_{j}^{(l)}\mid K^{(l)}\overset{\text{iid}}{\sim}\GP(0,K^{(l)}).
\]
Here $\GP(\mu,K)$ denotes a Gaussian process on $\R^{\din}$, i.e., a random element of $\mathcal M=\{f:\R^\din\to\R \}$, with mean $\mu\in \mathcal M$ and covariance function $K:\R^\din\times\R^\din \to\mathbb R$.
\end{theorem}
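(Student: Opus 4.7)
The plan is to induct on the layer index $l$, exploiting the iterated-limit structure $\lim_{p_L\to\infty}\ldots\lim_{p_1\to\infty}$ so that when treating layer $l+1$ the joint law of the layer-$l$ outputs can be taken as the limiting mixture-of-GPs characterised by $K^{(l)}$. The base case $l=1$ is immediate: with $p_0=\din$ fixed, the vectors $\vec Z_k^{(1)}(\bx_1,\ldots,\bx_n;\bn)$ are iid $\mathcal N(0,\Sigma^{(1)})$ with $\Sigma^{(1)}_{ii'}=\sigma_b^2+\sigma_v^2\bx_i^T\bx_{i'}/\din=K^{(1)}(\bx_i,\bx_{i'})$, which is deterministic and hence a trivial (degenerate) mixture.

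For the inductive step, let $\mathcal F^{(l)}$ be the $\sigma$-algebra generated by $\{Z^{(l)}_j(\bx_i;\bn)\}_{i,j}$, $\{\lambda^{(l)}_{p_l,j}\}_j$ and $\{B_k^{(l+1)}\}_k$. Because the $V^{(l+1)}_{jk}$ are iid $\mathcal N(0,\sigma_v^2)$ and independent of $\mathcal F^{(l)}$, the vectors $\vec Z_k^{(l+1)}(\bx_1,\ldots,\bx_n;\bn)$ are, conditionally on $\mathcal F^{(l)}$, iid across $k$ and Gaussian with random $n\times n$ covariance matrix $\Sigma^{(l+1)}_{\bn}$ whose entries are
\[
\Sigma^{(l+1)}_{\bn}(\bx_i,\bx_{i'}) = \sigma_b^2 + \sigma_v^2 \sum_{j=1}^{p_l} \lambda^{(l)}_{p_l,j}\,\phi\bigl(Z^{(l)}_j(\bx_i;\bn)\bigr)\phi\bigl(Z^{(l)}_j(\bx_{i'};\bn)\bigr).
\]
By the tower property of conditional characteristic functions and the continuous mapping theorem applied to $\Sigma\mapsto\bigotimes_{k=1}^m\mathcal N(0,\Sigma)$, it then suffices to establish the joint convergence in distribution $\Sigma^{(l+1)}_{\bn}\cvdto \Sigma^{(l+1)}$, where $\Sigma^{(l+1)}_{ii'}=K^{(l+1)}(\bx_i,\bx_{i'})$ is given by \cref{eq: multi input kernel inductive def}. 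This mirrors the scheme used in the proof of \cref{th:singleinput}.

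The key technical step is a multivariate generalisation of \cref{lemma:idlaplaceproduct}. Since the iterated limit places us in a regime where $p_1,\ldots,p_{l-1}$ have already been sent to infinity, the inductive hypothesis (combined with exchangeability in $j$ and Kallenberg-type finite-dimensional convergence for the sequence $(Z^{(l)}_j)_j$) allows us to replace the $Z^{(l)}_j$ by conditionally iid GP samples $\zeta^{(l)}_j\sim\GP(0,K^{(l)})$. Setting
\[
X_j := \bigl(\phi(\zeta^{(l)}_j(\bx_i))\,\phi(\zeta^{(l)}_j(\bx_{i'}))\bigr)_{1\le i,i'\le n},
\]
the random matrices $(X_j)_{j\ge 1}$ are, conditionally on $K^{(l)}$, iid and independent of $(\lambda^{(l)}_{p_l,j})_j$. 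The polynomial envelope $|\phi(z)|\le A+B|z|^C$ together with Gaussian integrability of $\zeta^{(l)}_j$ yields $\bfE[\phi(\zeta^{(l)}_1(\bx_i))^2\phi(\zeta^{(l)}_1(\bx_{i'}))^2\mid K^{(l)}]<\infty$ a.s.\ for each $i,i'$. Applying the triangular-array theorem for infinitely divisible laws to every fixed linear functional $\theta^T\mathrm{vec}(X)$ (whose individual convergence pins down the joint law by Cramér--Wold) gives, conditionally on $K^{(l)}$,
\[
\sum_{j=1}^{p_l} \lambda^{(l)}_{p_l,j}\,X_j \;\cvdto\; a^{(l)}\,\bfE[X_1\mid K^{(l)}] + \sum_{j\ge 1} \widetilde\lambda^{(l)}_j\,X_j,
\]
where $(\widetilde\lambda^{(l)}_j)$ is a Poisson process on $(0,\infty)$ of intensity $\rho^{(l)}$, independent of $(X_j)$, which is exactly the representation in \cref{eq: multi input kernel inductive def}.

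The main obstacle I foresee is twofold. First, the iterated-limit reduction from the inductive hypothesis to genuinely conditionally iid GP samples across all of $j=1,\ldots,p_l$ needs to be done carefully, which involves finite-dimensional projections in $(\bx_1,\ldots,\bx_n)$ and in $j$ together with a tightness/continuous-mapping argument to transfer the convergence to the full sum. Second, unlike the positive-homogeneous single-input case of \cref{th:singleinput}, where $X_j$ reduces to a scalar times a fixed rank-one matrix so that \cref{lemma:idlaplaceproduct} applies as a black box, here one must deal with genuinely vector-valued iid summands with a random (but integrable) conditional law, which is the point where the polynomial-envelope moment control is essential. Once $\Sigma^{(l+1)}_{\bn}\cvdto\Sigma^{(l+1)}$ is in hand, the passage to the mixture-Gaussian limit of $(\vec Z^{(l+1)}_k)_{k=1,\ldots,m}$ is routine via conditional characteristic functions and the continuous mapping theorem.
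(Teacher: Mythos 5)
Your proposal follows essentially the same route as the paper's proof: induction over layers exploiting the sequential limit, reduction via conditional characteristic functions to the convergence of the random conditional covariance matrix, and a matrix-valued extension of \cref{lemma:idlaplaceproduct} applied to the summands $\lambda^{(l)}_{p_l,j}\phi(\vec\zeta^{(l)}_j)\phi(\vec\zeta^{(l)}_j)^T$, with the polynomial envelope supplying the needed conditional moments. The one place where your route genuinely diverges is the justification of that key lemma: you propose Cram\'er--Wold applied to arbitrary linear functionals $\theta^T\mathrm{vec}(X_j)$, whereas the paper (\cref{thm:tensoring with cones}, via \cref{lem: levy continuity theorem on cones}) works with Laplace transforms $\theta\mapsto\bfE[e^{-\langle\theta,X\rangle}]$ for $\theta$ ranging over the self-dual cone $\K_n$ of positive semi-definite matrices. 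The cone route is cleaner here: since the summands live in $\K_n$ and the $\lambda$'s are nonnegative, $\langle\theta,\lambda_j X_j\rangle\ge 0$ for $\theta\in\K_n$ and the one-dimensional nonnegative triangular-array machinery applies verbatim; for a general $\theta$ the functional $\langle\theta,X_j\rangle$ is signed, so your Cram\'er--Wold step would require the two-sided Kallenberg conditions with centering, which is more delicate than your sketch suggests (though not unfixable). Two minor points: you should not include the biases $\{B^{(l+1)}_k\}_k$ in the conditioning $\sigma$-algebra $\mathcal F^{(l)}$ if you want the conditional law of $\vec Z^{(l+1)}_k$ to be centred Gaussian with the $\sigma_b^2\vec 1\vec 1^T$ term in the covariance and iid across $k$; and the "replace $Z^{(l)}_j$ by iid GP samples" step does not need exchangeability --- since $p_l$ is held fixed while the earlier limits are taken, the induction hypothesis for finite tuples $(\vec Z^{(l)}_j)_{j\le p_l}$ combined with a bounded continuous test function (the conditional characteristic function itself) suffices, exactly as in the paper.
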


\begin{remark}\label{rem:sequentiallimit}
	The limit in the above theorem is taken in sequential order from the first layer to the last layer. Extending the theorem to a different and more natural limiting scheme, such as  $\min(p_1,\ldots,p_L)\to\infty$, is non-trivial. For instance, although the proof of \Cref{th:singleinput} handles the case $\min(p_1,\ldots,p_L)\to\infty$, it heavily relies on positive homogeneity of the activation function so as to rephrase the outputs of hidden nodes in some layer $l$ as a vector of independent Gaussian random variables that is scaled by a random scalar (\cref{eq:singleinput:onelayer-representation}). Since the positive homogeneity of $\phi$ does not let us move a matrix $M$ from $\phi(Mv)$ to the outside in any form, It is difficult to obtain an analogous result in the case of multiple inputs. We expect that a different approach, such as the use of exchangeability \cite{Favaro2020,Matthews2018}, is needed for such extension of our result, and we leave this as one of the remaining future challenges.
	\end{remark}

When $\rho^{(l)}$ is trivial for all $l=1,\ldots,L$, the kernels are
deterministic, and one recovers a Gaussian process. Otherwise, we obtain a mixture of Gaussian processes, where the mixture comes from the randomness of the kernel $K^{(l)}$. We now discuss some of the properties of the random kernel.

The following proposition is an immediate consequence of the Campbell theorem for Poisson random measures, together with results regarding the ReLU activation function \cite{Cho2009}; see \cref{app:relu}.

\begin{proposition}[Conditional mean and variance of the kernel]
For any $l\geq 1$ and $n\geq 1$, let $M_n^{(l)}=\int_0^\infty x^n\rho^{(l)}(dx)$.
We have
\begin{align*}
        \bfE\left[\left.K^{(l+1)}(\mathbf{x},\mathbf{x}^{\prime})  \right| K^{(l)}\right]
        & = \sigma_{b}^{2}+\sigma_v^{2}(M_{1}^{(l)}+a^{(l)})
        \bfE\left[\left.\phi(\zeta_{1}^{(l)}(\bx))\phi (\zeta_{1}^{(l)}(\bx^{\prime})) \right| K^{(l)}\right]
        \\
        \var\left[\left. K^{(l+1)}(\bx,\bx^{\prime})\right| K^{(l)}\right]
        & = \sigma_v^{4}
                M_{2}^{(l)}\bfE\left[\left.\phi(\zeta_{1}^{(l)}(\bx))^{2}\phi(\zeta_{1}^{(l)}(\bx^{\prime}))^{2} \right| K^{(l)}\right]
\end{align*}
where
\begin{equation}
        \left.\left(
        \begin{array}{c}
                \zeta_{1}^{(l)}(\bx) \\
                \zeta_{1}^{(l)}(\bx')
        \end{array}
        \right) \right| K^{(l)}
        \overset{\text{iid}}{\sim}
        \mathcal N\left ( 0, \left(
        \begin{array}{cc}
                K^{(l)}(\bx,\bx) & K^{(l)}(\bx,\bx') \\
                K^{(l)}(\bx,\bx') & K^{(l)}(\bx',\bx')
        \end{array}\right)\right ).
\end{equation}
In the ReLU case, we have the analytic expressions
\begin{align*}
        \bfE\left[\left.K^{(l+1)}(\mathbf{x},\mathbf{x}^{\prime}) \right| K^{(l)}\right]
        & =
        \sigma_{b}^{2}
        + \sigma_v^{2}(M_{1}^{(l)}+a^{(l)})\frac{\sqrt{K^{(l)}(\bx,\bx)K^{(l)}(\bx',\bx')}}{2\pi} \kappa_1(\rho^{(l)}_{\bx,\bx'})
        \\
        \var\left[\left. K^{(l+1)}(\bx,\bx^{\prime}) \right| K^{(l)}\right]
        & =
        \sigma_v^{4}M_{2}^{(l)}\frac{K^{(l)}(\bx,\bx)K^{(l)}(\bx',\bx')}{2\pi}\kappa_2(\rho^{(l)}_{\bx,\bx'})
\end{align*}
where   $\rho^{(l)}_{\bx,\bx'} = K^{(l)}(\bx,\bx')/ \sqrt{K^{(l)}(\bx,\bx)K^{(l)}(\bx',\bx')}$ and
\begin{align}
        \label{eq:kappa_n}
        \kappa_n(\rho) & =
        \begin{cases}
                \sqrt{1-\rho^{2}}+\left(  \frac{\pi}{2}+\arcsin\rho\right)  \rho
                & \text{if } n=1
                \\
                3\sqrt{1-\rho^{2}}\rho+\left(  \frac{\pi}{2}+\arcsin\rho\right)  (1+2\rho ^{2})
                & \text{if } n=2.
        \end{cases}
\end{align}

\end{proposition}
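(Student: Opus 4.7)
My plan is to work directly from the inductive definition of $K^{(l+1)}$ in \cref{eq: multi input kernel inductive def}, treating each of the three summands separately. The bias term $\sigma_b^2$ is constant; the second summand $\sigma_v^2 a^{(l)}\bfE[\phi(\zeta_1^{(l)}(\bx))\phi(\zeta_1^{(l)}(\bx'))\mid K^{(l)}]$ is a measurable function of $K^{(l)}$ by construction, so it contributes exactly its own value to the conditional mean and $0$ to the conditional variance. All randomness of $K^{(l+1)}$ that remains after conditioning on $K^{(l)}$ is therefore carried by the Poisson sum
\[
S(\bx,\bx') := \sigma_v^2 \sum_{j\geq 1} \widetilde\lambda_j^{(l)}\, \phi\bigl(\zeta_j^{(l)}(\bx)\bigr)\phi\bigl(\zeta_j^{(l)}(\bx')\bigr).
\]

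To compute $\bfE[S\mid K^{(l)}]$ and $\var[S\mid K^{(l)}]$, I would view $\{(\widetilde\lambda_j^{(l)},\zeta_j^{(l)})\}_{j\geq 1}$, conditionally on $K^{(l)}$, as a marked Poisson point process on $(0,\infty)\times\{f:\R^\din\to\R\}$ with intensity $\rho^{(l)}(d\lambda)\otimes \Pi(d\zeta\mid K^{(l)})$, where $\Pi(\cdot\mid K^{(l)})$ denotes the law of $\GP(0,K^{(l)})$. This is legitimate because, in the theorem's construction, the marks $\zeta_j^{(l)}$ are iid $\GP(0,K^{(l)})$ and jointly independent of the Poisson points $\widetilde\lambda_j^{(l)}$. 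Applying Campbell's first- and second-moment formulas to $F(\lambda,\zeta)=\lambda\,\phi(\zeta(\bx))\phi(\zeta(\bx'))$ and factorising the product measure gives
\begin{align*}
\bfE[S\mid K^{(l)}] &= \sigma_v^2 M_1^{(l)}\,\bfE\bigl[\phi(\zeta_1^{(l)}(\bx))\phi(\zeta_1^{(l)}(\bx'))\mid K^{(l)}\bigr],\\
\var[S\mid K^{(l)}] &= \sigma_v^4 M_2^{(l)}\,\bfE\bigl[\phi(\zeta_1^{(l)}(\bx))^2\phi(\zeta_1^{(l)}(\bx'))^2\mid K^{(l)}\bigr].
\end{align*}
Adding the deterministic-in-$K^{(l)}$ contributions recovers the two general formulas in the proposition.

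For the ReLU case, conditional on $K^{(l)}$ the pair $(\zeta_1^{(l)}(\bx),\zeta_1^{(l)}(\bx'))$ is centred bivariate Gaussian with the displayed covariance matrix and correlation $\rho^{(l)}_{\bx,\bx'}$. The functions $\kappa_1$ and $\kappa_2$ in \cref{eq:kappa_n} are precisely the first and second arc-cosine kernels of \cite{Cho2009}, giving closed-form expressions for $\bfE[\max(0,U)^n\max(0,V)^n]$ when $(U,V)$ is centred bivariate Gaussian; the relevant identities are collected in \cref{app:relu}. Substituting the $n=1$ and $n=2$ expressions into the general conditional mean and variance formulas immediately produces the four stated ReLU formulas.

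The only technical subtlety is justifying Fubini and the second Campbell formula, which requires $M_2^{(l)}\bfE[\phi(\zeta_1^{(l)}(\bx))^2\phi(\zeta_1^{(l)}(\bx'))^2\mid K^{(l)}]$ to be almost surely finite. The polynomial envelope condition on $\phi$ (inherited from \cref{th:multipleinputs}) together with the Gaussianity of $\zeta_1^{(l)}$ given $K^{(l)}$ ensures that the conditional expectation has all moments almost surely, while $M_2^{(l)}<\infty$ is implicit in the statement since $M_2^{(l)}$ appears on the right-hand side; if $M_2^{(l)}=\infty$ the variance identity holds trivially with both sides infinite.
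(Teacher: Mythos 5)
Your proposal is correct and follows exactly the route the paper indicates: the paper gives no detailed proof, stating only that the proposition "is an immediate consequence of the Campbell theorem for Poisson random measures, together with results regarding the ReLU activation function," and your argument — isolating the Poisson sum as the only source of conditional randomness, applying Campbell's first- and second-moment formulas to the marked point process with intensity $\rho^{(l)}(d\lambda)\otimes\Pi(d\zeta\mid K^{(l)})$, and then substituting the arc-cosine kernel identities of \cref{app:relu} — is precisely that computation carried out in full. Your added care about measurability of the $a^{(l)}$ term and the integrability needed for the second-moment formula is a welcome, correct elaboration of details the paper leaves implicit.
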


\begin{figure}
  \centering
  \subfigure[$\beta=1$]{\includegraphics[width=.3\textwidth]{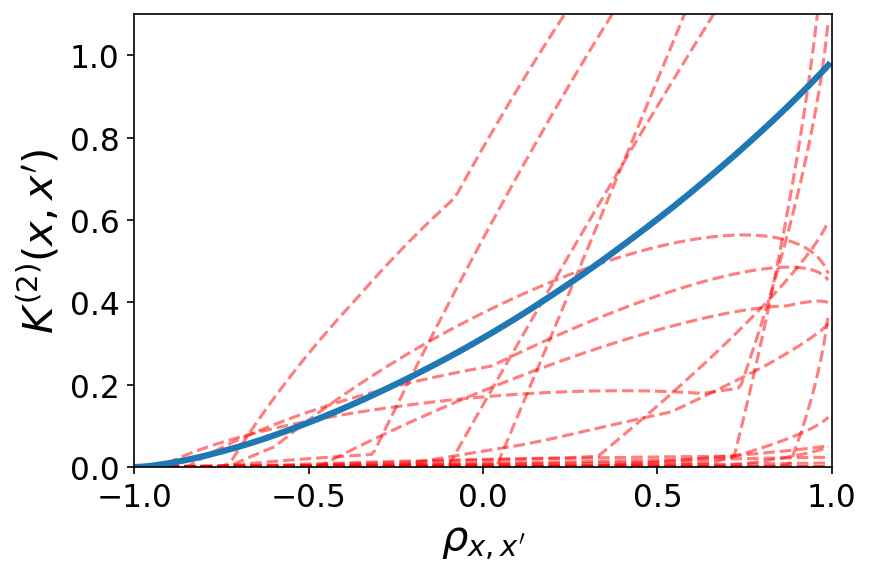}}
  \subfigure[$\beta=10$]{\includegraphics[width=.3\textwidth]{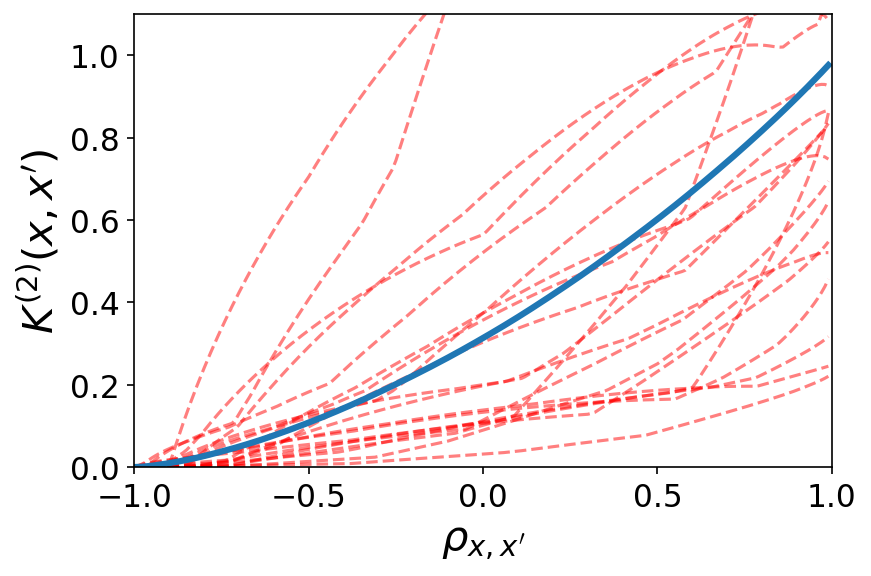}}
  \subfigure[$\beta=1000$]{\includegraphics[width=.3\textwidth]{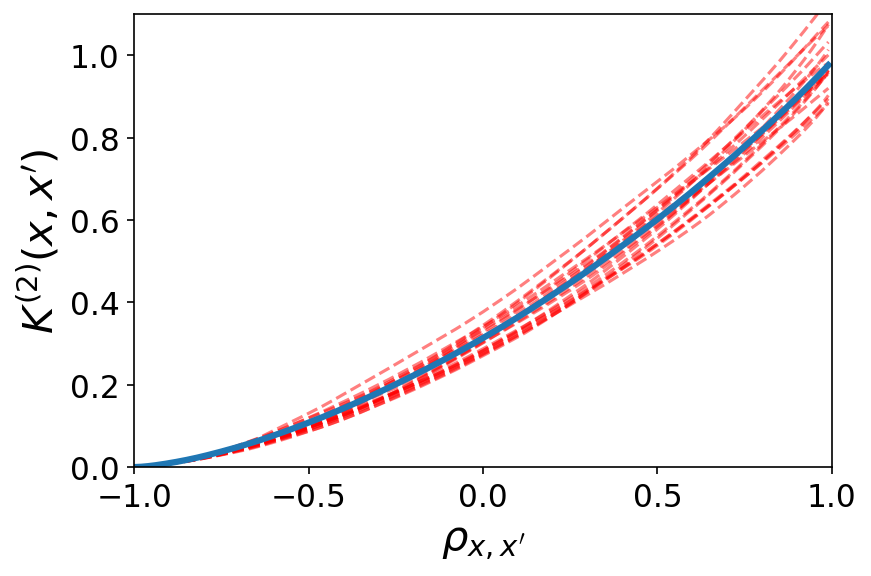}}\\
  \subfigure[$\rho_{\bx,\bx'}=0$]{\includegraphics[width=.4\textwidth]{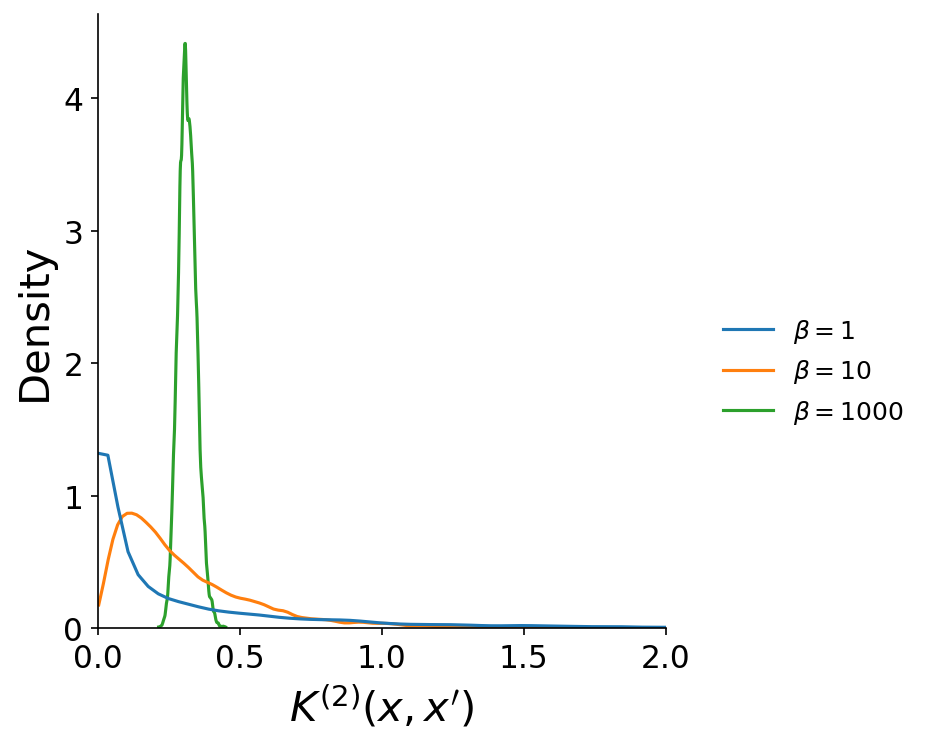}}
  \subfigure[$\rho_{\bx,\bx'}=0.5$]{\includegraphics[width=.4\textwidth]{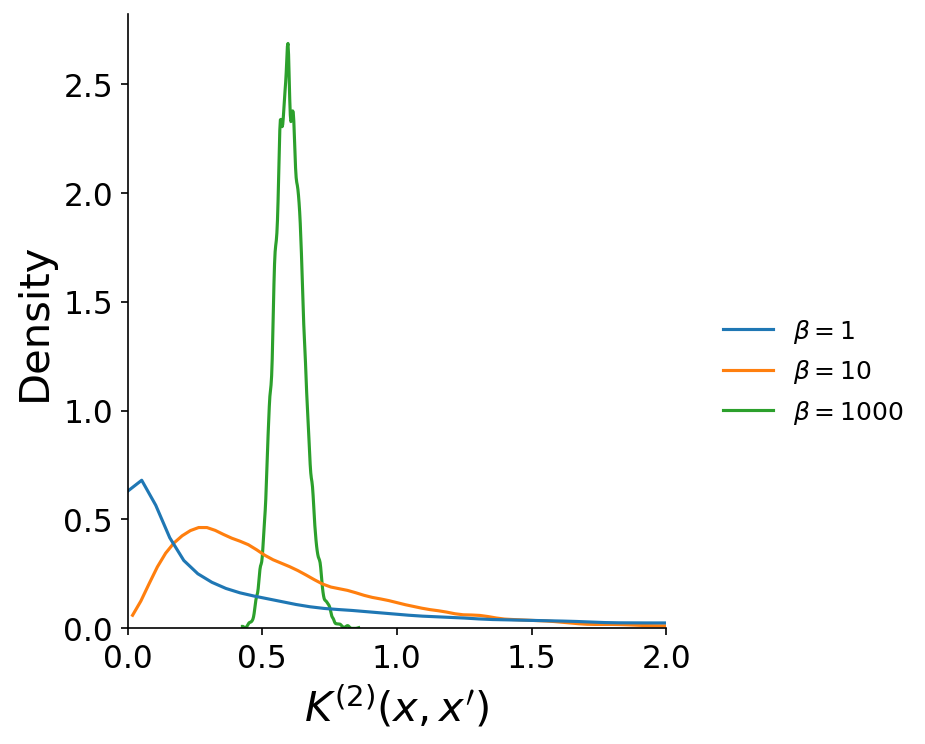}}
  \caption{(a-c) Dashed red lines represent 20 realisations of the kernel $K^{(2)}(\bx,\bx^{\prime})$, as a function of the correlation $\rho_{\bx,\bx'}=\frac{\bx^T\bx'}{\|\bx\|\|\bx'\|}$, when  $\|\bx\| \|\bx'\|/\din=1$, $\sigma_v=1$, $\sigma_b=0$, for the beta model in \cref{example:betamodel} with (a) $\beta=1$, (b) $\beta=10$ and (c) $\beta=1000$. The solid blue line represents the GP ReLU kernel in \cref{eq:relukernelnobias}. The random kernels $K^{(2)}$ are centred on the GP ReLU kernel, and the variance decreases with the tuning parameter $\beta$. (d-e) Distribution of $K^{(2)}(\bx,\bx')$ for different values of $\beta$, for (d) $\rho_{\bx,\bx'}=0$ and (e) $\rho_{\bx,\bx'}=0.5$.}\label{fig:betakernels}
\end{figure}
\begin{example}
\label{example:betamodel}
Assume that $\sigma_v=1$ and $\sigma_b=0$. Consider the model
$\lambda_{p,j}^{(l)}\sim\betadist(\beta/p,\beta/2)$ for some $\beta>0$.  This generalises the example (c) introduced in \cref{sec:intro}, with an additional parameter $\beta>0$. As will be shown later in \cref{sec:betaexample}, $\sum_{j} \lambda_{p,j}^{(l)}$ converges in distribution to a random variable $\Lambda^{(l)}\sim\id(0,\rho)$ where
$\rho(dx)=\beta x^{-1}(1-x)^{(\beta/2)-1}\ind_{\{x \in (0,1)\}}dx$. This is a beta L\'evy measure, with moments $M_k=\beta \frac{\Gamma(k)\Gamma(\beta/2)}{\Gamma(k+\beta/2)}$, so that $M_1=\bfE[\Lambda^{(l)}]=2$ and $M_2=\var(\Lambda^{(l)})=4/(2+\beta)$. It follows that
\begin{align*}
\bfE\left[K^{(2)}(\mathbf{x},\mathbf{x}^{\prime}) \right]
        & =\mathcal K^{(2)}(\mathbf{x},\mathbf{x}^{\prime}),
        &
        \var\left[ K^{(2)}(\bx,\bx^{\prime})\right]&=\frac{2}{\pi(2+\beta)}\frac{\|\bx\|^2\|\bx^{\prime}\|^2}{\din^2}\kappa_2(\rho^{(1)}_{\bx,\bx'}),
\end{align*}
where $\mathcal K^{(2)}$ is the GP ReLU kernel given in \cref{eq:relukernelnobias}. Thus, the random kernel $K^{(2)}$ is centred on $\mathcal K^{(2)}$, and the parameter $\beta$ controls the variance of the kernel.
Realisations of the kernel $K^{(2)}$ for different values of $\beta$ and with $\|\bx\| \|\bx'\|/\din=1$ are given in \cref{fig:betakernels}.
\end{example}

In \cref{app:multiple-input-results-stable-case},
we further discuss a special case of \cref{th:multipleinputs} when the limiting infinitely divisible distribution of $\sum_{j=1}^{p_l} \lambda_{p_l,j}^{(l)}$ is an $\alpha$-stable distribution.

\section{Examples}
\label{sec:examples}

In this section,  we provide examples of models used in the literature, and the associated parameters of the limiting infinitely divisible random variable of \cref{eq:convergencedist}. In some cases, we use a different scaling so that the limit exists, and is not degenerate at 0. \Cref{table:examples} summarises the properties of these models.
Further discussions on these and additional example models can be found in \cref{app:examples}.
To simplify notation, we often drop the layer index $l$ fully or partially in the rest of this section, writing e.g. $\lambda_{p,j}\sim\mu_p$.

\begin{table}
\begin{adjustwidth}{-5mm}{}
\scriptsize
  \begin{tabular}{@{}|@{\,}c@{\,}|@{\,}c@{\,}|@{\,}c@{\,}|@{\,}c@{\,}|@{\,}c@{\,}|@{\,}c@{\,}|@{\,}c@{\,}|@{\,}c@{\,}|@{\,}c@{\,}|@{}}
  \hline
        Name & Mixture's name & $\mu_p$ & $a$ &  L\'evy measure & Support & Finite? & Exp. $\alpha$ & Exp. $\tau$ \\
  \hline
  Determ. & Gaussian & $\delta_{c_1/p} $& $c_1$ & $0$ & -- & -- & -- & -- \\
  Bernoulli & Spike and Slab & $\left(1-\frac{c}{p}\right) \cdot \delta_0 + \frac{c}{p} \delta_1$ & $0$ & $c\delta_1$ & $\{1\}$& Yes & 0 & -- \\
        Gamma & Group lasso & $\gammadist\left (\frac{p_{l+1}+1}{2},\frac{p_l(p_{l+1}+{1})}{2{c_1}} \right)$ & $c_1$ & 0 & -- &-- & -- & --  \\
  Beta & Normal-beta & $\betadist\left(\frac{1}{p},\frac{1}{c}\right)$ &0&$x^{-1}(1-x)^{1/c-1}$&(0,1)&No& -- & --\\
  Inv.-Gamma & Multivariate t & $\IG\left(2, 2/p\right)$ & 2 & 0 & -- & --& --  & -- \\
  Beta prime & Horseshoe & $\frac{2p}{\pi^2}x^{-1/2}(1+\frac{4xp^2}{\pi^2})^{-1}$ & 0 & $\frac{1}{2}x^{-3/2}$ &$(0,\infty)$ & No & 1/2 & 1/2  \\
  Gen. BFRY &
  \begin{tabular}{@{}l@{}}
  Normal
  \\
  \ \ {}-gen. BFRY
  \end{tabular}
  & See \cref{eq:BFRYmodel}& 0 & $\frac{\eta x^{-1-\tau}}{\Gamma(1-\alpha)}\gamma(\tau-\alpha,x)$ & $(0,\infty)$ &No & $\alpha\in(0,1)$ & $\tau>\alpha$ \\
  \hline
\end{tabular}
\end{adjustwidth}
\caption{List of models and their limiting location parameter and L\'evy measure.
\label{table:examples}}
\end{table}

\subsection{Constant Variance (iid Gaussian/Weight Decay/L2 Regularisation)}
\label{ex:gaussian case}

The standard iid Gaussian model is obtained as a special case when $\lambda_{p,j}\sim\delta_{c_1/{p}}$
for some constant $c_1>0$ and so the weights $W_{jk}$ are iid $\mathcal{N}(0,(c_1{\sigma_v^{2}})/p)$.
In this case, $\sum_j {\lambda_{p,j}}=c_1$, so that $\sum_j{\lambda_{p,j}}\overset{\cvd}{\to}\id(c_1,0)$. The weights (and variances) converge uniformly to $0$, i.e. for any $k\geq 1$, $\max_{j=1,\ldots,p}(|W_{jk}|)\overset{\cvp}{\to} 0$.

\subsection{Bernoulli Prior}
\label{sec:bernoulliprior}
For some $c>0$, consider $\lambda^{(l)}_{p_l,j}\sim\Ber(c/p_l)$ for every $p_l\geq c$.
This corresponds to a marginal spike and slab distribution for $W^{(l)}_j=(W^{(l)}_{j1},\ldots,W^{(l)}_{jp_{l+1}})$, with
$$
W^{(l)}_j\sim \left(1-\frac{c}{p_l}\right) \cdot \delta_{0} + \frac{c}{p_l} \cdot \mathcal N(0,\sigma_v^2 I_{p_{l+1}}).
$$
Such a prior has been used by~\citet{Jantre2021} for pruning Bayesian neural networks. In that case,
$\sum_{j}\lambda^{(l)}_{p_l,j}\cvdto \id(0,c\delta_1)$.
That is, the location parameter $a$ is zero, and the L\'evy measure $\rho=c\delta_1$ is finite and discrete.

\subsection{Group Lasso Prior}
We consider that\footnote{Note that in this case, $ \lambda^{(l)}_{p_{l},j}$ depends on the size $p_{l+1}$ of the upper layer as well. However, we show here that, for a specific choice of $b_{p_l}$, at the infinite-width limit with respect to $p_l$, this dependency on $p_{l+1}$ disappears. For clarity, we keep the superscript/subscript $l$ in this subsection.}
$\lambda^{(l)}_{p_{l},j}\sim \gammadist((p_{l+1}+1)/2,\,b_{p_l}/2)$,
where $b_{p_l}$ is an inverse-scale parameter that depends on the layer's width. Such a distribution leads to the so-called group lasso distribution~\cite{Raman2009,Casella2010} over the weights $(W_{jk}^{(l+1)})_{jk}$, which have joint marginal density
\begin{equation}
        f(w)\propto \exp\left(-\frac{\sqrt{b_{p_l}}}{\sigma_v} \sum_{j=1}^{p_{l}} \sqrt{\sum_{k=1}^{p_{l+1}}w_{jk}^2}\right).
        \label{eq:grouplassoprior}
\end{equation}
The regularisation term
\begin{equation}
        -\log f(w)=\left(\frac{\sqrt{b_{p_l}}}{\sigma_v} \sum_{j=1}^{p_{l}} \sqrt{\sum_{k=1}^{p_{l+1}}w_{jk}^2 }\,\right) + {C}
        \label{eq:grouplassopenalty}
\end{equation}
is known as the group lasso penalty, introduced by \citet{Yuan2006} for regression models. This penalty has been used as a regulariser for neural networks by \citet{Scardapane2017} and \citet{Wang2017}. The group lasso distribution in \cref{eq:grouplassoprior} has been used as a sparsity-promoting prior in Bayesian learning of sparse neural networks by~\citet{Jong2018}.

\citet{Scardapane2017} suggested to set $b_{p_l}=p_l$.
However, this assumption implies that $\lim_{p_l\to\infty }\sum_j {\lambda^{(l)}_{p_{l},j}}=p_{l+1}+1$ almost surely, which diverges if $p_{l+1}\to\infty$. This fact has been noted by \citet{Wolinski2020} who suggested the different scaling $b_{p_l}=p_l(p_{l+1}+1)/c_1$ (with $c_1\sigma_v = 1$).
Setting $b_{p_l}=p_l(p_{l+1}+1)/c_1$, we obtain $\sum_j \lambda^{(l)}_{p_{l},j} \cvpto c_1$ as $p_l\to\infty$. Thus,$$\sum_j \lambda^{(l)}_{p_{l},j}\cvdto \id(c_1,0).$$

\subsection{Inverse Gamma Prior and Similar Models}
\label{sec:ex:inversegamma}

We consider here, as in~\cite{Ober2021}, that the variances follow an inverse gamma distribution
\begin{equation}
        \lambda_{p,j}\sim \IG(2, 2/p).\label{eq:inversegammamodel}
\end{equation}
Note that this is equivalent to
\begin{align}
\lambda_{p,j}=Y_j/p\label{eq: Yoverp}
\end{align}
where  $Y_1,Y_2,\ldots,$ are iid $\IG(2,2)$. By the law of large numbers, $\sum_j\lambda_{p,j}\cvpto 2$ or equivalently, $\sum_j\lambda_{p,j}\cvdto \id(2,0)$.
More generally, any model of the form in \cref{eq: Yoverp} where $Y_1,Y_2,\ldots$ are iid random variables with finite mean, satisfies $\sum_j\lambda_{p,j}\cvdto \id(\bfE[Y_1],0)$.

\subsection{Beta Model and Beta L\'evy Measure}
\label{sec:betaexample}
Consider
$\lambda_{p,j}\sim \betadist(\eta/p,b)$
where $\eta,b>0$.
An application of~\cref{th:convergenceid} in \cref{app:limit thms} yields
$\sum_{j}\lambda_{p,j}\cvdto \id(0,\rho)$,
where
$\rho(dx)=\eta x^{-1}(1-x)^{b-1}\ind_{\{x\in(0,1)\}}dx$\,
is a Beta L\'evy measure~\cite{Hjort1990}. The measure is infinite with bounded support.

\subsection{Horseshoe Model}\label{sec:horseshoe}

In the horseshoe model \cite{Carvalho2010}, we assume
the independent random variables $Y_1,Y_2,\ldots$ that have the same distribution as $Y=T^{2}$, where $T \sim \Cauchy(0,1)$ is a half-Cauchy random variable, with pdf given by~\cref{eq:halfcauchypdf}. The random variable $Y\sim\betaprime(1/2,1/2)$ is a beta prime random variable (with both shape parameters equal to $1/2$), with pdf
\[
        f_{Y}(y)=\frac{1}{\pi\sqrt{y}(1+y)}.%
\]
Its survival function satisfies
\[
\Pr(Y>y)\overset{y\to\infty}{\sim} (2 y^{-1/2})/\pi,%
\]
and therefore $Y$ has a power-law tail at infinity with exponent $\alpha=1/2$. Let $c>0$ be some scaling parameter. Setting
$$
\lambda_{p,j} = (c\pi^2 Y_j) / (4 p^2),
$$
we obtain
\begin{align}
        \sum_{j} \lambda_{p,j}\cvdto \id(0,\rho)=\IG(1/2,c\pi/4).
\end{align}
where $\rho(dx) = (\sqrt{c}/2) x^{-3/2} \ind_{\{x > 0\}} dx$. The tail L\'evy intensity $\overline\rho(x)$ in this case has power-law tails at $0$ and $\infty$, with exponent $1/2$.\footnote{We say that a L\'{e}vy measure $\rho$ on $(0,\infty)$ has a power-law tail at $\infty$ with exponent $\tau \in \R$ if its tail L\'{e}vy intensity $\overline{\rho}$ satisfies $\overline{\rho}(x) \overset{x\to\infty}{\sim} cx^{-\tau}$. Similarly, we say that $\rho$ has a power-law tail at $0$ with exponent $\alpha > 0$ if $\overline{\rho}(x) \overset{x \to 0}{\sim} cx^{-\alpha}$.}

\subsection{Generalised Gamma Pareto Model}
 The model described in \cref{sec:horseshoe} allows us to obtain a L\'evy measure which has power-law tails with the same exponent $\alpha$ at $0$ and $\infty$. We describe here a model that permits power-law tails with different exponents.
Let
$\lambda_{p,j}=\beta_{j}\zeta_{p,j}$ where
\begin{align}
\beta_{j} &  \sim \mathrm{Pareto}(\tau,1),
&
\zeta_{p,j} &  \sim \mathrm{etBFRY}\left(  \alpha,\left(  \frac{{{p}}\alpha\tau}%
{\eta(\tau-\alpha)}\right)^{1/\alpha},1\right)\label{eq:BFRYmodel}
\end{align}
for $\alpha\in(0,1)$, $\tau>\alpha$ and $\eta>0$. Here $\pareto(\tau,c)$ denotes the Pareto distribution with pdf $f(x)=\tau c^\tau x^{-\tau-1}\ind_{\{x>c\}}$.
Also, $\mathrm{etBFRY}(\alpha,t,\xi)$ denotes an exponentially tilted BFRY
distribution \cite{Lee2016,Bertoin2006}, with pdf
\[
g(s)=\frac{\alpha s^{-1-\alpha}e^{-\xi s}(1-e^{-ts})}{\Gamma(1-\alpha
	)((t+\xi)^{\alpha}-\xi^{\alpha})}.%
\]
We can sample easily from this distribution by inversion. The variances $\lambda_{p,j}$ follow a generalised BFRY distribution with density:
$$ f_p(x) = \frac{\tau \alpha x^{-\tau-1}}{\Gamma(1-\alpha)((t+1)^\alpha-1)} \left( \gamma(\tau-\alpha, x) - \frac{\gamma(\tau-\alpha, (t+1)x)}{(t+1)^{\tau-\alpha}}\right),$$
where $t = \left(  \frac{{{p}}\alpha\tau}%
{\eta(\tau-\alpha)}\right)  ^{1/\alpha}$ and  $\gamma(s,x)=\int_0^x t^{s-1}e^{-t}dt$ denotes the lower incomplete gamma function

Under this model, $\sum_j \lambda_{p,j}\cvdto \id(0,\rho)$ where the limiting L\'{e}vy measure $\rho$ is a generalised gamma Pareto measure, introduced by \citet{Ayed2019,Ayed2020}:
\[
\rho(dx)=\frac{\eta}{\Gamma(1-\alpha)}x^{-1-\tau}\gamma(\tau-\alpha,x) dx.
\]
As shown by \citet{Ayed2020}, the tail L\'evy intensity of this measure shows power-law behaviours at both $0$ and $\infty$:
\begin{align*}
\overline\rho(x) & \overset{x\to0}\sim c_1 x^{-\alpha},
&
\overline\rho(x) & \overset{x\to\infty}\sim c_2 x^{-\tau}
\end{align*}
for some constants $c_1,c_2>0$. The exponents $\alpha\in(0,1)$ and $\tau>\alpha$ here can take different values, allowing for different asymptotic behaviours for small and large weights.

\section{Illustrative Experiments}
\label{sec:experiments}

\subsection{MoGP at Initialisation}\label{sec:experiments_simu}
In this subsection, we illustrate the key benefits of the MoGP regime as well as our main results through simulations; we consider a FFNN model defined by \cref{eq:FFNN1,eq:B,eq:W,eq:V,eq:lambda}, with no bias, $\sigma_v = 1$, ReLU activation and univariate inputs. For the variance distributions, we consider five of the examples described in \cref{sec:examples}, namely the deterministic, inverse-gamma, beta, horseshoe and generalised BFRY models. For all models except the horseshoe, we set the parameters such that $\mathbb{E}[\sum_j \lambda_{p_1,j} ] \rightarrow 1$ as $p_1 \rightarrow \infty$. For the horseshoe model, we take $\lambda_{p_1,j}=(\frac{\pi}{2})^2\frac{U_j^2}{p_1^2}\text{ where }U_j\sim \Cauchy(0,1)$. Unless otherwise stated, the neural networks have a single hidden layer, recovering the illustrative example described in the introduction.

\begin{figure}
  \subfigure[Full distributions]{\includegraphics[width=.3\textwidth]{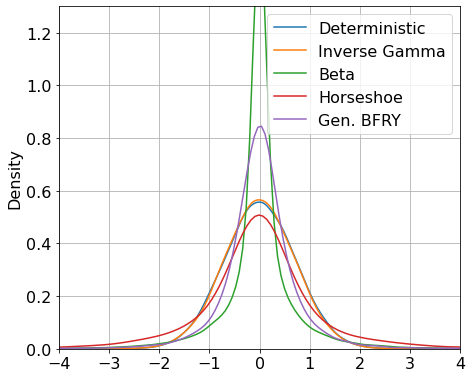}}
  \subfigure[Tails]{\includegraphics[width=.3\textwidth]{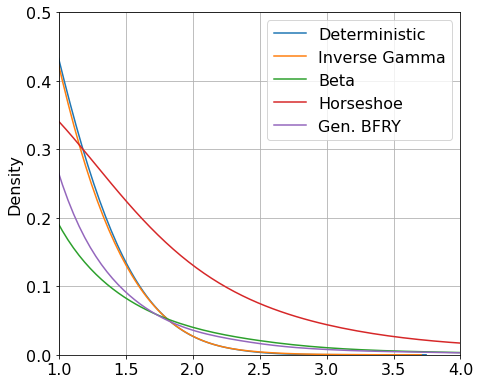}}
  \subfigure[Tails in log scale]{\includegraphics[width=.3\textwidth]{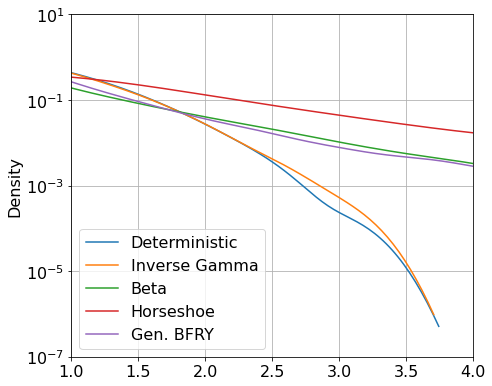}}
  \caption{MoGP output distribution}\label{fig:MoGP_output_dist}
\end{figure}

\paragraph{Output distribution.} \Cref{fig:MoGP_output_dist} shows the distribution of the output with a large width $p_1=2000$. We use 50000 samples from the model to draw the plots, each corresponding to a random realisation of the weights. The figure confirms the limiting
behaviour described in \cref{th:singleinput}: the deterministic and inverse-gamma converge to the same Gaussian Process (the orange and blue lines overlap), whereas MoGP regimes offer a wider class of output distributions. In particular, when we examine %
the densities in log-log scale, we can notice that the beta, horseshoe and generalised BFRY exhibit a density with a power-law tail (straight line in log-log scale), whereas the deterministic and inverse-gamma exhibit a light-tailed density.

\paragraph{Dependence of the dimensions on the output.} Another key consequence of \cref{th:singleinput} is that in the GP regime, the different dimensions of the output are asymptotically independent, while this is not the case in the MoGP regime. For a two-dimensional output FFNN, we report in \cref{tab:MoGP_output_corr} the empirical correlation between $(Z^{(2)}_{1}(x;p_1))^2$ and $(Z^{(2)}_{2}(x;p_1))^2$ when $p_1 \rightarrow \infty$ for the different models using 5000 random samples. The empirical results confirm the theoretical ones: we can see that for the deterministic and inverse-gamma models, the correlation converges to zero, while this is not the case for the other models.
\begin{table}
  \centering
  \begin{tabular}{c|cccccc}
     \hline
Width & Deterministic & Inverse Gamma & Beta & Horseshoe & Gen. BFRY \\
\hline
100 & 0.019994 & 0.113897 & 0.320444 & 0.691159 & 0.33325\\
500 & 0.00539 & 0.028584 & 0.281498 & 0.434425 & 0.219763\\
1000 & 0.005495 & 0.015217 & 0.279571 & 0.995462 & 0.316032\\
2000 & 0.001844 & 0.004522 & 0.297515 & 0.253737 & 0.235673\\
     \hline
   \end{tabular}
  \caption{MoGP output correlation.}\label{tab:MoGP_output_corr}
\end{table}

\paragraph{Distribution of the largest weight.} \Cref{prop: extremes} describes another benefit of the MoGP regime: when the Levy measure is trivial, i.e. in the GP regime, the largest weight in each layer converges in probability to zero, while this is not the case in the MoGP regime. \cref{fig:largest_weight_distrib} empirically validates this result; we show the evolution of the distribution of $\max_{1 \leq j,k \leq p} |W^{(2)}_{jk}|$ as the width $p_1$ grows.
This property can have a significant impact on the performance of the models since some weights remain non-negligible asymptotically and can be connected to nodes representing important hidden features. This, coupled with a heavy-tailed distribution of the nodes, can favour specialisation of the neurons, with benefits for pruning and feature learning. We refer the reader to \cref{sec:MoGP regularisation,sec:MoGP bayesian} for experiments with real data where our proposed framework is used either in a frequentist or Bayesian fashion.

\begin{figure}
  \centering
  \subfigure[Deterministic]{\includegraphics[width=.3
\textwidth]{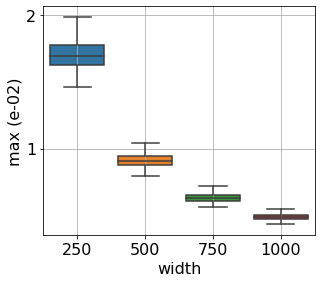}}
    \subfigure[Beta]{\includegraphics[width=.3\textwidth]{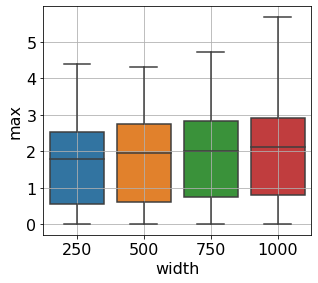}}
  \subfigure[Gen. BFRY]{\includegraphics[width=.3\textwidth]{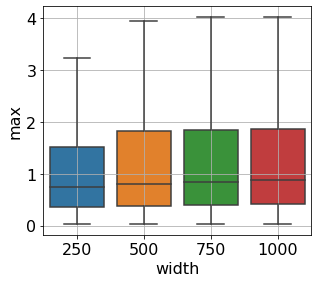}}
  \caption{Distribution of the largest weight when the width increases.}\label{fig:largest_weight_distrib}
\end{figure}

\begin{figure}
  \centering
  \includegraphics[width=.3\textwidth]{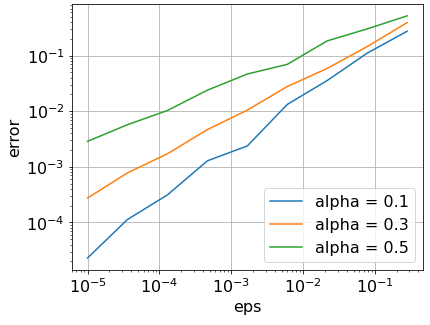}
  \caption{Expected truncation error as a function of $\epsilon$ (in log-log scale).}\label{fig:truncation error}
\end{figure}

\paragraph{Truncation error.} In \cref{fig:truncation error}, we illustrate \cref{thm: eps-pruning} with a generalised BFRY model with different values of $\alpha$ (and $\tau=5$ is fixed). The expectation is estimated using $1000$ simulations with width $p_1=2000$ and depth $L=3$. The empirical results match well the theoretical bound. In particular, in log-log scale, we get an empirical slope of $0.492$ for $\alpha = 0.5$, an empirical slope of $0.691$ for $\alpha = 0.3$ and an empirical slope of $0.920$ for $\alpha=0.1$. Therefore, the slope is approximately equal to $1-\alpha$, which confirms the theoretical rate of decay of the expected pruning error as a function of the truncation level $\eps$. We get similar results with different depths $L$.

In \cref{app:Bayesian-experiment}, we report further experiments analysing the vanishing/exploding gradient phenomenon in the MoGP context for deep networks (up to 20 layers). We also show how it can be alleviated with the right choice of model parameters.

The following two subsections describe how one can use our proposed framework with real data, either as a regularisation term or as a prior for a Bayesian Neural Network. We illustrate the discussed benefits of the proposed framework on compressibility and feature learning. The datasets considered in our experiments are MNIST and Fashion MNIST. Both datasets correspond to an image classification problem with 10 classes (digits for MNIST and clothes type for Fashion MNIST). The images are grey-scale and split between training and test sets, composed of 60000 and 10000 examples.

\begin{figure}
  \centering
  \subfigure[Deterministic]{\includegraphics[width=.35\textwidth]{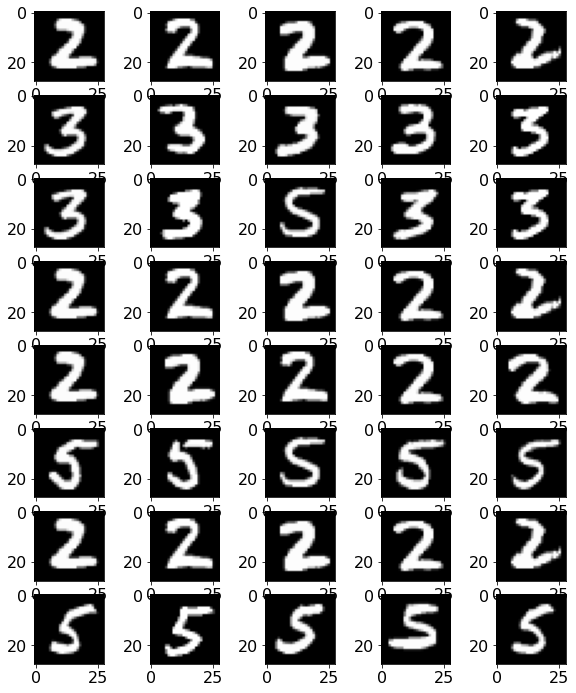}\hspace{0.05\textwidth}
  \includegraphics[width=.35\textwidth]{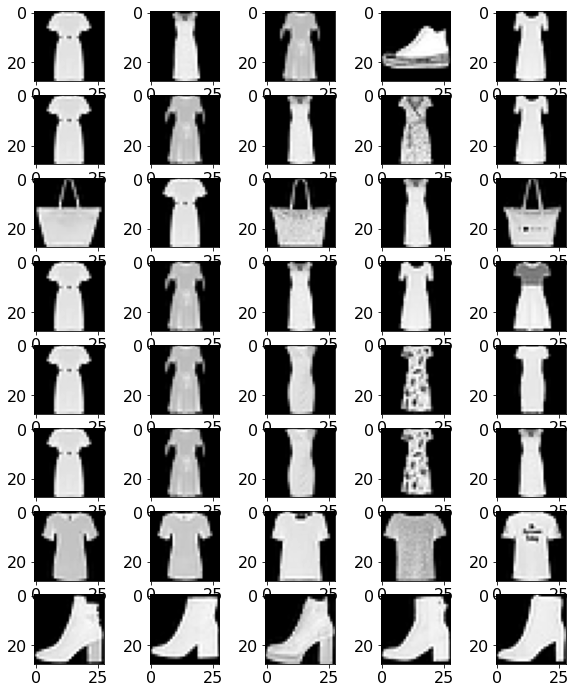}}
  \subfigure[Horseshoe]{\includegraphics[width=.35\textwidth]{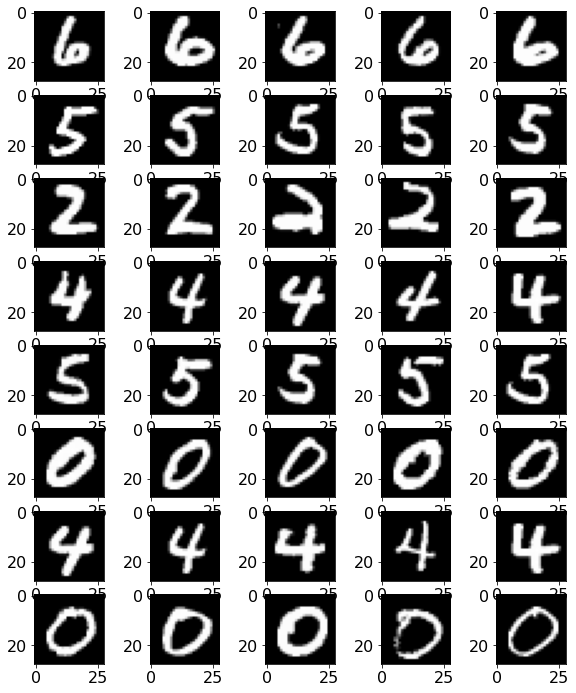}\hspace{0.05\textwidth}
  \includegraphics[width=.35\textwidth]{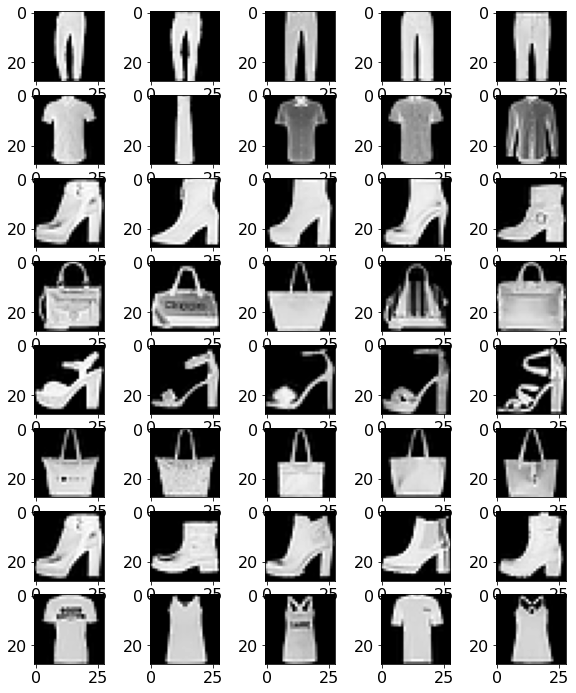}}
  \caption{Visualisation of the top-8 neurons of the first hidden layer of models trained on MNIST (left) and Fashion MNIST (right) --- deterministic and horseshoe cases. Each row corresponds to a neuron. The elements of the row correspond to the ordered 5 images that maximise the neuron output.}\label{fig:vizualise_features1}
\end{figure}

\subsection{MoGP as a Regularisation}\label{sec:MoGP regularisation}
The most straightforward application of the MoGP framework is to use the prior as a regularisation term to add to the loss. We consider FFNN models $f_\theta $ with ReLU activation and three hidden layers, all having the same width $p=2000$. The parameters $\theta = (\lambda^{(l)}_{p,j}, V^{(l)}_{jk}, B^{(l)}_k)$ are trained using Adam optimisation for 50 epochs to minimise the objective function:
\begin{equation}
\mathcal{L} = \frac{1}{n}\sum\limits_{i=1}^n \ell( y_i, f_\theta(x_i) ) + \gamma \left(\sum\limits_{jkl} \log \pi_V(V^{(l)}_{jk}) + \sum\limits_{lk} \log \pi_B(B^{(l)}_k) + \sum\limits_{jl} \log \pi_\lambda(\lambda^{(l)}_{p,j}) \right) \label{eq:obj_regularisation}
\end{equation}
where $\ell$ is a loss function,  $\pi_V$ and $\pi_B$ are, respectively, the densities of zero-mean Gaussian distributions with variance $\sigma_v^2$ and $\sigma_b^2$, and $\pi_\lambda$ is the density of the finite-dimensional approximation of the limiting infinitely divisible distribution. The parameter $\gamma$ controls the weight of the prior. Notice that when $\gamma=1$, we recover the maximum a posteriori estimator when $\ell$ is a log likelihood.
In our experiments, we take $\sigma_v=\sigma_b=1$ and $\gamma=0.2$, and $\ell$ is the cross-entropy loss. We consider three examples detailed in \cref{sec:examples}, namely, the deterministic, the horseshoe and the generalised BFRY models. The deterministic and the horseshoe parameters are as in the simulated experiments. For the generalised BFRY, we set $\alpha=0.8$ and $\tau=5$. We bring to the reader's attention that for the deterministic model, the variance distribution is a Dirac at $1/p$; therefore, the network is trained with a similar parameterisation as the Neural Tangent Kernel framework \cite{Jacot2018}.

\begin{figure}
  \centering
  \subfigure[Gen. BFRY]{\includegraphics[width=.35\textwidth]{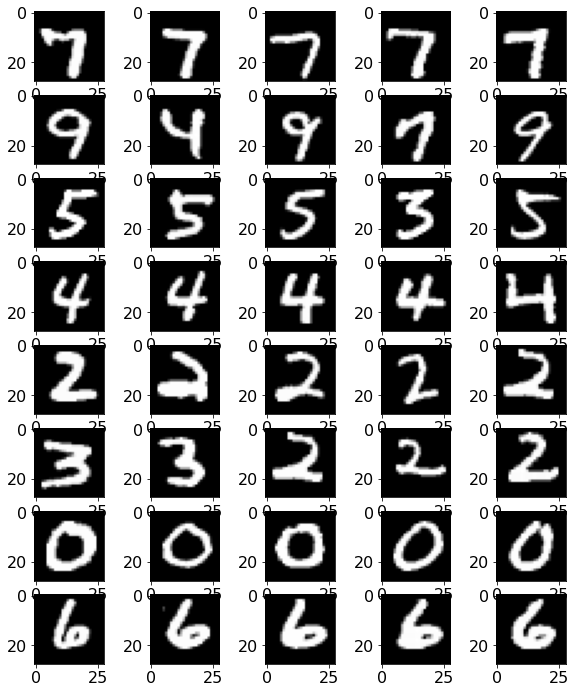}\hspace{0.05\textwidth}
  \includegraphics[width=.35\textwidth]{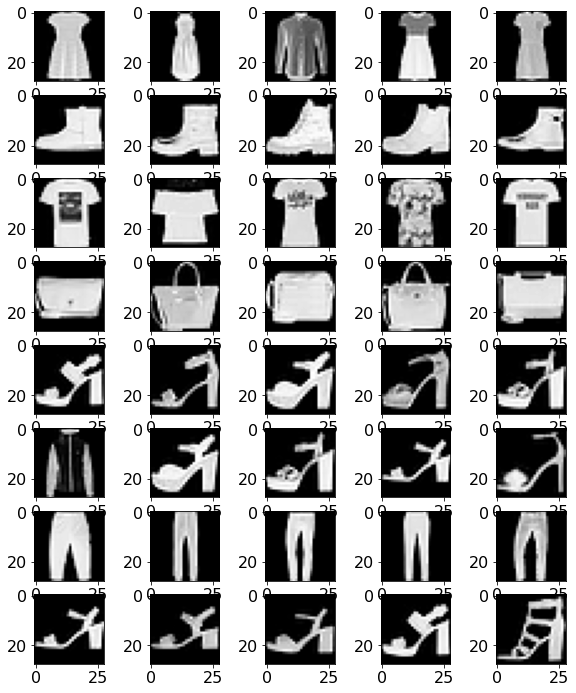}}
  \caption{Visualisation of the top-8 neurons of the first hidden layer of models trained on MNIST (left) and Fashion MNIST (right) --- generalised BFRY case. Each row corresponds to a neuron. The elements of the row correspond to the ordered 5 images that maximise the neuron output.}\label{fig:vizualise_features2}
\end{figure}

\paragraph{Feature learning.} For each variance distribution, we train a network on MNIST and another on Fashion MNIST to minimise \cref{eq:obj_regularisation}. For all the models, we reach a test accuracy of approximately $98\%$ on MNIST and $88\%$ on Fashion MNIST, which are standard performances for feedforward models. Exact numbers can be found in the compressibility paragraph hereafter. We visualise the top-8 neurons of the first hidden layer of each model by plotting the 5 input images that maximise the neuron output. For the horseshoe and generalised BFRY, the top neurons are the ones with the highest variance $\lambda^{(1)}_{k}$. For the deterministic, since all the variances are equal, the top neurons are selected according to $\sum_k (W_{jk}^{(1)})^2$ (using this metric for the horseshoe and generalised BFRY leads to similar results). \Cref{fig:vizualise_features1,fig:vizualise_features2} reveal a key distinction between the MoGP regime (horseshoe and generalised BFRY) and the standard GP regime (deterministic). In the former regime, the top neurons tend to be more specialised: each neuron learns a different feature. In the latter regime, several top neurons learn the same features, which is materialised by almost equal lines in \cref{fig:vizualise_features1,fig:vizualise_features2}, such as the neurons 1, 4, 5, and 7 of the network trained on MNIST with the deterministic model.
To validate this phenomenon, we repeat the training of each model five times. In \cref{fig:unique_imgs}, we plot the evolution of the average total number of unique images among the representative ones as a function of the number of top neurons, and also as a function of the number of images considered per neuron. The total number of unique images is interpreted as a simple metric to quantify the diversity of the learned features. The curves validate our hypothesis; in the MoGP regime, the top neurons learn more heterogeneous features.

\begin{figure}
  \centering
  \subfigure[]{\includegraphics[width=.3
  \textwidth]{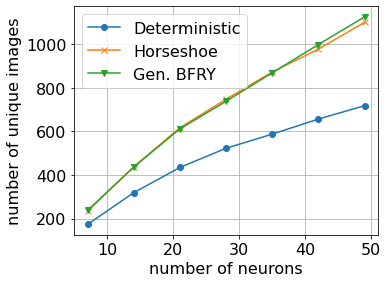}}
    \subfigure[]{\includegraphics[width=.3\textwidth]{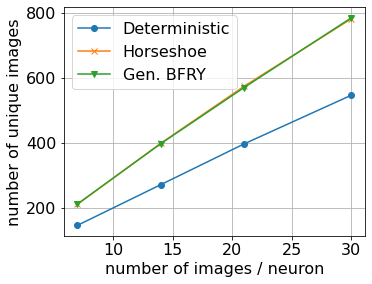}}
  \caption{Number of unique images when a) representing each neuron using the top-30 images that maximise the output of the top neurons, while varying the number of top neurons, and b) representing each neuron by a varying number of images, while fixing the number of neurons to thirty. Average of five runs for models trained on the MNIST dataset.}\label{fig:unique_imgs}
\end{figure}

\begin{table}
\begin{center}
\begin{tabular}{lrrr}
\toprule
\pbox{4cm}{Truncation \\(i.e., $1-\kappa$)} &  Deterministic &  Horseshoe &  Gen. BFRY \\
\midrule
0.0$\%$        &   97.44 ($\pm 0.05$) &      97.94 ($\pm 0.09$) &        98.00 ($\pm 0.07$) \\
80.0$\%$       &   95.58 ($\pm 0.70$) &      97.94 ($\pm 0.09$) &        98.00 ($\pm 0.07$)\\
90.0$\%$       &   71.70 $(\pm 11.2)$ &      97.94 ($\pm 0.09$) &        98.00 ($\pm 0.07$) \\
95.0$\%$       &   23.90 $(\pm 12.0)$&      97.94 ($\pm 0.09$) &        98.00 ($\pm 0.07$) \\
98.0$\%$       &   12.12 $(\pm 3.95)$ &      64.22 $(\pm 13.1)$ &        65.74 $(\pm 6.98)$ \\
98.5$\%$       &   10.36 $(\pm 0.63)$ &      44.14 $(\pm 10.9)$ &        50.76 $(\pm 2.90)$ \\
\bottomrule
\end{tabular}
\caption{MNIST truncation accuracy: Average accuracy and standard deviation (between parenthesis) using five independent runs.}\label{tab:MNIST}
\end{center}
\end{table}

\paragraph{Compressibility.} We expect the higher diversity of the features learnt by the top neurons to affect the compressibility of the networks. We compare the degradation of the accuracy of the models with node pruning. For the horseshoe and generalised BFRY, we prune the nodes as described in \cref{sec:compression} using the node variances $\lambda^{(l)}_j$. For the deterministic model, we use $\sum_k (W_{jk}^{(1)})^2$. For each layer,  a given fraction $\kappa$ of the nodes is kept. The mean and standard deviation of the accuracies are reported in \cref{tab:MNIST} for MNIST and \cref{tab:FashionMNIST} for Fashion MNIST. As expected, the horseshoe and generalised BFRY
outperform the deterministic model, with a slight advantage for the latter. What is even more interesting is that the accuracies of the pruned generalised BFRY models have a smaller variance. Though both the horseshoe and generalised BFRY have a power-law tail,  the tail of the horseshoe is heavier; in particular, the distribution has an infinite expectation, which is not the case for the generalised BFRY. This can explain the difference between the models in terms of variances. We believe this simple experiment serves as motivation to further explore the MoGP regime beyond the horseshoe model, as different limiting distributions can offer valuable practical advantages.

\begin{table}
\begin{center}
\begin{tabular}{lrrr}
\toprule
\pbox{4cm}{Truncation \\(i.e., $1-\kappa$)} &  Deterministic &  Horseshoe &  Gen. BFRY \\
\midrule
0.0$\%$        &   87.98 $(\pm 0.19)$&      88.70 $(\pm 0.20)$ &        88.54 $(\pm 0.19)$\\
80.0$\%$       &   86.24 $(\pm 0.80)$&      88.70 $(\pm 0.20)$&        88.54 $(\pm 0.19)$ \\
90.0$\%$       &   60.24 $(\pm 5.14)$&      88.68 $(\pm 0.19)$&        88.56 $(\pm 0.18)$ \\
95.0$\%$       &   19.64 $(\pm 7.01)$&      88.50 $(\pm 0.12)$&        88.40 $(\pm 0.25)$\\
98.0$\%$       &   10.84 $(\pm 1.17)$&      76.56 $(\pm 3.35)$&        77.24 $(\pm 2.32)$\\
98.5$\%$       &   10.26 $(\pm 0.58)$&      58.26 $(\pm 14.2)$&        60.44 $(\pm 3.42)$\\
\bottomrule
\end{tabular}
\caption{Fashion MNIST truncation accuracy: Average accuracy and standard deviation (between parenthesis) using five independent runs.}\label{tab:FashionMNIST}
\end{center}
\end{table}

In \cref{app:Bayesian-experiment}, we empirically verify on the Cifar10 dataset that using the MoGP framework as a regularisation also improves the compressibility of convolutional neural networks.

\subsection{MoGP in a Fully Bayesian Setting}\label{sec:MoGP bayesian}
We further demonstrate the MoGP in a fully Bayesian setting, where we simulate the posterior distribution of a FFNN with MoGP priors on the weights. Let $f_\theta$ be a FFNN with ReLU activation and three hidden layers of width $p=2000$. The log joint-density for classification with this FFNN is then given as:
\begin{align}
\log g(\theta) &= \sum_{i} \log h(y_i, f_\theta(x_i)) + \sum_{j,k,l} \log \pi_V(v_{jk}^{(l)}) + \sum_{l,k} \log \pi_B(b_k^{(l)}) + \sum_{j,l} \log \pi_\lambda(\lambda_j^{(l)}).
\end{align}
We consider the $C$-way classification problem where $y_i \in \{1, \dots, C\}$ and $h(y, f_\theta(x))$ is the categorical likelihood, i.e.,
$h(y, f_\theta(x) ) = \mathrm{softmax}(f_\theta(x))_y$,
with $\mathrm{softmax}(f_\theta(x))_c = \frac{\exp({f_\theta(x)}_c)}{\sum_{c'=1}^C \exp({f_\theta(x)}_{c'})}$ to get proper probability vectors.

We compare the deterministic, the horseshoe and the generalised BFRY models on MNIST and Fashion MNIST datasets. We infer the posteriors of the network weights via Stochastic Gradient Hamiltonian Monte-Carlo (SGHMC)~\citep{Chen2014} with batch size set to 100. We run the samplers for 100 epochs through datasets and collect samples every 2 epochs after 50 burn-in epochs. Following \citet{Zhang2020}, we adopt a simple cosine-annealed step size with a single cycle and set the first half of the epochs as an exploration stage (updating without noise for quick convergence to a local minimum). For the generalised BFRY, considering the importance of the hyperparameter $\alpha$, we introduce a uniform prior on it and infer its value along with the model parameters. We run every experiment five times and averaged results.

\paragraph{Compressibility.}
As in \cref{sec:MoGP regularisation}, we first compare the predictive classification accuracies of the FFNN models under a varying truncation ratio. For all models, we collect 25 samples after 50 burn-in epochs (collecting a sample at the end of every 2 epochs after the burn-in), and the test accuracy is measured with Monte-Carlo estimates of predictive distributions,
\begin{align}
    p(y_*|x_*, \mathcal{D}) \approx \frac{1}{S} \sum_{s=1}^S \mathrm{softmax}(f_{\theta^{(s)}}(x_*))_{y_*},
\end{align}
where $\mathcal{D}$ is the training set and $\theta_1,\dots, \theta_S$ are samples collected from SGHMC.
As in \cref{sec:MoGP regularisation}, we prune the nodes with respect to the magnitude of the node variances $\lambda_j^{(l)}$ and measure the test accuracies of the pruned networks. \cref{tab:MNIST_Bayes,tab:Fashion_MNIST_Bayes} summarise the results. FFNNs with horseshoe or generalised BFRY priors are more robust to truncation; both maintain decent classification accuracies even when 95$\%$ of the neurons are truncated.   For the generalised BFRY, we present the posterior samples of the hyperparameter $\alpha$ in \cref{fig:gbfry_alphas}. The posteriors are well concentrated around the range $[0.7, 0.8]$.

\begin{table}
\begin{center}
\begin{tabular}{lrrr}
\toprule
\pbox{4cm}{Truncation\\ (i.e., $1-\kappa$)} &  Deterministic &  Horseshoe &  Gen. BFRY \\
\midrule
0.0$\%$         &   90.23 $(\pm 0.12)$&      97.83 $(\pm 0.07)$ &        97.78 $(\pm 0.10)$\\
80.0$\%$       &   13.14 $(\pm 3.18)$&      97.93 $(\pm 0.06)$&        97.71 $(\pm 0.10)$ \\
90.0$\%$       &   9.98 $(\pm 0.82)$&      97.73 $(\pm 0.04)$&        97.72 $(\pm 0.05)$\\
95.0$\%$       &   9.70 $(\pm 0.89)$&      97.59 $(\pm 0.06)$&        97.68 $(\pm 0.03)$\\
98.0$\%$       &   9.46 $(\pm 0.52)$&      87.97 $(\pm 3.74)$&        54.90 $(\pm 4.13)$\\
98.5$\%$       &   9.46 $(\pm 0.52)$&      89.29 $(\pm 3.58)$&        57.69 $(\pm 11.15)$\\
\bottomrule
\end{tabular}
\end{center}
\caption{Predictive classification accuracy on MNIST dataset under various truncation ratio.}\label{tab:MNIST_Bayes}
\end{table}

\begin{table}
\begin{center}
\begin{tabular}{lrrr}
\toprule
\pbox{4cm}{Truncation\\ (i.e., $1-\kappa$)} &  Deterministic &  Horseshoe &  Gen. BFRY \\
\midrule
0.0$\%$         &   80.65 $(\pm 0.12)$&      87.72 $(\pm 0.12)$ &        87.75 $(\pm 0.05)$\\
80.0$\%$       &   10.00 $(\pm 0.00)$&      87.57 $(\pm 0.28)$&        87.28 $(\pm 0.29)$ \\
90.0$\%$       &   10.00 $(\pm 0.00)$&      87.43 $(\pm 0.28)$&        87.01 $(\pm 0.29)$\\
95.0$\%$       &   10.00 $(\pm 0.00)$&      87.27 $(\pm 0.32)$&        86.64 $(\pm 0.54)$\\
98.0$\%$       &   10.00 $(\pm 0.00)$&      80.65 $(\pm 3.73)$&        81.85 $(\pm 3.78)$\\
98.5$\%$       &   10.00 $(\pm 0.00)$&      59.34 $(\pm 6.02)$&        68.34 $(\pm 6.71)$\\
\bottomrule
\end{tabular}
\end{center}
\caption{Predictive classification accuracy on Fashion MNIST %
under various truncation ratio.}\label{tab:Fashion_MNIST_Bayes}
\end{table}

\begin{figure}
    \centering
    \includegraphics[width=0.3\linewidth]{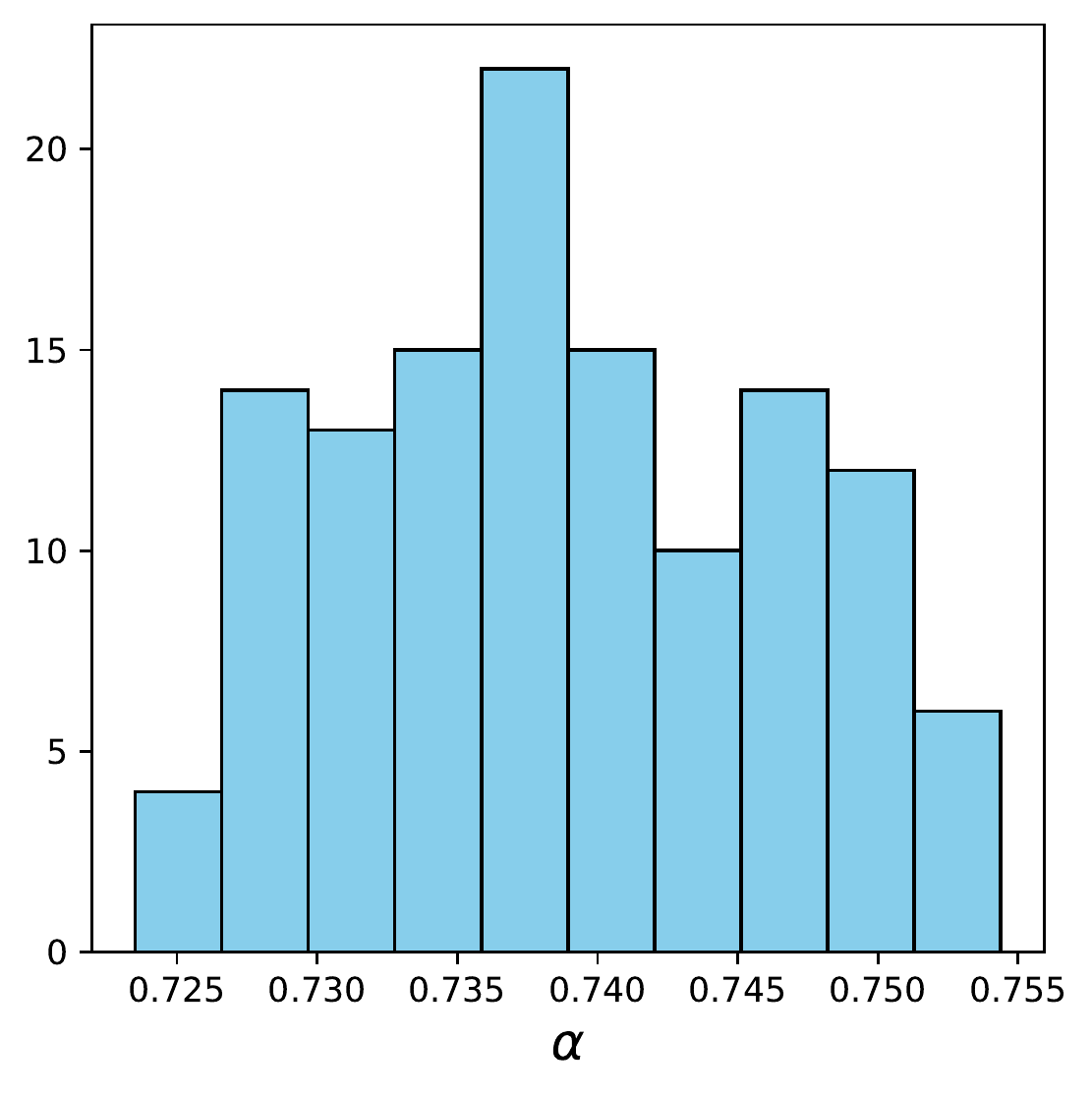}
    \includegraphics[width=0.3\linewidth]{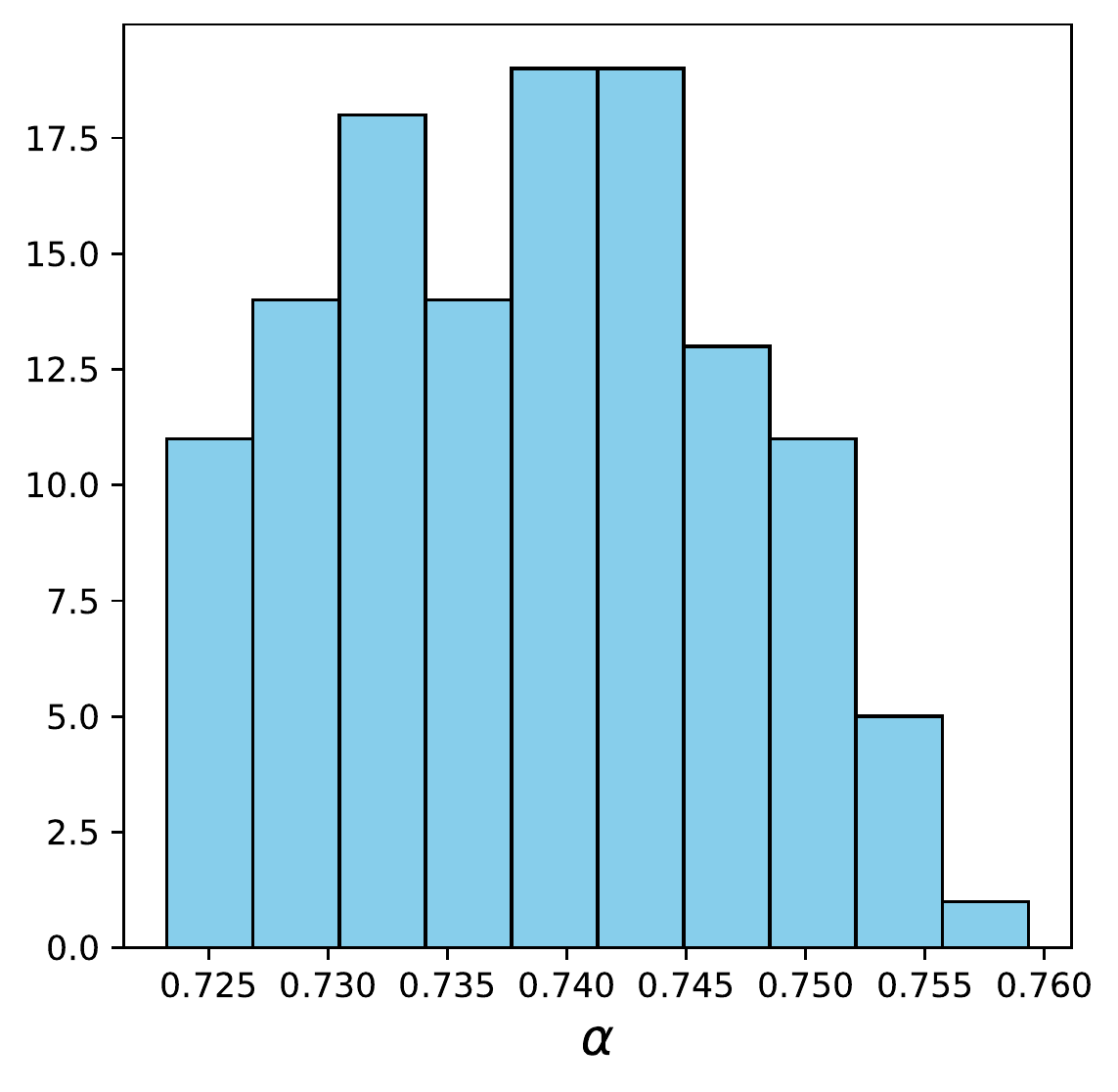}
    \caption{Inferred $\alpha$ values for gen. BFRY on MNIST (left) and Fashion MNIST (right).}
    \label{fig:gbfry_alphas}
\end{figure}

\paragraph{Feature learning.}
Finally, to demonstrate the feature learning aspect of MoGP priors, we conduct a transfer learning experiment.
We start by taking an external dataset, not used during training, and split it into two halves $D_1$ and $D_2$. Then, we sort the neurons of the first hidden layers of the trained FFNNs with respect to their average magnitudes of activations over the images in $D_1$, and select the top-$k$ activated neurons. Next, for each of these  top-$k$ activated neurons, we select $m$ images from $D_1$ which most strongly activate this neuron, and use the combined collection of these images (over all the top-$k$ activated neurons) to form a subset that will then be used for transfer learning. Each image in this subset is then represented by a vector consisting of the activations of the top-activated neurons. For instance, choosing the top-30 activated neurons and the top-10 activating images per neuron, results in 300 images in total in this subset  (assuming no overlaps in top-activating images). Each image in the subset is represented as a 30-dimensional vector which is just the concatenation of activations from the top-30 activated neurons. Our hypothesis here is that if a trained neural network exhibits feature learning, the subset of images selected in this way is representative enough to express the important features in the set $D_1$, and so,   if we train a new classifier based on this subset, the resulting model should generalise well to $D_2$. To validate this hypothesis, we train a light-weight FFNN with one hidden layer using the selected subset with the selected vectors of activations. Then we evaluate the test accuracy of the trained light-weight FFNN using the vector activation form of $D_2$ computed from the neurons selected with $D_1$. Since we are working with a Bayesian model, for each configuration, we have multiple samples of parameters. We first compute the Monte-Carlo estimates of the average activations using those samples, and then use the estimates to sort neurons, select images and form vectorized activations. We perform this transfer learning experiment with varying subset sizes, repeating all experiments five times per configuration and take average results. \cref{fig:bayes_transfer} summarises the result. As we expect, the horseshoe and generalised BFRY models transfer well, while the deterministic model fails to transfer.  In particular, the test accuracy of the deterministic model does not increase as the size of the training set increases, demonstrating that the model does not exhibit feature learning.

We comment that unlike our results in \cref{sec:MoGP regularisation}, the top-5 activating images for each of the top-8 activated neurons
do not show a noticeable difference between the deterministic case and the rest in this fully Bayesian setting.
See \cref{fig:bayes_neurons} in \cref{app:Bayesian-experiment} for the visualisation of those images in the deterministic, the horseshoe
and the generalised BFRY cases.

\begin{figure}
    \centering
    \includegraphics[width=0.3\linewidth]{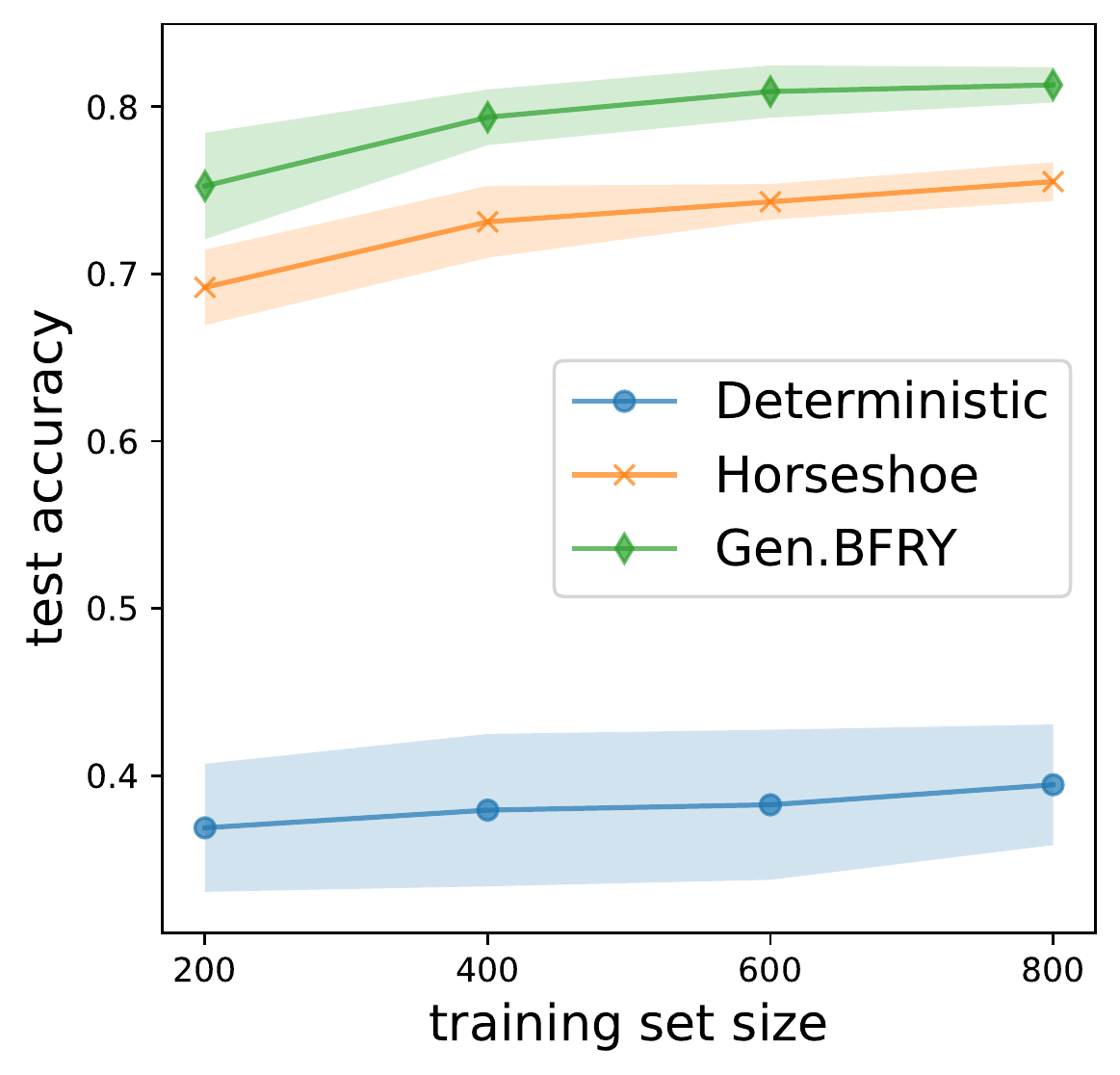}
    \includegraphics[width=0.3\linewidth]{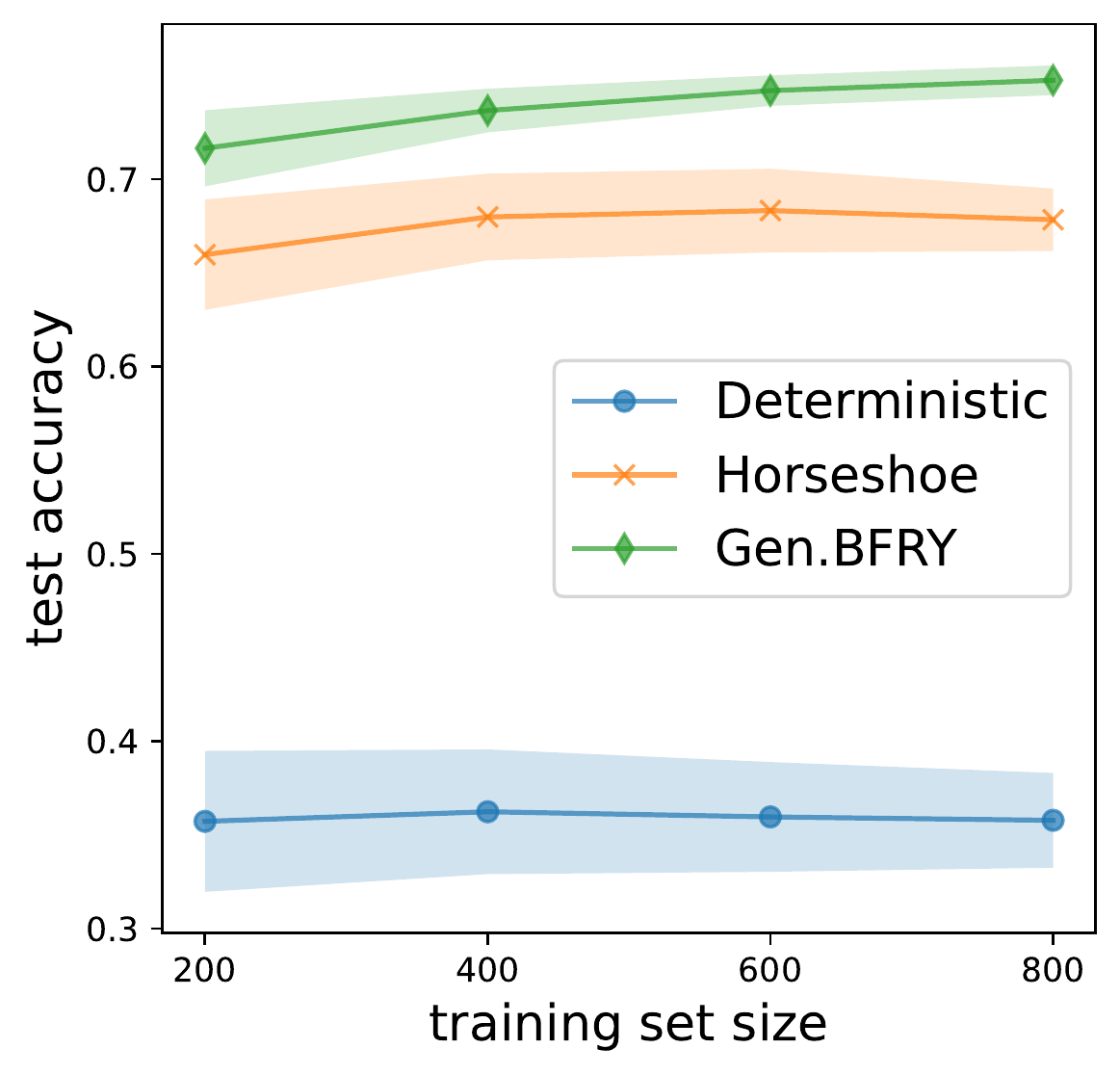}
    \caption{Comparing feature learning aspect of FFNN models via transfer learning. Results for MNIST (left) and Fashion MNIST (right).}
    \label{fig:bayes_transfer}
\end{figure}

\section{Discussion and Further Directions}
\label{sec:discussion}

\paragraph{Models with iid non-Gaussian weights.} \citet{Neal1996}'s seminal work on infinite-width limits of shallow neural networks includes the case where weights are initialised by an iid symmetric stable distribution. A subsequent thorough analysis \cite{Der2006} showed that such networks converge to stable processes as the widths increase, and this analysis was generalised from shallow networks to deep networks~\cite{Favaro2020,Bracale2021,Jung2021}. This line of work (the iid model) and our dependent-weight model can both lead to infinite-width limits that are heavy-tailed non-Gaussian processes; however, the sources of heavy-tails in the two cases are different. In the former (the iid model), the source is the use of a non-Gaussian, heavy-tailed distribution for initialising weights, which corresponds to assuming in our setup that the $V_{jk}^{(l)}$ are sampled independently from a heavy-tailed distribution, instead of a Gaussian distribution. In the latter (dependent-weight model), on the other hand, the source is the use of a per-node random variance that is shared by the weights of all outgoing edges from a given node. As a result, the limiting networks of these two classes of models have different properties. In the former, nodes in a layer of a limiting network are always independent, while in the latter, they are usually dependent. Also, in the former, the limiting networks exist largely due to a normalisation adjusted for the sum of heavy-tailed non-Gaussian random variables, while in the latter, the limiting networks exist because the random variances of nodes in a layer remain summable at the infinite-width limit. One interesting future direction is to study the generalisation of our model class where the $V_{jk}^{(l)}$ are sampled independently from a stable or heavy-tailed distribution using techniques developed in \cite{Neal1996,Der2006,Favaro2020,Bracale2021,Jung2021}. Such a generalisation would be a good starting point for analysing the pruning of not only network nodes but also edges. Also, our tools for handling the limiting random variables with infinitely divisible distributions, and for analysing the tail behaviour and pruning of the limiting neural networks, may help extend results of that line of work. Finally, we point out that \citet{LeeYYL22} studied the infinite-width limits of neural networks where weights are initialised by iid Gaussians but the shared variance of weights in the last readout layer is made random. This simple change caused the limiting networks to be mixtures of Gaussian processes, increasing the expressivity of those networks without sacrificing the tractability of Gaussian processes for inference much. Our work differs from \citet{LeeYYL22} in that our models use per-node random variances, while theirs use per-layer random variances.

\paragraph{Row-column exchangeable models.} \citet{Tsuchida2019} introduced a general class of deep FFNNs with dependent weights, and analysed their infinite-width limits. They consider\footnote{Under the assumtion that $\bfE[F_{jk}]=0$} that, at a given layer, the weights take the form
$W_{jk}=F_{jk}/\sqrt{p}$,
where the (infinite) array of random variables $(F_{jk})$ is assumed to be row-column exchangeable (RCE), that is
$(F_{\pi_1(j)\pi_2(k)})\overset{\cvd}{=}(F_{jk})$
for any permutations $\pi_1$ and $\pi_2$ of $\mathbb N$. By the Aldous-Hoover theorem, any RCE array admits the representation $F_{jk}=f(A, B_j,C_k,D_{jk})$ for some measurable function $f$ and some iid random variables $A, B_j,C_k,D_{jk}$. A crucial thing to note is that here none of the random variables depends on the width $p$. In contrast, our model assumes that the rows are independent, and their distributions may depend on $p$ in a non-trivial way. The models presented in \cref{sec:ex:inversegamma}, which converge to a GP, fall into this RCE class, as they can be expressed as $W_{jk}=\frac{\sqrt{Y_j}}{\sqrt{p}} V_{jk}$. More generally, \citet{Tsuchida2019} show that RCE models converge to a MoGP. The properties of the infinite-width neural network are, however, very different from those obtained in this article in the MoGP regime. A key difference between the infinitely divisible models considered here and RCE models is that in the latter, the weights still converge uniformly to 0 in the infinite-width limit:
$\max_{j=1,\ldots,p} |W_{jk}|=\frac{1}{\sqrt{p}}\max_{j=1,\ldots,p} |F_{jk}|\cvpto 0$
as $p\to\infty$. This follows from the fact that $(F_{jk})$ are exchangeable in $j$, and thus conditionally iid. RCE therefore cannot exhibit the behaviour described in \cref{prop: extremes}.

\paragraph{L\'evy adaptive regression kernels.} In the case of a single hidden layer with location parameter $a=0$, the infinite-width model falls in the class of L\'evy adaptive regression kernels \cite{Wolpert2011}, which is explored by \citet{Jang2017} for kernel learning.

\paragraph{L\'evy measures and sparsity-promoting priors.} The link between L\'evy measures and sparsity promoting models has been explored in the high-dimensional (sparse) linear regression setting, see e.g. the works of \citet{Caron2008,Polson2012,McCullagh2018}.

\paragraph{MoGP as a stationary distribution for trained networks with SGD.} Understanding the statistical properties of trained neural networks is a complex endeavor due to the non-convexity of deep learning problems. The past decade has seen considerable efforts to provide theoretical tools to understand neural networks trained with SGD. A fruitful line of research proposes to view the SGD dynamics with a fixed step size as a stationary stochastic process \citep{mandt2016variational,hu2017diffusion,chaudhari2018stochastic,zhu2018anisotropic}. The intuition is that with a constant step size, SGD first moves towards a valley of the objective function and then bounces around because of sampling noise in the gradient estimate. This stationary distribution governs the statistical properties of the trained networks. The core question is then how to model the stationary distribution. In \citet{shin2021compressing,barsbey2021heavy}, the authors base their models on the fact that when the networks are trained with large learning rates or small batch sizes, heavy-tailed stationary distributions emerge \citep{hodgkinson2021multiplicative,gurbuzbalaban2021heavy}. Under such a hypothesis, the authors derive a generalisation bound as a function of the tail index of the weights. Both papers assume that the weights of the trained network are asymptotically independent when the width is large enough. In \citet{shin2021compressing}, the weights follow a Pareto distribution, whereas in \citet{barsbey2021heavy}, the weights follow a generic heavy-tailed distribution. One might explore whether our MoGP framework might be used as a possible stationary distribution of a trained network, so that similar generalisation bounds could then be derived (see \cref{sec:compression} for more details). Our proposed framework has the additional benefit of readily taking into account the summability of the weights as the width goes to infinity, whereas other models would need to impose an additional scaling factor.

\paragraph{Infinite networks with bottlenecks.} In order to allow for feature/representation learning in infinitely-wide neural networks, \citet{Aitchison2020} recently proposed the use of infinite neural networks with bottlenecks. The idea is to consider the same iid Gaussian assumptions on the weights as for NNGP, but to set one layer (the bottleneck layer) to have a fixed number of hidden nodes, while taking the number of nodes in other layers to infinity. As shown by \citet{Aitchison2020}, the resulting model no longer converges to a Gaussian process, but rather to a Gaussian process with a random kernel, and this then allows for representation learning. The training dynamics of such a model is also partially analysed~\cite{Littwin2021}.

Our general approach allows to construct neural networks with bottlenecks similar to \cite{Aitchison2020}; the difference is that in our case, the limiting model is obtained by taking the widths of all layers to infinity, including the bottleneck layer, but the number of active nodes in the bottleneck layer remains finite in this limit, and converges to a Poisson distribution. For example, one way to achieve this is to use, in the bottleneck layer, the Bernoulli prior described in \cref{sec:bernoulliprior} where, for $p_l\geq c$, $\lambda^{(l)}_{p_l,j}\sim\Ber(c/p_l)$, for some $c>0$. The limit has parameters $a^{(l)}=0$ and $\rho^{(l)}=c\delta_1$. As $p_l$ tends to infinity, the number of active hidden nodes of the bottleneck layer, that is the number of nodes with $\lambda_{p_l,j}>0$, converges to a Poisson random variable:
$\sum_{j=1}^{p_l} \ind_{\{\lambda_{p_l,j}>0\}}\cvdto \Poi(c)$.
Using $\lambda^{(l)}_{p_l,j} = c_2/p_l$ for all other layers, we obtain a model where the bottleneck layer has a random number of hidden nodes.

\paragraph{Deep Gaussian processes.} Deep Gaussian processes~\cite{Damianou2013,Bui2016,Dunlop2018} are hierarchical models where each layer is described by a latent variable Gaussian process. That is, they denote a random function $g :\R^{p_0} \to \R^{p_{L+1}}$ of the form
$g(\bx) =(\bold f^{(L+1)}\circ\ldots \circ \bold f^{(1)})(\bx)$, where
 for each $l=1,\ldots,L+1$, the $l$-th GP layer $\bold f^{(l)}$ has width $p_l$ and is defined as follows:
\begin{align*}
\bold f^{(l)} & : \R^{p_{l-1}} \to \R^{p_l},
&
\bold f^{(l)}(\bold z) & =(f_1^{(l)}(\bold z),\ldots,f_{p_l}^{(l)}(\bold z))
\end{align*}
with $f_{j}^{(l)}\overset{\text{iid}}{\sim} \text{GP}(0,k^{(l)})$ for some kernel $k^{(l)}:\mathbb R^{p_{l-1}}\times\mathbb R^{p_{l-1}}\to\mathbb R$ of the $l$-th layer. As noted by a number of authors, a finite neural network with iid Gaussian weights is a deep Gaussian process. Similarly, for our finite model, \textit{conditionally on the set of variances} $(\lambda_{p_l,j}^{(l)})_{l=1,\ldots,L;j=1,\ldots,p_l}$, the model can be seen as a deep Gaussian process, whose kernels $k^{(l)}$ depends on $(\lambda_{p_l,j}^{(l)})_{j=1,\ldots,p_l}$. Assume $a^{(l)}=0$. In the infinite limit, as is apparent from \cref{th:multipleinputs}, we still have a deep Gaussian process, \textit{conditionally on the point processes} $\{\widetilde{\lambda}_{j}^{(l)}\}_{j\geq1}$ with mean measure $\rho^{(l)}$. The Gaussian process kernels are given by
\[
k^{(1)}(\mathbf{x},\mathbf{x}^{\prime}):=\sigma_{b}^{2}+\sigma_{v}^{2}\frac{\mathbf{x}^{T}\mathbf{x}^{\prime}}{\din},
\qquad
k^{(l+1)}(\mathbf{z},\mathbf{z}^{\prime}):=\sigma_{b}^{2}+\sigma_{v}^{2}\sum_{j\geq1}^{{}}\widetilde{\lambda}_{j}^{(l)}\phi(  z_{j})
\phi(  z_{j}')\ \text{for}\  l=1,\ldots,L.
\]

\paragraph{Neural tangent kernels.} An interesting direction would be to investigate the behaviour of the neural network during training by gradient descent when initialised with the MoGP model. In the iid Gaussian case, it has been shown that the evolution of the neural network can be described by a fixed kernel (neural tangent kernel) in the infinite-width limit~\cite{Jacot2018}, and that this kernel remains constant over training. We conjecture that in the MoGP case, when the L\'evy measure is non-trivial, we would obtain a random kernel in the infinite-width limit, where this kernel evolves over training.

\acks{We would like to extend our gratitude to the Action Editor, Dan Roy, and the two anonymous reviewers for their detailed and insightful feedback. In particular, their contributions significantly enhanced the clarity and rigour of \cref{th:multipleinputs}'s statement and proof. HL and PJ were funded in part by the National Research Foundation of Korea (NRF) grants NRF-2017R1A2B200195215 and NRF-2019R1A5A102832421. HL was funded in part by NRF grant NRF-2023R1A2C100584311. HY was supported by the Engineering Research Center Program through the National Research Foundation of Korea funded by the Korean government MSIT (NRF-2018R1A5A1059921) and also by the Institute for Basic Science (IBS-R029-C1). JL was supported by an Institute of Information \& communications Technology Planning
\& Evaluation (IITP) grant funded by the Korea government MSIT (No.2019-0-00075), Artificial Intelligence Graduate School Program (KAIST).}


\newpage

\appendix

\begin{center}
\huge \bf Appendices
\end{center}

\paragraph{Organisation of the appendices.} \Cref{app:background-all} contains background material on positive homogeneous functions, ReLU kernels, regularly varying functions, stable distributions, conditions for the convergence to infinitely divisible random variables and L\'evy measures.   \Cref{app:limit thms} gives and proves some limit theorems on independent triangular arrays. These results form the building blocks of the proofs of the main theorems and propositions, which are given in \Cref{app:proofs-all}.   \Cref{app:additionalresults} provides additional secondary theoretical results on the properties of small weights in our model, and on the infinite-width limit for multiple inputs in the symmetric $\alpha$-stable case. \Cref{app:examples_all} provides details of the derivations for the examples given in \cref{sec:intro,sec:examples} and additional properties concerning these examples. It also provides a general recipe to construct novel examples, and describes new concrete examples. Finally, \cref{app:Bayesian-experiment} provides additional experimental results, including stability with respect to depth and our experimental findings on convolutional neural networks.

\section{Background Material}
\label{app:background-all}

\subsection{Background on Positive Homogeneous Functions}
\label{app:homogeneous}
Let $\phi:\mathbb R\to\mathbb R$ be a positive homogeneous function. That is, for all $\alpha>0$,
\[
\phi(\alpha x)=\alpha\phi(x).
\]
Define $ C_{\mathrm{Lip}}:= \max(|\phi(1)|,|\phi(-1)|) $.
Then, $\phi$ is $C_{\mathrm{Lip}}$-Lipschitz continuous and satisfies $ |\phi(x)|\leq C_{\mathrm{Lip}} |x| $ for all $x$.
\begin{proof}
We first show that $|\phi(x)| \leq C_{\mathrm{Lip}} |x|$ for all $x$. When $x$ is 0, $\phi(0)=\alpha\phi(0)$ for all $\alpha>0$ and so $\phi(0)=0$. When $x$ is not 0,
$$
        |\phi(x)| = |x|\left|\phi\left(\frac{x}{|x|}\right)\right| \leq |x|\max(|\phi(1)|,|\phi(-1)|) = |x| C_{\mathrm{Lip}}.
$$
Next we show the Lipschitz continuity. Let $x,y \in \R$. If both $x$ and $y$ are nonnegative, then
\[
|\phi(x) - \phi(y)| = |x \phi(1) - y \phi(1)| = |\phi(1)||x-y| \leq C_{\mathrm{Lip}} |x - y|.
\]
If both $x$ and $y$ are nonpositive, then
\[
|\phi(x) - \phi(y)| = |(-x) \phi(-1) - (-y) \phi(-1)| = |\phi(-1)||x-y| \leq C_{\mathrm{Lip}} |x - y|.
\]
The remaining case is that one of $x$ and $y$ is positive and the other is negative. Without loss of generality, we may assume that $x$ is positive.
Then,
\begin{align*}
|\phi(x) - \phi(y)|
= \big| |x|\phi(1) - |y|\phi(-1)\big|
\leq |x||\phi(1)| + |y||\phi(-1)|
\leq C_{\mathrm{Lip}}(|x|+|y|) = C_{\mathrm{Lip}}|x-y|.
\end{align*}
\end{proof}

\subsection{Background on ReLU Kernels}\label{app:relu}

Following \cite{Cho2009}, we have, for $\Sigma=
\begin{pmatrix}
\Sigma_{11} & \Sigma_{12}\\
\Sigma_{21} & \Sigma_{22}%
\end{pmatrix} $, $
\begin{pmatrix}
X\\
Y
\end{pmatrix}
\sim\mathcal{N}(0,\Sigma)$ and
$\alpha \geq 0$,
\[
\bfE\left[  \max(0,X)^{\alpha}\max(0,Y)^{\alpha}\right]  =\frac{1}{2\pi}\left(
\Sigma_{11}\Sigma_{22}\right)  ^{\alpha/2}J_{\alpha}(\theta)
\]
where
\[
J_{\alpha}(\theta)=\Gamma(\alpha+1)\sin^{2\alpha+1}(\theta)\int_0^{\pi/2} \frac{\cos^\alpha(x)}{(1-\cos(\theta)\cos(x))^{\alpha+1}}dx
\]
with $$\theta=\arccos(\rho)\quad\text{and}\quad \rho=\frac{\Sigma_{12}}{\sqrt{\Sigma
_{11}\Sigma_{22}}}.$$
Furthermore, if $\alpha\in\N$,
\[
J_{\alpha}(\theta)
  = (-1)^{\alpha}\sin^{2\alpha+1}(\theta)
    \left(\frac{1}{\sin(\theta)}\frac{\partial}{\partial\theta}\right)^{\alpha}
    \left(\frac{\pi-\theta}{\sin(\theta)}\right)
\]
In particular, we have%
\begin{align*}
        J_{0}(\theta)  & =\pi-\theta\\
        J_{1}(\theta)  & =\sin(\theta)+(\pi-\theta)\cos(\theta)\\
        J_{2}(\theta)  & =3\sin(\theta)\cos(\theta)+(\pi-\theta)(1+2\cos^{2}(\theta)).
\end{align*}
For $\alpha=1/2$, we show that
\[
        J_{1/2}(\theta)=\sqrt{\frac{\pi}{2}}\left (2\ellipE\left(\frac{\cos(\theta)+1}{2}\right)-(1-\rho)\ellipK\left(\frac{\cos(\theta)+1}{2}\right)   \right )
\]
where $\ellipK$ and  $\ellipE$ are respectively the complete elliptical integrals of the first and second kind, defined by
\begin{align}
\ellipK(m)&=\int_0^{\pi/2}(1-m\sin^2(t))^{-1/2}dt=\int_0^1 \frac{dt}{\sqrt{(1-t^2)(1-mt^2)}}\label{eq:ellipK}\\
\ellipE(m)&=\int_0^{\pi/2}(1-m\sin^2(t))^{1/2}dt=\int_0^1 \frac{\sqrt{1-mt^2}}{\sqrt{1-t^2}}dt\label{eq:ellipE}
\end{align}
which can be computed efficiently using the arithmetic-geometric mean. Writing the above expressions in terms of the correlation $\rho$, we obtain
\begin{align*}
&\bfE\left[  \max(0,X)^{\alpha}\max(0,Y)^{\alpha}\right] =\frac{1}{2\pi}\left(
\Sigma_{11}\Sigma_{22}\right)  ^{\alpha/2} \kappa_\alpha(\rho)
\end{align*}
where
\begin{align}
        &\kappa_\alpha(\rho)=J_\alpha(\arccos(\rho)) =\left\{
\begin{tabular}
[c]{ll}%
        $\frac{\pi}{2}+\arcsin(\rho)$ & if $\alpha=0$\\
$
\sqrt{\frac{\pi}{2}}\left (2\ellipE(\frac{\rho+1}{2})-(1-\rho)\ellipK(\frac{\rho+1}{2})   \right )  $ &
if $\alpha=1/2$\\
$
        \sqrt{1-\rho^{2}}+\left(  \frac{\pi}{2}+\arcsin(\rho)\right)  \rho  $ &
if $\alpha=1$\\
$  3\sqrt
        {1-\rho^{2}}\rho+\left(  \frac{\pi}{2}+\arcsin(\rho)\right)  (1+2\rho
^{2})  $ & if $\alpha=2$%
\end{tabular}
\right.\label{eq:kappa_alpha}
\end{align}
using the identities $\pi-\arccos(x)=\frac{\pi}{2}+\arcsin(x)$ and $\sin(\arccos
(x))=\cos(\arcsin(x))=\sqrt{1-x^{2}}$.

\bigskip

We also have%
\begin{align*}
\bfE\left[  \max(0,X)^{2\alpha}\right]  &=\frac{1}{2}\bfE\left[  X^{2\alpha}\right]=\left(  \Sigma
_{11}\right)  ^{\alpha}\frac{2^{\alpha-1}\Gamma(\alpha+1/2)}{\Gamma(1/2)}=\left\{
\begin{tabular}
[c]{ll}%
$\frac{1}{2}$ & if $\alpha=0$\\
$\sqrt{\frac{\Sigma_{11}}{2\pi}}$ & if $\alpha=1/2$\\
$\frac{\Sigma_{11}}{2}$ & if $\alpha=1$\\
$\frac{3}{2}\Sigma_{11}^{2}$ & if $\alpha=2$%
\end{tabular}
\right. .
\end{align*}

\begin{proof}
All of the above results are from  \cite{Cho2009}, except for $J_{1/2}$ (or equivalently $\kappa_{1/2}$). Write $\kappa_{1/2}(\rho)=\Gamma(3/2)(1-\rho^2)f(\rho)$, where
\begin{align*}
        f(\rho)&:=\int_0^{\pi/2} \frac{\sqrt{\cos(x)}}{(1-\rho \cos(x))^{3/2}}dx\\
&=\int_0^1 \frac{\sqrt{v}}{(1-\rho v)^{3/2}\sqrt{1-v^2}}dv.
\end{align*}
Integrating with respect to $\rho$ and using Fubini's theorem,
\begin{align}
\int_a^\rho f(u)du=2\int_{0}^{1}\frac{1}{\sqrt{v}(1-\rho v)^{1/2}\sqrt{1-v^{2}}}dv+const.\label{eq:int_f}
\end{align}

Using the change of variables $u = \sqrt{\frac{1+\frac{1}{v}}{2}}$, we have that
\begin{eqnarray*}
\int_{0}^{1}\frac{1}{\sqrt{v}(1-\rho v)^{1/2}\sqrt{1-v^{2}}}dv &=& \int_1^\infty \frac{4 u}{(2u^2-1)^2\sqrt{\frac{1}{2u^2-1}} \sqrt{1-\frac{\rho}{2u^2-1}} \sqrt{1-\frac{1}{(2u^2-1)^2}}} du\\
&=& \int_1^\infty \frac{4 u}{\sqrt{2} \sqrt{u^2-\frac{\rho+1}{2}} \sqrt{2u^2-2} \sqrt{2u^2}}du \\
&=& \sqrt{2} \int_1^\infty \frac{du}{\sqrt{u^2-\frac{\rho+1}{2}} \sqrt{u^2-1}} \\
&=& \sqrt{2} \ellipK\left(\frac{\rho+1}{2}\right).
\end{eqnarray*}

Differentiating \cref{eq:int_f} with respect to $\rho$ gives
$$
f(\rho)=\frac{\sqrt{2}}{1-\rho^2}\left (2\ellipE\left(\frac{\rho+1}{2}\right) - (1-\rho) \ellipK\left(\frac{\rho+1}{2}\right) \right ).
$$
\end{proof}

\subsection{Useful Lemmas on Regularly Varying Random Variables}
\label{app:backgroundRV}

A nonnegative random variable $X$ is said to be regularly varying with index
$\alpha\geq 0$ if and only if
\[
        \Pr(X>x)\overset{x\to\infty}{\sim} x^{-\alpha} L(x)
\]
where $L$ is a slowly varying function.
That is, its survival
function (a.k.a. complementary cumulative distribution function) $\overline
F(x):=\Pr(X>x)$ is regularly varying with index $-\alpha$.

\begin{lemma}\label{lem:Jessen4.1i}
\cite[Lemma 4.1.(i)]{Jessen2006} Assume that $X_{1}$ and $X_{2}$ are
independent nonnegative random variables and that $X_{1}$ is regularly
varying with index $\alpha>0$. If either $X_{2}$ is regularly varying with
index $\alpha>0$ or $\Pr(X_{2}>x)=o(\Pr(X_{1}>x))$, then $X_{1}X_{2}$ is
regularly varying with index $\alpha>0$.
\end{lemma}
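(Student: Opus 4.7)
The plan is to verify regular variation of $X_1 X_2$ directly by showing that for every $\gamma > 0$,
\[
\frac{\Pr(X_1 X_2 > \gamma x)}{\Pr(X_1 X_2 > x)} \longrightarrow \gamma^{-\alpha} \quad \text{as } x \to \infty,
\]
starting from the conditioning identity
\[
\Pr(X_1 X_2 > x) = \int_0^\infty \Pr(X_1 > x/y) \, F_{X_2}(dy).
\]
The strategy is to split this integral into three regions, $\{y \leq M\}$, $\{M < y \leq x/M\}$ and $\{y > x/M\}$, with a constant $M$ that is sent to infinity \emph{after} $x \to \infty$.

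On $\{y \leq M\}$ I would invoke Potter's bounds and the regular variation of $X_1$ to get $\Pr(X_1 > x/y)/\Pr(X_1 > x) \to y^\alpha$ uniformly in $y$ on compact sets, so that this region contributes $\Pr(X_1 > x)\cdot \bfE[X_2^\alpha \mathbf{1}_{\{X_2 \leq M\}}](1+o(1))$. On $\{y > x/M\}$ I would bound $\Pr(X_1 > x/y) \leq 1$, so that the contribution is at most $\Pr(X_2 > x/M)$; in case (ii) this is $o(\Pr(X_1 > x/M)) = o(\Pr(X_1 > x))$ by regular variation of $X_1$, and in case (i) it is comparable to $\Pr(X_1 > x)$ with an explicit dependence on $M$ that is controlled symmetrically by swapping the roles of $X_1$ and $X_2$.

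The middle region $\{M < y \leq x/M\}$ is handled by integration by parts, which rewrites the integral in terms of the tail $\overline{F}_{X_2}(y)$ and the derivative of $y \mapsto \Pr(X_1 > x/y)$. Karamata's theorem for integrals of regularly varying functions then shows that the contribution of this piece, relative to $\Pr(X_1 > x)$, can be made arbitrarily small by taking $M$ large, in both cases. Combining the three contributions and sending $x \to \infty$ first and $M \to \infty$ second gives $\Pr(X_1 X_2 > x) \sim c(x)\, \Pr(X_1 > x)$, where $c(x) \to \bfE[X_2^\alpha]$ in case (ii) when this moment is finite, and is slowly varying in general. Since multiplication by a slowly varying function preserves the index of regular variation, the conclusion follows with index $\alpha$.

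The main technical obstacle lies in the middle region in case (i): since both tails have the same index $\alpha$, their convolution-like interaction can introduce a genuinely slowly varying correction (for iid Pareto inputs one classically gets a $\log x$ factor, as a direct computation shows). Controlling this uniformly requires applying Potter's bounds to \emph{both} tails simultaneously over the growing window $[M, x/M]$ and carefully matching the two endpoints; the rest of the argument is then bookkeeping.
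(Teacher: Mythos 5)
The paper offers no proof of this statement: it is quoted directly from \citet[Lemma 4.1(i)]{Jessen2006}, whose proof in turn rests on Embrechts and Goldie (1980). So your argument has to stand on its own, and it does not: there is a genuine gap in the treatment of the middle region $\{M < y \le x/M\}$, and it is not a bookkeeping issue. Your claim that Karamata's theorem makes this region's contribution ``arbitrarily small relative to $\Pr(X_1>x)$ by taking $M$ large, in both cases'' is false in case (i), and your own closing paragraph refutes it: for iid $\pareto(\alpha,1)$ inputs the middle region contributes exactly $\alpha x^{-\alpha}(\log x - 2\log M)$, which for every fixed $M$ tends to infinity relative to $\Pr(X_1>x)=x^{-\alpha}$. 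The middle region is the \emph{dominant} term there, not an error term, and the same happens in case (ii) whenever $\bfE[X_2^{\alpha}]=\infty$ (e.g.\ $\overline F_1(x)=x^{-\alpha}\log^2 x$, $\overline F_2(x)=x^{-\alpha}$, where the middle region is of order $x^{-\alpha}\log^3 x$). The proposed rescue via Potter's bounds cannot close this: any Potter bound loses a factor $y^{\pm\epsilon}$, and integrating $y^{\alpha+\epsilon}$ against $dF_2$ up to $x/M$ yields an upper bound of order $x^{\epsilon}\Pr(X_1>x)$, which is far too coarse to identify — or even to bound — a slowly varying correction. In short, the entire content of the lemma in the equal-index case is precisely the asymptotic behaviour of the region you defer to ``bookkeeping.''

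The decomposition you describe (condition on $X_2=y$, split at fixed $M$ and at $x/M$, send $x\to\infty$ then $M\to\infty$) is the right tool for Breiman's lemma — the paper's \cref{lem:Jessen4.2}, where the moment condition $\bfE[X_2^{\alpha+\epsilon}]<\infty$ genuinely kills the outer regions — but not for this lemma. The classical proof uses a different split: write $\{X_1X_2>x\}$ as the disjoint union of $\{X_1X_2>x,\,X_1\le\sqrt x\}$, $\{X_1X_2>x,\,X_2\le\sqrt x\}$ and $\{X_1>\sqrt x,\,X_2>\sqrt x\}$ (the first two cannot overlap since $X_1,X_2\le\sqrt x$ forces $X_1X_2\le x$), then shows each term is regularly varying of index $-\alpha$: the last because $\overline F_1(\sqrt x)\,\overline F_2(\sqrt x)$ is RV$(-\alpha)$ in $x$, and the first two via a uniformity argument for the Mellin--Stieltjes convolution restricted to $[0,\sqrt x]$ in which the truncated moment $\int_0^{\sqrt x}y^{\alpha}\,dF_2(dy)$ — slowly varying by Karamata even when $\bfE[X_2^\alpha]=\infty$ — supplies exactly the slowly varying correction (the $\log x$ in the Pareto example). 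If you want a complete self-contained proof, you need to carry out that argument; as written, your proposal proves \cref{lem:Jessen4.2} and asserts \cref{lem:Jessen4.1i}.
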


\begin{lemma}\label{lem:Jessen4.1iv}
\cite[Lemma 4.1.(iv)]{Jessen2006} Let $X_{1}$, \ldots, $X_{{p}}$ be iid
nonnegative random variables with survival function satisfying $\Pr(X_{1}>x) \overset{x\to\infty}{\sim} cx^{-\alpha}$ for some $c>0$. Then,
\[
        \Pr(X_{1}\ldots X_{{p}}>x) \overset{x\to\infty}{\sim}\frac{\alpha^{{p}-1}c^{{p}}}{({p}-1)!}x^{-\alpha}\log^{{p}-1}x.
\]
\end{lemma}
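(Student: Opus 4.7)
The plan is induction on $p$. The base case $p=1$ is exactly the hypothesis. For the inductive step, set $Y := X_1 \cdots X_{p-1}$, so that by the induction hypothesis $\Pr(Y>y) \sim h(y) := \frac{\alpha^{p-2} c^{p-1}}{(p-2)!}\,y^{-\alpha}\log^{p-2}y$. Since $Y$ and $X_p$ are independent, I would start from the multiplicative convolution
\[
\Pr(Y X_p > x) = \int_0^\infty \Pr(Y > x/t)\, dF_{X_p}(t).
\]
By the monotone density theorem (a standard consequence of Karamata's theory), the tail $\Pr(X_p>t) \sim c\,t^{-\alpha}$ together with monotonicity of $F_{X_p}$ yields the asymptotic $-dF_{X_p}(t) \sim \alpha c\, t^{-\alpha-1}\,dt$ at large $t$. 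Substituting both the induction hypothesis and this density asymptotic into the convolution, and changing variables $u=\log(x/t)$ so that $du = -dt/t$ and $t \in (1,x)$ maps to $u \in (0, \log x)$, produces the leading-order term
\[
\Pr(Y X_p > x)\sim \frac{\alpha^{p-1} c^p}{(p-2)!}\,x^{-\alpha}\int_0^{\log x} u^{p-2}\,du = \frac{\alpha^{p-1} c^p}{(p-1)!}\,x^{-\alpha}\log^{p-1}x,
\]
which is exactly the target asymptotic.

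The main obstacle is making these heuristics rigorous. The asymptotic equivalences $\Pr(Y>y)\sim h(y)$ and $-dF_{X_p}\sim \alpha c\, t^{-\alpha-1}\,dt$ hold only for large arguments, so the $t$-integral must be split, say into a main range $t\in[1,x^{1-\eta}]$ and boundary ranges $t\in(x^{1-\eta},x]$ and $t\in(x,\infty)$. On the main range, replacing $\Pr(Y>x/t)/h(x/t)$ by $1$ uniformly in $t$ is the key technical step, and this is where Potter-type bounds for regular variation (applied both to the factor $(x/t)^{-\alpha}$ and to the slowly varying factor $\log^{p-2}(x/t)$) are used. The boundary contributions are then shown to be $o(x^{-\alpha}\log^{p-1}x)$ using a uniform upper bound $\Pr(Y>y)\leq C\,y^{-\alpha+\varepsilon}$ from regular variation together with the assumed tail of $X_p$. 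If one prefers not to assume that $X_p$ has a density, one can bypass the monotone density theorem by integrating by parts, moving the differential onto the smooth function $y\mapsto\Pr(Y>y)$, which already satisfies the needed asymptotic.

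An equivalent and conceptually cleaner route is to take logarithms: set $U_i := \log X_i$, so that $\Pr(U_i>u)\sim c\, e^{-\alpha u}$, and reformulate the conclusion as
\[
\Pr(U_1+\cdots+U_p>u)\sim \frac{\alpha^{p-1} c^p}{(p-1)!}\, u^{p-1}\, e^{-\alpha u}.
\]
For exponentially-tailed variables one can inductively track the density of the sum because the relevant convolution integral $\int_0^u (u-s)^{k}\,e^{-\alpha u}\,\alpha c\, ds = \frac{\alpha c\, u^{k+1}}{k+1}\,e^{-\alpha u}$ is explicit; the combinatorial factor $\frac{1}{(p-1)!}$ then arises naturally from iterated integration, reflecting the volume of the simplex $\{u_i\geq 0 : \sum u_i = u\}$, while each convolution contributes a factor $\alpha c$. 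A final integration in $u$ converts the density asymptotic back to the survival-function form, producing the extra $1/\alpha$ that transforms $\alpha^p$ into $\alpha^{p-1}$ and gives the stated constant.
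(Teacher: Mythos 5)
The paper does not prove this lemma at all: it is quoted verbatim from \citet[Lemma 4.1.(iv)]{Jessen2006}, so there is no internal proof to compare against. Judged on its own terms, your proposal follows the standard route for this result (induction, multiplicative convolution, range splitting, regular-variation bounds), your heuristic computation of the leading term is exactly right, and the constant $\alpha^{p-1}c^p/(p-1)!$ checks out (e.g.\ against the Gamma$(p,\alpha)$ tail after taking logarithms). The alternative logarithmic route you sketch is also a recognised one, though as stated it is more heuristic: $U_i=\log X_i$ need not be nonnegative (or even finite), so the simplex-volume picture and the claim that one can ``track the density of the sum'' require extra work that your first route avoids.

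One step would fail as literally written. On the boundary range $t\in(x^{1-\eta},x]$ you propose to use the Potter-type bound $\Pr(Y>y)\le C\,y^{-\alpha+\varepsilon}$ and conclude that the contribution is $o(x^{-\alpha}\log^{p-1}x)$. Plugging that bound in gives a contribution of order $x^{-\alpha+\varepsilon}\int_{x^{1-\eta}}^{x}t^{\alpha-\varepsilon}\,(-dF_{X_p}(t))=O(x^{-\alpha+\varepsilon\eta})$, which \emph{dominates} $x^{-\alpha}\log^{p-1}x$ for any fixed $\varepsilon,\eta>0$. The repair is to use the sharper bound that the induction hypothesis itself supplies, $\Pr(Y>y)\le C\,y^{-\alpha}\log^{p-2}(e+y)$ for all $y\ge 1$; this turns the boundary contribution into $O(\eta)\cdot x^{-\alpha}\log^{p-1}x$, which is not $o(\cdot)$ for fixed $\eta$ but is killed by taking $\limsup$/$\liminf$ and letting $\eta\to0$ at the end (consistently with the main range contributing the factor $1-\eta^{p-1}$). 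Two smaller omissions: the range $t\in(0,1)$ should be dispatched explicitly (it contributes at most $\Pr(Y>x)=O(x^{-\alpha}\log^{p-2}x)$, which is negligible), and the asymptotic $\bar F_{X_p}(t)\sim c\,t^{-\alpha}$ only holds for large $t$, so the integration by parts on the main range needs the compact part $t\in[1,t_0]$ handled by boundedness. With these adjustments the argument closes.
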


\begin{lemma}\label{lem:Jessen4.2}
\cite[Lemma 4.2]{Jessen2006} Assume that $X_{1}$ and $X_{2}$ are nonnegative
independent random variables and that $X_{1}$ is regularly varying with index
$\alpha>0$.

\begin{enumerate}
\item If there exists $\epsilon>0$ such that $\bfE[X_{2}^{\alpha+\epsilon}]<\infty$, then
\begin{align}
        \label{eq:RVprop}\Pr(X_{1}X_{2}>x)\overset{x\to\infty}{\sim} \bfE[X_{2}^{\alpha}]\Pr(X_{1}>x).
\end{align}
\item If $\Pr(X_{1}>x)\overset{x\to\infty}{\sim} cx^{-\alpha}$ and $\bfE[X_{2}^{\alpha}]<\infty$, then
\cref{eq:RVprop} holds.
\end{enumerate}
\end{lemma}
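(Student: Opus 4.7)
The plan is to condition on $X_2$ and write
\[
\Pr(X_1 X_2 > x) = \bfE\bigl[\overline F_1(x/X_2)\bigr],
\qquad \overline F_1(y) := \Pr(X_1 > y) = y^{-\alpha} L(y),
\]
for some slowly varying $L$. The definition of slow variation gives, for each fixed $t>0$,
\[
\frac{\overline F_1(x/t)}{\overline F_1(x)} = t^\alpha \cdot \frac{L(x/t)}{L(x)} \longrightarrow t^\alpha \quad\text{as } x \to \infty.
\]
So, pointwise in $X_2$, the integrand in
\[
\frac{\Pr(X_1 X_2 > x)}{\overline F_1(x)} = \bfE\!\left[\frac{\overline F_1(x/X_2)}{\overline F_1(x)}\right]
\]
tends to $X_2^\alpha$, and the claim would follow by swapping limit and expectation.

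The main obstacle is to justify that swap, for which I would invoke Potter's bounds: for any $\delta \in (0,\alpha)$ there exists $x_0 = x_0(\delta)$ such that for all $x \geq x_0$ and all $t$ with $x/t \geq x_0$,
\[
\frac{\overline F_1(x/t)}{\overline F_1(x)} \leq (1+\delta)\,\max\bigl(t^{\alpha-\delta},\,t^{\alpha+\delta}\bigr) \leq (1+\delta)\bigl(t^{\alpha-\delta} + t^{\alpha+\delta}\bigr).
\]
For part~(1), pick $\delta = \epsilon/2$, split the expectation according to $\{X_2 \leq x/x_0\}$ and $\{X_2 > x/x_0\}$, and bound the first piece by $(1+\delta)\bigl(X_2^{\alpha-\delta} + X_2^{\alpha+\delta}\bigr)$, which is integrable since $\bfE[X_2^{\alpha+\epsilon}] < \infty$ (and $X_2^{\alpha-\delta}$ is dominated by $1 + X_2^{\alpha+\delta}$). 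For the tail piece, use $\overline F_1(\cdot) \leq 1$ and note that $\bfE[X_2^{\alpha+\epsilon}] < \infty$ implies $\Pr(X_2 > y) = o(y^{-\alpha-\epsilon})$ by Markov, while $\overline F_1(x) = x^{-\alpha} L(x)$ decays essentially like $x^{-\alpha}$, so $\Pr(X_2 > x/x_0)/\overline F_1(x) \to 0$. Dominated convergence then yields the stated equivalence.

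For part~(2), the stronger hypothesis $\overline F_1(x) \sim c x^{-\alpha}$ forces $L(x) \to c$, so for any $\eta \in (0,c)$ there exists $x_0$ with
\[
\frac{\overline F_1(x/t)}{\overline F_1(x)} \leq \frac{c+\eta}{c-\eta}\, t^\alpha \quad \text{whenever } x,\, x/t \geq x_0.
\]
Splitting as before, the part of the expectation on $\{X_2 \leq x/x_0\}$ is dominated by a constant multiple of $X_2^\alpha$, which is integrable by hypothesis, so dominated convergence handles it. For the complementary part, the standard estimate $y^\alpha \Pr(X_2 > y) \leq \bfE[X_2^\alpha\,\ind_{\{X_2>y\}}] \to 0$ (valid because $\bfE[X_2^\alpha] < \infty$) shows $\Pr(X_2 > x/x_0) = o(x^{-\alpha})$, hence its ratio to $\overline F_1(x) \sim c x^{-\alpha}$ vanishes. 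Combining both pieces gives the conclusion without requiring the extra moment $\bfE[X_2^{\alpha+\epsilon}]$, explaining why the refined asymptotic $\overline F_1(x) \sim c x^{-\alpha}$ of part~(2) is a genuine strengthening.
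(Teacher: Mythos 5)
The paper does not prove this lemma at all: it is imported verbatim as \cite[Lemma 4.2]{Jessen2006} (a Breiman-type result), so there is no in-paper argument to compare against. Your proof is a correct, self-contained derivation along the standard route for this result — conditioning on $X_2$, using regular variation pointwise, and justifying the interchange of limit and expectation via Potter's bounds plus a tail split at $X_2 > x/x_0$ — and both the part (1) and part (2) arguments go through, including the correct observation that part (2) needs no extra moment because $L(x)\to c$ upgrades Potter's bound to a clean $C\,t^\alpha$ domination and $\bfE[X_2^\alpha]<\infty$ already gives $\Pr(X_2>y)=o(y^{-\alpha})$. Two cosmetic points: in part (1) you should take $\delta=\tfrac12\min(\epsilon,\alpha)$ rather than $\delta=\epsilon/2$, since the domination of $X_2^{\alpha-\delta}$ by $1+X_2^{\alpha+\delta}$ (and the boundedness of $t^{\alpha-\delta}$ near $t=0$) requires $\delta<\alpha$ as you yourself stipulate when invoking Potter; and Markov's inequality gives $\Pr(X_2>y)=O(y^{-\alpha-\epsilon})$ rather than $o(\cdot)$, though the big-$O$ bound already suffices because $x^{-\epsilon}/L(x)\to 0$ for slowly varying $L$. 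Neither affects correctness.
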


\begin{proposition}\label{prop:idRVregvar}
        \cite[Theorem 8.2.1 p.341]{Bingham1989} Let $X\sim \operatorname{ID}(a,\rho)$ be a
nonnegative infinitely divisible random variable. Let $\overline F(x)=1-F(x)$
be its survival function, where $F$ is the cdf of $X$. Then, for all $\alpha
\geq0$,  the tail L\'evy intensity $\overline\rho$ is regularly varying with index $-\alpha$ if and only if $\overline F$ is also. Furthermore,
in that case,
\[
        \overline\rho(x) \overset{x\to\infty}{\sim} \overline F(x).
\]

\end{proposition}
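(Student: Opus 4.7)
The plan is to exploit the Lévy--Itô decomposition of $X\sim\id(a,\rho)$. I would split the Lévy measure as $\rho = \rho_S + \rho_L$, where $\rho_S$ is the restriction of $\rho$ to $(0,1]$ and $\rho_L$ the restriction to $(1,\infty)$. Correspondingly $X\ed a+X_S+X_L$ with $X_S,X_L$ independent, where $X_L$ is a compound Poisson sum $X_L\ed \sum_{i=1}^N J_i$ with $N\sim\Poi(\overline\rho(1))$ and iid jumps $J_i$ of distribution $\rho_L/\overline\rho(1)$, and $X_S$ possesses all exponential moments (since $\rho_S$ is bounded and supported on a bounded interval).

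For the forward implication, suppose $\overline\rho$ is regularly varying of index $-\alpha$. Then $\Pr(J_1>x)=\overline\rho(x)/\overline\rho(1)$ for $x>1$ is regularly varying of the same index and hence subexponential. Classical tail asymptotics for compound Poisson sums with subexponential jump distributions yield $\Pr(X_L>x)\sim \bfE[N]\Pr(J_1>x)=\overline\rho(x)$. Since $X_S$ has finite exponential moments, $\Pr(a+X_S>x)=o(\overline\rho(x))$, and convolution with a light-tailed summand preserves a regularly varying tail (via the one-big-jump principle), so $\Pr(X>x)\sim\overline\rho(x)$. This simultaneously proves that $\overline F$ is regularly varying of index $-\alpha$ and that $\overline F(x)\sim\overline\rho(x)$.

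For the converse, assume $\overline F$ is regularly varying of index $-\alpha$. I would work with the Laplace exponent. From $\bfE[e^{-tX}]=e^{-ta-\psi(t)}$ with $\psi(t)=\int_0^\infty(1-e^{-wt})\rho(dw)$, Karamata's Tauberian theorem translates regular variation of $\overline F$ at infinity into a regular variation statement for $1-\bfE[e^{-tX}]$, and hence for $\psi(t)$, as $t\downarrow 0$. Integration by parts gives the identity $\psi(t)=t\int_0^\infty e^{-wt}\overline\rho(w)\,dw$, so that $\psi(t)/t$ is the Laplace transform of $\overline\rho$. A second application of Karamata's Tauberian theorem then transfers the regular variation of $\psi(t)$ near $0$ into regular variation of $\overline\rho$ at infinity with matching index, and the asymptotic constants must coincide by the forward direction, giving $\overline\rho(x)\sim\overline F(x)$.

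The main obstacle will be the slowly varying case $\alpha=0$, where Tauberian manipulations are delicate and where subexponentiality of the jump distribution must be verified separately through long-tailedness rather than inherited automatically from a positive regular-variation index. A secondary technical point in the forward direction is the convolution step $\Pr(a+X_S+X_L>x)\sim\Pr(X_L>x)$: it is the standard one-big-jump statement for subexponential classes, but care is needed to ensure that the contribution from $a+X_S$, while exponentially small, is genuinely negligible against the slowly decaying target $\overline\rho(x)$ across the relevant range of $x$.
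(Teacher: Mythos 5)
First, a remark on scope: the paper does not prove this proposition at all --- it is quoted verbatim from \cite[Theorem 8.2.1]{Bingham1989} --- so your attempt is being measured against the classical argument rather than against anything in the text.

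Your forward direction is correct and is the standard route. The split $X\ed a+X_S+X_L$ at level $1$ works: $X_S$ has all exponential moments since $\int_0^1(e^{\theta w}-1)\rho(dw)\le \theta e^{\theta}\int_0^1 w\,\rho(dw)<\infty$; a regularly varying survival function of any index $-\alpha\le 0$, including the slowly varying case $\alpha=0$, is subexponential, so the compound Poisson asymptotics $\Pr(X_L>x)\sim \overline\rho(1)\,\Pr(J_1>x)=\overline\rho(x)$ hold (with Kesten's bound justifying dominated convergence in the Poisson sum), and convolving with the superexponentially light $a+X_S$ leaves the tail unchanged. The issue you flag at $\alpha=0$ resolves itself, since $R_0\subset\mathcal S$ is classical.

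The converse, however, has a genuine gap. Karamata's Tauberian theorem links $\overline F(x)\sim x^{-\alpha}L(x)$ to $1-\bfE[e^{-tX}]\sim\Gamma(1-\alpha)\,t^{\alpha}L(1/t)$ only for $\alpha\in[0,1)$. As soon as $\bfE[X]<\infty$ (in particular whenever $\alpha>1$), one has $1-\bfE[e^{-tX}]=t\,\bfE[X]+o(t)$, so the first-order behaviour of the Laplace transform near $0$ --- and hence of $\psi(t)$, and hence of $\psi(t)/t=\int_0^\infty e^{-wt}\overline\rho(w)\,dw$, which then tends to the finite constant $\int_0^\infty\overline\rho(w)\,dw$ --- carries no information about the tail index: an exponential law and a Pareto law of index $2$ with equal means are indistinguishable at this order. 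Your argument therefore establishes the converse only for $\alpha\in[0,1)$ (with $\alpha=1$ already requiring de Haan--type refinements), whereas the proposition is stated, and used in \cref{prop:powerlawactivations}, for arbitrary exponents. The repair is to stay on the distributional side: regular variation of $\overline F$ implies that $F$ is itself subexponential, and the converse half of the Embrechts--Goldie--Veraverbeke theorem for nonnegative infinitely divisible laws (obtained through the same decomposition and light-tail convolution step) then gives $\overline F(x)\sim\overline\rho(x)$ directly, after which $\overline\rho$ inherits the regular variation of $\overline F$; this, or the Banach-algebra arguments of Section 8.2 of \cite{Bingham1989}, is what covers the full range $\alpha\ge 0$.
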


\begin{proposition}\label{prop:resnickProp5}
\cite[Proposition 5(iii)]{Resnick2005} Let $G$ be a regularly varying function with index $\alpha \in \R$,
and $(a_{{p}})_{p}$ and $(b_{{p}})_{p}$ be two sequences that satisfy
        $0<a_{{p}}\rightarrow\infty$, $0<b_{{p}}\rightarrow\infty$ and $a_{{p}}\overset{p \to \infty}{\sim} cb_{{p}}$
for some $0<c<\infty$. Then,
\[
        G(a_{{p}})\overset{p\to\infty}{\sim} c^{\alpha}G(b_{{p}}).
\]

\end{proposition}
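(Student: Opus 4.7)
The plan is to reduce the claim to the Uniform Convergence Theorem (UCT) for regularly varying functions, which states that if $G$ is regularly varying at infinity with index $\alpha \in \R$, then $G(tx)/G(x) \to t^{\alpha}$ as $x \to \infty$, uniformly in $t$ on every compact subset of $(0,\infty)$. This is the standard Karamata-type result (see e.g. Bingham, Goldie and Teugels, \emph{Regular Variation}, Theorem 1.5.2), and is essentially the only ingredient needed.

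First I would set $t_p := a_p/b_p$, so that $a_p = t_p b_p$. By hypothesis, $t_p \to c$ with $0<c<\infty$. Hence there exist constants $0 < c_1 < c < c_2 < \infty$ and an index $p_0$ such that $t_p \in [c_1, c_2]$ for all $p \ge p_0$. Since $b_p \to \infty$, I may apply the UCT on the compact set $K = [c_1, c_2]$ to obtain
\begin{equation*}
\sup_{t \in K} \left| \frac{G(t\, b_p)}{G(b_p)} - t^{\alpha} \right| \longrightarrow 0 \quad\text{as } p \to \infty .
\end{equation*}
In particular, evaluating at the specific point $t = t_p \in K$,
\begin{equation*}
\frac{G(a_p)}{G(b_p)} = \frac{G(t_p\, b_p)}{G(b_p)} = t_p^{\alpha} + o(1) \longrightarrow c^{\alpha},
\end{equation*}
where the last convergence uses continuity of $t \mapsto t^{\alpha}$ at $t = c \in (0,\infty)$ together with $t_p \to c$. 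This gives $G(a_p) \sim c^{\alpha} G(b_p)$, which is exactly the conclusion.

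The only real subtlety is the appeal to the UCT: it requires some regularity of $G$ (measurability suffices in the standard statement, and is implicitly present since $G$ is assumed regularly varying in the usual sense). This is where the argument would be most delicate if one wished to state the result under minimal hypotheses, but in the framework of this paper the functions of interest (tail L\'evy intensities, survival functions, etc.) are all measurable, so the UCT applies directly and no additional work is needed. The rest of the argument is a one-line substitution.
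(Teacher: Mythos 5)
Your proof is correct. The paper does not actually prove this proposition---it is quoted verbatim from the cited reference (\cite[Proposition 5(iii)]{Resnick2005}) and used as a black box---so there is no in-paper argument to compare against; your reduction to Karamata's uniform convergence theorem, restricting $t_p = a_p/b_p$ to a compact subset of $(0,\infty)$ and then using continuity of $t \mapsto t^{\alpha}$ at $c$, is precisely the standard proof one finds in that reference, and your remark that measurability of $G$ (implicit in the definition of regular variation) is what licenses the UCT is the right caveat.
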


\begin{lemma}\cite[Lemma 2, VIII.8]{Feller1971}\label{lem: feller slowly varying bound}
	If L is slowly varying at infinity, then for any $ \delta > 0$, there exists $ x_{0} $ such that $ x^{-\delta} < L(x) < x^{\delta} $ for all $ x > x_0 $.
\end{lemma}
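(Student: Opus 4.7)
The plan is to derive these Potter-type bounds as a direct consequence of the uniform convergence theorem for slowly varying functions, which asserts that $L(\lambda x)/L(x) \to 1$ as $x \to \infty$ uniformly in $\lambda$ on each compact subset of $(0, \infty)$.

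First I would fix $\delta > 0$ and pick $\epsilon > 0$ so small that both $\log_2(1+\epsilon) < \delta$ and $-\log_2(1-\epsilon) < \delta$. By the uniform convergence theorem applied to $\lambda$ ranging over the compact interval $[1, 2]$, there exists $M > 0$ such that $|L(\lambda x)/L(x) - 1| < \epsilon$ for every $x \geq M$ and every $\lambda \in [1, 2]$.

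Next, given any $x \geq M$, I would write $x = 2^k y$ with $y \in [M, 2M)$ and $k = \lfloor \log_2(x/M) \rfloor \leq \log_2(x/M)$, and telescope:
\[
\frac{L(x)}{L(y)} = \prod_{i=0}^{k-1} \frac{L(2^{i+1} y)}{L(2^i y)}.
\]
Since $2^i y \geq M$ for all $i \geq 0$, each factor lies in $(1-\epsilon, 1+\epsilon)$, so $(1-\epsilon)^k \leq L(x)/L(y) \leq (1+\epsilon)^k$. By uniform convergence once more, $L(y)/L(M) \in (1/2, 2)$ for all $y \in [M, 2M]$ when $M$ is enlarged if necessary, so $L$ is bounded above and bounded away from zero on $[M, 2M]$ by positive constants $c_1, c_2$. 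Combining, $c_1 (1-\epsilon)^k \leq L(x) \leq c_2 (1+\epsilon)^k$, and using $k \leq \log_2(x/M)$ gives $L(x) \leq c_2 (x/M)^{\log_2(1+\epsilon)}$ and $L(x) \geq c_1 (x/M)^{\log_2(1-\epsilon)}$. Since $\log_2(1+\epsilon) < \delta$ and $-\log_2(1-\epsilon) < \delta$, both bounds are of the form $C x^{\pm \delta'}$ with $\delta' < \delta$, so enlarging $x_0$ further to absorb the multiplicative constants yields $x^{-\delta} < L(x) < x^{\delta}$ for all $x > x_0$.

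The main obstacle is the uniform convergence theorem itself, which is the key non-elementary input and which I would invoke rather than reprove (it rests on measurability of $L$ and is the classical Karamata result). An alternative route would use Karamata's representation $L(x) = c(x)\exp\bigl(\int_a^x \epsilon(t)/t\, dt\bigr)$ with $c(x) \to c > 0$ and $\epsilon(t) \to 0$: splitting the integral at a point beyond which $|\epsilon(t)| < \delta/2$ and estimating directly gives the bounds almost immediately, but shifts the heavy lifting onto establishing the representation.
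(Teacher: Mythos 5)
Your proof is correct. Note that the paper itself does not prove this lemma at all --- it imports it verbatim from Feller (Vol.~II, VIII.8, Lemma 2) --- so the only meaningful comparison is with the classical proofs. Your route (uniform convergence theorem on $\lambda\in[1,2]$, then a dyadic telescoping $x=2^k y$ with $y\in[M,2M)$, then converting $(1\pm\epsilon)^k$ into $(x/M)^{\pm\log_2(1\pm\epsilon)}$ and absorbing constants by enlarging $x_0$) is one of the two standard arguments and every step checks out, including the slightly delicate direction of the inequality $(1-\epsilon)^k\geq(1-\epsilon)^{\log_2(x/M)}$ for $k\leq\log_2(x/M)$. The alternative you sketch at the end --- Karamata's representation $L(x)=c(x)\exp\bigl(\int_a^x \epsilon(t)t^{-1}\,dt\bigr)$ with $\epsilon(t)\to 0$, splitting the integral where $|\epsilon|<\delta/2$ --- is essentially Feller's own derivation and is shorter once the representation is granted; the trade-off between the two is exactly as you describe, namely which non-elementary Karamata-type input (uniform convergence versus the representation theorem) one is willing to take as given. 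Both are legitimate, and your choice to invoke the uniform convergence theorem rather than reprove it is appropriate for a lemma of this standing.
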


\subsection{Background on Positive Stable Random Variables}\label{app:positive stable}

A (possibly degenerate) positive strictly stable random variable $X\sim \stable(\alpha,\gamma)$ with stability exponent $\alpha\in(0,1]$ and scale parameter $\gamma>0$ has Laplace transform
$$
\bfE[e^{-tX}]=e^{-(\gamma t)^\alpha},\text{ for }t\geq0.
$$
It satisfies, for any $n\geq1$, $\sum_{i=1}^n X_i \overset{\cvd}{=} n^{1/\alpha}X$ where $X_1$, $\ldots$, $X_n$ are iid copies of $X$, and is an important example of an infinitely divisible random variable. If $\alpha=1$, $X=\gamma$ is degenerate at $\gamma$; for $\alpha\in(0,1)$, it is non-degenerate with support $(0,\infty)$. In general, $X$ is infinitely divisible with
$$
X\sim\left \{
\begin{array}{ll}
        \id(0, \rho_{\operatorname{stable}}(\,\cdot\,;\alpha,\gamma/\Gamma(1-\alpha)^{1/\alpha})) & \text{if }\alpha\in(0,1) \\
  \id(\gamma, 0) & \text{if }\alpha=1
\end{array}\right .
$$
where $\rho_{\operatorname{stable}}(dx;\alpha,c)$ denotes the following alpha--stable L\'evy measure on $(0,\infty)$ with stability exponent $\alpha\in(0,1)$ and parameter $c>0$:
\begin{align}
\rho_{\operatorname{stable}}(dx;\alpha,c):=\alpha c^\alpha x^{-\alpha-1}\ind_{\{x>0\}} dx.\label{eq:stablemeasure1}
\end{align}
In the special case $\alpha=\frac{1}{2}$, we have $\stable(1/2,\gamma) =\IG(\frac{1}{2},\frac{\gamma}{4})$, that is, the stable distribution with scale parameter $\gamma$ corresponds to the inverse gamma distribution with shape $1/2$ and scale $\gamma/4$.

\begin{remark}\label{stable remark}
Standard definitions of positive stable random variables are given for non-degenerate random variables, with $\alpha\in(0,1)$. See e.g. \cite{samorodnitsky1994stable,Janson2011}. We include here the degenerate case $\alpha=1$, as this case relates to the Gaussian process limit, as we show in~\cref{sec: inf width limit multi inputs}. For $\alpha\in(0,1)$, our parameterisation $\stable(\alpha,\gamma)$ corresponds to the standard four-parameters parameterisation~\cite[Theorem 3.3]{Janson2011} $S_\alpha(\gamma_0,\beta_0,\delta_0)$, with $\beta_0=1$, $\delta_0=0$ and $\gamma_0=\gamma (\cos(\pi\alpha/2))^{1/\alpha}$.
\end{remark}

\subsection{Background on L\'evy Measures on $(0,\infty)$}

The generalised gamma L\'evy measure~\cite{Hougaard1986,Brix1999} has three parameters $\eta>0$, $\alpha\in(-\infty,1)$ and $\tau>0$ if $\alpha\leq 0$ and $\tau\geq0$ if $\alpha\in(0,1)$. It is defined by
\begin{equation}
\rho_{\operatorname{gg}}(dx;\eta,\alpha,\tau)=\eta\frac{1}{\Gamma(1-\alpha)}x^{-\alpha-1}e^{-\tau x}dx.
\label{eq:generalisedgammameasure}
\end{equation}
It is finite when $\alpha<0$, and infinite otherwise. It admits as special cases the gamma measure when $\alpha=0$ and the positive stable measure if $\alpha\in(0,1)$ and $\tau=0$. We denote the gamma measure and (scaled) positive stable measures as
\begin{align}
\rho_{\operatorname{gamma}}(dx;\eta,\tau):=\rho_{\operatorname{gg}}(dx;\eta,0,\tau)&=\eta x^{-1}e^{-\tau x}dx,\label{eq:gammameasure}\\
\rho_{\operatorname{stable}}(dx;\alpha,c):=\rho_{\operatorname{gg}}(dx;\alpha c^\alpha\Gamma(1-\alpha),\alpha,0)&=\alpha c^\alpha x^{-\alpha-1}dx.\label{eq:stablemeasure2}
\end{align}
Here $\eta,\tau > 0$ for the gamma measure, and
$\alpha\in(0,1)$ and $c>0$ for the stable L\'evy measure. The $c$ is called a scaling parameter. Note that $\id(0,\rho_{\operatorname{gamma}}({} \cdot {};\eta,\tau))=\gammadist(\eta,\tau)$. Additionally, if $X\sim \id(0,\rho_{\operatorname{stable}}({} \cdot {};\alpha,c))$, then $X$ is a positive stable random variable with parameter $\alpha\in(0,1)$, with Laplace transform $\bbE[e^{-tX}]=e^{-(\gamma t)^\alpha}$ for $\gamma := c \cdot \Gamma(1-\alpha)^{1/\alpha}$.

The stable beta L\'evy measure with parameter $\eta>0$, $\alpha\in(-\infty,1)$, $\phi>-\alpha$ is defined as~\cite{Hjort1990,Thibaux2007,Teh2009}
\begin{equation}
\rho_{\operatorname{sb}}(dx;\eta,\alpha,\phi)=\eta\frac{\Gamma(1+\phi)}{\Gamma(1-\alpha)\Gamma(\phi+\alpha)}x^{-\alpha-1}(1-x)^{\phi+\alpha-1}\ind_{\{x\in(0,1)\}}dx.
\label{eq:stablebetameasure}
\end{equation}
It is infinite if $\alpha\geq 0$ and finite otherwise. If $\alpha=0$, this is known as the beta L\'evy measure.

The scaled stable beta measure has the additional scale parameter $c>0$, and is defined as
\begin{equation}
\rho_{\operatorname{ssb}}(dx;\eta,\alpha,\phi,c)=\eta\frac{\Gamma(1+\phi)c^\alpha}{\Gamma(1-\alpha)\Gamma(\phi+\alpha)}x^{-\alpha-1}(1-x/c)^{\phi+\alpha-1}\ind_{\{x\in(0,c)\}}dx.
\label{eq:scaledstablebetameasure}
\end{equation}

The following proposition derives some connections between the scaled stable beta measure and the generalised gamma measure, and some invariance property of the scaled stable measure. Similar expressions were obtained by \citet{Griffin2017a} for constructing dependent completely random measures.

\begin{proposition}[Gamma-function integral formulas]\label{prop:gamma formulas}
Let $\gammadist(x;a,b)$ denote the pdf of a Gamma random variable with shape parameter $a$ and inverse scale parameter $b$. Let $\kappa>0$, $\eta>0$, $\alpha\in(-\infty,1)$, $c>0$, $\phi>\max(0,-\alpha)$ and $b>0$. Then,
\begin{align}
\rho_{\operatorname{gg}}\left(dx;\frac{\eta\kappa \phi}{(bc)^\alpha},\alpha,bc\right)&=dx\times\kappa\int_0^\infty \frac{1}{z}\gammadist\left(\frac{x}{z};\phi,b\right)\rho_{\operatorname{ssb}}(dz;\eta,\alpha,\phi,1/c),\\
\rho_{\operatorname{stable}}\left(dx;\alpha, \frac{c}{b}\left ( \frac{\kappa\Gamma(\phi+\alpha)}{\Gamma(\phi)} \right )^{1/\alpha}\right)&=dx\times\kappa\int_0^\infty \frac{1}{z}\gammadist\left(\frac{x}{z};\phi,b\right)\rho_{\operatorname{stable}}(dz;\alpha,c).
\end{align}
\end{proposition}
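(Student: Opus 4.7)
The plan is to prove both formulas by direct computation: fix $x>0$, unpack the densities on the right-hand side, and evaluate the integral in $z$ explicitly. In both cases, after substituting in the definitions of $\gammadist(\,\cdot\,;\phi,b)$ and of the L\'evy measure in the integrand, the integral reduces, via the change of variables $u = bx/z$, to a Gamma integral. The main bookkeeping issue is keeping track of the scaling constants so that the right-hand side matches precisely the given $\rho_{\operatorname{gg}}$ or $\rho_{\operatorname{stable}}$ on the left.

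For the second (stable) identity, substituting the densities gives
\begin{align*}
\kappa\int_0^\infty \frac{1}{z}\gammadist\left(\tfrac{x}{z};\phi,b\right)\rho_{\operatorname{stable}}(dz;\alpha,c)
= \kappa \,\frac{\alpha c^\alpha b^\phi}{\Gamma(\phi)} x^{\phi-1}\int_0^\infty z^{-\phi-\alpha-1}e^{-bx/z}\,dz.
\end{align*}
The change of variables $u=bx/z$ transforms the integral into $(bx)^{-\phi-\alpha}\Gamma(\phi+\alpha)$, and collecting constants gives exactly $\alpha\bigl(\tfrac{c}{b}\bigr)^\alpha\,\tfrac{\kappa\,\Gamma(\phi+\alpha)}{\Gamma(\phi)}\,x^{-\alpha-1}\,dx$, which is the claimed left-hand side.

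For the first (generalised gamma) identity, the same substitutions give
\begin{align*}
\kappa\int_0^{1/c} \frac{1}{z}\gammadist\left(\tfrac{x}{z};\phi,b\right)\rho_{\operatorname{ssb}}(dz;\eta,\alpha,\phi,1/c)
= \frac{\kappa\eta\phi\, b^\phi\, x^{\phi-1}}{\Gamma(1-\alpha)\Gamma(\phi+\alpha)c^\alpha}\int_0^{1/c} z^{-\phi-\alpha-1}e^{-bx/z}(1-cz)^{\phi+\alpha-1}\,dz,
\end{align*}
using $\Gamma(1+\phi)/\Gamma(\phi)=\phi$. Again set $u=bx/z$, so the integral becomes
\begin{align*}
(bx)^{-\phi-\alpha}\int_{bcx}^\infty e^{-u}(u-bcx)^{\phi+\alpha-1}\,du
= (bx)^{-\phi-\alpha}\,e^{-bcx}\,\Gamma(\phi+\alpha),
\end{align*}
where the last equality uses the translation $v=u-bcx$. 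Plugging this back and simplifying the constants yields $\frac{\eta\kappa\phi}{(bc)^\alpha}\,\frac{1}{\Gamma(1-\alpha)}\,x^{-\alpha-1}e^{-bcx}\,dx$, matching the left-hand side.

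The main obstacle is purely bookkeeping: verifying that the parameters on the left-hand side (in particular the scale $\frac{c}{b}(\kappa\Gamma(\phi+\alpha)/\Gamma(\phi))^{1/\alpha}$ of the stable measure and the mass $\eta\kappa\phi/(bc)^\alpha$ of the generalised gamma measure) emerge correctly from the combination $\kappa \eta \phi\, b^\phi / ((bx)^{\phi+\alpha}\,\Gamma(1-\alpha)\,c^\alpha)$ after the Gamma integral $\Gamma(\phi+\alpha)$ is applied. No convergence issues arise because the substitutions are well defined on $(0,\infty)$ and the integrands are nonnegative, so Tonelli justifies all the manipulations.
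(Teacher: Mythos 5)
Your proposal is correct and follows essentially the same route as the paper: unpack the densities, reduce the $z$-integral to a Gamma integral by a change of variables, and match constants. The only cosmetic difference is your substitution $u=bx/z$ (followed by the shift $v=u-bcx$) versus the paper's $u=1/z-c$; both produce the same cancellation of the $(1-cz)^{\phi+\alpha-1}$ factor and the same $\Gamma(\phi+\alpha)e^{-bcx}$ term.
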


\begin{proof}
Let
\begin{align*}
\nu(x)
& {} := \kappa\int_0^\infty \frac{1}{z}\gammadist\left(\frac{x}{z};\phi,b\right)\rho_{\operatorname{ssb}}(dz;\eta,\alpha,\phi,1/c)\\
& {} =\eta\kappa\frac{c^{-\alpha}b^{\phi}\Gamma(1+\phi)}{\Gamma(\phi)\Gamma(\phi+\alpha)\Gamma(1-\alpha)}%
\int_{0}^{1/c}z^{-1}(x/z)^{\phi-1}e^{-xb/z}z^{-\alpha-1}(1-cz)^{\alpha+\phi
-1}dz\\
& {} =\eta\kappa\frac{c^{-\alpha}b^{\phi}\phi}{\Gamma(\phi+\alpha)\Gamma(1-\alpha
)}x^{\phi-1}\int_{0}^{1/c}z^{-\phi-\alpha-1}e^{-xb/z}(1-cz)^{\alpha+\phi-1}dz.
\end{align*}

Using the change of variable $u=\frac{1}{z}-c$ so that $z=\frac{1}{u+c}$
and $dz=\frac{-du}{(u+c)^{2}}$, we obtain%
\begin{align*}
\nu(x) &  =\eta\kappa\frac{c^{-\alpha}b^{\phi}\phi}{\Gamma(\phi+\alpha)\Gamma(1-\alpha
)}x^{\phi-1}   \int_{0}^{\infty}(u+c)^{\phi+\alpha-1}e^{-xb(u+c)}\left(
\frac{u}{u+c}\right)  ^{\alpha+\phi-1}du\\
&  =\eta\kappa\frac{c^{-\alpha}b^{\phi}\phi}{\Gamma(\phi+\alpha)\Gamma(1-\alpha
)}x^{\phi-1} \int_{0}^{\infty}e^{-xb(u+c)}u^{\alpha+\phi-1}du\\
&  =\eta\kappa\frac{c^{-\alpha}b^{\phi}\phi}{\Gamma(\phi+\alpha)\Gamma(1-\alpha
)}x^{\phi-1}e^{-bcx}\frac{\Gamma(\alpha+\phi)}{(bx)^{\alpha+\phi}}\\
&  =\eta\kappa\frac{c^{-\alpha}b^{-\alpha}\phi}{\Gamma(1-\alpha)}x^{-\alpha-1}e^{-bcx},%
\end{align*}
which is the density of the generalised gamma L\'evy measure with parameters $(\eta\kappa (bc)^{-\alpha}\phi,\alpha,bc)$ at $x$.

Similarly,
\begin{align*}
\nu_2(x)
& {} := \kappa\int_0^\infty \frac{1}{z}\gammadist\left(\frac{x}{z};\phi,b\right)\rho_{\operatorname{stable}}(dz;\alpha,c)\\
& {} =\kappa\frac{c^\alpha \alpha b^{\phi}}{\Gamma(\phi)}%
\int_{0}^{\infty}z^{-1}(x/z)^{\phi-1}e^{-xb/z}z^{-\alpha-1}dz\\
& {} =\kappa\frac{c^\alpha \alpha b^{\phi}}{\Gamma(\phi)}%
x^{\phi-1}\int_{0}^{\infty}e^{-xbu}u^{\phi+\alpha-1}du\\
& {} =\kappa\frac{c^\alpha\alpha\Gamma(\phi+\alpha) }{\Gamma(\phi)b^{\alpha}}%
x^{-\alpha-1}=\left ( \frac{\kappa^{1/\alpha} c \Gamma(\phi+\alpha)^{1/\alpha} }{\Gamma(\phi)^{1/\alpha}b} \right )^\alpha \alpha x^{-\alpha-1}.
\end{align*}

\end{proof}

The following are corollaries of the above proposition, with $\kappa=\phi=b=\frac{1}{2}$, in combination with \cref{lemma:idlaplaceproduct}.
\begin{corollary}\label{cor:scaledstablebeta_sumxy}
Let $X_1,X_2,\ldots,$ be iid standard normal random variables, and $(\xi_i)_i$ be the points of a Poisson point process with mean measure $\rho_{\operatorname{ssb}}(dz;\eta,\alpha,1/2,1/c)$ for $\eta>0$, $c>0$ and $\alpha>-\frac{1}{2}$.  Then,
\begin{align*}
\sum_{i\geq 1} \xi_i&\sim \id(0,\rho_{\operatorname{ssb}}({} \cdot {};\eta,\alpha,1/2,1/c)),\\
\sum_{i\geq 1} \xi_i\max(0,X_i)^2&\sim \id\left(0,\rho_{\operatorname{gg}}\left({} \cdot {};\eta 2^{\alpha-2}c^{-\alpha},\alpha,c/2\right)\right).
\end{align*}
In particular, if additionally $\alpha=0$,
\begin{align*}
\sum_{i\geq 1} \xi_i\max(0,X_i)^2&\sim \gammadist\left(\frac{\eta}{4},\frac{c}{2}\right).
\end{align*}
\end{corollary}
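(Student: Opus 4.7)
The plan follows the hint: apply \cref{lemma:idlaplaceproduct} to obtain the L\'evy measure of the randomly rescaled Poisson sum, then invoke \cref{prop:gamma formulas} with $\kappa = \phi = b = 1/2$ to recognise it. The first identity, $\sum_i \xi_i \sim \id(0, \rho_{\operatorname{ssb}}(\cdot; \eta, \alpha, 1/2, 1/c))$, is the standard Poisson-representation of a nonnegative infinitely divisible law with zero drift and L\'evy measure $\rho_{\operatorname{ssb}}$, as recalled in \cref{sec:background_infdiv}. I would just cite this representation, noting that $\rho_{\operatorname{ssb}}(\cdot;\eta,\alpha,1/2,1/c)$ is compactly supported on $(0,1/c]$ with an at-most $x^{-\alpha-1}$ singularity at zero, and since $\alpha < 1$, it is a genuine L\'evy measure.

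For the second identity, the key observation is that $Y := \max(0,X)^2$ with $X \sim \mathcal N(0,1)$ has law $\tfrac{1}{2}\delta_0 + \tfrac{1}{2}\gammadist(\cdot;1/2,1/2)$, since conditionally on $X > 0$ (probability $1/2$), $X^2$ is chi-squared with one degree of freedom. Applying \cref{lemma:idlaplaceproduct} to the randomly thinned sum $\sum_i \xi_i Y_i$, with $Y_i$ iid copies of $Y$ independent of $(\xi_i)$, expresses it as $\id(0,\nu)$, where on $(0,\infty)$
$$
\nu(dw) = \frac{1}{2}\left( \int_0^\infty \frac{1}{z}\,\gammadist(w/z;1/2,1/2)\,\rho_{\operatorname{ssb}}(dz;\eta,\alpha,1/2,1/c) \right) dw.
$$
The prefactor $1/2$ absorbs the atom of $Y$ at zero, since any point $\xi_i$ mapped to $0$ vanishes from the sum and contributes no L\'evy mass on $(0,\infty)$.

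The remainder is bookkeeping. Substituting $\kappa = \phi = b = 1/2$ into the first identity of \cref{prop:gamma formulas} gives
$$
\nu = \rho_{\operatorname{gg}}\!\left(\cdot;\ \frac{\eta\,\kappa\,\phi}{(bc)^{\alpha}},\ \alpha,\ bc\right) = \rho_{\operatorname{gg}}\!\left(\cdot;\ \eta\,2^{\alpha-2}c^{-\alpha},\ \alpha,\ c/2\right),
$$
which is exactly the claimed form. Specialising to $\alpha = 0$ reduces the generalised gamma L\'evy measure to the gamma L\'evy measure $\rho_{\operatorname{gamma}}(\cdot;\eta/4,c/2)$, and the associated infinitely divisible law is, by definition, $\gammadist(\eta/4,c/2)$. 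The only delicate point is ensuring that the atom of $\max(0,X)^2$ at zero is correctly handled in \cref{lemma:idlaplaceproduct}, so that the $1/2$ prefactor appears outside the integral (giving the $2^{\alpha-2}$ constant rather than $2^{\alpha-1}$); beyond this, no analytic work is required since \cref{prop:gamma formulas} does the heavy lifting.
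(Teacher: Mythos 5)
Your proposal is correct and follows essentially the same route as the paper, which derives the corollary by combining \cref{lemma:idlaplaceproduct} with \cref{prop:gamma formulas} at $\kappa=\phi=b=1/2$. Your explicit accounting of the half-mass of $\max(0,X)^2$ at zero (the source of the factor $\kappa=1/2$ and hence of $2^{\alpha-2}$ rather than $2^{\alpha-1}$) is exactly the detail the paper leaves implicit, and the rest of the bookkeeping checks out.
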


\begin{corollary}\label{cor:stable_sumxy}
Let $X_1,X_2,\ldots,$ be iid standard normal random variables, and $(\xi_i)_i$ be the points of a Poisson point process with mean measure $\rho_{\operatorname{stable}}(dz;\alpha,c)$ for some $\alpha\in(0,1)$ and $c>0$.  Then,
\begin{align*}
\sum_{i\geq 1} \xi_i&\sim \id(0,\rho_{\operatorname{stable}}({} \cdot {};\alpha,c)),\\
\sum_{i\geq 1} \xi_i\max(0,X_i)^2&\sim \id\left(0,\rho_{\operatorname{stable}}\left({} \cdot {};\alpha,2c\left ( \frac{\Gamma(1/2+\alpha)}{2\sqrt{\pi}} \right )^{1/\alpha}\right) \right),
\end{align*}
so that
$$
\sum_{i\geq 1} \xi_i\max(0,X_i)^2 \overset{\cvd}{=} 2\left ( \frac{\Gamma(1/2+\alpha)}{2\sqrt{\pi}} \right )^{1/\alpha}   \sum_{i\geq 1} \xi_i .
$$
\end{corollary}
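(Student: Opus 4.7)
The first identity is essentially a restatement of the defining Poisson point process representation of infinitely divisible distributions: if $(\xi_i)_{i\ge 1}$ are the points of a Poisson point process on $(0,\infty)$ with mean measure $\rho$, then $\sum_i \xi_i \sim \id(0,\rho)$. So applying this with $\rho = \rho_{\operatorname{stable}}({}\cdot{};\alpha,c)$ gives the first line immediately.

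For the second identity, I plan to apply \cref{lemma:idlaplaceproduct} to the iid random variables $Y_i := \max(0,X_i)^2$. The key observation is that $Y_i = 0$ with probability $1/2$ and, conditional on $X_i > 0$, $Y_i \sim \gammadist(1/2,1/2)$. Since the atom at $0$ does not affect the L\'evy measure on $(0,\infty)$, the L\'evy measure $\nu$ of $\sum_i \xi_i Y_i$ obtained from \cref{lemma:idlaplaceproduct} has density
\[
\nu(dx) = dx\cdot \frac{1}{2}\int_0^\infty \frac{1}{z}\,\gammadist(x/z;\,1/2,1/2)\,\rho_{\operatorname{stable}}(dz;\alpha,c).
\]
Now I apply the second identity of \cref{prop:gamma formulas} with $\kappa=\phi=b=1/2$ and with the stable measure on the right-hand side. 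Using $\Gamma(1/2)=\sqrt{\pi}$, the scale parameter produced is
\[
\frac{c}{1/2}\left(\frac{(1/2)\Gamma(1/2+\alpha)}{\Gamma(1/2)}\right)^{1/\alpha} = 2c\left(\frac{\Gamma(1/2+\alpha)}{2\sqrt{\pi}}\right)^{1/\alpha},
\]
which I denote $c'$, and I conclude $\nu = \rho_{\operatorname{stable}}({}\cdot{};\alpha,c')$. Hence $\sum_i \xi_i\max(0,X_i)^2 \sim \id(0,\rho_{\operatorname{stable}}({}\cdot{};\alpha,c'))$, as claimed.

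The third display follows from the scaling property of positive stable laws recalled in \cref{app:positive stable}: a random variable with law $\id(0,\rho_{\operatorname{stable}}({}\cdot{};\alpha,\gamma))$ is $\stable(\alpha,\gamma\,\Gamma(1-\alpha)^{1/\alpha})$, and $aS \sim \stable(\alpha, a\gamma')$ whenever $S\sim \stable(\alpha,\gamma')$ and $a>0$. Taking $a = c'/c = 2(\Gamma(1/2+\alpha)/(2\sqrt{\pi}))^{1/\alpha}$, we get $a\sum_i \xi_i \overset{\cvd}{=} \sum_i \xi_i \max(0,X_i)^2$.

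The main subtlety I anticipate is justifying the use of \cref{lemma:idlaplaceproduct} when the mixing random variables $Y_i$ have a positive mass at $0$; this is handled by restricting attention to the L\'evy measure on $(0,\infty)$ (equivalently, by thinning the Poisson point process with retention probability $1/2$), which is exactly what produces the factor $\kappa = 1/2$ in the application of \cref{prop:gamma formulas}. Apart from this, all computations are direct algebraic manipulations.
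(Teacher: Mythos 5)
Your proof is correct and follows exactly the route the paper intends: the paper derives this corollary by combining \cref{lemma:idlaplaceproduct} with the second identity of \cref{prop:gamma formulas} at $\kappa=\phi=b=\tfrac12$, which is precisely your argument (including the correct handling of the atom of $\max(0,X)^2$ at zero via the factor $\kappa=\tfrac12$, and the scaling property of stable laws for the final display).
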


\newpage

\section{Some Limit Theorems on Independent Triangular Arrays}\label{app:limit thms}

Throughout the paper, we use a necessary and sufficient condition for sums of random variables to  converge to an infinitely divisible random variable, and also a sufficient condition for such convergence when all the random variables involved have densities. These two conditions are summarised in the following theorem. All the proofs of the examples in \cref{sec:examples,app:examples} rely on these conditions; details are given in~\cref{app:additionalderivationsExamples}.

\begin{theorem}[Necessary and sufficient conditions for convergence to $\id(a,\rho)$]\label{th:convergenceid}~\\
Let $(X_{p,j})_{p\geq 1,j=1,\ldots,p}$ be a triangular array of nonnegative real random variables, where for each $p \geq 1$,  the random variables $X_{p,1}, \ldots, X_{p,p}$ are iid. Let $a\geq 0$ and $\rho$ be a L\'evy measure on $(0,\infty)$.
Then, $\sum_{j=1}^p X_{p,j}\overset{\cvd}{\to}\id(a,\rho)$ if and only if the following two conditions hold:
\begin{itemize}
\item[(i)] $p\Pr(X_{p,1}>x)\to \overline\rho(x)$ for all $x>0$ such that $\rho(\{x\})=0$, and
\item[(ii)] $p\bbE [X_{p,1}\ind_{\{X_{p,1}\leq h\}}]\to a + \int_0^h x\rho(dx)$ for any $h>0$ with $\rho(\{h\}) = 0$.
\end{itemize}
If every $X_{p,1}$ is an absolutely continuous random variable with density $f_p$, and $\rho$ is absolutely continuous with density $\varrho$ and support $S$, then condition (i) is implied by the following three conditions:
\begin{enumerate}
\item[(a)] $pf_p(x)\to \varrho(x)$ for all $x>0$,
\item[(b)] for any $x_0>0$, there exists $C_{x_0}$ such that $\frac{pf_p(x)}{\varrho(x)}\leq C_{x_0}$ for all $x\in[x_0,\infty)\cap S$, and
\item[(c)] for any $x_0>0$, $\int_{[x_0,\infty)\backslash S}f_p(x)dx=o(1/p)$.
\end{enumerate}
\end{theorem}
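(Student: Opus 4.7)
The plan splits naturally into two parts: the necessary-and-sufficient characterisation in terms of (i)–(ii), and the density-based sufficient condition (a)–(c).

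For the main iff statement, the approach is to work through Laplace transforms, since both the $X_{p,j}$ and the limit are nonnegative. First I would observe that any L\'evy measure $\rho$ on $(0,\infty)$ satisfies $\int_0^\infty \min(1,x)\,\rho(dx) < \infty$, so for $X \sim \id(a,\rho)$ one has $\bfE[e^{-tX}] = \exp(-at - \psi(t))$ with $\psi(t) = \int_0^\infty (1 - e^{-tx})\rho(dx)$ finite for every $t \geq 0$. By independence, $\bfE[e^{-t\sum_j X_{p,j}}] = (\bfE[e^{-tX_{p,1}}])^p$, and since conditions (i)–(ii) force $X_{p,1} \cvpto 0$ (just take $x = h$ small), we have $\bfE[e^{-tX_{p,1}}] \to 1$, so it suffices to prove $p\,\bfE[1 - e^{-tX_{p,1}}] \to at + \psi(t)$. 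I would then split $1 - e^{-tX_{p,1}}$ at a truncation level $h$ where $\rho(\{h\})=0$: the piece where $X_{p,1} > h$ is handled via integration by parts using the tail condition (i), and the piece where $X_{p,1} \le h$ is handled via the Taylor expansion $1-e^{-tx} = tx + O(x^2)$ combined with condition (ii) on the truncated first moment. Letting $h \to 0$ through continuity points of $\rho$ recovers the full Laplace exponent. For the converse, one runs the same argument in reverse: convergence of the Laplace transforms of nonnegative random variables is equivalent to weak convergence, and reading off the tails and truncated means from the Laplace exponent gives (i)–(ii). In practice, this is just the nonnegative specialisation of the classical triangular-array characterisation (see Kallenberg, Theorem~15.28, or Feller, Vol.~II, Chapter XVII), so I would cite it rather than reproduce the full argument.

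For the density-based implication (a)+(b)+(c) $\Rightarrow$ (i), the approach is a straightforward application of dominated convergence. For any $x>0$,
\[
p\Pr(X_{p,1} > x) = \int_{[x,\infty)\cap S} p f_p(y)\,dy + \int_{[x,\infty)\setminus S} p f_p(y)\,dy.
\]
The second integral equals $p \cdot o(1/p) = o(1)$ by (c). For the first integral, (a) gives pointwise convergence $pf_p(y) \to \varrho(y)$ on $[x,\infty) \cap S$, and (b) provides the dominating function $y \mapsto C_x \varrho(y)$, which is integrable on $[x,\infty) \cap S$ because $\int_{[x,\infty) \cap S} \varrho(y)\,dy = \overline{\rho}(x) < \infty$ (by definition of a L\'evy measure, since $x > 0$). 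Dominated convergence then yields $\int_{[x,\infty)\cap S} pf_p(y)\,dy \to \overline\rho(x)$, establishing (i) at every $x>0$.

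The main obstacle is the iff direction itself: carefully justifying that the truncated mean convergence (ii) really does capture the full centering of the limiting Laplace exponent, especially because the truncation level $h$ varies and we must show the limit is the same for all continuity $h$. This requires showing that $p\,\bfE[(1 - e^{-tX_{p,1}} - tX_{p,1}\ind_{X_{p,1}\le h})]$ converges to $\int_0^\infty (1-e^{-tx} - tx\ind_{x\le h})\rho(dx)$, which in turn uses vague convergence of $p\Pr(X_{p,1} \in \cdot)$ to $\rho$ on $(0,\infty)$ and control near zero via the elementary bound $|1 - e^{-tx} - tx| \le \frac{1}{2}(tx)^2$ together with $\int_0^h x^2 \rho(dx) < \infty$. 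Since the standard references above treat this carefully, I would rely on them and focus the written proof on the density-based part, which is the genuinely new and self-contained piece.
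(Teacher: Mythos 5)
Your proposal is correct. For the ``if and only if'' part you do exactly what the paper does: defer to the classical triangular-array criterion (Theorem~15.28 of Kallenberg), so there is nothing to compare there. For the density-based implication (a)+(b)+(c)~$\Rightarrow$~(i), however, your route is genuinely different from, and more direct than, the paper's. You verify condition (i) head-on: write $p\Pr(X_{p,1}>x)$ as the sum of $\int_{[x,\infty)\cap S} p f_p$ and $\int_{[x,\infty)\setminus S} p f_p$, kill the second term with (c), and apply dominated convergence to the first with dominating function $C_x\,\varrho$, which is integrable on $[x,\infty)$ because $\overline\rho(x)<\infty$ for every $x>0$. The paper instead first invokes Theorem~15.29 of Kallenberg to recast (i) as weak convergence of the empirical point process $\eta_p=\sum_j\delta_{X_{p,j}}$ to a Poisson random measure with mean measure $\rho$ on $(0,\infty]$, and then checks convergence of the Laplace functionals $\bfE[e^{-\eta_p(g)}]$ for compactly supported test functions $g$, using the same splitting over $S$ and its complement and the same domination $p f_p\le C_{x_0}\varrho$ inside the integrand $(1-e^{-g(x)})p f_p(x)$. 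The two arguments use identical ingredients; yours proves precisely the stated implication with less machinery, while the paper's establishes the (equivalent) point-process convergence explicitly, which it reuses elsewhere (e.g.\ in its analysis of the order statistics of the variances). Your argument is complete as stated.
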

In \cref{th:convergenceid}, we have included the second part since in practice, for continuous random variables, conditions (a-c) will be easier to check than the condition (i).
\begin{proof}
The first part of the theorem is  a corollary of Theorem 15.28 in \cite{Kallenberg2002}. We focus on the second part for absolutely continuous random variables.

Let $(0,\infty]=(0,\infty)\cup\{\infty\}$ denote the set of positive reals with the addition of $\infty$, which is called the set of extended positive reals. Note that for any $a>0$, $[a,\infty]$ is a compact set of $(0,\infty]$. Let $C^+_K((0,\infty])$ denote the set of continuous functions $f:(0,\infty]\to \mathbb R_+$ with compact support. Note that the functions are necessarily bounded as $f(\infty)\in\mathbb R_+$.

Let $(X_{{p},j})$ be a triangular array of random variables such that for every $p$, $(X_{{p},j})_{j=1,\ldots,p}$ is an iid sequence
of random variables from $\mu_{p}$. By \cite[Theorem 15.29]{Kallenberg2002}, $p\Pr(X_{p,1}>x)\to \overline\rho(x)$ for all $x>0$ such that $\rho(\{x\})=0$ is
equivalent to
\[
\eta_{{p}}:=\sum_{j=1}^{{p}}\delta_{X_{{p},j}}\overset{\cvd}{\rightarrow}\eta\text{ on }(0,\infty]
\]
where $\eta$ is a Poisson random measure with mean measure $\rho$. This is
equivalent to showing that
\[
\bbE [e^{-\eta_{{p}}(g)}]\rightarrow\bbE [e^{-\eta(g)}]
\]
for all $g\in C_{K}^{+}((0,\infty])$. Pick $g\in C_{K}^{+}((0,\infty])$. Let
$S\subseteq(0,\infty)$ be the support of $\rho$. We have
\begin{align*}
\bbE [e^{-\eta_{{p}}(g)}]
& =\bbE \left[e^{-\sum_{j=1}^{{p}}g(X_{{p},j})}\right] \\
& =\bbE \left[e^{-g(X_{{p},1})}\right]^{{p}} \\
& =\left(  \int_{0}^{\infty}e^{-g(x)}f_{{p}}(x)dx\right)  ^{{p}}\\
& =\left(  1-\int_{0}^{\infty}(1-e^{-g(x)})f_{{p}}(x)dx\right)  ^{{p}}\\
& =\left(  1-\frac{1}{{p}}\int_{0}^{\infty}(1-e^{-g(x)}){p}f_{{p}}(x)dx\right)
^{p}\\
& =\left(  1-\frac{1}{{p}}\left[  \int_{S}(1-e^{-g(x)})\frac{{p}f_{{p}}%
(x)}{\varrho(x)}\varrho(x)dx+\int_{\mathbb{R}_{+}\backslash S%
}(1-e^{-g(x)}){p}f_{{p}}(x)dx\right]  \right)  ^{{p}}.
\end{align*}
Since $g$ has compact support on $(0,\infty]$, there
exists $x_{0}>0$ such that $g(x)=0$ for $x<x_{0}$. Then, by assumption,
$\frac{{p}f_{{p}}(x)}{\varrho(x)}\leq C_{x_{0}}$ for all $x\in S \cap[x_{0},\infty)$ and ${p} \geq 1$. Also, again by assumption, $\frac{{p}f_{{p}}(x)}{\varrho(x)}\rightarrow 1$ as ${p} \rightarrow \infty$. What we have proved so far lets us use the dominated convergence theorem and derive the following convergence:
\[
\int_{S\cap [x_{0},\infty)}(1-e^{-g(x)})\frac{{p}f_{{p}}(x)}%
{\varrho(x)}\varrho(x)dx\rightarrow\int_{S\cap[x_{0},\infty
)}(1-e^{-g(x)})\varrho(x)dx.
\]
Additionally, $\int_{\mathbb{R}_{+}\backslash S}(1-e^{-g(x)}){p}f_{{p}}(x)dx\leq\int_{\mathbb{R}_{+}\backslash S}{p}f_{{p}}(x)dx=o(1)$.
Hence,
\begin{align*}
& \int_{S}(1-e^{-g(x)})\frac{{p}f_{{p}}(x)}{\varrho(x)}\varrho
(x)dx+\int_{\mathbb{R}_{+}\backslash S}(1-e^{-g(x)}){p}f_{{p}}(x)dx\\
& \ \ \ \  {} = \int_{S \cap [x_0,\infty)}(1-e^{-g(x)})\frac{{p}f_{{p}}(x)}{\varrho(x)}\varrho
(x)dx+\int_{\mathbb{R}_{+}\backslash S}(1-e^{-g(x)}){p}f_{{p}}(x)dx\\
& \ \ \ \ {} \rightarrow\int_{S}(1-e^{-g(x)})\varrho(x)dx+0=\int%
_{0}^{\infty}(1-e^{-g(x)})\varrho(x)dx.
\end{align*}
Recall that for any real sequence $(a_{{p}})_{{p}\geq1}$ converging to $a$, we
have $(1-\frac{a_{{p}}}{{p}})^{{p}}\rightarrow e^{-a}$. Thus,
\[
\left(  1-\frac{1}{{p}}\int_{S}(1-e^{-g(x)})\frac{{p}f_{{p}}(x)}%
{\varrho(x)}\varrho(x)dx\right)  ^{{p}}\rightarrow e^{-\int_{0}^{\infty
}(1-e^{-g(x)})\varrho(x)dx}=\bfE[e^{-\eta(g)}].
\]
\end{proof}

\begin{proposition}[Extremes of triangular arrays and infinite divisibility]\label{prop:extremesID}
	Let
	\[
	(X_{p,j})_{p\geq 1,j=1,\ldots,p}
	\]
	be a triangular array of independent nonnegative real random variables such that for each $p$, $(X_{p,j})_{j=1,\ldots,p}$ are iid. Assume $\sum_{j=1}^p X_{p,j}\overset{\cvd}{\to}\id(a,\rho)$ for some $a\geq 0$ and some L\'evy measure $\rho$ on $(0,\infty)$. Let $$\overline\rho^{-1}(u):=\inf\{x > 0 : \overline\rho(x) < u\},$$ the inverse tail L\'evy intensity of $\rho$.
	For each $p\geq 1$, let $X_{p,(1)}\geq X_{p,(2)}\geq \ldots\geq X_{p,(p)}$ denote the order statistics of $(X_{p,j})_j$.
	Then, the asymptotic behaviour of $X_{p,(k)}$, as $p\to\infty$, is solely characterised by $\rho$ (not $a$) with
	$ X_{p,(1)}\overset{\cvp}{\to} 0$ if $\rho$ is trivial, and if   $\rho$ is non-trivial,
	\begin{align*}
		\Bigr(X_{p,(k)}\Bigr)_{k\ge 1}\overset{\cvd}{\to} \Bigr(\overline\rho^{-1}(G_k)\Bigr)_{k\ge 1}
	\end{align*}
	with $(G_k)_{k\ge 1}$ being ordered points of a standard rate one Poisson process on $(0,\infty)$ with $G_k\sim\gammadist(k, 1)$. In particular,
	for non-trivial $\rho$, $\overline\rho^{-1}(G_k)$ is a nonnegative random variable, non-degenerate at $0$, with cumulative density function $F_k$  defined by
	\[
	F_k(x)= e^{-\overline\rho(x)}\sum_{i={0}}^{{k-1}} \frac{\overline\rho(x)^i   }{i!}
	\quad \text{for any $x>0$ with $\rho(\{x\})=0$}
	\]
	and
	\[
	F_k(0)=
	\begin{cases}
		e^{-\overline\rho(0)}\sum_{i={0}}^{{k-1}} \frac{\overline\rho(0)^i   }{i!} & \text{if $\rho$ is finite}, \\
		0 & \text{if $\rho$ is infinite}.
	\end{cases}
	\]
\end{proposition}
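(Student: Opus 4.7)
The plan is to reduce the statement to the convergence of the empirical point measures $\mu_p := \sum_{j=1}^p \delta_{X_{p,j}}$ to a Poisson point process on $(0,\infty]$ with mean $\rho$, and then extract the order statistics by continuous mapping. The hypothesis $\sum_j X_{p,j} \overset{\cvd}{\to} \id(a,\rho)$ yields, via condition (i) of \cref{th:convergenceid}, the convergence $p\,\Pr(X_{p,1}>x) \to \overline\rho(x)$ at every continuity point of $\overline\rho$; by Kallenberg's Theorem 15.29, this is equivalent to $\mu_p \overset{\cvd}{\to} \eta$ in the vague topology on boundedly finite measures on $(0,\infty]$, where $\eta$ is a Poisson random measure with mean $\rho$. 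Notably, the drift parameter $a$ does not enter this equivalence, which matches the claim that the extreme order statistics are determined solely by $\rho$.

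The trivial case is immediate: if $\rho \equiv 0$, then $p\,\Pr(X_{p,1}>x)\to 0$ for every $x>0$, so $\Pr(X_{p,(1)}>x) = 1-(1-\Pr(X_{p,1}>x))^p \to 0$, giving $X_{p,(k)}\overset{\cvp}{\to} 0$ for every $k$.

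For the non-trivial case, I will apply the continuous mapping theorem to the functional sending a boundedly finite point measure $\mu$ on $(0,\infty]$ to its ordered atoms $(\xi_{(1)}(\mu), \xi_{(2)}(\mu),\ldots)$, padded with $0$'s when $\mu$ has only finitely many atoms. For each truncation index $K$, this functional is continuous at realisations of $\mu$ with distinct atoms and no mass at $\infty$, both of which hold almost surely under the Poisson law $\eta$. To identify the limit explicitly, I invoke the Poisson mapping theorem: the generalised inverse $\overline\rho^{-1}$ pushes Lebesgue measure on $(0,\infty)$ forward to $\rho$, since $\mathrm{Leb}\{u : \overline\rho^{-1}(u) > x\} = \mathrm{Leb}\{u : u < \overline\rho(x)\} = \overline\rho(x)$ at continuity points of $\overline\rho$. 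Consequently, if $(G_k)_{k\geq 1}$ are the ordered points of a unit-rate Poisson process on $(0,\infty)$, then $\sum_k \delta_{\overline\rho^{-1}(G_k)}$ is equal in law to $\eta$, and its ordered atoms are $(\overline\rho^{-1}(G_k))_{k\geq 1}$.

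Finally, the cdf formulas follow from the monotonicity of $\overline\rho^{-1}$ combined with the gamma/Poisson tail identity $\Pr(G_k \geq t) = \sum_{i=0}^{k-1} e^{-t}t^i/i!$: for $x>0$ with $\rho(\{x\})=0$, one has $\Pr(\overline\rho^{-1}(G_k) \leq x) = \Pr(G_k \geq \overline\rho(x))$, yielding the stated expression. At $x=0$, the value is $0$ when $\rho$ is infinite (so $\overline\rho(0)=\infty$) and the Poisson expression when $\rho$ is finite. The main obstacle I anticipate is careful bookkeeping around discontinuity points of $\overline\rho$ and atoms of $\rho$: the pseudo-inverse and the order-statistics functional both require continuity-point arguments, but one can approximate from above by continuity points of $\overline\rho$ and use right-continuity of the limiting distribution function to conclude at general $x$.
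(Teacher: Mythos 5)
Your proposal is correct and follows essentially the same route as the paper: both reduce the hypothesis to convergence of the empirical point measures to a Poisson random measure with mean $\rho$ via Kallenberg's Theorem 15.29, and both obtain the cdf of the $k$-th order statistic from the identity $\sum_{i=0}^{k-1}\lambda^i e^{-\lambda}/i! = \Pr(G_k\geq\lambda)$. The only (interchangeable) difference is the middle step — you invoke the continuous mapping theorem together with the inverse-L\'evy representation of the Poisson process, whereas the paper directly computes the limiting joint law of the top-$k$ order statistics on rectangles; note only that "distinct atoms a.s." can fail when $\rho$ has atoms (e.g.\ $\rho=c\delta_1$), though the order-statistics functional remains suitably continuous there, so this does not affect the argument.
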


\begin{proof}
	From \cite[Theorem 15.29]{Kallenberg2002}, $\sum_{j=1}^p X_{p,j}\overset{\cvd}{\to}\id(a,\rho)$ implies that for any precompact Borel subsets $B_1,\ldots,B_k$ of the extended real half-line $(0,\infty]=(0,\infty)\cap\{\infty\}$,
	$$
	\Bigr(\#\{j\mid X_{p,j}\in B_i \}\Bigr)_{i=1}^k \cvdto \Bigr(\eta(B_i)\Bigr)_{i=1}^k
	$$
	where $\eta$ is a Poisson random measure with mean measure $\rho$. In particular, $\bigr(X_{p,(j)}\bigr)_{j= 1}^k$ converges in distribution to the joint distribution of the first $k$ arrival times, going backwards in time from $\infty$, of a Poisson process with intensity $\rho$. This is because
	for $$0<x_k<y_{k}<x_{k-1}<\cdots <x_1<y_1<x_0=\infty$$ such that $\rho(\{x_k,y_k,\ldots,x_1,y_1\})=0$,
	\begin{align*}
                & \Pr\left(\bigcap_{j=1}^k \{X_{p,(j)}\in (x_j,y_{j})\}\right)
                \\
                &\qquad {} \to\Pr\left(\{\eta(x_k,y_{k})\ge 1\}\cap\bigcap_{j=1}^{k-1}\{\eta(x_j,y_{j})=1\}\cap\bigcap_{j=1}^{k}\{\eta(y_j,x_{j-1})=0\}\right)\\
                &\qquad {} = \bigr(1-e^{-[\overline\rho(x_k)-\overline\rho(y_k)]}\bigr)\prod_{j=1}^{k-1}e^{-[\overline\rho(x_j)-\overline\rho(y_j)]}[\overline\rho(x_j)-\overline\rho(y_j)]\prod_{j=1}^{k}e^{-[\overline\rho(y_j)-\overline\rho(x_{j-1})]}\\
                &\qquad {} = \bigr(1-e^{-[\overline\rho(x_k)-\overline\rho(y_k)]}\bigr)e^{-\overline\rho(y_k)}\prod_{j=1}^{k-1}[\overline\rho(x_j)-\overline\rho(y_j)]\\
		&\qquad {} =\bigr(e^{-\overline\rho(y_k)}-e^{-\overline\rho(x_k)}\bigr)\prod_{j=1}^{k-1}[\overline\rho(x_j)-\overline\rho(y_j)]
	\end{align*}
and the final expression on the right-hand side above can be seen to be the joint distribution of those $k$ arrival times.
	It remains to calculate the limiting distribution of the marginal $X_{p,(k)}$.
	
	For any $x>0$ such that $\rho(\{x\})=0$,
	\begin{align*}
		\Pr(X_{p,(k)}\leq x)&=\sum_{i=0}^{k-1} \Pr(\#\{j\mid X_{p,j}> x \}=i)\\
		& {} \to\sum_{i=0}^{k-1} \Pr(\eta(x,\infty)=i)= \sum_{i=0}^{k-1} \frac{\overline\rho(x)^i e^{-\overline\rho(x)}}{i!}.
	\end{align*}
	The value at $0$ follows due to the right continuity of the cdf.
	Finally, using the identity, for any $\lambda>0$,
	$$
	\sum_{i=0}^{k-1} \frac{\lambda^i e^{-\lambda}}{i!}
	=\frac{\lambda^k}{\Gamma(k)}\int_1^\infty u^{k-1}e^{-u\lambda}du
	$$
	we obtain
	$$
	\sum_{i=0}^{k-1} \frac{\overline\rho(x)^i e^{-\overline\rho(x)}}{i!}
	=\frac{\overline\rho(x)^k}{\Gamma(k)}\int_1^\infty u^{k-1}e^{-u\overline\rho(x)}du
	=\Pr(G_k \geq \overline\rho(x))
	=\Pr(x \geq \overline\rho^{-1}(G_k))
	$$
	where the last equality follows from the definition of the inverse tail intensity $\overline\rho^{-1}$ and the absolute continuity of $G_k$.
\end{proof}

\begin{lemma}[L\'evy continuity theorem for triangular arrays]\label{lemma:idlaplace}
Let $(X_{p,i})_{p\geq 1,i=1,\ldots,p}$ be a triangular array of nonnegative scalar random
variables such that for every $p$, $(X_{p,i})_{i=1,\ldots,p}$ is an iid sequence of random variables
from a probability distribution $\mu_{p}$ on $[0,\infty)$. Let $a\geq 0$ and $\rho$ be a L\'evy measure on $(0,\infty)$. Then,
\[
        \sum_{i=1}^{p}X_{p,i}\overset{\cvd}{\rightarrow} \id(a,\rho)\text{ as }p\to\infty
\]
if and only if, for any $t\geq0$,%
\[
\int_{0}^{\infty}(1-e^{-tx})p\mu_{p}(dx)\rightarrow at+\int_{0}^{\infty
}(1-e^{-wt})\rho(dw)
\]
pointwise as $p\to\infty$.
\end{lemma}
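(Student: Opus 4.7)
The plan is to apply the continuity theorem for Laplace transforms of nonnegative random variables: for nonnegative $Y_p, Y$, one has $Y_p \cvdto Y$ if and only if $\bfE[e^{-tY_p}] \to \bfE[e^{-tY}]$ for every $t \ge 0$. This reduces the statement to an elementary real-analysis equivalence involving the functional
$$f_p(t) \;:=\; \int_0^\infty (1-e^{-tx}) p\mu_p(dx).$$

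First I would compute the Laplace transform of the row sum $S_p := \sum_{i=1}^p X_{p,i}$. By independence within rows,
$$\bfE[e^{-tS_p}] \;=\; \left(\int_0^\infty e^{-tx}\mu_p(dx)\right)^{\!p} \;=\; \left(1 - f_p(t)/p\right)^{p},$$
since $\int (1-e^{-tx})\mu_p(dx) = f_p(t)/p \in [0,1]$. On the other hand, from \cref{sec:background_infdiv}, $\bfE[e^{-tY}] = e^{-L(t)}$ for $Y\sim\id(a,\rho)$, where $L(t) := at + \int_0^\infty (1-e^{-wt})\rho(dw)$ and $L(t)<\infty$ for every $t\ge 0$. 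By Laplace continuity, $S_p \cvdto \id(a,\rho)$ is equivalent to $(1-f_p(t)/p)^p \to e^{-L(t)}$ pointwise in $t\ge 0$, so it suffices to prove the equivalence
$$\bigl(1 - f_p(t)/p\bigr)^{p} \to e^{-L(t)} \quad\Longleftrightarrow\quad f_p(t) \to L(t).$$

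The $(\Leftarrow)$ direction is immediate: if $f_p(t) \to L(t) < \infty$, then $f_p(t)/p \to 0$, and the classical limit $(1 - a_p/p)^{p} \to e^{-a}$ (valid whenever $a_p \to a \in \mathbb{R}$) yields the desired convergence. The main obstacle is the $(\Rightarrow)$ direction, where I need to rule out that $f_p(t)$ could oscillate or escape to infinity while still making $(1-f_p(t)/p)^p$ converge to a strictly positive limit. I would argue by contradiction along subsequences, in two cases. If $\limsup_p f_p(t)/p > 0$, extract a subsequence with $f_p(t)/p \to c \in (0,1]$; then $p \log(1-f_p(t)/p) \to -\infty$, so $(1-f_p(t)/p)^p \to 0$, contradicting $e^{-L(t)} > 0$. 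Hence $f_p(t)/p \to 0$; Taylor expansion gives $p\log(1 - f_p(t)/p) = -f_p(t)\bigl(1+O(f_p(t)/p)\bigr)$, and combined with $(1-f_p(t)/p)^p \to e^{-L(t)}$ this forces $f_p(t) \to L(t)$. This completes the equivalence pointwise in $t$, proving the lemma.
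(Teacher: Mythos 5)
Your proof is correct and follows essentially the same route as the paper's: reduce to pointwise convergence of Laplace transforms and rewrite $\bfE[e^{-tS_p}]$ as $(1-f_p(t)/p)^p$. The only difference is that you carefully justify the "only if" direction (ruling out $\limsup_p f_p(t)/p>0$ and then Taylor-expanding the logarithm), a step the paper asserts without detail.
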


\begin{proof}
Recall that if $S\sim \id(a,\rho)$, then $\bfE[e^{-tS}]=e^{-at-\psi(t)}$
where $\psi(t)=\int_{0}^{\infty}(1-e^{-wt})\rho(dw)$. By the L\'evy continuity theorem for Laplace transforms of nonnegative random variables, $\sum_{i=1}^{{p}} X_{{p},i}\cvdto \id(a,\rho)$ if and only if, for any $t\geq0$,
\begin{align*}
\bfE\left[  e^{-t\sum_{i=1}^{{p}}X_{{p},i}}\right]    & =\bfE\left[
e^{-tX_{{p},1}}\right]  ^{{p}}\\
& =\left(  \int_{0}^{\infty}e^{-tx}\mu_{{p}}(dx)\right)  ^{{p}}\\
& =\left(  1-\frac{1}{{p}}\int_{0}^{\infty}(1-e^{-tx}){p}\mu_{{p}}(dx)\right)
^{{p}}\rightarrow e^{-at-\psi(t)},
\end{align*}
which holds if and only if $\int_{0}^{\infty}(1-e^{-tx}){p}\mu_{{p}}(dx)\rightarrow
at+\psi(t).$
\end{proof}

\begin{proposition}[Compressibility of triangular arrays]\label{prop:compressibilityID}
	Let
	\[
	(X_{p,j})_{p\geq 1,j=1,\ldots,p}
	\]
	be a triangular array of nonnegative real random variables such that for each $p$, $(X_{p,j})_{j=1,\ldots,p}$ are iid. Assume $\sum_{j=1}^p X_{p,j}\overset{\cvd}{\to}\id(a,\rho)$ for some $a\geq 0$ and some L\'evy measure $\rho$ on $(0,\infty)$. For each $p\geq 1$, let $X_{p,(1)}\geq X_{p,(2)}\geq \ldots\geq X_{p,(p)}$ be the ordered values. Then, for every $\kappa\in(0,1)$,
	\begin{align}
		X_{p,(\lfloor\kappa p\rfloor)}\cvpto 0\ \text{ as $p\to\infty$}. \label{eq:compressibilityID_eq1}
	\end{align}
	Moreover, if $a=0$, then for each $ \kappa \in (0,1) $,
	\begin{align}
		\sum_{j=1}^p \ind_{\{X_{p,j} \leq X_{p,(\lfloor \kappa p\rfloor)}\}}X_{p,j}\cvpto 0 \ \text{ as $p\to\infty$.}\label{eq:compressibilityID_eq2}
	\end{align}
\end{proposition}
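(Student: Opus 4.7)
The plan is to reduce both parts to the two necessary conditions provided by \cref{th:convergenceid}: the hypothesis $\sum_j X_{p,j}\cvdto\id(a,\rho)$ implies (i) $p\Pr(X_{p,1}>x)\to\overline\rho(x)$ at all continuity points of $\overline\rho$, and (ii) $p\,\bfE[X_{p,1}\ind_{\{X_{p,1}\leq h\}}]\to a+\int_0^h x\,\rho(dx)$ at all continuity points of $h\mapsto\int_0^h x\,\rho(dx)$. Both sets of continuity points are cocountable in $(0,\infty)$ and hence accumulate at $0$, so throughout the argument I will freely pick thresholds from them.

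For \cref{eq:compressibilityID_eq1}, I would observe the identity
\[
\bigl\{X_{p,(\lfloor\kappa p\rfloor)}>\eps\bigr\}=\Bigl\{\#\{j:X_{p,j}>\eps\}\geq\lfloor\kappa p\rfloor\Bigr\}.
\]
For any continuity point $\eps>0$ of $\overline\rho$, condition (i) combined with Markov's inequality gives
\[
\Pr\bigl(X_{p,(\lfloor\kappa p\rfloor)}>\eps\bigr)\leq\frac{p\Pr(X_{p,1}>\eps)}{\lfloor\kappa p\rfloor}\longrightarrow 0\quad\text{as }p\to\infty,
\]
because the numerator converges to the finite limit $\overline\rho(\eps)$ while the denominator diverges. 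Since $\eps$ can be taken arbitrarily small among continuity points, this proves \cref{eq:compressibilityID_eq1}.

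For \cref{eq:compressibilityID_eq2}, set $S_p:=\sum_{j=1}^p X_{p,j}\ind_{\{X_{p,j}\leq X_{p,(\lfloor\kappa p\rfloor)}\}}$. The key observation is that on the event $\bigl\{X_{p,(\lfloor\kappa p\rfloor)}\leq\eps\bigr\}$ every summand in $S_p$ automatically satisfies $X_{p,j}\leq\eps$, so $S_p\leq\sum_{j=1}^p X_{p,j}\ind_{\{X_{p,j}\leq\eps\}}$ on that event. Splitting on this event and applying Markov's inequality on the truncated sum yields, for every $\delta>0$ and every continuity point $\eps$,
\[
\Pr(S_p>\delta)\leq\frac{p\,\bfE\bigl[X_{p,1}\ind_{\{X_{p,1}\leq\eps\}}\bigr]}{\delta}+\Pr\bigl(X_{p,(\lfloor\kappa p\rfloor)}>\eps\bigr).
\]
The second summand vanishes by part~1, while condition (ii) drives the first summand to $\bigl(a+\int_0^\eps x\,\rho(dx)\bigr)/\delta$. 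Under the assumption $a=0$, and using that $\int_0^1 x\,\rho(dx)<\infty$ because $\rho$ is a L\'evy measure, sending $\eps\downarrow 0$ along continuity points forces $\limsup_p\Pr(S_p>\delta)=0$, giving \cref{eq:compressibilityID_eq2}.

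The proof encounters no substantial obstacle: each part reduces to a Markov-inequality estimate once the right decomposition is fixed, the only subtlety being to restrict thresholds to continuity points so that (i) and (ii) apply. The role of $a=0$ is transparent in this argument: if instead $a>0$, the truncated-mean limit would stay bounded below by $a$ as $\eps\downarrow 0$, and the bound would not close, matching the remark in the paper following \cref{th:kappapruning} that $\kappa$-pruning can fail to vanish when $a>0$.
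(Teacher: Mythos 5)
Your proposal is correct and follows essentially the same route as the paper's own proof: both parts rest on the two necessary conditions extracted from \cref{th:convergenceid} (equivalently, Theorem 15.28 of Kallenberg), a Markov-inequality bound on the count $\#\{j: X_{p,j}>\eps\}$ for the order statistic, and the decomposition of $\Pr(S_p>\delta)$ on the event $\{X_{p,(\lfloor\kappa p\rfloor)}\leq\eps\}$ combined with a Markov bound on the truncated sum. The only differences are presentational (you argue part one directly where the paper argues by contradiction, and you pass to a $\limsup$ and send $\eps\downarrow 0$ where the paper fixes the threshold in advance), so there is nothing to correct.
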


\begin{proof}
	We first prove \cref{eq:compressibilityID_eq1}. Suppose to the contrary that $X_{p,(\lfloor \kappa p\rfloor)}$ does not converge to 0. Then, there exist $\epsilon>0$ and $\eta\in(0,1)$ such that $\epsilon$ is a continuity point of $\rho$ (i.e. $\rho(\{\epsilon\}) = 0$) and for any $p_0$, there exists $p>p_0$ such that
	$$
	\Pr(X_{p,(\lfloor \kappa p \rfloor)}>\epsilon)>\eta.
	$$
	Hence,
	\begin{align}
		p\Pr(X_{p,1}>\epsilon)= \bbE\left[\sum_{j}\ind_{\{X_{p,j}>\epsilon\}}\right ]>\eta \lfloor \kappa p \rfloor.
	\end{align}
	But $p\Pr(X_{p,1}>\epsilon)\to \overline\rho(\epsilon)<\infty$, which gives a contradiction.
	
	Now, suppose $ a = 0 $ and choose $\kappa\in(0,1)$.
	We prove \cref{eq:compressibilityID_eq2}
	by showing that, for any $\epsilon>0$, as $p\to\infty$,
	\[
	\Pr\left(\sum_{j=1}^p \ind_{\{X_{p,j} \leq X_{p,(\lfloor \kappa p\rfloor)}\}}X_{p,j}>\epsilon\right)\to 0.
	\]
	Since $a=0$, it follows that for any continuity point $h>0$ of $\rho$,
	$$
	p\bbE[X_{p,1}\ind_{\{X_{p,1}\leq h\}}]\to \int_0^h x\rho(dx)
	$$
	as $p\to\infty$. (See the first part of \cref{th:convergenceid}, which is a corollary of Theorem 15.28 in \cite{Kallenberg2002}.) Thus, for any $\eta>0$, there exist a continuity point $h_0>0$ of $\rho$ and $p_0\in\mathbb N$ such that for all $p\geq p_0$,
	\begin{equation}
		\label{eqn:compression1}
		p\bbE[X_{p,1}\ind_{\{X_{p,1}\leq h_0\}}]<\eta.
	\end{equation}
	
	Consider $\epsilon>0$ and $\gamma > 0$. Define $\eta := (\gamma\epsilon)/2$. Let $h_0$ and $p_0$ be such that \cref{eqn:compression1} holds for $\eta$. Note that
	\begin{equation}
		\label{eqn:compression2}
		\begin{aligned}
			\Pr\left(\sum_{j=1}^p \ind_{\{X_{p,j} \leq X_{p,(\lfloor \kappa p\rfloor)}\}}X_{p,j}>\epsilon\right)
			&{} =\Pr\left(\sum_{j=1}^p \ind_{\{X_{p,j} \leq X_{p,(\lfloor \kappa p\rfloor)}\}}X_{p,j}>\epsilon\ \text{and}\ X_{p,(\lfloor \kappa p \rfloor)}\leq h_0\right)\\
			&{} +\Pr\left(\sum_{j=1}^p \ind_{\{X_{p,j} \leq X_{p,(\lfloor \kappa p\rfloor)}\}}X_{p,j}>\epsilon\ \text{and}\ X_{p,(\lfloor \kappa p \rfloor)} > h_0\right).
		\end{aligned}
	\end{equation}
	We will prove that for all sufficiently large $p$, each summand on the right-hand side above is bounded by $\gamma/2$.
	The next derivation uses Markov's inequality and bounds the first summand in \cref{eqn:compression2} for all $p \geq p_0$:
	\begin{align*}
		\Pr\left(\sum_{j=1}^p \ind_{\{X_{p,j} \leq X_{p,(\lfloor \kappa p\rfloor)}\}}X_{p,j}>\epsilon\ \text{and}\ X_{p,(\lfloor \kappa p\rfloor)}\leq h_0\right)
		& {} \leq
		\Pr\left(\sum_{j}  X_{p,j}\ind_{\{X_{p,j}\leq h_0\}}>\epsilon\right)
		\\
		& {} \leq
		\frac{p\bbE[X_{p,1}\ind_{\{X_{p,1}\leq h_0\}}]}{\epsilon}
		< \frac{\eta}{\epsilon} = \frac{\gamma\epsilon}{2\epsilon} = \frac{\gamma}{2}.
	\end{align*}
	The bound for the second summand in \cref{eqn:compression2} follows from \cref{eq:compressibilityID_eq1}.
	There is $p_1 \in \N$ such that for all $p\geq p_1$,
	$$
	\Pr\left(\sum_{j=1}^p \ind_{\{X_{p,j} \leq X_{p,(\lfloor \kappa p\rfloor)}\}}X_{p,j}>\epsilon\ \text{and}\ X_{p,(\lfloor \kappa p\rfloor)} > h_0\right)
	\leq
	\Pr\left(X_{p,(\lfloor \kappa p\rfloor)}>h_0\right)< \frac{\gamma}{2}.
	$$
	Bringing these two bounds together, we can conclude that for all $p\geq \max(p_0,p_1)$,
	\[
	\Pr\left(\sum_{j=1}^p \ind_{\{X_{p,j} \leq X_{p,(\lfloor \kappa p\rfloor)}\}}X_{p,j}>\epsilon\right) <\gamma,
	\]
	as desired.

\end{proof}

For the rest of this section, we consider the space $\K_{d}$ of positive semi-definite $ n\times n $ matrices.
To state the results, we need to recall a few definitions regarding cones.

Let $\bbH$ be a finite-dimensional Hilbert space, and $\left\Vert
\,\cdot\,\right\Vert$ and $\left\langle \,\cdot\,,\,\cdot\,\right\rangle$ be its norm and  inner product.
A nonempty convex subset $\bbS$ of $\bbH$ is a \textit{cone} if $\lambda\geq0$ and $x\in \bbS$ implies $\lambda x\in \bbS$. A cone is \textit{proper} if
$x=0$ whenever $x$ and $-x$ are in $\bbS$. The \textit{dual cone} $\bbS^{\prime}$ of $\bbS$ is
defined as $\bbS^{\prime}=\{y\in \bbH:\left\langle y,x\right\rangle \geq0$
for all $x\in \bbS\}$. Examples of proper cones include $[0,\infty)$,
$[0,\infty)^{d}$, and the set $\K_{d}$
of positive semi-definite $d$-by-$d$ matrices with real
entries.

Let $\bbS$ be a proper convex cone of $\bbH$. Denote by $\bbS\cup\{\Delta\}$ the
one-point compactification of the cone. Let $C(\bbS\cup\{\Delta\})$ be the set of
continuous functions $f:\bbS\cup\{\Delta\}\rightarrow\mathbb{R}$. For a function
$f\in C(\bbS\cup\{\Delta\})$, define $\left\Vert f\right\Vert =\sup_{x\in
      	\bbS\cup\{\Delta\}}|f(x)|$. Similarly, for a function $g : \bbS \rightarrow \mathbb{R}$,
define  $\left\Vert g\right\Vert =\sup_{x\in \bbS}|g(x)|$.

The next proposition expresses L\'evy's continuity theorem for $\K_{d}$, the cone
of positive semi-definite $d$-by-$d$ matrices with real entries. In this case, the ambient space $\bbH$ is
that of symmetric $d$-by-$d$ matrices with real entries, and its inner product and norm are inherited
from the Euclidean space $\mathbb{R}^{d\times d}$. We point out that with respect to this ambient space
$\bbH$, the cone $\K_{d}$ is self-dual: $(\K_{d})' = \K_{d}$.

\begin{lemma}[L\'evy continuity theorem on $\K_{d}$]\label{lem: levy continuity theorem on cones}
	Let  $\mu,\mu_{1},\mu_{2},\ldots$ be probability measures on $\K_{d}$.  If $\widetilde{\mu}_{n}(\theta)=\int e^{-\left\langle x,\theta\right\rangle }\mu_{n}(dx)$ converges pointwise to $\widetilde{\mu}(\theta)=\int e^{-\left\langle x,\theta\right\rangle}\mu(dx)$ for every $\theta\in \K_{d}$, then $\mu_{n} \cvdto \mu$.
\end{lemma}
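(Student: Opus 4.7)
The plan is to adapt the classical L\'evy continuity theorem to the cone $\K_d$: first establish tightness of $(\mu_n)$, then use Prokhorov's theorem to extract weakly convergent subsequences, and finally identify all subsequential limits with $\mu$ via uniqueness of the Laplace transform on $\K_d$.

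For the tightness step, the key trick is to restrict attention to multiples of the identity, $\theta = tI$ with $t \geq 0$. Since $\langle X, tI\rangle = t\tr(X)$, we have $\widetilde{\mu}_n(tI) = \bfE[e^{-t\tr(X_n)}]$ where $X_n \sim \mu_n$, and the hypothesis gives pointwise convergence of these Laplace transforms to $\widetilde{\mu}(tI)$, which is continuous at $t=0$ with value $1$. The classical L\'evy continuity theorem on $[0,\infty)$ then yields weak convergence, and hence tightness, of the scalar family $(\tr(X_n))_n$. Now, for any PSD matrix $X$, Cauchy--Schwarz applied to the $2\times 2$ principal submatrices gives $|X_{ij}| \leq \sqrt{X_{ii}X_{jj}} \leq \tr(X)$, so the set $K_M := \{X \in \K_d : \tr(X) \leq M\}$ is bounded, and it is closed inside the finite-dimensional symmetric-matrix space $\bbH$, hence compact. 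Thus tightness of $(\tr(X_n))$ upgrades directly to tightness of $(\mu_n)$ on $\K_d$.

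The next ingredient is uniqueness of the Laplace transform on $\K_d$, which I plan to establish by reduction to the standard uniqueness on $[0,\infty)^D$. Choose $D = d(d+1)/2$ matrices $\theta_1,\ldots,\theta_D \in \K_d$ that form a basis of $\bbH$; for instance, the rank-one matrices $e_i e_i^T$ for $i=1,\ldots,d$ together with $(e_i + e_j)(e_i + e_j)^T$ for $1 \leq i < j \leq d$. The linear map $\Phi(x) := (\langle x,\theta_1\rangle,\ldots,\langle x,\theta_D\rangle)$ is a linear isomorphism from $\bbH$ onto $\R^D$ sending $\K_d$ homeomorphically onto a closed convex cone contained in $[0,\infty)^D$. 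Knowing $\widetilde{\mu}(t_1\theta_1 + \cdots + t_D\theta_D)$ for all $t_j \geq 0$ is the same as knowing the Laplace transform of the pushforward $\Phi_*\mu$ on $[0,\infty)^D$, and hence determines $\Phi_*\mu$ by the classical uniqueness on orthants, and then $\mu$ itself.

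With tightness and uniqueness in hand, the conclusion is routine. By Prokhorov's theorem, every subsequence of $(\mu_n)$ contains a further subsequence $(\mu_{n_k})$ converging weakly to some probability measure $\nu$ on $\K_d$. Since $x \mapsto e^{-\langle x,\theta\rangle}$ is bounded and continuous on $\K_d$ for each $\theta \in \K_d$, the weak convergence yields $\widetilde{\mu}_{n_k}(\theta) \to \widetilde{\nu}(\theta)$; combined with the hypothesis, $\widetilde{\nu} = \widetilde{\mu}$ on $\K_d$, and uniqueness forces $\nu = \mu$. The standard subsequence principle then gives $\mu_n \cvdto \mu$. The step requiring the most care is the tightness argument, where the cone structure must be exploited via the trace functional to reduce to a known scalar statement; the uniqueness step and the final subsequence argument are then essentially bookkeeping.
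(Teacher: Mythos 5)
Your proof is correct, but it takes a genuinely different route from the paper's. The paper follows Kallenberg's proof of the classical continuity theorem: it derives tightness directly from the elementary bound $1-\widetilde{\mu}_{n}(\theta/r)\geq\frac{1}{2}\mu_{n}(\{x:\langle x,\theta\rangle\geq r\})$ together with continuity of $\widetilde{\mu}$ at $0$, and then, instead of extracting subsequential limits, it invokes a Stone--Weierstrass argument (\cref{lem: stone weierstrass on cones}) to approximate any bounded continuous function on the one-point compactification of $\K_d$ uniformly by linear combinations of $x\mapsto e^{-\langle x,\theta\rangle}$, which lets it pass from pointwise convergence of Laplace transforms to $\mu_n f\to\mu f$ directly; uniqueness of the Laplace transform never needs to be stated separately because the density of exponentials delivers it implicitly. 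You instead obtain tightness by the slicker observation that $\theta=tI$ turns the hypothesis into scalar Laplace-transform convergence of $\tr(X_n)$, and that $\{\tr\leq M\}$ is compact in $\K_d$; you then go through Prokhorov, identify subsequential limits via a reduction of Laplace-transform uniqueness on $\K_d$ to the orthant $[0,\infty)^D$ using a PSD basis of $\bbH$, and finish with the subsequence principle. Your route is more modular and leans on classical one- and finite-dimensional results, whereas the paper's is self-contained on the cone and avoids Prokhorov entirely; both are complete and correct.
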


\begin{proof}
	The proof follows that of Theorem 5.3 in \cite{Kallenberg2002}.
	
	Assume that $\widetilde{\mu}_{n}(\theta)$ converges to $\widetilde{\mu}(\theta)$ for every $\theta\in \K_{d}$.
	We have, using $e^{-t}\leq\frac{1}{2}$ for all
	$t\geq1$,
	\[
	1-\widetilde{\mu}_{n}(\theta)=\int_{\K_d}(1-e^{-\left\langle x,\theta
		\right\rangle }) \, \mu_{n}(dx)\geq\frac{1}{2}\mu_{n}(\{x:\left\langle
	x,\theta\right\rangle \geq1\}).
	\]
	Hence, for all $\theta\in \K_{d}$ and $r>0$, we have
	$\mu_{n}(\{x:\left\langle x,\theta\right\rangle \geq r\})\leq
	2(1-\widetilde{\mu}_{n}(\theta/r))$, which then implies
	\[
	\lim\sup_{n}\mu_{n}(\{x:\left\langle x,\theta\right\rangle \geq r\})\leq
	\lim_{n\rightarrow\infty}2(1-\widetilde{\mu}_{n}(\theta/r))=2(1-\widetilde
	{\mu}(\theta/r)).
	\]
	Taking $r\rightarrow\infty$ and using the continuity of $\widetilde{\mu}$ at
	$0$, we obtain that
	\[
	\lim_{r\rightarrow\infty}\lim\sup_{n}\mu_{n}(\{x:\left\langle x,\theta\right\rangle \geq r\})=0.
	\]
	From this, straightforward calculations show that
	\[ \lim_{r\rightarrow\infty}\lim\sup_{n}\mu_{n}(\{x: \left\Vert x\right\Vert \geq r\})=0, \] that is, the sequence $(\mu_{n})_n$ is tight. As a result,
	for any $\varepsilon>0$, we may choose large $r$
	that $\mu_{n}(\{x:\left\Vert x\right\Vert \geq r\})\leq\varepsilon$ for all $n$
	and $\mu(\{x:\left\Vert x\right\Vert \geq r\})\leq\varepsilon$.
	
	Now fix a bounded continuous function $f: \K_{d} \rightarrow\mathbb{R}$, with
	$\left\Vert f\right\Vert \leq m<\infty$. Pick an arbitrary $\varepsilon > 0$. Let $r > 0$ be the real such that
	$\mu_{n}(\{x:\left\Vert x\right\Vert \geq r\})\leq\varepsilon$ for all $n$
	and $\mu(\{x:\left\Vert x\right\Vert \geq r\})\leq\varepsilon$.
	Define $f_{r}$ to be the restriction of $f$
	to the ball $\{\left\Vert x\right\Vert \leq r\}$, and extend $f_{r}$ to a
	continuous function $\widetilde{f}$ on $\K_{d} \cup \{\Delta\}$ with $\left\Vert \widetilde
	{f}\right\Vert \leq m.$ For instance, one can set
	\[
	\widetilde{f}(x)=f\left(  \frac{rx}{\left\Vert x\right\Vert }\right)
	\times\max(r+1-\left\Vert x\right\Vert ,0)\text{ for }\left\Vert x\right\Vert
	\geq r.
	\]
	By \cref{lem: stone weierstrass on cones}, there exists some function
	$g(x)=\sum_{j=1}^{p}\lambda_{j}e^{-\left\langle x,\theta_{j}\right\rangle }$
	with $\theta_{j}\in \K_{d}$ and $\lambda_{j}\in\mathbb{R}$ such that
	$\left\Vert \widetilde{f}-g \right\Vert \leq\varepsilon$. We obtain%
	\begin{align*}
		\left\vert \mu_{n}f-\mu_{n}g\right\vert
		& {} \leq \left\vert \mu_n f - \mu_n \widetilde{f} \right\vert + \left\vert \mu_n \widetilde{f} - \mu_n g \right\vert
		\\
		& {} \leq
		\mu_{n}(\{x:\left\Vert x\right\Vert \geq r\})
		\left\Vert f-\widetilde{f}\right\Vert
		+ \left\Vert \widetilde{f}- g\right\Vert
		\\
		& {} \leq(2m+1)\varepsilon,
	\end{align*}
	and similarly for $\mu$. Thus,
	\begin{align*}
		\left\vert \mu_{n}f-\mu f\right\vert
		& {}\leq
		\left\vert \mu_n f - \mu_n g\right\vert +
		\left\vert \mu_n g - \mu g\right\vert +
		\left\vert \mu f - \mu g \right\vert
		\\
		& {} \leq\left\vert \mu_{n}g-\mu g\right\vert
		+2(2m+1)\varepsilon.
	\end{align*}
	By the pointwise convergence of $\widetilde{\mu}_{n}$,
	\begin{align*}
		\left\vert \mu_{n}g-\mu g\right\vert
		& {} \leq \sum_{j=1}^{p}|\lambda_{j}|
		|\widetilde{\mu}_{n}(\theta_{j}) - \widetilde{\mu}(\theta_{j})|
		\\
		& {} \rightarrow 0
	\end{align*}
	as $n\rightarrow\infty$. Letting $n\rightarrow\infty$ and then
	$\varepsilon\rightarrow0$, we obtain $\mu_{n}f\rightarrow\mu f$. As $f$ was chosen arbitrarily, this proves $\mu_{n}\cvdto \mu$.
\end{proof}

We state the following version of Lemma 5.4 in \cite{Kallenberg2002} for the cone of $d$-by-$d$ positive semi-definite matrices.
\begin{lemma}[Stone-Weierstrass theorem on $ \K_{d} $]\label{lem: stone weierstrass on cones}
	Every continuous function $g:\K_d \cup\{\Delta\}\rightarrow\mathbb{R}$ can be
	approximated uniformly by linear combinations of the functions
	$x \longmapsto e^{-\left\langle x,\theta\right\rangle}$ for $\theta\in \K_d$.
\end{lemma}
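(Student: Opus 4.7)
The plan is to apply the classical Stone-Weierstrass theorem on the compact Hausdorff space $\K_d \cup \{\Delta\}$, using the algebra $\mathcal{A}$ consisting of linear combinations of the functions $e_\theta(x) := e^{-\langle x, \theta \rangle}$ for $\theta$ in a suitable subset of $\K_d$. The first task will be to fix a continuous extension of each $e_\theta$ to $\Delta$. For $\theta = 0$, $e_\theta \equiv 1$ extends trivially. For $\theta$ in the interior $\mathring{\K_d}$ (strictly positive definite matrices), the estimate $\langle x, \theta \rangle = \tr(x\theta) \geq \lambda_{\min}(\theta)\,\tr(x)$ valid for $x \in \K_d$, combined with $\tr(x) \geq \|x\|_F/\sqrt{d}$ for $x \in \K_d$, gives $\langle x, \theta \rangle \to \infty$ as $\|x\|_F \to \infty$. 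Hence $e_\theta$ extends continuously by setting $e_\theta(\Delta) = 0$. I would therefore define $\mathcal{A}$ as the real linear span of $\{e_\theta : \theta \in \mathring{\K_d} \cup \{0\}\}$, noting that $\mathring{\K_d} \cup \{0\} \subseteq \K_d$.

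Next I would verify the hypotheses of Stone-Weierstrass. Closure under multiplication follows from $e_{\theta_1} e_{\theta_2} = e_{\theta_1 + \theta_2}$ and the fact that $\mathring{\K_d} \cup \{0\}$ is closed under addition (the sum of an interior element with any element of $\K_d$ is again in the interior, and $0+0=0$). The constants lie in $\mathcal{A}$ via $\theta = 0$. For point separation on $\K_d$: given distinct $x, y \in \K_d$, the matrix $x - y$ is a nonzero symmetric matrix, and since $\mathring{\K_d}$ is open and nonempty in the ambient space of symmetric matrices it linearly spans this space, so some $\theta \in \mathring{\K_d}$ satisfies $\langle x-y, \theta \rangle \neq 0$, yielding $e_\theta(x) \neq e_\theta(y)$. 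To separate $x \in \K_d$ from $\Delta$, any $\theta \in \mathring{\K_d}$ gives $e_\theta(x) > 0 = e_\theta(\Delta)$.

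With these ingredients in place, the Stone-Weierstrass theorem implies that $\mathcal{A}$ is uniformly dense in $C(\K_d \cup \{\Delta\})$. Since every element of $\mathcal{A}$ is by construction a finite real linear combination of functions $e_\theta$ with $\theta \in \mathring{\K_d} \cup \{0\} \subseteq \K_d$, this yields the lemma.

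The main obstacle will be the continuous-extension step: one cannot simply take all $\theta \in \K_d$, because for $\theta$ on the boundary of $\K_d$ (positive semi-definite but singular), there exist sequences $(x_n) \subset \K_d$ with $\|x_n\|_F \to \infty$ but $\langle x_n, \theta \rangle$ bounded (take $x_n$ supported in $\ker\theta$), so $e_\theta$ has no continuous extension to $\Delta$. Restricting the generators to $\mathring{\K_d} \cup \{0\}$ circumvents this while still being rich enough to form an algebra containing $1$ and to separate the points of $\K_d \cup \{\Delta\}$.
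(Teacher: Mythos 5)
Your proof is correct and takes the same route as the paper, which simply omits the argument as "directly following from the general Stone--Weierstrass theorem." Your write-up correctly supplies the one non-obvious detail — that generators must be restricted to $\theta \in \mathring{\K_d} \cup \{0\}$ so that each $e_\theta$ extends continuously to $\Delta$, while this restricted family still forms a unital subalgebra separating the points of $\K_d \cup \{\Delta\}$ — and the resulting approximants use only $\theta$ from a subset of $\K_d$, so the lemma as stated follows.
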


We omit the proof of \cref{lem: stone weierstrass on cones} as it directly follows from the general Stone-Weierstrass theorem.

\begin{lemma}\label{thm:tensoring with cones}
	Let $(X_{pi})$ be a triangular array of nonnegative scalar random
	variables such that for every $p$, $(X_{pi})_{i = 1,\ldots,p}$ is an iid sequence of random variables from $\mu_{p}$. Suppose also that  $(Y_{i})$ is an iid sequence in the cone $\K_n$ of positive semi-definite $n$-by-$n$ matrices with $Y_{i}\sim F$ and $\E[\left\Vert
	Y_{1}\right\Vert ]<\infty.$ If %
	$\sum_{i=1}^{p}X_{pi}$ converges in distribution to $\id(a,\rho)$ as $p\to\infty$, then as $p\to\infty$,
	\begin{equation}
		\sum_{i=1}^{p}X_{pi}Y_{i} \cvdto \id({A},\nu)\label{eq:sumxy2}%
	\end{equation}
	where ${A}\in \K_n$ and the L\'{e}vy measure $\nu$ on $\K_n\setminus\{0\}$ are defined by
	\begin{align*}
		{A}  & :=a\E[Y_{1}],
		&
		\nu(dz)  & :=\int_{0}^{\infty}F\left(  \frac{dz}{x}\right) \rho(dx) .
	\end{align*}
\end{lemma}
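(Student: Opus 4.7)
The plan is to use the L\'evy continuity theorem on the cone $\K_n$ (\cref{lem: levy continuity theorem on cones}) together with the scalar version (\cref{lemma:idlaplace}) applied in conjunction with a Fubini-plus-dominated-convergence argument. For $\theta\in\K_n$, conditional independence gives
\[
\E\bigl[e^{-\langle \sum_{i=1}^p X_{pi}Y_i,\theta\rangle}\bigr] = \E\bigl[e^{-X_{p1}\langle Y_1,\theta\rangle}\bigr]^p = \Bigl(1-\tfrac{1}{p}\,p\E\bigl[1-e^{-X_{p1}\langle Y_1,\theta\rangle}\bigr]\Bigr)^p,
\]
so it suffices to show, for each fixed $\theta\in\K_n$,
\[
p\,\E\bigl[1-e^{-X_{p1}\langle Y_1,\theta\rangle}\bigr] \;\longrightarrow\; \langle A,\theta\rangle + \int_{\K_n}\bigl(1-e^{-\langle z,\theta\rangle}\bigr)\nu(dz).
\]

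First I would identify the target. Writing $\widetilde F(\theta):=\E[e^{-\langle Y_1,\theta\rangle}]$ and using the defining change of variables $z=xy$ in $\nu(dz)=\int F(dz/x)\rho(dx)$, Fubini (on the nonnegative integrand) gives
\[
\int_{\K_n}\bigl(1-e^{-\langle z,\theta\rangle}\bigr)\nu(dz) = \int_0^\infty\int_{\K_n}\bigl(1-e^{-x\langle y,\theta\rangle}\bigr)F(dy)\,\rho(dx) = \int_0^\infty\bigl(1-\widetilde F(x\theta)\bigr)\rho(dx).
\]
Next, by Fubini applied to the $p$-indexed quantity,
\[
p\,\E\bigl[1-e^{-X_{p1}\langle Y_1,\theta\rangle}\bigr] = \int_{\K_n}\Bigl[\int_0^\infty\bigl(1-e^{-x\langle y,\theta\rangle}\bigr)p\mu_p(dx)\Bigr] F(dy).
\]
For each fixed $y$, \cref{lemma:idlaplace} applied with $t=\langle y,\theta\rangle\ge 0$ shows that the inner integral converges to $a\langle y,\theta\rangle + \int_0^\infty(1-e^{-x\langle y,\theta\rangle})\rho(dx)$. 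Integrating this pointwise limit against $F$ and applying Fubini one more time yields $\langle a\E[Y_1],\theta\rangle + \int_0^\infty(1-\widetilde F(x\theta))\rho(dx)$, which is the desired target.

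The main obstacle, and the step requiring real care, is justifying the passage of the limit inside $\int\,\cdot\,F(dy)$. I would obtain a dominant via the truncation characterisation of \cref{th:convergenceid}: choose $h>0$ a continuity point of $\rho$ (or just $h=1$ if admissible), then split at $x=h$ using $1-e^{-tx}\le tx$ on $\{x\le h\}$ and $1-e^{-tx}\le 1$ on $\{x>h\}$ to get, for every $t\ge 0$,
\[
\int_0^\infty\bigl(1-e^{-tx}\bigr)p\mu_p(dx) \;\le\; t\cdot p\,\E\bigl[X_{p1}\ind_{\{X_{p1}\le h\}}\bigr] + p\,\Pr(X_{p1}>h).
\]
Both $p\E[X_{p1}\ind_{\{X_{p1}\le h\}}]$ and $p\Pr(X_{p1}>h)$ converge (to $a+\int_0^h x\rho(dx)$ and $\overline\rho(h)$ respectively), hence are uniformly bounded in $p\ge p_0$, so there exists $C>0$ with $\int_0^\infty(1-e^{-tx})p\mu_p(dx)\le C(1+t)$ for all $t\ge0$ and all $p\ge p_0$. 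Substituting $t=\langle y,\theta\rangle\le\|y\|\,\|\theta\|$ yields the $F$-integrable dominant $C(1+\|y\|\,\|\theta\|)$, using $\E[\|Y_1\|]<\infty$. Dominated convergence then legitimises the interchange.

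Finally, combining the three steps, $p\E[1-e^{-X_{p1}\langle Y_1,\theta\rangle}]$ converges to $\langle A,\theta\rangle+\int(1-e^{-\langle z,\theta\rangle})\nu(dz)$, so
\[
\E\bigl[e^{-\langle\sum_i X_{pi}Y_i,\theta\rangle}\bigr] \;\longrightarrow\; \exp\Bigl(-\langle A,\theta\rangle-\int_{\K_n}\bigl(1-e^{-\langle z,\theta\rangle}\bigr)\nu(dz)\Bigr),
\]
which is the Laplace transform of $\id(A,\nu)$ on $\K_n$. The conclusion then follows from \cref{lem: levy continuity theorem on cones}. As a by-product the above bound also shows $\int\min(1,\|z\|)\nu(dz)<\infty$, confirming that $\nu$ is a bona fide L\'evy measure on $\K_n\setminus\{0\}$.
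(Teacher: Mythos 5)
Your proposal is correct and follows essentially the same route as the paper's proof: compute the Laplace transform at $\theta\in\K_n$, apply \cref{lemma:idlaplace} pointwise in $y$, justify the interchange of limit and $F(dy)$-integration by a dominant of the form $C(1+\|y\|\,\|\theta\|)$ (the paper bounds $1-e^{-tx}$ by $\min(1,tx)\le\max(1,t)\min(1,x)$ where you use the truncation split from \cref{th:convergenceid}, but these yield the same uniform-in-$p$ linear-in-$t$ bound), and conclude via \cref{lem: levy continuity theorem on cones}. No gaps.
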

\begin{remark}
	The measure $ \nu $ defined above is indeed a L\'evy measure. Obviously, $ \nu(\{ 0\}) = 0 $. To see that $ \int \min(\|z\|,1) \, \nu(dz) < \infty $, note that
	\begin{align*}
		\int \min(\|z\|,1) \, \nu(dz)
		= \int_{\K_n}\int \min(x\|y\|,1) \ind_{\{xy \ne  0\}} \, \rho(dx) \, F(dy).
	\end{align*}
	Here, note that, for nonnegative $ a $ and $ b $, $ \min( 1,ab ) \le \max(1,a)\min(1,b) $. It follows that
	\begin{align*}
		\int_{\K_n}\int \min(x\|y\|,1) \ind_{\{xy \ne  0\}} \, \rho(dx) \, F(dy)
		&\le \int_{\K_n}\int \max( 1,\|y\| )\min(1,x) \ind_{\{xy \ne  0\}} \, \rho(dx) \, F(dy) \\
		&\le \int_{\K_n} (1+\|y\|)  \, F(dy) \int \min(1,x)\, \rho(dx) < \infty.
	\end{align*}
\end{remark}
\begin{proof}
	Let $ \theta \in \K_{n}' = \K_{n} $. We calculate the following Laplace transform at $ \theta $:
	\begin{align*}
		\bfE\left[ \exp\left( -\left\langle \theta, \sum_{i=1}^{p} X_{pi}Y_{i} \right\rangle \right) \right]
		&= \left( \bfE\left[ \exp\left( -\left\langle \theta, X_{p1}Y_{1} \right\rangle \right) \right] \right)^{p} \\
		&= \left( \int_{\K_{n}} \int e^{-\langle \theta,y \rangle x} \, \mu_{p}(dx) \, F(dy) \right)^{p} \\
		&= \left( 1- \frac{1}{p}\int_{\K_{n}} \int \left( 1-e^{-\langle \theta,y \rangle x} \right) \, p\mu_{p}(dx) \, F(dy) \right)^{p} .
	\end{align*}
	Now, for a fixed $ \theta, y \in \K_{n} $, \cref{lemma:idlaplace} implies that
	\begin{align*}
		\int \left( 1-e^{-\langle \theta,y \rangle x} \right) \, p\mu_{p}(dx) \to a\langle \theta, y \rangle + \int \left( 1-e^{-\langle \theta,y \rangle x} \right) \, \rho(dx) .
	\end{align*}
	Note that, as a function of $ y $, the above integral on the left-hand side is dominated by an integrable function. Specifically, we get, for large enough $ p $'s,
	\begin{align*}
		\int \left( 1-e^{-\langle \theta,y \rangle x} \right) \, p\mu_{p}(dx)
		&\le \int \min( 1,\langle \theta,y \rangle x ) \, p\mu_{p}(dx) \\
		&\le \max( 1,\langle \theta,y \rangle ) \int \min( 1,x ) \, p\mu_{p}(dx) \\
		&\le \left( 1+\|\theta\|\|y\| \right) \left( a+\int \min( 1,x ) \, \rho(dx) + \eps \right),
	\end{align*}
	where we used the fact that $ \min(1,ab) \le \max(1,a)\min(1,b) $ for nonnegative $ a $ and $ b $. As $ \bfE[\|Y_{1}\|] < \infty $, the last expression is integrable with respect to $ F(dy) $.
	By dominated convergence, it follows that
	\begin{align*}
		&\bfE\left[ \exp\left( -\left\langle \theta, \sum_{i=1}^{p} X_{pi}Y_{i} \right\rangle \right) \right] \\
		&\to \exp\left( -\left( \langle \theta,a\bfE[Y_{1}] \rangle + \int_{K_{n}} \int \left( 1-e^{-\langle \theta,xy \rangle } \right) \, \rho(dx) \, F(dy) \right) \right) \\
		&= \exp\left( -\left( \langle \theta,A \rangle + \int \left( 1-e^{-\langle \theta,z \rangle} \right) \, \nu(dz) \right) \right),
	\end{align*}
	which is the Laplace transform of $ \operatorname{ID}(A,\nu) $. By \cref{lem: levy continuity theorem on cones}, (see \cite[Theorem 5.4]{davydov2008strictly}), the Laplace transform uniquely determines the distribution, so the proof is completed.
\end{proof}

In the special case when $ n=1 $, we have the following.
\begin{corollary}\label{lemma:idlaplaceproduct}
	Let $(X_{{p}i})$ be a triangular array of nonnegative scalar random
	variables such that for every $p$, $(X_{{p}i})_{i=1,\ldots,p}$ is an iid sequence of random variables from $\mu_{{p}}$. Let $Y_{1}%
	,Y_{2},\ldots.$ be a sequence of iid nonnegative random variables $Y_{i}%
	\in\lbrack0,\infty)$ with $Y_{i}\sim F$ and $\bfE[Y_{1}]<\infty.$ Let $a\geq0$ and $\rho$ be a L\'evy measure on $(0,\infty)$. Then,%
	\begin{equation}
		\sum_{i=1}^{{p}}X_{{p}i}\cvdto \id(a,\rho)\label{eq:sumx}%
	\end{equation}
	implies%
	\begin{equation}
		\sum_{i=1}^{{p}}X_{{p}i}Y_{i}\cvdto \id(c,\nu)\label{eq:sumxy}%
	\end{equation}
	where $c\ge 0$ and the L\'{e}vy measure $\nu$ on $(0,\infty)$ are defined by
	\begin{align*}
		c  & :=a\bfE[Y_{1}], &  \nu(dz)  & :=\int_{0}^{\infty} F\left(\frac{dz}{x}\right) \rho(dx).
	\end{align*}
\end{corollary}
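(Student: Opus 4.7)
The plan is to prove the corollary directly by the L\'evy continuity theorem for Laplace transforms of nonnegative random variables, mimicking the structure of the proof of \cref{thm:tensoring with cones} but in the one-dimensional case (so that no cone-theoretic machinery is needed and one can use the classical L\'evy continuity theorem on $[0,\infty)$ instead of \cref{lem: levy continuity theorem on cones}). Alternatively, one can simply note that the claim is the case $n=1$ of \cref{thm:tensoring with cones}, with $\K_1=[0,\infty)$; I will keep both options open but describe the direct route.

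The first step is to compute, for $t\ge 0$, the Laplace transform of $\sum_{i=1}^{p}X_{p i}Y_i$ using the independence across $i$:
\begin{align*}
\bfE\!\left[e^{-t\sum_{i=1}^{p}X_{pi}Y_i}\right]
=\left(\int_0^\infty\!\!\int_0^\infty e^{-txy}\,\mu_p(dx)\,F(dy)\right)^{\!p}
=\left(1-\frac{1}{p}\int_0^\infty\!\!\int_0^\infty(1-e^{-txy})\,p\mu_p(dx)\,F(dy)\right)^{\!p}.
\end{align*}
By \cref{lemma:idlaplace} applied pointwise in $y$, the hypothesis $\sum_{i=1}^{p}X_{pi}\cvdto\id(a,\rho)$ gives, for each fixed $y\ge 0$,
$$
\int_0^\infty(1-e^{-txy})\,p\mu_p(dx)\longrightarrow aty+\int_0^\infty(1-e^{-txy})\,\rho(dx).
$$

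The second (and main) step is to justify interchanging this pointwise limit with the outer integral against $F(dy)$. This is where dominated convergence is used, and it is the technical crux of the argument. Using $1-e^{-u}\le\min(1,u)$ and the elementary inequality $\min(1,ab)\le\max(1,a)\min(1,b)$ for $a,b\ge 0$, one has
$$
\int_0^\infty(1-e^{-txy})\,p\mu_p(dx)\le \max(1,ty)\int_0^\infty \min(1,x)\,p\mu_p(dx).
$$
Since $\sum_{i=1}^{p}X_{pi}\cvdto\id(a,\rho)$, by \cref{th:convergenceid} the quantity $\int_0^\infty \min(1,x)\,p\mu_p(dx)$ remains bounded as $p\to\infty$ (it converges to $a+\int_0^\infty\min(1,x)\rho(dx)$), so for $p$ large we have a uniform bound by a constant times $(1+ty)$, which is $F$-integrable by the assumption $\bfE[Y_1]<\infty$. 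Dominated convergence therefore yields
$$
\int_0^\infty\!\!\int_0^\infty(1-e^{-txy})\,p\mu_p(dx)\,F(dy)\longrightarrow ct+\int_0^\infty\!\!\int_0^\infty(1-e^{-txy})\,\rho(dx)\,F(dy),
$$
where $c=a\bfE[Y_1]$.

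For the third step, I rewrite the remaining double integral via Fubini and the change of variables $z=xy$, giving
$$
\int_0^\infty\!\!\int_0^\infty(1-e^{-txy})\,F(dy)\,\rho(dx)=\int_0^\infty(1-e^{-tz})\,\nu(dz),
$$
with $\nu(dz)=\int_0^\infty F(dz/x)\,\rho(dx)$ as stated. A short side check using the same $\min(1,ab)\le\max(1,a)\min(1,b)$ bound and $\bfE[Y_1]<\infty$ confirms that $\int_0^\infty\min(1,z)\,\nu(dz)<\infty$, so $\nu$ is a genuine L\'evy measure. Combining the three steps and using $(1-\alpha_p/p)^p\to e^{-\alpha}$ whenever $\alpha_p\to\alpha$, the Laplace transform of $\sum_{i=1}^{p}X_{pi}Y_i$ converges pointwise to $\exp\!\bigl(-ct-\int_0^\infty(1-e^{-tz})\nu(dz)\bigr)$, which is the Laplace transform of $\id(c,\nu)$. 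The L\'evy continuity theorem on $[0,\infty)$ then concludes the proof. The main obstacle, as highlighted, is the dominated convergence step, where the choice of dominating function must carefully exploit both the uniform tightness provided by \cref{th:convergenceid} and the finiteness of $\bfE[Y_1]$.
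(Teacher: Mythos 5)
Your proposal is correct and is essentially the paper's own argument: the paper obtains this corollary as the $n=1$ case of \cref{thm:tensoring with cones}, whose proof is exactly your Laplace-transform computation — pointwise application of \cref{lemma:idlaplace} in $y$, domination via $1-e^{-u}\le\min(1,u)$ and $\min(1,ab)\le\max(1,a)\min(1,b)$ together with $\bfE[Y_1]<\infty$, and the continuity theorem. Your identification of the dominated-convergence step as the crux, and your side check that $\nu$ is a genuine L\'evy measure, both match the paper (the latter appears there as a remark following the lemma).
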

When $F$ has a density $f$ with respect to the Lebesgue measure, the Levy measure $\nu$ can be expressed as
\[
\nu(dz)   :=\int_0^\infty \rho(dz/x) f(x)dx.
\]
In this paper, we often used this equivalent form of $\nu$.

\newpage

\section{Proofs of the Main Theorems and Propositions}
\label{app:proofs-all}

\subsection{Proof of \cref{th:compressibility}}

Assume $ a^{(l)} = 0 $. Then, \cref{eq:compressibility_lambda} follows directly from \cref{prop:compressibilityID} in the Appendix and Slutsky's theorem.

Note that $T^{(l+1)}_j=\lambda^{(l)}_{p_l,j} Y_j$,
where the random variables $Y_{j}:=\sum_{k=1}^{p_{l+1}}(V_{j,k}^{(l+1)})^2$ are iid, and do not depend on $p_l$. If $ a^{(l)} = 0 $, then \cref{lemma:idlaplaceproduct} implies that $\sum_{j=1}^{p_l}T^{(l+1)}_j\to \id(0,\nu)$ for some L\'evy measure $\nu$. \Cref{eq:compressibility_w} then follows similarly from \cref{prop:compressibilityID} and Slutsky's theorem.

\subsection{Proof of \cref{prop:regvarweightsinf}}

We assume here more generally that $\overline\rho^{(l)}(x)\overset{x\to\infty}{\sim}x^{-\tau}L(x)$, where $L$ is a slowly varying function.
Using \cref{prop: extremes} and \cref{prop:extremesID} in the Appendix with the fact that $1-\sum_{i=0}^{k-1}\frac{x^i e^{-x}}{i!}\overset{x\to 0}{\sim}\frac{x^k}{k!}$, we obtain the first equivalence relations in \cref{eq:01,eq:02}.  We have
\begin{align*}
\overline\nu^{(l)}(x)&=\int_0^\infty \overline\rho^{(l)}(x/z)\gammadist\left (z;\frac{1}{2},\frac{1}{2}\right)dz\\
&=x\int_0^\infty \overline\rho^{(l)}(1/ u)\gammadist\left (ux;\frac{1}{2},\frac{1}{2}\right)du\\
&=\frac{1}{\sqrt{2\pi}}x^{1/2}\int_0^\infty \overline\rho^{(l)}(1/ u)u^{-1/2}e^{-ux/2}du.
\end{align*}
We have $\overline\rho^{(l)}(1/u)u^{-1/2}\overset{u\to 0}{\sim}  u^{\tau-1/2}L(1/u)$. Using a Tauberian theorem \cite[Chapter 13, Theorem 2]{Feller1971}, we obtain
\begin{align*}
\overline\nu^{(l)}(x)&\overset{x\to\infty}{\sim}\frac{1}{\sqrt{2\pi}}x^{1/2} \times \Gamma(\tau+1/2) (x/2)^{-\tau-1/2}L(x/2)\\
&\overset{x\to\infty}{\sim}\frac{2^\tau\Gamma(\tau+1/2)}{\sqrt{\pi}}x^{-\tau}L(x).
\end{align*}
The second equivalence relations in \cref{eq:01,eq:02} now follow directly.

\subsection{Proofs of \cref{prop:powerlawactivations,prop:powerlawactivations_nobias}}

\paragraph{Proof of \cref{prop:powerlawactivations}.}
First consider \cref{eq:powerlawSl}. We have $S^{(l)}\sim\id(\frac{a^{(l)}}{2},\frac{\nu^{(l)}}{2})$ as mentioned in \cref{eq:relulevy}. Also, we have shown in the proof of \cref{prop:regvarweightsinf} that \cref{eq:rhobarRVL} implies
$$
\overline\nu^{(l)}(x)\overset{x\to\infty}{\sim}
\frac{2^{\tau^{(l)}}\Gamma(\tau^{(l)}+1/2)}{\sqrt{\pi}}c^{(l)}  x^{-\tau^{(l)}}.
$$
\cref{eq:powerlawSl} then follows from
\cref{prop:idRVregvar}.
For $l=2$, we have
\begin{align}
\Sigma^{(2)}(\bx)&=\sigma_b^2 + \sigma_v^2 S^{(1)} \Sigma^{(1)}(\bx)
\end{align}
where $\Sigma^{(1)}(\bx)=\sigma_b^2 + \sigma_v^2 \frac{ \| \bx\|^2}{\din}$ is fixed. Thus,
\begin{align}
\Pr\left(\Sigma^{(2)}(\bx)>u\right)\overset{u\to\infty}{\sim}\Pr(S^{(1)}>u)\cdot (\sigma_v^2  \Sigma^{(1)}(\bx))^{\tau^{(1)}}
\end{align}
and the random variable $\Sigma^{(2)}(\bx)$ has a power-law tail with exponent $\beta^{(1)}=\tau^{(1)}$.
We can now proceed by induction. If $\Sigma^{(l)}(\bx)$ has a regularly varying tail with exponent $\beta^{(l-1)}$ and $S^{(l)}$ has a regularly varying tail with exponent $\tau^{(l)}$, then $\Sigma^{(l+1)}(\bx)=\sigma_b^2 + \sigma_v^2 S^{(l)} \Sigma^{(l)}(\bx)$ also has regularly varying tail with exponent $\beta^{(l)}=\min(\beta^{(l-1)},\tau^{(l)})$ by
\cref{lem:Jessen4.1i}.
Finally, we have, using
\cref{lem:Jessen4.2},
\begin{align*}
\Pr\left( (\zeta_{k}^{(l)})^2   >u\right)&\overset{u\to\infty}{\sim}  \Pr\left(\Sigma^{(l)}(\bx)>u\right) \bfE\left [(\varepsilon_k^{(l)})^{2\beta^{(l-1)}} \right ]\\
&\overset{u\to\infty}{\sim}  \Pr\left(\Sigma^{(l)}(\bx)>u\right) \times 2^{\beta^{(l-1)}}\frac{\Gamma(\beta^{(l-1)}+1/2)}{\Gamma(1/2)}.
\end{align*}

\paragraph{Proof of \cref{prop:powerlawactivations_nobias}.}
If $\sigma_b=0$, then
$$\Sigma^{(l+1)}(\bx)=\frac{ \| \bx\|^2}{\din}\sigma_v^{2(l+1)} \prod_{j=1}^{l} S^{(j)}$$
where $S^{(1)},\ldots,S^{(L)}$ are iid and
\[
\Pr\left(S^{(j)} > u\right) \overset{u\to\infty}{\sim} \widetilde{c} u^{-\tau}.
\]
Here
\begin{equation*}
\widetilde c=
\frac{2^{\tau-1}\Gamma(\tau+1/2)}{\sqrt{\pi}}c \label{eq:c2tilde}.
\end{equation*}
It
follows from \cref{lem:Jessen4.1iv} that
$$
\Pr\left(\prod_{j=1}^l S^{(j)}>u\right)\overset{u\to\infty}{\sim}\frac{\tau^{l-1}(\widetilde c)^l}{(l-1)!}u^{-\tau}\log^{l-1} u.
$$
Therefore, for $l\geq 1$,
$$
\Pr\left(\Sigma^{(l+1)}(\bx)>u\right)\overset{u\to\infty}{\sim}\left (\frac{ \| \bx\|^2}{\din}\sigma_v^{2(l+1)} \right)^\tau \frac{\tau^{l-1}(\widetilde c)^l}{(l-1)!}u^{-\tau}\log^{l-1} u.
$$
Finally, we have, using \cref{lem:Jessen4.2},
\begin{align*}
\Pr\left( (\zeta_{k}^{(l+1)})^2   >u\right)
&\overset{u\to\infty}{\sim}
\Pr\left(\Sigma^{(l+1)}(\bx)>u\right) \times 2^{\tau}\frac{\Gamma(\tau+1/2)}{\Gamma(1/2)}.
\end{align*}

\subsection{Proofs of \cref{prop: error bound,thm: eps-pruning,th:kappapruning}}\label{sec: proof pruning}

Recall the assumption (UI):
For all layers $ l=1,\ldots,L $,
$$\int_{0}^{\infty} u \rho^{(l)}(du) = M^{(l)}_{1} < \infty,
\quad
N^{(l)}_{p_l} < \infty \text{ for all $p_l$},
\quad
\text{and} \quad N^{(l)}_{p_l}\to a^{(l)}+M^{(l)}_{1}\text{ as } p_l \to \infty.$$
As mentioned earlier, this is equivalent to the uniform integrability of the family
\begin{align}\label{eq:family sum lambda is ui}
	\left\{ \sum_{j=1}^{p_{l}} \lambda^{(l)}_{p_{l},j} \right\}_{p_{l}} .
\end{align}
This can be seen
by \citet[Theorem~4.6.3]{durrett2019probability} and Skorokhod representation.

We start by introducing a lemma that results from this assumption.
\begin{lemma}\label{lem: l2 norm preactivations}
	Suppose (UI) and (A1) hold. For all $ l=1,\ldots,L+1, $ we have
	\begin{align*}
		\sup_{\bp} \bfE\left[ \left( Z^{(l)}_{1}(\bx;\bp) \right)^{2} \right] < \infty.
	\end{align*}
\end{lemma}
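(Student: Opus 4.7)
The plan is to proceed by induction on $l$, exploiting three facts: (i) the iid/independence structure of the weights and biases in each layer, (ii) the linear growth bound $|\phi(x)|\le C_{\mathrm{Lip}}|x|$ which is a consequence of positive homogeneity (see \cref{app:homogeneous}), and (iii) the uniform boundedness of $N^{(l)}_{p_l}$ in $p_l$ provided by (UI).

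\textbf{Base case ($l=1$).} By direct computation, $Z^{(1)}_1(\bx;\bp) = \sum_{j=1}^{\din}\frac{1}{\sqrt{\din}}V^{(1)}_{j1}x_j + B^{(1)}_1$ has variance $\sigma_v^2\|\bx\|^2/\din + \sigma_b^2$, which is a constant independent of $\bp$.

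\textbf{Inductive step.} Assume $U^{(l)} := \sup_{\bp}\bfE[(Z^{(l)}_1(\bx;\bp))^2] < \infty$. Expanding \cref{eq:FFNN1} and using that $\{V^{(l+1)}_{jk}\}_{j,k}$ and $B^{(l+1)}_1$ are centred, mutually independent, and independent of $\{\lambda^{(l)}_{p_l,j}\}_j$ and $\{Z^{(l)}_j(\bx;\bp)\}_j$, the cross terms vanish and
\[
\bfE\!\left[\bigl(Z^{(l+1)}_1(\bx;\bp)\bigr)^2\right]
= \sigma_b^2 + \sigma_v^2\sum_{j=1}^{p_l}\bfE\!\left[\lambda^{(l)}_{p_l,j}\,\phi\!\bigl(Z^{(l)}_j(\bx;\bp)\bigr)^2\right].
\]
Since $\lambda^{(l)}_{p_l,j}$ depends only on the $l$-th layer node variance (drawn independently of everything else), and $Z^{(l)}_j(\bx;\bp)$ depends only on weights and biases of layers $\le l$ that do not include $\lambda^{(l)}_{p_l,j}$, the random variables $\lambda^{(l)}_{p_l,j}$ and $\phi(Z^{(l)}_j(\bx;\bp))^2$ are independent. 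Moreover, by symmetry over $j$ and the iid distribution of $\{\lambda^{(l)}_{p_l,j}\}_j$, the sum simplifies to $N^{(l)}_{p_l}\,\bfE[\phi(Z^{(l)}_1(\bx;\bp))^2]$.

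\textbf{Applying the bounds.} By positive homogeneity, $\phi(z)^2\le C_{\mathrm{Lip}}^2 z^2$, so $\bfE[\phi(Z^{(l)}_1(\bx;\bp))^2]\le C_{\mathrm{Lip}}^2 U^{(l)}$. By (UI), $N^{(l)}_{p_l}$ is finite for each $p_l$ and converges to $a^{(l)}+M^{(l)}_1<\infty$ as $p_l\to\infty$, so $\overline N^{(l)}:=\sup_{p_l}N^{(l)}_{p_l}<\infty$. Combining these,
\[
\sup_{\bp}\bfE\!\left[\bigl(Z^{(l+1)}_1(\bx;\bp)\bigr)^2\right] \le \sigma_b^2 + \sigma_v^2\, \overline N^{(l)}\, C_{\mathrm{Lip}}^2\, U^{(l)} < \infty,
\]
which closes the induction.

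There is no real obstacle here: the only slightly delicate point is the independence justification between $\lambda^{(l)}_{p_l,j}$ and $Z^{(l)}_j(\bx;\bp)$, which follows from the fact that incoming weights to node $j$ at layer $l$ (which determine $Z^{(l)}_j$) do not involve the outgoing variance scale $\lambda^{(l)}_{p_l,j}$. Note also that the resulting recursion
\[
\bfE[(Z^{(l+1)}_1(\bx;\bp))^2] \le \sigma_v^2 C_{\phi} N^{(l)}_{p_l}\,\bfE[(Z^{(l)}_1(\bx;\bp))^2] + \sigma_b^2
\]
with $C_\phi = \bfE[\phi(\varepsilon)^2]$, $\varepsilon\sim\mathcal N(0,1)$, is the sharper bound mentioned in \cref{rem: simple variance upper bound}, which will be reused in the proof of \cref{prop: error bound}.
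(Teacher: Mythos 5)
Your proof is correct and follows essentially the same route as the paper's: a layer-wise second-moment recursion obtained by expanding $Z^{(l+1)}_1$, killing the cross terms via the centred independent $V$'s and $B$'s, and factoring out $N^{(l)}_{p_l}$ by independence of $\lambda^{(l)}_{p_l,j}$ from $Z^{(l)}_j$. The only (harmless) difference is that you bound $\bfE[\phi^2(Z^{(l)}_1)]$ by $C_{\mathrm{Lip}}^2\,U^{(l)}$ and use boundedness of the convergent sequence $N^{(l)}_{p_l}$, whereas the paper uses the exact identity $\bfE[\phi^2(Z^{(l)}_1)]=C_\phi\,\bfE[(Z^{(l)}_1)^2]$ and unrolls the recursion into an explicit product before taking the supremum over large $\min\bp$.
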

\begin{proof}
        Recall $C_\phi = \bfE[\phi(\eps)^2]$ for $\eps \sim \mathcal{N}(0,1)$.
	Note that
	\begin{align*}
		\bfE\left[ \left( Z^{(l+1)}_{1}(\bx;\bp) \right)^{2} \right]
		&= \sigma_{v}^{2}\bfE\left[ \sum_{j=1}^{p_{l}} \lambda^{(l)}_{p_{l},j} \phi^{2}(Z^{(l)}_{j}(\bx;\bp)) \right] + \sigma_{b}^{2} \\
		&= \sigma_{v}^{2}\bfE\left[ \sum_{j=1}^{p_{l}} \lambda^{(l)}_{p_{l},j} \right] \bfE\left[ \phi^{2}(Z^{(l)}_{1}(\bx;\bp)) \right] + \sigma_{b}^{2} \\
		&= \sigma_{v}^{2}C_{\phi} N^{(l)}_{p_{l}} \bfE\left[ \left(Z^{(l)}_{1}(\bx;\bp)\right)^{2} \right] + \sigma_{b}^{2}.
	\end{align*}
	Apply this recurrence repeatedly and note that $ \bfE[( Z^{(1)}_{1}(\bx) )^{2}] = \Sigma^{(1)}(\bx) = \sigma_{v}^{2}\|\bx\|^{2}/\din + \sigma_{b}^{2} $ is a constant. Then, we get
	\begin{align*}
		&\bfE\left[ \left( Z^{(l)}_{1}(\bx;\bp) \right)^{2} \right] \\
		&= \sigma_{v}^{2} \frac{\|\bx\|^{2}}{\din} \left( \prod_{l'=1}^{l-1} \sigma_{v}^{2}C_{\phi} N^{(l')}_{p_{l'}} \right) + \sigma_{b}^{2}\left( \prod_{l'=1}^{l-1} \sigma_{v}^{2}C_{\phi} N^{(l')}_{p_{l'}} + \cdots + \sigma_{v}^{2}C_{\phi} N^{(l-1)}_{p_{l-1}} + 1 \right).
	\end{align*}
	Since $ N^{(l)}_{p_{l}} \to a^{(l)} + M_{1}^{(l)} $ as $ p_{l}\to \infty $ for each $ l=1,\ldots,L $, we have
	\begin{align*}
		&\sup_{\bp} \bfE\left[ \left( Z^{(l)}_{1}(\bx;\bp) \right)^{2} \right] \\
		&\le 2\Bigg[ \sigma_{v}^{2} \frac{\|\bx\|^{2}}{\din} \left( \prod_{l'=1}^{l-1} \sigma_{v}^{2}C_{\phi} (a^{(l')} + M^{(l')}_{1}) \right) \\
		&\qquad + \sigma_{b}^{2}\left( \prod_{l'=1}^{l-1} \sigma_{v}^{2}C_{\phi} (a^{(l')} + M^{(l')}_{1}) + \cdots + \sigma_{v}^{2}C_{\phi} (a^{(l-1)} + M^{(l-1)}_{1}) + 1 \right) \Bigg] \\
		& <\infty.
	\end{align*}
	where the supremum is taken for all $ \bp $'s with large enough $ \min \bp$.
\end{proof}

\paragraph{Proof of \cref{prop: error bound}.}
Note that, for all $ p_{l} $,
\begin{align*}
	&\bfE\left[\left( Z^{(l+1)}_{1}(\bx;\bp) - Z^{*(l+1)}_{1}(\bx;\bp) \right)^{2}\right] \\
	&{} =\sigma_{v}^{2} \bfE\left[\sum_{j=1}^{p_{l}}\lambda^{(l)}_{p_{l},j} \ind_{\{\lambda^{(l)}_{p_{l},j} > \lambda^{*(l)}_{p_{l}} \}} \left( \phi(Z^{(l)}_{j}(\bx;\bp) - \phi(Z^{*(l)}_{j}(\bx;\bp)) \right)^{2}\right] \\
	&\phantom{{} = {}}
	{} + \sigma_{v}^{2}\bfE\left[\sum_{j=1}^{p_{l}}\lambda^{(l)}_{p_{l},j} \ind_{\{\lambda^{(l)}_{p_{l},j} \le \lambda^{*(l)}_{p_{l}} \}} \phi^{2}(Z^{(l)}_{j}(\bx;\bp))\right] \\
	& {} \le \sigma_{v}^{2}{\Clip}N^{(l)}_{p_{l}} \bfE\left[\left( Z^{(l)}_{1}(\bx;\bp) - Z^{*(l)}_{1}(\bx;\bp) \right)^{2}\right] \\
	& \phantom{{} \le {}}
	{} + \sigma_{v}^{2} {\Clip}\bfE\left[(Z^{(l)}_{1}(\bx;\bp))^{2}\right] \bfE\left[\sum_{j=1}^{p_{l}}\lambda^{(l)}_{p_{l},j} \ind_{\{\lambda^{(l)}_{p_{l},j} \le \lambda^{*(l)}_{p_{l}} \}}\right] .
\end{align*}
If $ U^{(l)}:= \sup_{\bp} \bfE\left[(Z^{(l)}_{1}(\bx;\bp))^{2}\right] < \infty$ for all $ l=1,\ldots,L $, we get the following recurrence relation:
\begin{align*}
	&\bfE\left[\left( Z^{(l+1)}_{1}(\bx;\bp) - Z^{*(l+1)}_{1}(\bx;\bp) \right)^{2}\right]
	\\
	&{} \le \sigma_{v}^{2}{\Clip}N^{(l)}_{p_{l}} \bfE\left[\left( Z^{(l)}_{1}(\bx;\bp) - Z^{*(l)}_{1}(\bx;\bp) \right)^{2}\right]
	+ \sigma_{v}^{2} {\Clip}U^{(l)} \bfE\left[\sum_{j=1}^{p_{l}}\lambda^{(l)}_{p_{l},j} \ind_{\{\lambda^{(l)}_{p_{l},j} \le \lambda^{*(l)}_{p_{l}} \}}\right]
	\\
	&=: \sigma_{v}^{2} {\Clip}N^{(l)}_{p_{l}} \bfE\left[\left( Z^{(l)}_{1}(\bx;\bp) - Z^{*(l)}_{1}(\bx;\bp) \right)^{2}\right] + \sigma_{v}^{2} {\Clip}U^{(l)} A^{(l)}_{p_{l}}
\end{align*}
Noting that $ Z^{(1)}_{1}(\bx;\bp) = Z^{*(1)}_{1}(\bx;\bp) $, it inductively follows that
\begin{align}
	&\bfE\left[\left( Z^{(l+1)}_{1}(\bx;\bp) - Z^{*(l+1)}_{1}(\bx;\bp) \right)^{2}\right]
	\label{eq:error l2 bound eps}\\
	&\le \sigma_{v}^{2}{\Clip}U^{(l)}A^{(l)}_{p_{l}} + (\sigma_{v}^{2}{\Clip})^{2}N^{(l)}_{p_{l}}U^{(l-1)}A^{(l-1)}_{p_{l-1}} + \cdots + (\sigma_{v}^{2}{\Clip})^{l}N^{(l)}_{p_{l}}\cdots N^{(2)}_{p_{2}} U^{(1)}A^{(1)}_{p_{1}} . \notag
\end{align}
On the other hand, if $U^{(l)}=\infty$ for some $l$, the above inequality holds trivially.

\paragraph{Proof of \cref{thm: eps-pruning}.}

As mentioned in \cref{rem: trade-off pruning}, we prove the corollary in the setting where, for each $ l=1,\ldots,L $, the tail L\'evy measure satisfies
\begin{align}
	\label{eq: pruning assumption on rho}
	\overline{\rho}^{(l)} \overset{u\to 0}{\sim} u^{-\alpha^{(l)}}L^{(l)}\left(\frac{1}{u}\right)
\end{align}
where $ \alpha^{(l)} \in [0,1) $ and $ L^{(l)} $ is a slowly varying function. We also let
$M_1^{(l)} = \int_0^\infty u \rho^{(l)}(du)$, $\alpha = \max_{l}{\alpha^{(l)}} $ and $0 < \delta < 1-\alpha $.

From \cref{th:convergenceid}, it is straightforward to check that, for each $ l=1,\ldots,L, $
\begin{align}
	\sum_{j=1}^{p_{l}} \lambda^{(l)}_{p_{l},j} \ind_{\{ \lambda^{(l)}_{p_{l},j} \leq \eps \}} \cvdto \operatorname{ID}(0,\rho^{(l)}|_{(0,\eps]}), \label{eq: pruned conv to id}
\end{align}
as $ p_{l} \to \infty, $ where we denote by $ \rho^{(l)}|_{(0,\eps]} $ the measure $ \rho^{(l)} $ restricted to $ (0,\eps] $. (Technically, we need to assume that $ \eps $ is a continuity point of $ \rho^{(l)}. $ See \cref{rmk: continuity point}.)
As the family in \cref{eq:family sum lambda is ui} is uniformly integrable, it follows that, for each $ l=1,\ldots,L, $
\begin{align*}
	A^{(l)}_{p_{l}} = \bfE\left[ \sum_{j=1}^{p_{l}} \lambda^{(l)}_{p_{l},j} \ind_{\{ \lambda^{(l)}_{p_{l},j} \le \eps \}} \right] \to \int x \ind_{\{ x\le\eps \}} \, \rho^{(l)}(dx)
\end{align*}
as $ p_{l} \to \infty $.
To finish the proof, we introduce a corollary to \cref{lem: feller slowly varying bound}.
\begin{corollary}\label{cor: new potter type bound for all layer}
	Recall that $ \alpha = \max_{l} \alpha^{(l)} $ and $ 0< \delta < 1-\alpha $. Then, for each $ l = 1,\ldots,L $, there exists $ \epsilon^{(l)} > 0 $ such that
	$$ \overline\rho^{(l)}(x) \le x^{-(\alpha^{(l)}+\delta)} $$
	for all $ x \le \epsilon^{(l)} $.
	Consequently, for all $ l = 1,\ldots,L $,
	\begin{align*}
		\overline\rho^{(l)}(x) \le x^{-(\alpha + \delta)}
	\end{align*}
	for all $ x \le \eps_{0} $, where $ \eps_{0} = \min_{l} \eps^{(l)} > 0 $.
\end{corollary}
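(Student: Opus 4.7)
The plan is to prove the pointwise bound for each layer separately using \cref{lem: feller slowly varying bound} applied to the slowly varying function $L^{(l)}$, together with the asymptotic equivalence of $\overline\rho^{(l)}$ near zero, and then combine the layerwise bounds via a simple monotonicity argument in the exponent.

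First, fix $l \in \{1,\ldots,L\}$. By the assumption \cref{eq: pruning assumption on rho}, we have $\overline\rho^{(l)}(x)/\bigl(x^{-\alpha^{(l)}}L^{(l)}(1/x)\bigr) \to 1$ as $x \to 0$, so there exists $\eta^{(l)} > 0$ such that $\overline\rho^{(l)}(x) \le 2 x^{-\alpha^{(l)}}L^{(l)}(1/x)$ for all $x \le \eta^{(l)}$. I would then pick a number $\delta' \in (0,\delta)$ (for instance $\delta' = \delta/2$) and apply \cref{lem: feller slowly varying bound} to $L^{(l)}$: there exists $x_0^{(l)}$ such that $L^{(l)}(u) \le u^{\delta'}$ whenever $u > x_0^{(l)}$. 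Substituting $u = 1/x$, this yields $L^{(l)}(1/x) \le x^{-\delta'}$ for every $x < 1/x_0^{(l)}$. Combining the two bounds gives $\overline\rho^{(l)}(x) \le 2 x^{-\alpha^{(l)} - \delta'}$ for all sufficiently small $x$.

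Now the constant $2$ and the gap $\delta - \delta' > 0$ between the two exponents let me absorb everything: since $2 x^{\delta - \delta'} \to 0$ as $x \to 0$, there exists $\epsilon^{(l)} > 0$ (which we may also take to be $\le 1$ and $\le \eta^{(l)}$, $\le 1/x_0^{(l)}$) such that $2 x^{\delta - \delta'} \le 1$ for all $x \le \epsilon^{(l)}$. For such $x$ we then obtain
\[
\overline\rho^{(l)}(x) \le 2 x^{-\alpha^{(l)} - \delta'} = (2 x^{\delta - \delta'}) \cdot x^{-(\alpha^{(l)} + \delta)} \le x^{-(\alpha^{(l)} + \delta)},
\]
which is the first claim.

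For the second claim, set $\epsilon_0 := \min_{l=1,\ldots,L} \epsilon^{(l)}$, which is strictly positive because it is the minimum of finitely many positive numbers. Since each $\epsilon^{(l)} \le 1$, we have $x \le 1$ for every $x \le \epsilon_0$, and because $\alpha^{(l)} \le \alpha$ the map $t \mapsto x^{-t}$ is nondecreasing in $t$ for $x \in (0,1]$. Hence $x^{-(\alpha^{(l)} + \delta)} \le x^{-(\alpha + \delta)}$, and the uniform bound $\overline\rho^{(l)}(x) \le x^{-(\alpha + \delta)}$ follows on $(0, \epsilon_0]$ for every $l$. There is no real obstacle here; the only delicate point is the choice of a strictly smaller exponent $\delta'$ in Feller's lemma so that the multiplicative constant coming from the asymptotic equivalence can be absorbed into the power of $x$.
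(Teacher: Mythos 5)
Your proof is correct and follows essentially the same route as the paper's: bound $\overline\rho^{(l)}$ near $0$ by a constant multiple of $x^{-\alpha^{(l)}}L^{(l)}(1/x)$ via the asymptotic equivalence, then invoke \cref{lem: feller slowly varying bound} to absorb the slowly varying part (and the constant) into the extra power $x^{-\delta}$, and finish with the monotonicity of $t\mapsto x^{-t}$ on $(0,1]$. The only cosmetic difference is that the paper absorbs the constant by noting $(1+\eta)L^{(l)}(1/x)$ is still slowly varying and applies the lemma with exponent $\delta$ directly, whereas you apply it with $\delta'<\delta$ and use the leftover power $x^{\delta-\delta'}$ to kill the constant; both are equally valid.
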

\begin{proof}
	Since $ \overline\rho^{(l)}(x) \sim x^{-\alpha^{(l)}}L^{(l)}(1/x) $, for any $ \eta>0 $, we can find $ \t\eps^{(l)} $ such that for all $ x \le \t\eps^{(l)} $
	\begin{align*}
		\overline\rho^{(l)}(x)
		\le (1+\eta)x^{-\alpha^{(l)}}L^{(l)}(1/x) .
	\end{align*}
	Note that $ (1+\eta)L^{(l)}(1/x) $ is still a slowly varying function. By \cref{lem: feller slowly varying bound}, there exists $ \eps^{(l)}\le \t\eps^{(l)} $ such that $ (1+\eta)L^{(l)}(1/x) \le x^{-\delta} $ for all $ x \le \eps^{(l)} $.
	Thus, for every $ x \le \eps^{(l)} $, we have
	$$
	\overline\rho^{(l)}(x) \le x^{-(\alpha^{(l)} + \delta)} .
	$$
	The second statement automatically follows.
\end{proof}
By \cref{cor: new potter type bound for all layer}, for any $ 0< \delta < 1-\alpha $, there exists $ \eps_{0}(\delta) $ such that for $ \eps < \eps_{0}(\delta) $,
\begin{align*}
	\int x \ind_{\{ x\le\eps \}} \, \rho^{(l)}(dx) \le \int_{0}^{\eps} \overline\rho^{(l)}(t) \, dt \le \frac{1}{1-(\alpha+\delta)}\eps^{1-(\alpha+\delta)} .
\end{align*}
Thus, for each $ l=1,\ldots,L$,
\begin{align*}
	\lim_{p_{l} \to \infty} A^{(l)}_{p_{l}} \le \frac{1}{1-(\alpha + \delta)} \eps^{1-(\alpha + \delta)} .
\end{align*}
By taking the limit of \cref{eq:error l2 bound eps} as $ \min\bp \to \infty $, we get
\begin{align*}
	&\lim_{\min\bp \to \infty} \bfE\left[\left( Z^{(l+1)}_{1}(\bx;\bp) - Z^{*(l+1)}_{1}(\bx;\bp) \right)^{2}\right] \\
	&\le ( \sigma_{v}^{2}{\Clip}U^{(l)} + (\sigma_{v}^{2}{\Clip})^{2}M^{(l)}U^{(l-1)} + \cdots + (\sigma_{v}^{2}{\Clip})^{l}M^{(l)}\cdots M^{(2)} U^{(1)} )
         \frac{\eps^{1-(\alpha+\delta)}}{1-(\alpha + \delta)} \\
	&= D(l) \eps^{1-(\alpha+\delta)}
\end{align*}
where $$ D(l) := \frac{\sigma_{v}^{2}{\Clip}}{1-(\alpha + \delta)}( U^{(l)} + (\sigma_{v}^{2}{\Clip})M^{(l)}U^{(l-1)} + \cdots + (\sigma_{v}^{2}{\Clip})^{l-1}M^{(l)}\cdots M^{(2)} U^{(1)} ) $$ is a constant not depending on $\eps$.

	\begin{remark}\label{rmk: continuity point}
		When checking \cref{eq: pruned conv to id}, it is necessary that the pruning level $ \eps $ is a continuity point of $ \rho^{(l)} $ for all $ l = 1,\ldots,L $. However, if $\eps$ is not a continuity point of some $ \rho^{(l)} $, we can find such a continuity point that is arbitrarily close to $ \eps $ (or arbitrarily small; see \cref{cor: new potter type bound for all layer}).
\end{remark}

\paragraph{Proof of \cref{th:kappapruning}.}
By \cref{prop:compressibilityID}, for each $ l=1,\ldots,L $ and any $ \kappa \in (0,1), $
\begin{align*}
	\sum_{j=1}^{p_{l}}\lambda^{(l)}_{p_{l},j} \ind_{\{ \lambda^{(l)}_{p_{l},j} \le \lambda^{(l)}_{p_{l},(\kappa p_{l})} \}} \cvpto 0
\end{align*}
as $ p_{l} \to \infty$.
As the family in \cref{eq:family sum lambda is ui} is uniformly integrable, it follows that, for each $ l=1,\ldots,L $ and $ \kappa \in (0,1), $
\begin{align*}
	A^{(l)}_{p_{l}} = \bfE\left[ \sum_{j=1}^{p_{l}}\lambda^{(l)}_{p_{l},j} \ind_{\{ \lambda^{(l)}_{p_{l},j} \le \lambda^{(l)}_{p_{l},(\kappa p_{l})} \}} \right] \to 0
\end{align*}
as $ p_{l} \to \infty $.
Thus, by taking the limit of \cref{eq:error l2 bound eps} as $ \min \bp \to \infty $, we get
\begin{align*}
	\lim_{\min\bp \to \infty} \bfE\left[\left( Z^{(l+1)}_{1}(\bx;\bp) - Z^{*(l+1)}_{1}(\bx;\bp) \right)^{2}\right] = 0.
\end{align*}

\subsection{Proof of \cref{th:multipleinputs}}

Let us denote $\vec\bx:=(\bx_1,\ldots,\bx_n)$, the tuple of the $n$ inputs in the theorem.
Throughout this subsection, let  $ \Sigma^{(l)}(\vec\bx) \in \R^{n\times n}$ be a (possibly random) covariance matrix depending on $\vec\bx$, and let $(\vec{\zeta}^{(l)}_{j}(\vec\bx))_{j\ge 1}$ be iid centred Gaussian random vectors in $\R^n$, given the covariance matrix $ \Sigma^{(l)}(\vec\bx)$.
Since the matrix $ \phi(\vec{\zeta}^{(l)}_{j}(\vec\bx))\phi(\vec{\zeta}^{(l)}_{j}(\vec\bx))^{T} $ is clearly positive semi-definite, we obtain the following corollary from \cref{thm:tensoring with cones}.
Consider $\vec z:=(z_1,\ldots,z_n)^T \in \R^n$. For a given activation function $\phi$, define the map $$ L_{\vec z}(u) = u\phi(\vec z) \phi(\vec z)^{T} : [0,\infty) \to \bbR^{n\times n}. $$

\begin{corollary}\label{technical lemma}
	Let $(\vec{\zeta}^{(l)}_{j}(\vec\bx))_{j}$ be iid centred Gaussian random vectors in $\R^n$ with covariance $ \Sigma^{(l)}(\vec\bx)$.
	Then, for $l\ge 1$,
	\[
	\sum_{j=1}^{p_{l}} \lambda^{(l)}_{p_{l},j} \phi\left(\vec{\zeta}^{(l)}_{j}(\vec\bx)\right) \phi\left(\vec{\zeta}^{(l)}_{j}(\vec{\bx})\right)^{T}
	\]
	converges in distribution, as $p_l\to\infty$, to $ S^{(l)}(\vec\bx) \sim \id\left( \tilde{a}^{(l)}, \t\rho^{(l)}\right)$
	concentrating on the cone $ \K_n $ of ${n\times n} $ positive semi-definite matrices, where, for $ l\ge1 $, $  \tilde{a}^{(l)} $ and $ \t\rho^{(l)} $ are defined as
	\begin{align*}
		&\tilde{a}^{(l)} := a^{(l)}\bfE\left[\phi\left(\vec\zeta^{(l)}_{j}(\vec\bx)\right)\phi\left(\vec\zeta^{(l)}_{j}(\vec\bx)\right)^{T} \right] \\
		&\t\rho^{(l)}(B) := \int ({L}_{\vec z})_{\star}(\rho^{(l)})(B) \, \xi(d\vec z)
	\end{align*}
	with $ \xi$ being the measure of $\cN(0,\Sigma^{(l)}(\vec\bx))$ (i.e., the law of $ \vec\zeta^{(l)}_{j}(\vec\bx)$)  and $ (L_{\vec z})_{\star}(\rho^{(l)}) $ denoting the pushforward of $ \rho^{(l)} $ under the map $  L_{\vec z} $.
\end{corollary}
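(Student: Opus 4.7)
The plan is to derive the corollary as a direct application of \cref{thm:tensoring with cones} on the cone $\K_{n}$, taking $X_{p_{l},j} := \lambda^{(l)}_{p_{l},j}$ and $Y_{j} := \phi(\vec\zeta^{(l)}_{j}(\vec\bx))\phi(\vec\zeta^{(l)}_{j}(\vec\bx))^{T}$. By the hypothesis \cref{eq:lambdaconvergenceinfdiv}, $\sum_{j=1}^{p_{l}} X_{p_{l},j} \cvdto \id(a^{(l)},\rho^{(l)})$, and the $Y_{j}$'s are iid elements of $\K_{n}$ by construction, being rank-one outer products of iid Gaussian vectors (conditional on $\Sigma^{(l)}(\vec\bx)$ when it is random).

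First I would verify the one nontrivial hypothesis of \cref{thm:tensoring with cones}, namely $\bfE[\|Y_{1}\|] < \infty$. Taking $\|\cdot\|$ to be the Frobenius norm, $\|Y_{1}\|_{F} = \|\phi(\vec\zeta^{(l)}_{1}(\vec\bx))\|^{2}$. Since $\phi$ satisfies the polynomial envelope condition $|\phi(z)| \le A+B|z|^{C}$ of \cref{th:multipleinputs} and $\vec\zeta^{(l)}_{1}(\vec\bx)$ is Gaussian, every polynomial moment is finite and the hypothesis holds. \cref{thm:tensoring with cones} then yields $\sum_{j=1}^{p_{l}} X_{p_{l},j}Y_{j} \cvdto \id(\widetilde{A},\nu)$ with $\widetilde{A} = a^{(l)}\bfE[Y_{1}]$ and $\nu(B) = \int_{0}^{\infty} F(B/x)\,\rho^{(l)}(dx)$, where $F$ is the law of $Y_{1}$ and $F(B/x) := \Pr(xY_{1}\in B)$. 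The location parameter $\widetilde{A}$ coincides with $\tilde{a}^{(l)}$ by inspection.

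The remaining step is to identify $\nu$ with $\t\rho^{(l)}$. Writing $Y_{1} = \phi(\vec\zeta)\phi(\vec\zeta)^{T}$ with $\vec\zeta \sim \xi$ and unpacking $\Pr(xY_{1}\in B)$ as an integral against $\xi$, then exchanging the order of integration,
\[
\nu(B) = \int_{0}^{\infty}\!\!\int_{\R^{n}} \ind\{x\phi(\vec z)\phi(\vec z)^{T}\in B\}\,\xi(d\vec z)\,\rho^{(l)}(dx) = \int_{\R^{n}} (L_{\vec z})_{\star}(\rho^{(l)})(B)\,\xi(d\vec z) = \t\rho^{(l)}(B),
\]
since $L_{\vec z}(x) = x\phi(\vec z)\phi(\vec z)^{T}$, so the inner $x$-integral is exactly the pushforward $(L_{\vec z})_{\star}(\rho^{(l)})$ evaluated on $B$.

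There is no real obstacle here beyond bookkeeping. The Fubini exchange is justified because the integrand is nonnegative; and the finiteness properties needed to certify $\t\rho^{(l)}$ as a L\'evy measure on $\K_{n}\setminus\{0\}$ (i.e.\ $\int\min(1,\|z\|)\,\t\rho^{(l)}(dz) < \infty$) follow from $\bfE[\|Y_{1}\|]<\infty$ together with $\int\min(1,x)\,\rho^{(l)}(dx)<\infty$, exactly as in the remark following \cref{thm:tensoring with cones}.
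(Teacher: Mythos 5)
Your proposal is correct and follows essentially the same route as the paper: the paper also obtains this corollary as a direct application of \cref{thm:tensoring with cones} to $X_{p_l,j}=\lambda^{(l)}_{p_l,j}$ and $Y_j=\phi(\vec\zeta^{(l)}_j(\vec\bx))\phi(\vec\zeta^{(l)}_j(\vec\bx))^T$, and your Fubini identification of $\nu$ with $\t\rho^{(l)}$ is precisely the ``dual perspectives'' equality $\nu(B)=(\rho\bigotimes F)(\{(x,\vec z):L_{\vec z}(x)\in B\})$ that the paper records in the remark following the corollary. Your explicit verification of $\bfE[\|Y_1\|]<\infty$ via the polynomial envelope condition is a detail the paper leaves implicit, but it is the right justification.
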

Note that the above corollary is later used in the setting of a random covariance matrix. However, when it is used, we condition on the covariance, and thus for all intents and purposes, the covariance is nonrandom.

\begin{remark}
	Throughout the paper, we use two `dual' perspectives of viewing the measure $ \nu $. For a distribution $ F $ and a measure $ \rho $, the measure $ \nu(dz) = \int \rho(dx) F(dz/x) $ is a mixture of a pushforward distribution $ F(dz/x) $ (which maps each Borel set $B \subseteq \K_n$ to $F(\{z/x : z \in B\})$) with mixing measure $ \rho(dx) $. Note that, in some other places, we use $ \nu(dx) = \int (L_{z})_{\star}(\rho)(dx) F(dz) $, where $ L_{z}(x) = x z : \bbR \to \K_n $ for $ x \in [0,\infty) $ and $ z \in \K_n $, which can be described as a mixture of a pushforward of $ \rho $ by $L_z$ with mixing measure $ F(dz) $. Indeed, both definitions refer to the same measure: for a Borel set $ B \subset \K_n $, $ \nu(B) = (\rho\bigotimes F)(\{(x,z)\colon x z \in B\}) $, where $ \bigotimes $ denotes the product of measures.
\end{remark}

\paragraph{Proof of \cref{th:multipleinputs}.}	
	
	Recall that $\vec\bx=(\bx_{1},\ldots,\bx_{n})$ is the tuple of the $n$ inputs in the theorem.
	Let $ \vec t_{k} \in \bbR^{n} $ for $ k \le m $ and $ \vec 1 = (1,\ldots,1)^T \in \bbR^{n} $.
	
	We start by calculating, for $m \in \mathbb{N}$, $l\ge 2$ and $\bp \in \N^L$, the conditional characteristic function of the size-$m$ tuple of $n$-dimensional random vectors
	\begin{align*}
		(\vec{Z}^{(l)}_{k}(\vec\bx;\bp))_{k\le m} := (Z^{(l)}_{k}(\bx_{1};\bp),\ldots,Z^{(l)}_{k}(\bx_{n};\bp))_{k\le m} ,
	\end{align*}
	given $ \{ \lambda^{(l-1)}_{p_{l-1},j} \}_{j} $ and $ \{\vec{Z}^{(l-1)}_{j}\}_j=\{\vec{Z}^{(l-1)}_{j}(\vec\bx;\bp)\}_{j} $.
	Since both $ \{V^{(l)}_{jk}\}_{jk} $ and $ \{B^{(l)}_{k}\}_{k} $ are iid and also independent of each other, we have
	\begin{align*}
		&\bfE \left[ \prod_{k\le m} \left. \exp\!\left( i\langle \vec{t}_{k} , B^{(l)}_{k}\vec{1} \rangle + i\sum_{j=1}^{p_{l-1}} \Bigr\langle \vec{t}_{k} , \sqrt{\lambda^{(l-1)}_{p_{l-1},j}}V^{(l)}_{jk} \phi(\vec{Z}^{(l-1)}_{j}) \Bigr\rangle \right) \right| \{ \lambda^{(l-1)}_{p_{l-1},j} \}_{j}, \{\vec{Z}^{(l-1)}_{j}\}_{j} \right] \\
		& {=}\, \prod_{k\le m} \bfE\!\left[\exp\!\left( i\langle \vec{t}_{k} , B^{(l)}_{k}\vec{1} \rangle \right)\right] \prod_{j=1}^{p_{l-1}} \bfE\!\left[ \left. \exp\!\left( i \Bigr\langle \vec{t}_{k} , \sqrt{\lambda^{(l-1)}_{p_{l-1},j}}V^{(l)}_{jk} \phi(\vec{Z}^{(l-1)}_{j}) \Bigr\rangle \right) \right| \{ \lambda^{(l-1)}_{p_{l-1},j} \}_{j}, \{\vec{Z}^{(l-1)}_{j}\}_{j} \right] \\
		& {=}\, \prod_{k\le m} \exp\!\left( -\frac{1}{2} \left\langle \vec{t}_{k} , \left[ \sigma_b^2 \vec{1} \vec{1}^{T} + \sigma_v^{2} \sum_{j=1}^{p_{l-1}} \lambda^{(l-1)}_{p_{l-1},j} \phi(\vec{Z}^{(l-1)}_{j})\phi(\vec{Z}^{(l-1)}_{j})^{T} \right] \vec{t}_{k} \right\rangle \right) .
	\end{align*}
	Thus, the (unconditional) characteristic function $\psi_{(\vec{Z}^{(l)}_{k}(\vec\bx;\bp))_{k\le m}}$ satisfies
	\begin{multline}
		 \psi_{(\vec{Z}^{(l)}_{k}(\vec\bx;\bp))_{k\le m}}(\vec{t}_{1},\ldots,\vec{t}_{m})
		\\
		{}
		=
		\bfE\left[ \prod_{k\le m} \exp\!\left( -\frac{1}{2} \left\langle \vec{t}_{k} , \left[ \sigma_b^2 \vec{1} \vec{1}^{T} + \sigma_v^{2} \sum_{j=1}^{p_{l-1}} \lambda^{(l-1)}_{p_{l-1},j} \phi(\vec{Z}^{(l-1)}_{j})\phi(\vec{Z}^{(l-1)}_{j})^{T} \right] \vec{t}_{k} \right\rangle \right) \right].
		\label{eq: ch.f. multi input}
	\end{multline}

	We now prove the following convergence in distribution using induction on the layer $l = 1,\ldots,(L+1)$: for all $m \in \mathbb{N}$, where we henceforth assume without loss of generality that $m\le p_l$,
         \[
         \left(\lim_{p_{l-1}\to\infty}\ldots\lim_{p_1\to\infty}
         (  \vec{Z}_{k}^{(l)}(\vec\bx;\mathbf{p}))_{k \leq m}\right)
         \overset{\mathtt{d}}{=}
         ( \vec{\zeta}_{k}^{(l)}(\vec\bx))_{k \leq m}
         \]
         where $( \vec{\zeta}_{k}^{(l)}(\vec\bx))_{k \leq m}$  is the size-$m$ tuple of the
    random vectors $\vec \zeta^{(l)}_k(\vec\bx) = (\zeta^{(l)}_k(\bx_1),\ldots,\zeta^{(l)}_k(\bx_n))^T$ in $\R^n$  defined as follows. When conditioned on $\Sigma^{(l)}$, the tuple $( \vec{\zeta}_{k}^{(l)}(\vec\bx))_{k \leq m}$ is distributed as $\otimes_{k \leq m} \cN(0,\Sigma^{(l)})$, where $\Sigma^{(l)}$ is a random covariance matrix defined recursively by the kernels in \cref{eq: multi input kernel inductive def}. In other words, conditioned on $\Sigma^{(l)}$, the $m$ components of $( \vec{\zeta}_{k}^{(l)}(\vec\bx))_{k \leq m}$ are iid Gaussian vectors with covariance matrix  $\Sigma^{(l)}$, while if we do not condition, then the joint distribution $( \vec{\zeta}_{k}^{(l)}(\vec\bx))_{k \leq m}$ is a convex mixture of iid Gaussian vectors with covariance matrix  $\Sigma^{(l)}$ with the mixture being governed by the randomness of $\Sigma^{(l)}$.
        Since $(\vec{Z}_{k}^{(l)}(\vec\bx;\mathbf{p}))_{k \leq m}$ does not depend on $p_l,\ldots,p_L$, (as long as $p_l$ is bigger or equal to $m$)
        the above convergence implies the claim of the theorem. In our inductive proof, we explicitly denote the dependency of $\Sigma^{(l)}$ on the inputs $\vec\bx$ by writing $\Sigma^{(l)}(\vec\bx)$. Also, we fix $m \in \mathbb{N}$.
	\\
	
	\noindent\textbf{Case $l = 1$:}\\
        Note that  in this case,	for all $\bp \in \mathbb{N}^L$,
        both $(\vec{Z}^{(1)}_k(\vec\bx;\bp))_{k \geq 1}$ and $(\vec\zeta^{(1)}_k(\vec\bx))_{k \geq 1}$ are iid even without conditioning since $\Sigma^{(1)}$ is nonrandom. Each component follows the law $ \cN(0,\Sigma^{(1)}(\vec\bx))$ and
        \[
	(\vec{Z}^{(1)}_k(\vec\bx;\bp))_{k \leq m} \overset{\mathtt{d}}{=} (\vec\zeta^{(1)}_k(\vec\bx))_{k \leq m},
	\]
	holds for all $\bp \in \mathbb{N}^L$.
        \\

	\noindent\textbf{Case $l\geq 2$:}\\
        By the induction hypothesis, we have the following convergence in distribution: for all $m' \in \mathbb{N}$, where we assume without loss of generality that $m'\le p_{l-1}$,
	\begin{equation}
	\label{eqn:multi-input:induction-hypo}
	    \left(\lim_{p_{l-2}\to\infty}\ldots\lim_{p_1\to\infty}
		(\vec{Z}^{(l-1)}_{j}(\vec\bx;\bp))_{j\le m'}\right)
		\overset{\mathtt{d}}{=}
		(\vec{\zeta}^{(l-1)}_{j}(\vec\bx))_{j\le m'},
	\end{equation}
where the left-hand side is interpreted as $(\vec{Z}^{(1)}_{j}(\vec\bx;\bp))_{j\le m'}$ when $l=2$.

For all non-negative reals $c_1,\ldots,c_{p_{l-1}}$,
and for all $m \in \mathbb{N}$ and $\vec{t}_1,\ldots,\vec{t}_m \in\R^n$,
the function $h:\mathbb R^{n\times p_{l-1}} \to (0,\infty)$ defined by
\begin{multline}
		 h\left ((\vec{z}_j)_{j \leq p_{l-1}};(\vec{t}_k)_{k\leq m},(c_j)_{j\leq p_{l-1}}\right )= {}
		 \\ \prod_{k\le m} \exp\left( -\frac{1}{2} \left\langle \vec{t}_{k} , \left[ \sigma_b^2 \vec{1} \vec{1}^{T} + \sigma_v^{2} \sum_{j=1}^{p_{l-1}} c_j \phi(\vec{z}_j)\phi(\vec{z}_j)^{T} \right] \vec{t}_{k} \right\rangle \right),
	\end{multline}	
 is continuous in $(\vec{z}_j)_{j \leq p_{l-1}}$, non-negative, and bounded by $1$.
  Thus, we can use the induction hypothesis for the $(l{-}1)$-th layer in \cref{eqn:multi-input:induction-hypo} and, from the definition of convergence in distribution,
  deduce the following convergence for this bounded continuous function $h$:
\begin{multline*}
\lim_{p_{l-2} \to \infty}\ldots \lim_{p_{1}\to \infty} \bfE \left [ h\left ((\vec{Z}^{(l-1)}_{j}(\vec\bx;\bp'))_{j\le p_{l-1}};(\vec{t}_k)_{k\leq m},(c_j)_{j\leq p_{l-1}}\right )\right ]= {}
\\
\bfE \left [ h\left ((\vec{\zeta}^{(l-1)}_{j}(\vec\bx))_{j\le p_{l-1}};(\vec{t}_k)_{k\leq m},(c_j)_{j\leq p_{l-1}}\right )\right ].
\end{multline*}
Also, by \cref{eq: ch.f. multi input},
 \begin{align*}
& \psi_{(\vec{Z}^{(l)}_{k}(\vec\bx;\bp))_{k\le m}}(\vec{t}_{1},\ldots,\vec{t}_{m})
\\
& \qquad {} =
\bfE \left [ h\left ((\vec{Z}^{(l-1)}_{j}(\vec\bx;\bp))_{j\le p_{l-1}};(\vec{t}_k)_{k\leq m},(\lambda^{(l-1)}_{p_{l-1},j})_{j\leq p_{l-1}}\right )\right ]
\\
& \qquad {} =\bfE\left [\bfE \left [\left. h\left ((\vec{Z}^{(l-1)}_{j}(\vec\bx;\bp))_{j\le p_{l-1}};(\vec{t}_k)_{k\leq m},(\lambda^{(l-1)}_{p_{l-1},j})_{j\leq p_{l-1}}\right ) \,\right|\, (\lambda^{(l-1)}_{p_{l-1},j})_{j\leq p_{l-1}}\right ] \right ].
	\end{align*}	
Since $h$ is bounded,
by the dominated convergence theorem, we have
 \begin{align*}
 &
\lim_{p_{l-2} \to \infty} {...} \lim_{p_{1}\to \infty}		
\psi_{(\vec{Z}^{(l)}_{k}(\vec\bx;\bp))_{k\le m}}(\vec{t}_{1},\ldots,\vec{t}_{m})
\\
& {} =
\lim_{p_{l-2} \to \infty} {...} \lim_{p_{1}\to \infty}	
\bfE\left [\bfE \left [\left. h\left ((\vec{Z}^{(l-1)}_{j}(\vec\bx;\bp))_{j\le p_{l-1}};(\vec{t}_k)_{k\leq m},(\lambda^{(l-1)}_{p_{l-1},j})_{j\leq p_{l-1}}\right ) \,\right|\, (\lambda^{(l-1)}_{p_{l-1},j})_{j\leq p_{l-1}}\right ] \right ]
\\
& {} =
\bfE\left [
\lim_{p_{l-2} \to \infty} {...} \lim_{p_{1}\to \infty}	
\bfE \left [\left. h\left ((\vec{Z}^{(l-1)}_{j}(\vec\bx;\bp))_{j\le p_{l-1}};(\vec{t}_k)_{k\leq m},(\lambda^{(l-1)}_{p_{l-1},j})_{j\leq p_{l-1}}\right ) \,\right|\, (\lambda^{(l-1)}_{p_{l-1},j})_{j\leq p_{l-1}}\right ] \right ]
\\
& {} =
\bfE \left [
\bfE\left[\left.
h\left ((\vec{\zeta}^{(l-1)}_{j}(\vec\bx))_{j\le p_{l-1}};(\vec{t}_k)_{k\leq m},(\lambda^{(l-1)}_{p_{l-1},j})_{j\leq p_{l-1}}\right )\,\right|\, (\lambda^{(l-1)}_{p_{l-1},j})_{j\leq p_{l-1}}\right ] \right ]
\\
& {} =
\bfE \left [ h\left ((\vec{\zeta}^{(l-1)}_{j}(\vec\bx))_{j\le p_{l-1}};(\vec{t}_k)_{k\leq m},(\lambda^{(l-1)}_{p_{l-1},j})_{j\leq p_{l-1}}\right )\right ].
	\end{align*}
To complete the inductive step, we now show that
\[
\lim_{p_{l-1}\to\infty}\bfE \left [ h\left ((\vec{\zeta}^{(l-1)}_{j}(\vec\bx))_{j\le p_{l-1}};(\vec{t}_k)_{k\leq m},(\lambda^{(l-1)}_{p_{l-1},j})_{j\leq p_{l-1}}\right )\right ]= \psi_{(\vec{\zeta}^{(l)}_{k}(\vec\bx))_{k\le m}}(\vec{t}_{1},\ldots,\vec{t}_{m}).
\]
Since $h$ is bounded, by the dominated convergence theorem, we have
\begin{align*}
&
\lim_{p_{l-1}\to\infty}\bfE \left [ h\left ((\vec{\zeta}^{(l-1)}_{j}(\vec\bx))_{j\le p_{l-1}};(\vec{t}_k)_{k\leq m},(\lambda^{(l-1)}_{p_{l-1},j})_{j\leq p_{l-1}}\right )\right ]
\\
& \qquad {} =
\lim_{p_{l-1}\to\infty}\bfE \left [
\bfE \left[ \left.
h\left((\vec{\zeta}^{(l-1)}_{j}(\vec\bx))_{j\le p_{l-1}};(\vec{t}_k)_{k\leq m},(\lambda^{(l-1)}_{p_{l-1},j})_{j\leq p_{l-1}}\right)
\,\right|\,\Sigma^{(l-1)}(\vec\bx)\right]\right ]
\\
& \qquad {} =
\bfE \left [
\lim_{p_{l-1}\to\infty}
\bfE \left[ \left.
h\left((\vec{\zeta}^{(l-1)}_{j}(\vec\bx))_{j\le p_{l-1}};(\vec{t}_k)_{k\leq m},(\lambda^{(l-1)}_{p_{l-1},j})_{j\leq p_{l-1}}\right)
\,\right|\,\Sigma^{(l-1)}(\vec\bx)\right]\right ]
\end{align*}
where the nested conditional expectation has the following form by the definition of $h$:
\begin{multline*}
\bfE \left[ \left.
h\left((\vec{\zeta}^{(l-1)}_{j}(\vec\bx))_{j\le p_{l-1}};(\vec{t}_k)_{k\leq m},(\lambda^{(l-1)}_{p_{l-1},j})_{j\leq p_{l-1}}\right)
\,\right|\,\Sigma^{(l-1)}(\vec\bx)\right] = {}
\\
\bfE \left[ \left.
\prod_{k\le m} \!\exp\!\left( -\frac{1}{2} \left\langle \vec{t}_{k} , \left[ \sigma_b^2 \vec{1} \vec{1}^{T} + \sigma_v^{2} \sum_{j=1}^{p_{l-1}} \lambda^{(l-1)}_{p_{l-1},j} \phi(\vec{\zeta}^{(l-1)}_{j}(\vec\bx))\phi(\vec{\zeta}^{(l-1)}_{j}(\vec\bx))^{T} \right] \vec{t}_{k} \right\rangle\!\right)
\,\right|\,\Sigma^{(l-1)}(\vec\bx)\right].
\end{multline*}
Note that by \cref{technical lemma}, when conditioned on $\Sigma^{(l-1)}(\vec\bx)$,
\[
\left(
\sum_{j=1}^{p_{l-1}} \lambda^{(l-1)}_{p_{l-1},j} \phi(\vec{\zeta}^{(l-1)}_{j}(\vec\bx))\phi(\vec{\zeta}^{(l-1)}_{j}(\vec\bx))^{T} \right)
	\cvdto {S}^{(l-1)}(\vec\bx) \in \bbR^{n\times n}
	\ \text{as}\ p_{l-1} \to \infty,
\]
where the random matrix ${S}^{(l-1)}(\vec\bx)$ has an infinitely divisible distribution with L\'evy characteristic $(\tilde{a}^{(l-1)}, \t\rho^{(l-1)}) $ as defined in \cref{technical lemma}. Since
$\prod_{k \leq m} \exp(-y_k/2)$ is bounded by $1$ for all non-negative reals $y_1,\ldots,y_k$, the above convergence implies that as $p_{l-1}$ tends to $\infty$,
\begin{multline*}
\bfE \left[ \left.
\prod_{k\le m} \!\exp\!\left( -\frac{1}{2} \left\langle \vec{t}_{k} , \left[ \sigma_b^2 \vec{1} \vec{1}^{T} + \sigma_v^{2} \sum_{j=1}^{p_{l-1}} \lambda^{(l-1)}_{p_{l-1},j} \phi(\vec{\zeta}^{(l-1)}_{j}(\vec\bx))\phi(\vec{\zeta}^{(l-1)}_{j}(\vec\bx))^{T} \right] \vec{t}_{k} \right\rangle\!\right)
\,\right|\,\Sigma^{(l-1)}(\vec\bx)\right]
\\
{} \to
\bfE \left[ \left.
\prod_{k\le m} \!\exp\!\left( -\frac{1}{2} \left\langle \vec{t}_{k} , \left[ \sigma_b^2 \vec{1} \vec{1}^{T} + \sigma_v^{2} S^{(l-1)}(\vec\bx)\right] \vec{t}_{k} \right\rangle\!\right)
\,\right|\,\Sigma^{(l-1)}(\vec\bx)\right].
\end{multline*}
Thus,
\begin{align*}
&
\lim_{p_{l-1}\to\infty}\bfE \left [ h\left ((\vec{\zeta}^{(l-1)}_{j}(\vec\bx))_{j\le p_{l-1}};(\vec{t}_k)_{k\leq m},(\lambda^{(l-1)}_{p_{l-1},j})_{j\leq p_{l-1}}\right )\right ]
\\
& \qquad {} =
\bfE \left[
\bfE \left[ \left.
\prod_{k\le m} \exp\left( -\frac{1}{2} \left\langle \vec{t}_{k} , \left[ \sigma_b^2 \vec{1} \vec{1}^{T} + \sigma_v^{2} S^{(l-1)}(\vec\bx)\right] \vec{t}_{k} \right\rangle\right)
\,\right|\,\Sigma^{(l-1)}(\vec\bx)\right] \right]
\\
& \qquad {} =
\bfE \left[
\bfE \left[ \left.
\prod_{k\le m} \exp\left( -\frac{1}{2} \left\langle \vec{t}_{k} , \Sigma^{(l)}(\vec\bx) \vec{t}_{k} \right\rangle\right)
\,\right|\,\Sigma^{(l-1)}(\vec\bx)\right] \right]
\\
& \qquad {} =
\bfE \left[
\prod_{k\le m} \exp\left( -\frac{1}{2} \left\langle \vec{t}_{k} , \Sigma^{(l)}(\vec\bx) \vec{t}_{k} \right\rangle\right)
\right]
\\
& \qquad {} = \psi_{(\vec{\zeta}^{(l)}_{k}(\vec\bx))_{k\le m}}(\vec{t}_{1},\ldots,\vec{t}_{m}).
\end{align*}
The last equality follows from the definition of $(\vec{\zeta}^{(l)}_{k}(\vec\bx))_{k\le m}$ and the fact that when conditioned on $\Sigma^{(l)}(\vec\bx)$, the random variables $(\vec{\zeta}^{(l)}_{k}(\vec\bx))_{k\le m}$ are iid with each component having the distribution $\mathcal{N}(0,\Sigma^{(l)}(\vec\bx))$. The justification of the second equality is slightly more involved. It follows from the boundedness of $\prod_{k \leq m}\exp(-y_k/2)$ for all non-negative reals $y_1,\ldots,y_k$, and the below conditional distributional equality:
when conditioned on $\Sigma^{(l-1)}(\vec\bx)$,
\[
\sigma_b^2 \vec{1} \vec{1}^{T} + \sigma_v^{2} S^{(l-1)}(\vec\bx)
\overset{\mathtt{d}}{=}
\Sigma^{(l)}(\vec\bx),
\]
which follows from the definition of $ \t\rho^{(l-1)} $.
To see this, condition on $\Sigma^{(l-1)}(\vec\bx)$. Then,
 as $  S^{(l-1)}(\vec\bx) $ follows $ \operatorname{ID}( \tilde{a}^{(l-1)}, \t\rho^{(l-1)}) $, it can be represented as $  \tilde{a}^{(l-1)} + \sum_{j\ge 1}  X_{j} $ where $ ( X_{j})_{j \geq 1} $ are points in $\bbR^{n\times n}$ which result from the pushforward of a Poisson process on $(0,\infty)$ with mean measure $\rho^{(l-1)} $,  as described in \cref{technical lemma}.
	Since $ \t\rho^{(l-1)} $ is a pushforward of $ \rho^{(l-1)}(du) $,
        $X_{j}$ can be represented as $ \widetilde{\lambda}^{(l-1)}_{j} \phi(\vec\zeta^{(l-1)}_{j}(\vec\bx)) \phi(\vec\zeta^{(l-1)}_{j}(\vec\bx))^{T} $. Thus, under our assumed conditioning on $\Sigma^{(l-1)}(\vec\bx)$, we have
\begin{align*}
\sigma_b^2 \vec{1} \vec{1}^{T} + \sigma_v^{2} S^{(l-1)}(\vec\bx)
&\overset{\mathtt{d}}{=}
\sigma_{b}^{2}\vec 1 \vec 1^{T} + \sigma_{v}^{2}\tilde{a}^{(l-1)} + \sigma_{v}^{2} \sum_{j\ge 1} \widetilde{\lambda}^{(l-1)}_{j} \phi(\vec\zeta^{(l-1)}_{j}(\vec\bx)) \phi(\vec\zeta^{(l-1)}_{j}(\vec\bx))^{T}  &\overset{\mathtt{d}}{=}
\Sigma^{(l)}(\vec\bx).
\end{align*}
This completes the proof of the inductive case.

\newpage

\section{Additional Theoretical Results}
\label{app:additionalresults}

\subsection{Properties of Small Weights in our Model}
\label{app:properties of small weights}
The following proposition characterises the rate of decay of the variances/weights, under a polynomial decay of the tail L\'evy measure at 0.

\begin{proposition}[Asymptotic properties of small variances and weights]\label{prop:regvarweightszero}
Assume $\rho^{(l)}$ is an infinite L\'evy measure with tail $\overline\rho^{(l)}(x)\overset{x\to 0}{\sim}c  x^{-\alpha}$ for some $\alpha\in(0,1)$ and some constant $c >0$. Let $\Phi^{(l)}_{(k)}$ and $\Psi^{(l+1)}_{(k),m}$ be random variables in \cref{prop: extremes} such that $\lambda_{p_l,(k)}^{(l)}\cvdto \Phi^{(l)}_{(k)}$ and $(W^{(l+1)}_{(k),m})^2 \cvdto \Psi^{(l+1)}_{(k),m}$ as $p_l$ tends to $\infty$. Then, in probability,
\begin{align}
\Phi^{(l)}_{(k)} & \overset{k\to\infty}{\sim} (\overline\rho^{(l)})^{-1}(k)\overset{k\to\infty}{\sim} c ^{1/\alpha}k^{-1/\alpha}\\
\intertext{and for any $m\geq 1$, in probability,}
\Psi^{(l+1)}_{(k),m} &\overset{k\to\infty}{\sim}\sigma_v^2\times (\overline\nu^{(l)})^{-1}(k) \overset{k\to\infty}{\sim}\left ( \frac{2^\alpha\Gamma(\alpha+1/2)}{\sqrt{\pi}}(\sigma_v^2)^\alpha c  \right)^{1/\alpha}  k^{-1/\alpha}\, .
\end{align}
\end{proposition}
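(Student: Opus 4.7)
Recall from \cref{prop: extremes} that $\Phi^{(l)}_{(k)}\overset{d}{=}(\overline\rho^{(l)})^{-1}(G_k)$ and $\Psi^{(l+1)}_{(k),m}\overset{d}{=}\sigma_v^2 (\overline\nu^{(l)})^{-1}(G_k)$ with $G_k\sim\gammadist(k,1)$. Since $G_k$ is a sum of $k$ iid standard exponentials, the weak law of large numbers gives $G_k/k\cvpto 1$ as $k\to\infty$, hence in particular $G_k\cvpto\infty$. The proof thus reduces to two ingredients: (i) a regularly varying asymptotic at $0$ for $\overline\rho^{(l)}$ and $\overline\nu^{(l)}$, from which we extract a regularly varying asymptotic at $\infty$ for their generalized inverses; (ii) a standard composition-with-a-convergent-sequence argument to turn these deterministic asymptotics into in-probability asymptotics with $G_k$ replacing $k$.

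For the first assertion, the hypothesis $\overline\rho^{(l)}(x)\overset{x\to 0}{\sim}cx^{-\alpha}$ says that $\overline\rho^{(l)}$ is regularly varying at $0$ with index $-\alpha$. By the standard inversion result for non-increasing regularly varying functions (Bingham--Goldie--Teugels, Theorem 1.5.12), its generalized inverse $(\overline\rho^{(l)})^{-1}$ is regularly varying at $\infty$ with index $-1/\alpha$, and more precisely $(\overline\rho^{(l)})^{-1}(u)\overset{u\to\infty}{\sim}c^{1/\alpha}u^{-1/\alpha}$. Combined with $G_k/k\cvpto 1$ and the uniform convergence of regularly varying functions on compact subsets of $(0,\infty)$ (Bingham--Goldie--Teugels, Theorem 1.5.2), one obtains
\[
\frac{\Phi^{(l)}_{(k)}}{c^{1/\alpha}k^{-1/\alpha}}=\frac{(\overline\rho^{(l)})^{-1}(G_k)}{c^{1/\alpha}G_k^{-1/\alpha}}\cdot\left(\frac{G_k}{k}\right)^{-1/\alpha}\cvpto 1,
\]
proving the first claim.

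For the second assertion, we need to establish the behaviour of $\overline\nu^{(l)}$ near $0$. Starting from the representation already used in the proof of \cref{prop:regvarweightsinf},
\[
\overline\nu^{(l)}(x)=\frac{x^{1/2}}{\sqrt{2\pi}}\int_0^\infty \overline\rho^{(l)}(1/u)u^{-1/2}e^{-ux/2}du,
\]
we observe that $\Omega(u):=\overline\rho^{(l)}(1/u)u^{-1/2}\overset{u\to\infty}{\sim}cu^{\alpha-1/2}$ under our assumption at $0$. Applying Karamata's Tauberian theorem (Feller, Chapter XIII.5, Theorem 2) to the Laplace transform of the monotone majorant of $\Omega$ at $s=x/2\to 0$, we obtain
\[
\int_0^\infty \Omega(u)e^{-xu/2}du\overset{x\to 0}{\sim}c\,\Gamma(\alpha+1/2)(x/2)^{-(\alpha+1/2)},
\]
whence $\overline\nu^{(l)}(x)\overset{x\to 0}{\sim}\frac{2^\alpha \Gamma(\alpha+1/2)}{\sqrt\pi}c\,x^{-\alpha}$. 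Inverting gives $(\overline\nu^{(l)})^{-1}(u)\overset{u\to\infty}{\sim}\bigl(\frac{2^\alpha \Gamma(\alpha+1/2)}{\sqrt\pi}c\bigr)^{1/\alpha}u^{-1/\alpha}$, and the same composition argument as above, applied to $\Psi^{(l+1)}_{(k),m}=\sigma_v^2(\overline\nu^{(l)})^{-1}(G_k)$, finishes the proof.

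The main obstacle I anticipate is justifying the Tauberian step: $\overline\rho^{(l)}$ is known only asymptotically at $0$, with potentially complicated behaviour elsewhere (including possibly diverging at $0$ and being small at $\infty$). We need to show that only the large-$u$ (equivalently small-argument) behaviour of $\Omega$ controls the small-$x$ asymptotic of its Laplace transform, despite the contributions from small $u$ (which are tempered by the factor $u^{-1/2}$ and the bound $\int_0^\infty \min(1,y)\rho^{(l)}(dy)<\infty$). Concretely, one splits the integral at an arbitrary cutoff $u_0$, uses the regularly varying hypothesis together with Potter's bounds (\cref{lem: feller slowly varying bound}) on the tail $u>u_0$ to extract the leading term via dominated convergence after the rescaling $v=xu/2$, and controls the bounded head $u\le u_0$ by $O(x^{1/2})$, which is negligible compared to $x^{-\alpha+1/2}$ since $\alpha\in(0,1)$.
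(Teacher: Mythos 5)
Your proposal is correct and follows essentially the same route as the paper's proof: identify the limits via \cref{prop: extremes} as $(\overline\rho^{(l)})^{-1}(G_k)$ and $\sigma_v^2(\overline\nu^{(l)})^{-1}(G_k)$, apply the law of large numbers to $G_k/k$, invert the regularly varying asymptotic of $\overline\rho^{(l)}$ at $0$, and derive $\overline\nu^{(l)}(x)\overset{x\to 0}{\sim}\frac{2^\alpha\Gamma(\alpha+1/2)}{\sqrt\pi}c\,x^{-\alpha}$ by the same Tauberian argument as in \cref{prop:regvarweightsinf} (the paper invokes \cref{prop:resnickProp5} where you invoke the uniform-convergence theorem for regularly varying functions; these serve the same purpose). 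Your closing discussion of why only the large-$u$ tail of the integrand matters in the Tauberian step is a useful elaboration of a point the paper leaves implicit.
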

\begin{proof}
By \cref{prop: extremes}, $\lambda_{p_l,(k)}^{(l)}\cvdto (\overline\rho^{(l)})^{-1}(G_k)$ as $p_l\to\infty$, where $G_k\sim\gammadist(k,1)$. Additionally, by the law of large numbers $\frac{G_k}{k}\cvpto 1$ as $k\to\infty$. Also, $\overline\rho^{(l)}(x)\overset{x\to 0}{\sim}c x^{-\alpha}$ implies $(\overline\rho^{(l)})^{-1}(x)\overset{x\to \infty}{\sim} (x/c)^{-1/\alpha}$. The rest follows from properties of regularly varying functions, see \cref{prop:resnickProp5} in \cref{app:backgroundRV}.
Additionally, similarly to the proof of \cref{prop:regvarweightsinf}, $\overline\rho^{(l)}(x)\overset{x\to 0}{\sim}c x^{-\alpha}$ implies
\begin{align*}
\overline\nu^{(l)}(x)&\overset{x\to 0}{\sim}\frac{2^\alpha\Gamma(\alpha+1/2)}{\sqrt{\pi}} x^{-\alpha}c
\end{align*}
concluding the proof.
\end{proof}

\subsection{Infinite-Width Limit for Multiple Inputs in the Symmetric $\alpha$-Stable Case}
\label{app:multiple-input-results-stable-case}

If, for some $\alpha\in(0,1)$,
\[
        \sum_{j=1}^{p_l} \lambda_{p_l,j}^{(l)}\cvdto \stable(\alpha,1)\text{ as } p_l \to \infty,
\]
then the Poisson point process $\{\widetilde{\lambda}_{j}^{(l)}\}_{j\geq1}$ in \cref{th:multipleinputs} has mean measure
\[
         \rho^{(l)}(du)=\frac{\alpha}{\Gamma(1-\alpha)} u^{-\alpha-1}du.
\]

Let $\K_n$ denote the set of $n$-by-$n$ positive semi-definite matrices
and define the limit in distribution
\[
        \zeta_{k}^{(l)}(\bx) = \lim_{p_{l-1}\to\infty}\ldots\lim_{p_1\to\infty} Z_k^{(l)}(\bx;\bp)
\]
which we recall is conditionally Gaussian given the previous layers $1,\ldots,l-1$.
If $\phi(\zeta_{k}^{(l)}(\bx))$ has sufficient moments,  then for all $n$ inputs $\bx_1,\ldots,\bx_n \in \R^\din$,
the $n$-by-$n$ random matrix
\begin{equation}
\label{def:S}
\begin{aligned}
{R}^{(l)}
& {} := \left(K^{(l)}(\bx_i,\bx_j) - \sigma_b^2\right)_{i,j =1}^n
\\
& {} = \left(\sigma_{v}^{2}\sum_{k\geq1}^{{}}\widetilde{\lambda}_{k}^{(l)}\phi\left(  \zeta_{k}^{(l)}(\bx_i)\right)
\phi\left(  \zeta_{k}^{(l)}(\bx_j)\right)\right)_{i,j=1}^n
\end{aligned}
\end{equation}
 has a (strictly) $\alpha$-stable distribution in $\K_n$ with characteristic exponent $\alpha$, i.e.,
 $$c_1^{1/\alpha} R^{(l)}_1+c_2^{1/\alpha}R^{(l)}_2 \overset{\cvd}{=} (c_1+c_2)^{1/\alpha} R^{(l)},$$
 where $R^{(l)}_1$ and $R^{(l)}_2$ are independent copies of $R^{(l)}$. This allows us to strengthen \cref{th:multipleinputs} in this special case.
 For a single input $\bx\in\R^{\din}$, we obtain a more precise form of the limiting output distribution.

\begin{theorem}[$\alpha$-stable case, further results]\label{thm: stable case}
	Suppose  that in the $l$-th hidden layer,
	\begin{align}\label{eq:stable conv}
		\sum_{j=1}^{p_l} \lambda_{p_l,j}^{(l)}\overset{\cvd}{\to} \stable(\alpha,1) \text{ as } p_l \to \infty	
	\end{align}
	for $\alpha\in(0,1]$.
	Then, for any inputs $\bx_1,\ldots,\bx_n$,
        \cref{th:multipleinputs} holds with the random
        matrix $(K^{(l)}(\bx_i,\bx_j) - \sigma_b^2)_{i,j=1}^n$ in \cref{def:S} having a conditional distribution, given $K^{(1)},\ldots,K^{(l-1)}$, that is
        $\alpha$-stable in $\K_n$. Moreover, in the case $n=1$ with the single input $\bx$, the conditional distribution of $ K^{(l)}(\bx,\bx) - \sigma_b^2$ is $\stable(\alpha,r^{(l)}(\bx))$ where
	\begin{align*}	
		r^{(l)}(\bx)&:=\sigma_v^2 \cdot \left(\E\left[\left.\left|\phi(\zeta^{(l)}_1(\bx))\right|^{2\alpha} \,\right|\, K^{(1)},\ldots,K^{(l-1)}\right]\right)^{1/\alpha}.
	\end{align*}
        Thus, given $K^{(1)},\ldots,K^{(l-1)}$, the random variance $K^{(l)}(\bx,\bx)$ has the same conditional distribution as
        \[
                \sigma_b^2 + \sigma_v^2 \cdot \tilde{\Lambda} \cdot
		\left(\E\left[\left.\left|\phi(\zeta^{(l)}_1(\bx))\right|^{2\alpha} \,\right|\, K^{(1)},\ldots,K^{(l-1)}\right]\right)^{1/\alpha}
        \]
        for $\tilde{\Lambda} \sim \stable(\alpha,1)$.
\end{theorem}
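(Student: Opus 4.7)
The plan is to rely on \cref{th:multipleinputs} to express $R^{(l)}$ as an explicit Poisson series, and then to compute its conditional Laplace transform, recognising the result as the Laplace transform of an $\alpha$-stable distribution on $\K_n$.

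First I would handle the non-degenerate case $\alpha \in (0,1)$. As indicated in the preamble to the theorem, the assumption \cref{eq:stable conv} forces $a^{(l)} = 0$ and $\rho^{(l)}(du) = \frac{\alpha}{\Gamma(1-\alpha)} u^{-\alpha-1} du$. Substituting this into the recursion \cref{eq: multi input kernel inductive def} of \cref{th:multipleinputs}, the deterministic bias term in that recursion vanishes, and we obtain the representation in \cref{def:S}, where the points $\{\widetilde\lambda_j^{(l)}\}_{j\geq 1}$ form a Poisson process on $(0,\infty)$ with the above intensity, and $\{\zeta_j^{(l)}\}_{j \geq 1}$, given the earlier kernels $K^{(1)},\ldots,K^{(l-1)}$, are iid centred Gaussian processes with the appropriate covariance.

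Next, for any $\theta \in \K_n$ (which equals its own dual under the Frobenius inner product), I would compute the conditional Laplace transform by Campbell's formula for Poisson point processes. Writing $\vec\phi_j := (\phi(\zeta_j^{(l)}(\bx_1)),\ldots,\phi(\zeta_j^{(l)}(\bx_n)))^T$ so that the summands take the form $\sigma_v^2 \widetilde\lambda_j^{(l)} \vec\phi_j \vec\phi_j^T$, and denoting by $F_{\vec\phi}$ the conditional law of $\vec\phi_1$ given the previous kernels,
\begin{align*}
\bfE\!\left[\left.e^{-\langle \theta, R^{(l)}\rangle}\,\right|\,K^{(1)},\ldots,K^{(l-1)}\right]
&= \exp\!\left(-\int_0^\infty\!\!\int \bigl(1-e^{-\sigma_v^2 u \vec\phi^T\theta\vec\phi}\bigr) \rho^{(l)}(du)\,F_{\vec\phi}(d\vec\phi)\right).
\end{align*}
Applying the classical identity $\int_0^\infty (1-e^{-uc}) \frac{\alpha}{\Gamma(1-\alpha)} u^{-\alpha-1}\, du = c^\alpha$ for $c \geq 0$ inside the integral yields
\begin{align*}
\bfE\!\left[\left.e^{-\langle \theta, R^{(l)}\rangle}\,\right|\,K^{(1)},\ldots,K^{(l-1)}\right]
&= \exp\!\left(-\sigma_v^{2\alpha}\,\bfE\!\left[\bigl(\vec\phi_1^T\theta\vec\phi_1\bigr)^\alpha \,\big|\, K^{(1)},\ldots,K^{(l-1)}\right]\right).
\end{align*}
Since the exponent is positive homogeneous of degree $\alpha$ in $\theta$, this is, by definition, the log-Laplace transform of an $\alpha$-stable law on $\K_n$. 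The single-input specialisation then follows by taking $n=1$, $\theta = t \geq 0$, in which case $\vec\phi_1^T \theta \vec\phi_1 = t\,\phi(\zeta_1^{(l)}(\bx))^2$, producing the exponent $t^\alpha \sigma_v^{2\alpha} \bfE[|\phi(\zeta_1^{(l)}(\bx))|^{2\alpha}\mid \cdots]$, which matches the Laplace transform of $\stable(\alpha, r^{(l)}(\bx))$.

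Finally, I would dispose of the boundary case $\alpha=1$ separately, since in our convention $\stable(1,1)=\delta_1$ is degenerate, meaning $\sum_j \lambda_{p_l,j}^{(l)}\to 1$ in probability with $a^{(l)}=1$ and trivial $\rho^{(l)}$; by \cref{th:multipleinputs} the kernel increment $R^{(l)}$ is then deterministic conditional on the earlier kernels, equal to the deterministic ``$\alpha=1$'' scale $\sigma_v^2 \bfE[\phi(\zeta_1^{(l)}(\bx))^2\mid \cdots]$ in the single-input case, consistent with $\stable(1,r^{(l)}(\bx))$. The main obstacle I anticipate is purely bookkeeping: correctly lining up the index shifts between the recursion in \cref{th:multipleinputs} (which defines $K^{(l+1)}$ in terms of $\zeta_j^{(l)}\mid K^{(l)}$) and the statement of the theorem here (which references $K^{(l)}$ and $\zeta_j^{(l)}$); once this is done, everything reduces to the Campbell-formula computation above and the stable identity for $\int (1-e^{-uc})u^{-\alpha-1}du$.
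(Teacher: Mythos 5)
Your proposal is correct. It diverges from the paper's proof only in how the multi-input stability claim is established: the paper verifies a.s.\ convergence of the series in \cref{def:S} via the three-series theorem and then cites LePage's representation theorem to conclude that $R^{(l)}$ is $\alpha$-stable in $\K_n$, whereas you compute the conditional Laplace transform $\bfE[e^{-\langle\theta,R^{(l)}\rangle}\mid K^{(1)},\ldots,K^{(l-1)}]$ directly via Campbell's formula, apply the identity $\int_0^\infty(1-e^{-uc})\tfrac{\alpha}{\Gamma(1-\alpha)}u^{-\alpha-1}\,du=c^\alpha$, and read off stability from the degree-$\alpha$ positive homogeneity of the exponent in $\theta$ together with uniqueness of Laplace transforms on the cone (which the paper has already established in \cref{lem: levy continuity theorem on cones}). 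Your route is more self-contained --- it reuses exactly the machinery the paper itself deploys for the $n=1$ case, extended to $\K_n$, and it yields the explicit exponent $\sigma_v^{2\alpha}\,\bfE[(\vec\phi_1^T\theta\vec\phi_1)^\alpha\mid\cdots]$ for free --- at the cost of having to justify the applicability of Campbell's exponential formula, i.e.\ finiteness of $\bfE[(\vec\phi_1^T\theta\vec\phi_1)^\alpha\mid\cdots]$, which follows from the polynomial envelope condition since $\alpha\le 1$ (you should state this one line explicitly). The single-input specialisation is then literally the paper's computation. You also treat the boundary case $\alpha=1$ separately as the degenerate case $a^{(l)}=1$, $\rho^{(l)}$ trivial; the paper's proof is silent on this case even though the statement allows $\alpha\in(0,1]$, so this is a small improvement rather than a gap.
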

\begin{proof}
	We assume, without loss of generality, that $(\widetilde{\lambda}_{k}^{(l)})_{k\geq1}$ is ordered:
	\[
	\widetilde{\lambda}_{1}^{(l)}\ge \widetilde{\lambda}_{2}^{(l)}\ge \cdots.
	\]
	By \cref{prop:extremesID}, the order statistics $ (\lambda^{(l)}_{p_l,(1)},\ldots,\lambda^{(l)}_{p_l,(p_l)},0,\ldots)$ converge in distribution to  $(\widetilde\lambda^{(l)}_{k})_{k\ge 1}$, a Poisson point process with intensity measure $\rho^{(l)}(du)=\alpha \Gamma(1-\alpha)^{-1} u^{-\alpha-1} du$. Such a Poisson process takes the form
	$((G_k \Gamma(1-\alpha))^{-1/\alpha})_{k\ge 1}$ where $(G_k)_{k\ge 1}$ is a standard rate-one Poisson point process on $(0,\infty)$ \cite{Lepage81, davydov2008strictly}.
	
	Given   ${}{K}^{(1)},\ldots,{}{K}^{(l-1)}$, we have that $\bigl((\zeta_k^{(l)}(\bx_1),\ldots,\zeta_k^{(l)}(\bx_n))^T\bigr)_{k\ge 1}$ are iid with common distribution $\cN(0,{}{\Sigma}^{(l)})$. Since $\phi$ satisfies the polynomial envelope condition, we have, conditioned on  ${}{K}^{(1)},\ldots,{}{K}^{(l-1)}$, that $	\phi\bigr(  \zeta_{1}^{(l)}(\bx_i)\bigr)\phi\bigr(\zeta_{1}^{(l)}(\bx_j)\bigr)$ has conditional moments of all order for all pairs $(i,j)$ (importantly, recall that we defined the conditional expectation via regular conditional probabilities). By the three-series theorem,
	$\sum_{k\geq1}^{{}}\widetilde{\lambda}_{k}^{(l)}\phi\bigr(  \zeta_{k}^{(l)}(\mathbf{x}_i)\bigr)
	\phi\bigr(  \zeta_{k}^{(l)}(\mathbf{x}_j)\bigr)$ converges for all pairs $(i,j)$. Therefore, by Theorem 2 in \cite{Lepage81}, ${}{R}^{(l)}$ is $\alpha$-stable.

	For the special case $n=1$ with the single input $\bx$, setting $U_k:=\phi^2\bigr(  \zeta_{k}^{(l)}(\mathbf{x})\bigr)$, by \cref{lemma:idlaplaceproduct}, the L\'evy-Khintchine formula and the integral formula \cite[p. 15]{samorodnitsky1994stable}
	$$ y^\alpha = \int_0^\infty (1 - e^{-xy}) \frac{\alpha}{\Gamma(1-\alpha)} x^{-\alpha-1} dx \quad\text{ for } \ \ 0 < \alpha < 1$$
	we have that
	\begin{align*}
	\E\left[\exp\left(-t\sum_k \widetilde{\lambda}_{k}^{(l)} U_k\right)\right] &=\exp\left( -\E\left[\int_0^\infty\left(1-e^{-txU_1}\right)\frac{\alpha}{\Gamma(1-\alpha)} x^{-\alpha-1} dx\right]\right)\\
	& = \exp\left(-t^{\alpha}\E[U_1^\alpha]\right).
	\end{align*}
\end{proof}

\newpage

\section{Additional Details on the Examples}
\label{app:examples_all}

\subsection{Detailed Illustration of the Main Results on the Simple Model in \cref{sec:intro}}\label{app:illustrative_example}
Recall the four models briefly discussed in \cref{sec:intro}: for $p_1\geq 2$,
\begin{center}
\begin{tabular}{ll}
  (a) $\lambda^{(1)}_{p_1,j}\sim \IG\left(2, \frac{2}{p_1} \right)$ & (b) $\lambda^{(1)}_{p_1,j}\sim \Ber\left(\frac{2}{p_1} \right)$ \\
        (c) $\lambda^{(1)}_{p_1,j}\sim \betadist\left(\frac{1}{p_1},\frac{1}{2}\right)$ & (d) $\lambda^{(1)}_{p_1,j}=\pi^2\frac{U_j^2}{p^2_1}\text{ where }U_j\sim \Cauchy(0,1)$
\end{tabular}
\end{center}
where $\IG(\beta_1,\beta_2)$ denotes the inverse gamma distribution with shape $\beta_1>0$ and scale $\beta_2>0$, and $\Cauchy(0,1)$ denotes the half-Cauchy distribution with pdf
in \cref{eq:halfcauchypdf}.
The 50 largest values of a realisation of $(\lambda^{(1)}_{p_1,j})_{j=1,\ldots,p_1}$ for a neural network of width $p_1=5000$ are represented in \cref{fig:exampleplotlambda} under these   models.

These four models have different infinite-width limits. Under the inverse gamma model (a), the infinite-width limit is the same as the iid Gaussian case, and \cref{eq:exampleGPlimit} holds. Under models (b-d), the infinite-width limit is a mixture of Gaussian processes (see \cref{th:multipleinputs}). That is, for each case $s\in\{b,c,d\}$,
\begin{equation}
        \left(
        \begin{array}{cc}
                Z^{(2)}_{k}(\bx;p_1) \\
                Z^{(2)}_{k}(\bx';p_1) \\
        \end{array}
        \right)
        \overset{\cvd}{\to}
        \mathcal N \left (0, \left(
        \begin{array}{cc}
                K_s^{(2)}(\bx,\bx) & K_s^{(2)}(\bx,\bx') \\
                K_s^{(2)}(\bx,\bx') & K_s^{(2)}(\bx',\bx') \\
        \end{array}
        \right)\right )\text{ as }p_1\to\infty
\end{equation}
where $K_b^{(2)}$, $K_c^{(2)}$ and $K_d^{(2)}$ are \textit{random} covariance kernels defined by
\begin{align}\label{eq:kernelexamples}
K_s^{(2)}(\mathbf{x},\mathbf{x}^{\prime})&:=\sum_{j\geq1}^{{}%
}\widetilde{\lambda}_{(j),s}^{(1)}\max\left(0,  \zeta_{(j),s}^{(1)}(\mathbf{x})\right)
\max\left(0,  \zeta_{(j),s}^{(1)}(\mathbf{x}^{\prime})\right). %
\end{align}
Here $\widetilde{\lambda}_{(1),s}^{(1)}\geq \widetilde{\lambda}_{(2),s}^{(1)}\geq \ldots \geq 0$ are random weights defined by (see details in \cref{app:additionalderivationsExamples})
\begin{align}
\widetilde{\lambda}_{(j),b}^{(1)}&=\left \{\begin{array}{cc}
                                    1 & \text{if }j\leq N^{(1)} \\
                                    0 & \text{otherwise}
                                  \end{array}\right .
&\text{where }N^{(1)}&\sim\Poi(2)\\
\widetilde{\lambda}_{(j),c}^{(1)}&=\prod_{k=1}^j \beta_k&\text{where } \beta_k&\overset{\text{iid}}{\sim}\betadist(2,1)\\
\widetilde \lambda_{(j),d}&=\left (\sum_{k=1}^j E_j \right)^{-2}&\text{where } E_k&\overset{\text{iid}}{\sim}\text{Exponential}(2)
\end{align}
and, for $j\geq 1$, $\zeta_{(j),s}^{(1)}\overset{\text{iid}}{\sim}\GP(0,K^{(1)})$ with $K^{(1)}(\mathbf{x},\mathbf{x}^{\prime})=\frac{\mathbf{x}^{T}\mathbf{x}^{\prime}}{\din}$.

 We have $\bbE [K_b^{(2)}]=\bbE [K_c^{(2)}]=\mathcal K^{(2)}$. In the case (d), $\bbE[K_d^{(2)}]$ is undefined.

\begin{figure}
  \centering
  \subfigure[Inverse Gamma]{\includegraphics[width=.33\textwidth]{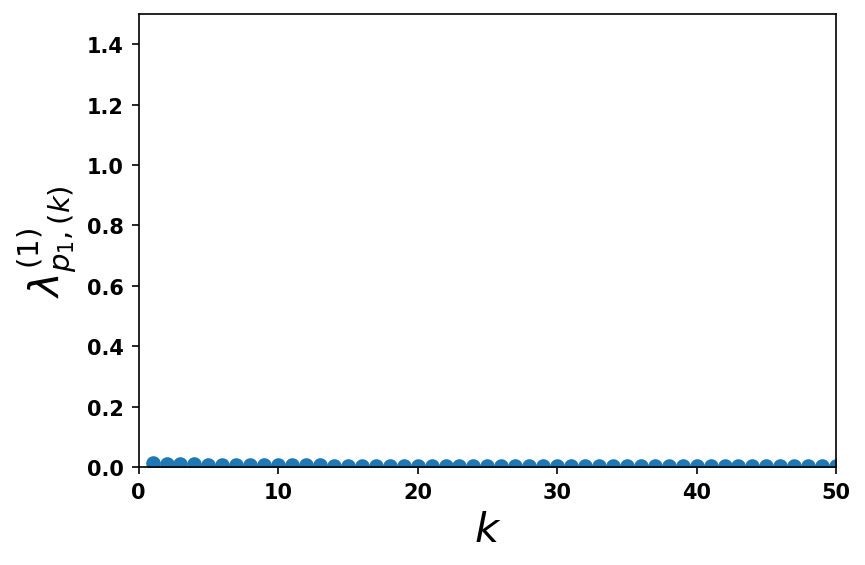}}
  \subfigure[Bernoulli]{\includegraphics[width=.33\textwidth]{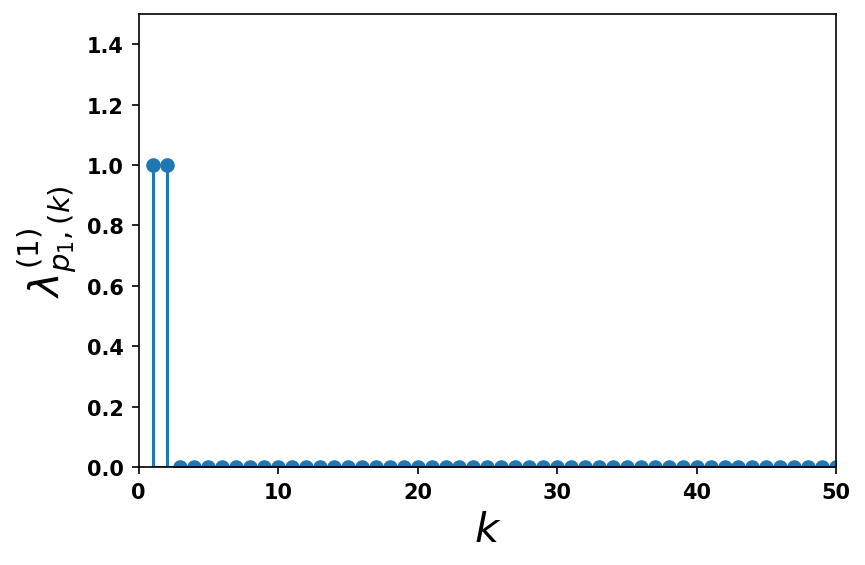}}\\
  \subfigure[Beta]{\includegraphics[width=.33\textwidth]{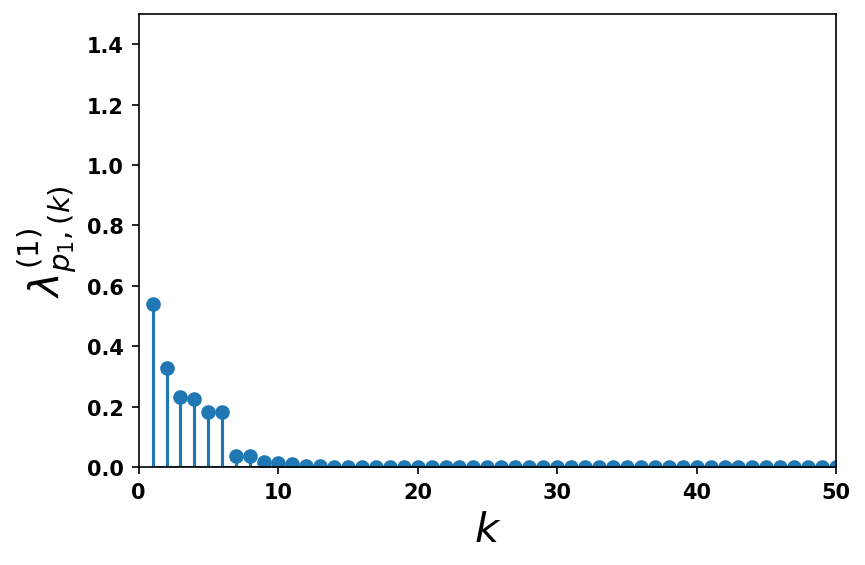}}
  \subfigure[Horseshoe]{\includegraphics[width=.33\textwidth]{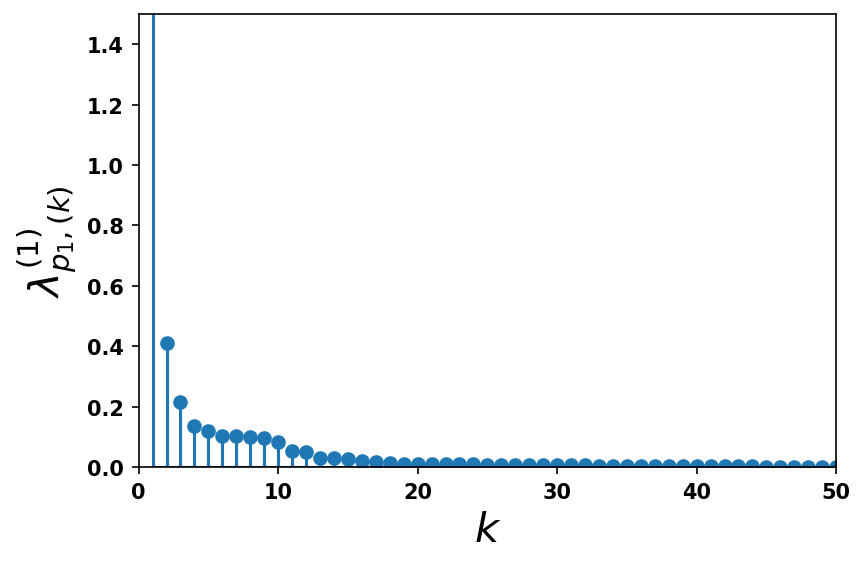}}
  \caption{The 50 largest values $\lambda_{p_1,(1)}^{(1)}\geq \lambda_{p_1,(2)}^{(1)}\geq\ldots \geq \lambda_{p_1,(50)}^{(1)}$ of $(\lambda_{p_1,j}^{(1)})_{j=1,\ldots,p_1}$ in a neural network of width $p_1=5000$ for examples (a-d). The $y$-scale is the same for all the plots. For (a), all the values are non-zero and very small, of order $(2/p_1)$. For (b), only a small number of values are non-zero, all being equal to 1. For (c-d), all the values are non-zero. For model (c), $\lambda_{p_1,(k)}^{(1)}$ decreases exponentially fast with $k$, while it decreases in $O(k^{-2})$ for model (d).}\label{fig:exampleplotlambda}
\end{figure}

Due to the shared random covariance kernel, the outputs are therefore dependent in the infinite-width limit for examples (b-d). In the case (b), only a finite (random) number of nodes are active (that is, such that $\lambda_{p_1,j}^{(1)}>0$) in the infinite-width limit; the infinite-width network is equivalent to a finite network with a $\Poi(2)$ number of hidden nodes. In the cases (c-d), an infinite number of nodes are active in the infinite-width limit. The marginal random variances take the form
$K_s^{(2)}(\bx,\bx)= (S_s^{(1)} \|\bx \|^2) / \din$
for $s\in\{b,c,d\}$,
where (see \cref{th:singleinput,app:betamodel,app:horseshoemodel,cor:scaledstablebeta_sumxy})
\begin{itemize}
\setlength\itemsep{0em}
\item $S_b^{(1)}\sim\gammadist(\frac{N}{2},\frac{1}{2})$ with $N\sim \Poi(1)$\footnote{with the convention that $S_b^{(1)}=0$ if $N=0$.};
\item $S_c^{(1)}\sim\gammadist(\frac{1}{2},\frac{1}{2})$;
\item $S_d^{(1)}\sim\IG(\frac{1}{2},\frac{1}{2})$ is an inverse gamma/L\'evy random variable.
\end{itemize}
In the case (c), $Z^{(2)}_{k}(\bx;p_1)$ converges in distribution to a normal-gamma\footnote{Various authors use the name normal-gamma to mean different things. Here we mean a Gaussian mixture distribution where the variance is governed by a gamma distribution.} random variable. For (d), it converges to a Cauchy random variable, and both dimensions of the output therefore have power-law tails, with exponent $1$. For model (a), for any $k=1,2$, $\max_{j=1,\ldots,p_1}(W_{jk}^{(2)})^2\overset{\cvp}{\to}0$, and the weights are all asymptotically small. For each case $s\in\{b,c,d\}$, $\max_{j}(W_{jk}^{(2)})^2\overset{\cvd}{\to} M_s$ where $M_s$ is a random variable whose cdf is analytically available (see \cref{sec:powerlawweights}). In particular, in the horseshoe example (d), $M_d$ follows a scaled Fr\'echet distribution. For models (b-d), let $(W_{(1),k}^{(2)})^2\geq (W_{(2),k}^{(2)})^2\geq \ldots$ be the ordered weights connected to an output $k$. In the infinite-width limit, $W_{(j),k}^{(2)}$ decreases exponentially fast with $j$ for model  (c), while it decreases in $O(j^{-2})$ for model (d) (see \cref{app:properties of small weights}). These properties are of importance when using such models for pruning the nodes/edges of large networks.
A related important property is that of the compressibility of the network. Let $\lambda_{p,(1)}\geq \lambda_{p,(2)}\geq \ldots   $ be the ordered per-node variance terms. For some compression level $\kappa\in(0,1)$, let
\begin{align*}
Z^{*(2)}_{k}(\bx;p_1,\kappa) := \sum_{j=1}^{p_1} \sqrt{\lambda_{p_1,j}^{(1)}} \ind_{\{\lambda_{p_1,j}^{(1)} > \lambda^{(1)}_{p_1,(\lfloor \kappa p_1  \rfloor)}\}}V^{(2)}_{jk} \max\left (0,\frac{1}{\sqrt{\din}}\sum_{i=1}^{\din}V^{(1)}_{ij} x_{i}\right )
\end{align*}
be the neural network obtained by pruning a $(1-\kappa)$ proportion  of nodes with the smallest $\lambda^{(1)}_{p_1,j}$ values. The models (b-d) are compressible in the sense that the difference between the pruned output $Z^{*(2)}_{k}(\bx;p_1,\kappa)$ and the unpruned output $Z^{(2)}_{k}(\bx;p_1)$ vanishes in probability in the infinite-width limit (see \cref{th:compressibility}). This is not the case for the iid Gaussian model, nor for model (a), which are not compressible.   The properties of the different models are summarised in~\cref{tab:main_examplesintro}.

\subsection{Additional Examples}
\label{app:examples}

In \cref{sec:examples}, we discussed examples of models used in the literature, and the associated parameters of the limiting infinitely divisible random variable of \cref{eq:convergencedist}.
In this part of the appendix, we provide additional examples and a general recipe behind some of these models and the Horseshoe model. As in \cref{sec:examples},
we use a different scaling in some cases so that the limit exists without being degenerate at $0$.

All the proofs of the examples in this subsection rely on \cref{th:convergenceid}. Details are given in~\cref{app:additionalderivationsExamples}.
To simplify notation, we often drop the layer index $l$ fully or partially in the rest of this subsection, writing e.g. $\lambda_{p,j}\sim\mu_p$.

\begin{table}
\begin{adjustwidth}{-5mm}{}
\scriptsize
  \begin{tabular}{@{}|@{\,}c@{\,}|@{\,}c@{\,}|@{\,}c@{\,}|@{\,}c@{\,}|@{\,}c@{\,}|@{\,}c@{\,}|@{\,}c@{\,}|@{\,}c@{\,}|@{\,}c@{\,}|@{}}
  \hline
        Name & Mixture's name & $\mu_p$ & $a$ &  L\'evy measure & Support & Finite? & Exp. $\alpha$ & Exp. $\tau$ \\
  \hline
  Determ. & Gaussian & $\delta_{c_1/p} $& $c_1$ & $0$ & -- & -- & -- & -- \\
  Bernoulli & Spike and Slab & $\left(1-\frac{c}{p}\right) \cdot \delta_0 + \frac{c}{p} \delta_1$ & $0$ & $c\delta_1$ & $\{1\}$& Yes & 0 & -- \\
        Gamma & Group lasso & $\gammadist\left (\frac{p_{l+1}+1}{2},\frac{p_l(p_{l+1}+{1})}{2{c_1}} \right)$ & $c_1$ & 0 & -- &-- & -- & --  \\
  Beta & Normal-beta & $\betadist\left(\frac{1}{p},\frac{1}{c}\right)$ &0&$x^{-1}(1-x)^{1/c-1}$&(0,1)&No& -- & --\\
  Inv.-Gamma & Multivariate t & $\IG\left(2, 2/p\right)$ & 2 & 0 & -- & --& --  & -- \\
  Inv.-Gamma & Multivariate t & $\IG\left(\alpha, \Gamma(1+\alpha)^{-1/\alpha}p^{-1/\alpha}\right)$ & 0 & $\alpha x^{-\alpha-1}$ & $(0,\infty)$ & No & $\alpha\in(0,1)$ & $\tau=\alpha$ \\
  Beta prime & Horseshoe & $\frac{2p}{\pi^2}x^{-1/2}(1+\frac{4xp^2}{\pi^2})^{-1}$ & 0 & $\frac{1}{2}x^{-3/2}$ &$(0,\infty)$ & No & 1/2 & 1/2  \\

  Resc. Beta Prime & Reg. Horseshoe & See \cref{eq:pdfrescaledbetaprime}   & 0 & $\frac{x^{-3/2}}{\pi}(1-\frac{x}{c^2})^{-1/2}$ & $(0,c^2)$& No & $1/2$ & -- \\
  Gen. BFRY &
  \begin{tabular}{@{}l@{}}
  Normal
  \\
  \ \ {}-gen. BFRY
  \end{tabular}
  & See \cref{eq:BFRYmodel}& 0 & $\frac{\eta x^{-1-\tau}}{\Gamma(1-\alpha)}\gamma(\tau-\alpha,x)$ & $(0,\infty)$ &No & $\alpha\in(0,1)$ & $\tau>\alpha$ \\
  \hline
\end{tabular}
\end{adjustwidth}
\caption{List of models and their limiting location parameter and L\'evy measure, with its properties.}
\end{table}

\subsubsection{Generalised Spike and Slab Prior}

As a generalisation of the Bernoulli case in \cref{sec:bernoulliprior}, we can consider the following spike and slab prior for the variances. Let $c>0$ and $\t c\geq0$. Consider
\[
\lambda_{p,j}\sim\left(1-\frac{c}{p}\right) \cdot \delta_{\t c/p} + \frac{c}{p} \cdot H
\]
for some probability distribution $H$ (slab) on $(0,\infty)$. We have
$$
\sum_{j}\lambda_{p,j}\cvdto \id(\t c,cH).
$$

\subsubsection{Stable Limit and the Horseshoe Model}\label{app:stablelimitexample}

We describe here a family of models whose limit is a positive stable random variable, defined in~\cref{app:positive stable}, which is a special kind of infinitely divisible random variable. The horseshoe model \cite{Carvalho2010} in \cref{sec:horseshoe}, which has been used by \citet{Louizos2017,Ghosh2018,Ghosh2019,Popkes2019} as a Bayesian prior for the weights of a deep neural network, arises as a special case.

\paragraph{Models converging to a stable distribution.} We consider that
\begin{equation}
        \lambda_{p,j}=\frac{Y_j}{(c_1 p)^{1/\alpha}}\label{eq:stablemodel}
\end{equation}
where $Y_1,Y_2,\ldots,$ are iid nonnegative random variables with cdf $F$ and survival function $\overline F=1-F$ satisfying
\begin{equation}
\overline{F}(y)\overset{y \to \infty}{\sim} y^{-\alpha}c_1 \label{eq:tailpowerlaw}
\end{equation}
for some index $\alpha\in(0,1)$ and some positive constant $c_1$.\footnote{More generally, one could replace the constant $c_1$ by a slowly varying function $L$.} We have (\citet[Theorem XVII.5.3]{Feller1971}, see also \citet[Example 5.5]{Janson2011})
\begin{align}
        \sum_{j} \lambda_{p,j}\cvdto \id(0,\rho_{\operatorname{stable}}({}\cdot{};\alpha,1))=\stable(\alpha,\Gamma(1-\alpha)^{1/\alpha}).
\end{align}
In this case, the limit L\'{e}vy measure is the positive stable L\'{e}vy measure
with tail L\'{e}vy intensity $\overline\rho_{\operatorname{stable}}(u;\alpha,1)=u^{-\alpha}$. It has power-law tails at $0$ and $\infty$, but with the same exponent $\alpha$, thus lacking
some flexibility. This limitation will be addressed by our later example which permits different
exponents at $0$ and $\infty$. There is a lot of flexibility in the choice of the distribution $F$. For example, all the following distributions have tails that satisfy \cref{eq:tailpowerlaw} for some constant $c_1>0$:
$$
Y_j\sim\pareto(\alpha,c),~~Y_j\sim\IG(\alpha,1),~~Y_j\sim\text{Fr\'echet}(\alpha),~~Y_j\sim\betaprime(c,\alpha)
$$
where $c>0$. $\pareto(\alpha,c)$ denotes the Pareto distribution with pdf $f(x)=\alpha c^\alpha x^{-\alpha-1}\ind_{\{x>c\}}$, $\text{Fr\'echet}(\alpha)$ denotes the Fr\'echet distribution with cdf $F(x)=e^{-x^{-\alpha}}$ and $\betaprime(c,\alpha)$ denotes the beta prime distribution with pdf $f(x)=x^{c-1}(1+x)^{-c-\alpha}\frac{\Gamma(c+\alpha)}{\Gamma(c)\Gamma(\alpha)}$. Combining \cref{eq:stablemodel} with $Y_j\sim\IG(\alpha,1)$ gives
$$
\lambda_{p,j}\sim\IG\left (\alpha, \frac{1}{\Gamma(1+\alpha)^{1/\alpha}p^{1/\alpha}}\right).
$$
While this model appears similar to the model in \cref{eq:inversegammamodel}, the asymptotic properties of the two models are very different.

\paragraph{Horseshoe distribution.} The horseshoe model \cite{Carvalho2010} arises as another special case. One assumes
that the random variables $Y_j$ have the same distribution as $Y=T^{2}$, where $T \sim \Cauchy(0,1)$ is a half-Cauchy random variable, with pdf given by~\cref{eq:halfcauchypdf}. The random variable $Y\sim\betaprime(1/2,1/2)$ is a beta prime random variable (with both shape parameters equal to $1/2$), with pdf
\[
        f_{Y}(y)=\frac{1}{\pi\sqrt{y}(1+y)}.%
\]
Its survival function satisfies
\[
\Pr(Y>y)\overset{y\to\infty}{\sim}\frac{2}{\pi}y^{-1/2},%
\]
and therefore $Y$ has a power-law tail at infinity with exponent $\alpha=1/2$. Let $c>0$ be some scaling parameter. Setting
$$
\lambda_{p,j} = c\times\frac{\pi^2}{4}\frac{Y_j}{p^2},
$$
we obtain
\begin{align}
        \sum_{j} \lambda_{p,j}\cvdto \id(0,\rho_{\operatorname{stable}}({}\cdot{};1/2,c))=\stable(1/2,c\pi)=\IG(1/2,c\pi/4).
\end{align}
The limit is therefore a stable distribution, with exponent $\frac{1}{2}$, which is also inverse gamma with shape parameter $1/2$ in this case. The tail L\'evy intensity $\overline\rho_{\operatorname{stable}}(x;1/2,c)$ has power-law tails at $0$ and $\infty$, with exponent $1/2$.

\subsubsection{Regularised Horseshoe and Stable Beta Process}

\citet{Ghosh2018,Ghosh2019} considered Bayesian learning of neural networks using regularised horseshoe priors. In this case, we have
\[
\lambda_{p,j} = \frac{c^2  \frac{T^2_j}{p^2}}{c^2+\frac{T^2_j}{p^2}}
\]
where $T_j\sim \Cauchy(0,1)$ and $c>0$. Note that $\lambda_{p,j}\in(0,c^2)$ is bounded, with pdf
\begin{equation}
f_p(x)=\frac{p}{\pi} \cdot x^{-\frac{1}{2}}(1-x/c^2)^{-\frac{3}{2}}\left(1+\frac{p^2 x}{(1-x/c^2)}\right)^{-1}\ind_{\{x<c^2\}}.\label{eq:pdfrescaledbetaprime}
\end{equation}
We have, for any $x>0$,
$$
\lim_{p\to\infty} pf_p(x)= \frac{1}{\pi} \cdot x^{-\frac{3}{2}} \left(1-x/c^2\right)^{-\frac{1}{2}} \ind_{\{x<c^2\}}.
$$
An application of~\cref{th:convergenceid} gives
$$
\sum_j \lambda_{p,j}\cvdto\id(0, \rho)
$$
where
$$
\rho(du)= \frac{1}{\pi} \cdot u^{-\frac{3}{2}} \left(1-u/c^2\right)^{-\frac{1}{2}} \ind_{\{u<c^2\}}du
$$
is a (scaled) stable beta L\'evy measure~\cite{Teh2009}. The L\'evy measure has bounded support, and the associated tail L\'evy intensity increases polynomially at 0, with exponent $\alpha=1/2$,
$$
\overline\rho(x)\overset{x\to 0}{\sim} \frac{2}{\pi} x^{-1/2}.
$$
The limiting random variable $\id(0, \rho)$ has support $(0,\infty)$.

\subsubsection{General Models with Arbitrary Limiting L\'evy Measure}
It may be of interest to set the limiting constant $a$ and the L\'evy measure $\rho$, so that they satisfy a number of properties, and then pick a distribution $\mu_p$ that makes $\sum_{j} \lambda_{p,j} \cvdto\id(a,\rho)$.

First note that if $\sum_{j} \lambda_{p,j}\cvdto\id(0,\rho)$, then $\sum_{j} (\lambda_{p,j}+\frac{a}{p}) \cvdto\id(a,\rho)$, so without loss of generality, we restrict the discussion to models with $a=0$.

If $\rho$ is finite, then $H(dx)=\rho(dx)/\overline\rho(0)$ is a probability distribution, and one can simply set
$$
\mu_{p}=\frac{\overline\rho(0)}{p} H + \left(1-\frac{\overline\rho(0)}{p}\right) \delta_0.
$$
If $\rho$ is infinite, on the other hand, one can resort to the construction of~\citet{Perman1992} (see also \cite{Lee2019}), with
\[
\mu_{p}(du)=\frac{(1-e^{-u\psi^{-1}({p})})}{p}\rho(du)
\]
where $\psi^{-1}$ is the generalised inverse of $\psi(t)=\int_{0}^{\infty}(1-e^{-ut})\rho(du)$, the Laplace
exponent of $\rho$.

\subsection{Derivations of the Beta and Horseshoe Examples in \cref{sec:intro,sec:examples}}
\label{app:additionalderivationsExamples}

\subsubsection{Beta Model (c) in \cref{sec:intro,sec:betaexample}}
\label{app:betamodel}
Consider
$$
\lambda_{p,j}\sim \betadist\left(\frac{\eta}{p},b\right).
$$
The pdf of $\lambda_{p,j}$ is
\begin{align*}
f_{p}(x)&=\frac{\Gamma(\eta/{p} +b)}{\Gamma(\eta/{p})\Gamma(b)}x^{\eta/{p}-1}(1-x)^{b-1} \ind_{\{x\in(0,1)\}}\\
        &=\frac{\eta}{{p}}\frac{\Gamma(\eta/{p} +b)}{\Gamma(\eta/{p}+1)\Gamma(b)}x^{\eta/{p}-1}(1-x)^{b-1} \ind_{\{x\in(0,1)\}}.
\end{align*}
We have
\begin{align*}
        {p}f_{p}(x)
        & {} =\eta\frac{\Gamma(\eta/{p} +b)}{\Gamma(\eta/{p}+1)\Gamma(b)}x^{\eta/{p}-1}(1-x)^{b-1} \ind_{\{x\in(0,1)\}}
        \\
        & {} \to \eta x^{-1}(1-x)^{b-1}\ind_{\{x\in(0,1)\}}=\varrho(x).
\end{align*}
This is the density of a stable beta measure $\rho_{\operatorname{sb}}$  with parameters $(\frac{\eta}{b}, 0, b)$. For $x\in(0,1)$,
$$
\frac{{p}f_{p}(x)}{\varrho(x)}=\frac{\Gamma(\eta/{p} +b)}{\Gamma(\eta/{p}+1)\Gamma(b)}x^{\eta/{p}}\ind_{\{x\in(0,1)\}}\leq \frac{\Gamma(\eta/{p} +b)}{\Gamma(\eta/{p}+1)\Gamma(b)}\leq 2
$$
for ${p}$ large enough. Finally, $\mu_{p}$ and $\rho$ have the same support. Thus, \cref{th:convergenceid} implies that $\sum_j \lambda_{p,j} \cvdto\id(0,\rho)$ with $\rho(dx)=\eta x^{-1}(1-x)^{b-1}\ind_{\{x\in(0,1)\}}dx$. The tail L\'evy intensity satisfies
$$
\overline\rho(x)\overset{x\to 0}{\sim} \eta\log(1/x).
$$
 Let $(\widetilde \lambda_{(1)}\geq \widetilde \lambda_{(2)}\geq \ldots)$ denote the ordered weights of a Poisson point process on $(0,\infty)$ with mean measure $\rho$. In the case $b=1$, they admit the simple inverse-L\'evy/stick-breaking construction \cite{Teh2009}
$$
\widetilde \lambda_{(j)}=\prod_{k=1}^j \beta_k,~~\text{where }\beta_1,\beta_2\,\ldots\text{ are iid } \betadist(\eta,1).
$$

\subsubsection{Horseshoe Example (d) in \cref{sec:intro,app:illustrative_example,app:stablelimitexample}}
\label{app:horseshoemodel}

Let $\lambda_{p,j} =c\frac{\pi^2}{4}\frac{U^2_j}{p^2}$ where $U_j\sim \Cauchy(0,1)$, with $c=4$. From \cref{app:stablelimitexample},
$$
\sum_{j} \lambda_{p,j}\cvdto \id(0,\rho_{\operatorname{stable}}({}\cdot{};1/2,c))=\stable(1/2,c\pi)=\IG(1/2,c\pi/4).
$$
 Let $(\widetilde \lambda_{(1)}\geq \widetilde \lambda_{(2)}\geq \ldots)$ denote the ordered weights of a Poisson point process on $(0,\infty)$ with mean measure $\rho_{\operatorname{stable}}({} \cdot {};1/2,c)$. They admit the inverse-L\'evy representation
 $$
 \widetilde \lambda_{(j)}=\frac{c}{\left (\sum_{k=1}^j E_k \right)^{2}}
 $$
 where $E_1,E_2,\ldots$, are iid exponential random variables with unit rate.

From \cref{cor:stable_sumxy},
$$
\sum_{j} \lambda_{p,j}\max(0,X_j)^2\cvdto \id(0,\rho_{\operatorname{stable}}({}\cdot{};1/2,c/(2\pi)))=\stable(1/2,c/2)=\IG(1/2,c/8)
$$
where $X_1,X_2\ldots,$ are iid $N(0,1)$ random variables.  Using \cref{prop: extremes}, for any $x>0$,
$$
\Pr(\max_j \lambda_{p,j}\leq x)\to e^{- (x/c)^{-1/2}}
$$
which is the cdf of a Fr\'echet random variable with shape parameter $\alpha=1/2$ and scale parameter $c$. Similarly,
since $\sigma_v = 1$,
$$
\Pr(\max_j W^2_{jk}\leq x)\to e^{- (x/(2c/\pi))^{-1/2}}
$$
which is the cdf of a Fr\'echet random variable with shape parameter $\alpha=1/2$ and scale parameter $2c/\pi$.
Equivalently,
$$
\Pr(\max_j |W_{jk}|\leq y)\to e^{- \left( y/\sqrt{2c/\pi}\right)^{-1}}
$$
which is the cdf of a Fr\'echet random variable with shape parameter $\alpha=1$ and scale parameter $\sqrt{2c/\pi}$.

\newpage

\section{Additional Experiments}
\label{app:Bayesian-experiment}
	
\subsection{Stability of MoGP for Deep Networks}

A common practical problem that may emerge with deep models is that of the vanishing/exploding norm of the output/gradient. In this subsection, we empirically investigate these phenomena for the ReLU activation in the MoGP context. We consider the FFNN model with the generalised BFRY variance distribution described in \cref{sec:experiments_simu}, with a fixed width of $p=1000$ and univariate input and output. The parameters of the generalised BFRY are fixed as in \cref{sec:experiments_simu}; here, only $\sigma_b$ and $\sigma_v$ vary. For each depth, we simulate $500$ random initialisations of the model. We investigate the stability of forward passes by looking at the empirical distribution of the norm of the output $Z^{(L+1)}(\bx)$ as the depth $L+1$ increases. For backward passes, we look at the empirical distribution of the norm of the gradient of the loss with respect to the input weights $W^{(1)}$.

\Cref{th:singleinput} states that the variance of the output follows the stochastic recurrence equation
	$$ \Sigma^{(l)} = \sigma_b^2 + \sigma_v^2 S^{(l-1)} \Sigma^{(l-1)},$$
where $S^{(l-1)}:=\sum_{j=1}^{p}\lambda_{j}^{(l-1)}\phi(\varepsilon
_{j}^{(l-1)})^{2}$ and $\varepsilon
_{j}^{(l-1)}$ are i.i.d standard normals. It is well known that if $\E \big[ \log \sigma_v^2 S^{(l)} \big] < 0 $, the random process $\Sigma^{(l)}$ is strictly stationary ergodic \citep{buraczewski2013large}. If we further assume that $\sigma_b > 0$, the forward pass is stable (non-vanishing and non-exploding), as illustrated in \cref{fig:depth_stability_bias}. However, one can notice that, similarly to standard initialisations, a non-zero bias leads to an unstable backward pass. With $\sigma_b=0$, taking parameters such that $\E \big[ \log \sigma_v^2 S^{(l)} \big] < 0 $ or $\E \big[ \log \sigma_v^2 S^{(l)} \big] > 0 $ is not practical for deep networks as such parameters lead respectively to vanishing or exploding outputs and gradients (see \cref{fig:depth_stability_bad_mu}). As illustrated in \cref{fig:depth_stability_right_mu}, taking parameters such that $\E \big[ \log \sigma_v^2 S^{(l)} \big] = 0 $ leads to relatively stable forward and backward passes even at depth $20$. However, we do observe an increased variance of the distributions. In rare runs, this may cause difficulties in training the models.

\begin{figure}[h!]
		\centering
		\includegraphics[width=0.45\linewidth]{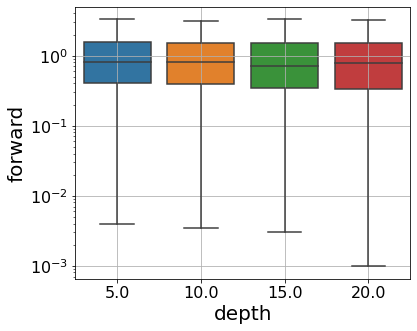}
		\includegraphics[width=0.45\linewidth]{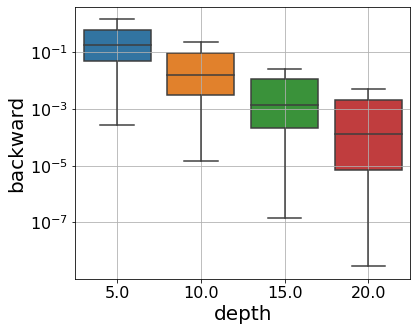}
		\caption{Stability of the forward and backward passes as the depth increases. Initialisation with non-zero bias $\sigma_b = 1$ and $\E \big[ \log \sigma_v^2 S^{(l)} \big] = -1$.}
		\label{fig:depth_stability_bias}
\end{figure}

\begin{figure}[h!]
		\centering
		\subfigure[ {$\E \big[ \log \sigma_v^2 S^{(l)} \big] = 1$} ]{%
		\includegraphics[width=0.45\linewidth]{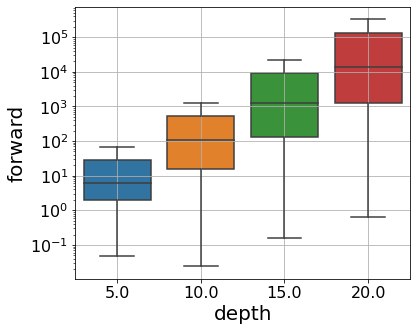}%
		\includegraphics[width=0.45\linewidth]{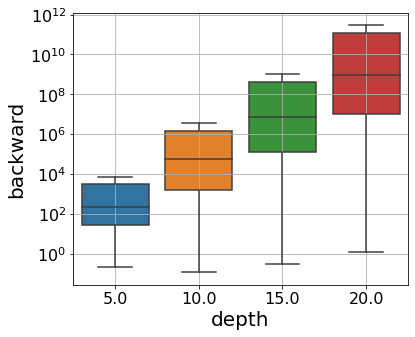}%
		}
		\subfigure[ {$\E \big[ \log \sigma_v^2 S^{(l)} \big] = -1$} ]{%
		\includegraphics[width=0.45\linewidth]{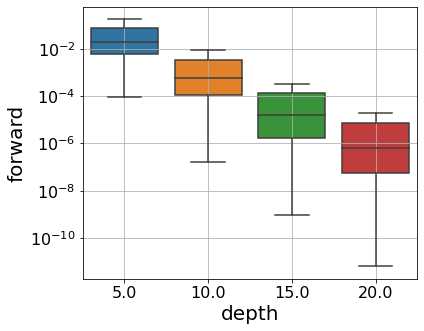}%
		\includegraphics[width=0.45\linewidth]{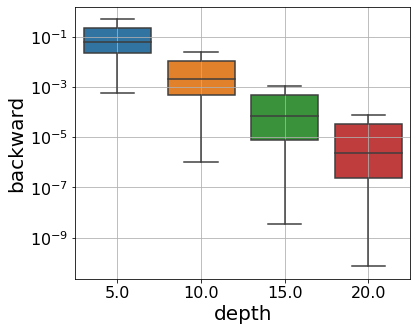}
		}
		\caption{Stability of the forward and backward passes as the depth increases. Initialisation with zero bias $\sigma_b = 0$.}
		\label{fig:depth_stability_bad_mu}
\end{figure}

\begin{figure}[h!]
		\centering
		\includegraphics[width=0.45\linewidth]{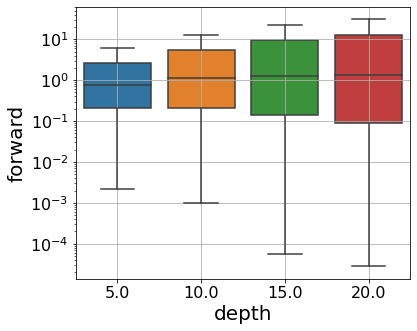}
		\includegraphics[width=0.45\linewidth]{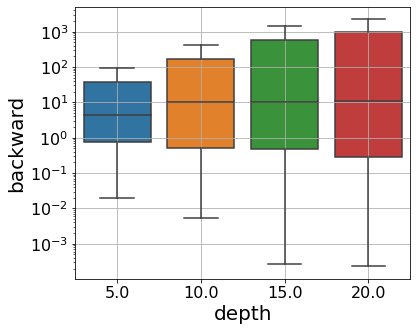}
		\caption{Stability of the forward and backward passes as the depth increases. Initialisation with $\sigma_b = 0$ and $\E \big[ \log \sigma_v^2 S^{(l)} \big] = 0$.}
		\label{fig:depth_stability_right_mu}
\end{figure}

\subsection{Using MoGP as a Regularisation for Convolutional Layers}
	
	In this subsection, we empirically illustrate how the MoGP framework can be used with convolutional neural networks (CNN) to promote compressibility. Here we consider the more challenging dataset Cifar10. The CNN model consists of two convolutional layers (\texttt{Conv1} and \texttt{Conv2}) and a final fully-connected layer (\texttt{fc}). Each convolutional layer is followed by a max pooling layer, with kernel size $2\times2$. The weights of each convolutional layer $l \in \{ 1,2\}$ is a tensor of dimensions
	$$n_f^{(l)} \times C_{in}^{(l)} \times K_x^{(l)} \times K_y^{(l)},$$
where $n_f^{(l)}$ is the number of filters of the layer, $C_{in}^{(l)}$ is the number of input channels (number of channels of the input ``image") and $K_x^{(l)} \times K_y^{(l)}$ is the kernel size. For both \texttt{Conv1} and \texttt{Conv2}, we take $K_x = K_y = 3$. We note that $n_f^{(1)}$, the number of filters of \texttt{Conv1}, is equal to $ C_{in}^{(2)}$ the number of input channels of \texttt{Conv2}. In this CNN setting, the input channels of a convolutional layer play the role of the input nodes in the FFNN setting. We reproduce the experiment of \cref{sec:MoGP regularisation} and assign to each input channel $j$ of \texttt{Conv2}, a scale $\lambda_j^\texttt{Conv2}$ and associate penalisation. Notice that if we prune a fraction $1-\kappa$ of the input channels of \texttt{Conv2}, then only a fraction $\kappa$ of the output filters of \texttt{Conv1} and the input channels of \texttt{Conv2} are active; the total number of parameters of \texttt{Conv1} and \texttt{Conv2} after compression is hence reduced to a fraction $\kappa$ of the original number. We also assign a scale $\lambda_j^\texttt{fc}$ to each input node $j$ of the final fully-connected layer. This enables to compress the full model and not only the convolutional layers. Both convolutional layers have $512$ filters. The results are reported in \cref{tab:cifar10_CNN} and  \cref{fig:CNN_truncation}. We can see that the conclusion of the FFNN setting still hold in this CNN setting: using the MoGP framework as a regularisation during training leads to more compressible models. We believe that similar results would hold for more complex architectures such as ResNets, but leave this open question for future work.
	
	\begin{table}[h]
\begin{center}
\begin{tabular}{lrrr}
\toprule
\pbox{4cm}{Truncation\\ (i.e., $1-\kappa$)} &  Deterministic &  Horseshoe &  Gen. BFRY \\
\midrule
0$\%$         &   70.58 $(\pm 0.11)$&      69.66 $(\pm 0.12)$ &        69.58 $(\pm 0.24)$\\
10$\%$       &   68.26 $(\pm 0.33)$&      69.52 $(\pm 0.20)$&        69.52 $(\pm 0.34)$ \\
20$\%$       &   66.13 $(\pm 0.68)$&      68.81 $(\pm 0.28)$&        68.36 $(\pm 0.46)$\\
50$\%$       &   52.15 $(\pm 2.9)$&      60.95 $(\pm 1.8)$&        59.35 $(\pm 0.57)$\\
80$\%$       &   29.10 $(\pm 1.0)$&      44.34 $(\pm 5.3)$&        40.52 $(\pm 2.7)$\\
90$\%$       &   19.25 $(\pm 1.6)$&      32.60 $(\pm 3.46)$&        29.23 $(\pm 2.7)$\\
\bottomrule
\end{tabular}
\end{center}
\caption{Classification accuracy on Cifar10 
under various truncation ratio for a CNN model.}\label{tab:cifar10_CNN}
\end{table}

\begin{figure}
		\centering
		\includegraphics[width=0.55\linewidth]{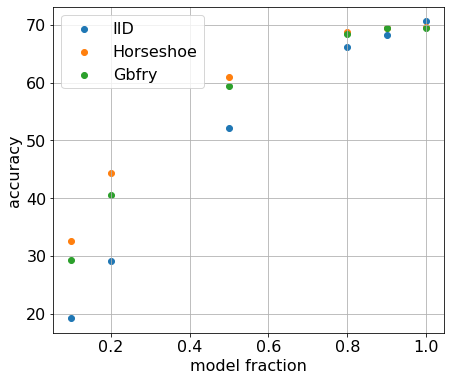}
		\caption{Classification accuracy on Cifar10 with a CNN model.}
		\label{fig:CNN_truncation}
\end{figure}

\subsection{Further Results from MNIST and Fashion-MNIST}

\Cref{fig:bayes_neurons} shows the top-5 activating images for each of the top-8 activating neurons in the deterministic, the horseshoe and the generalised BFRY cases in our full Bayesian experiments.
	
	\begin{figure}
		\centering
		\includegraphics[width=0.3\linewidth]{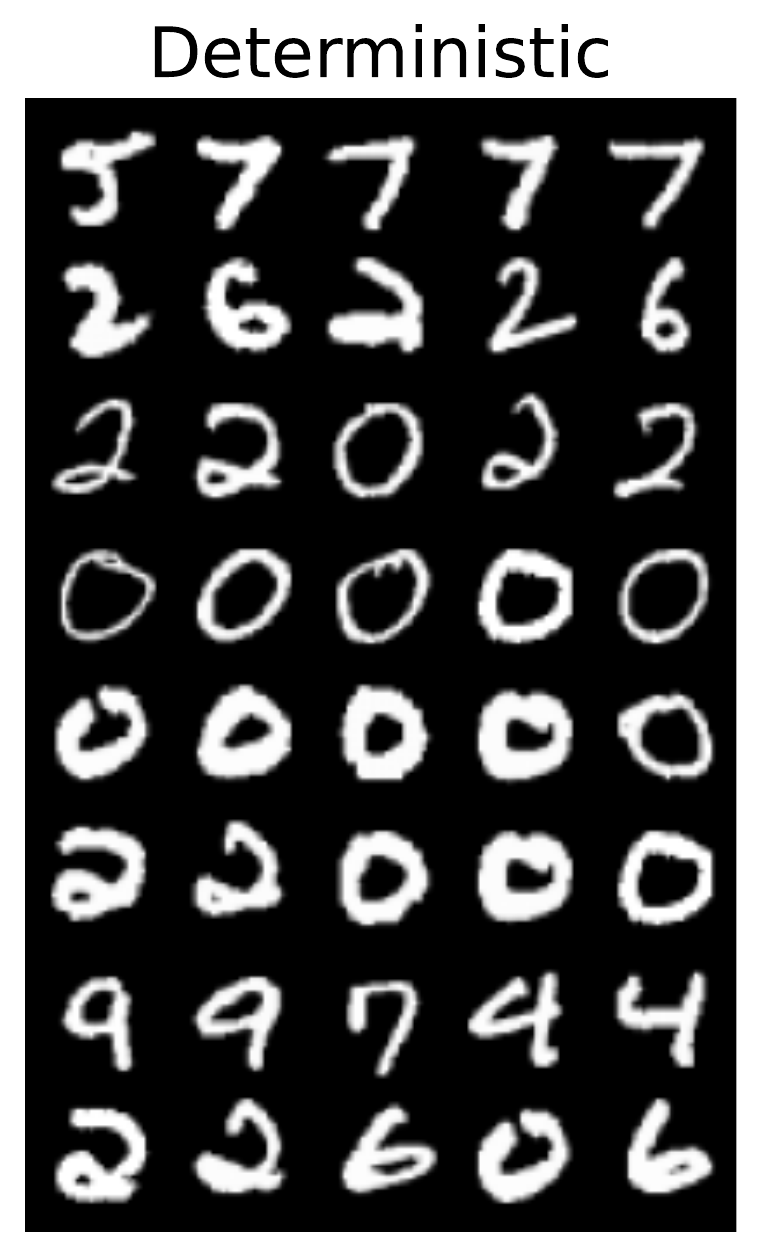}
		\includegraphics[width=0.3\linewidth]{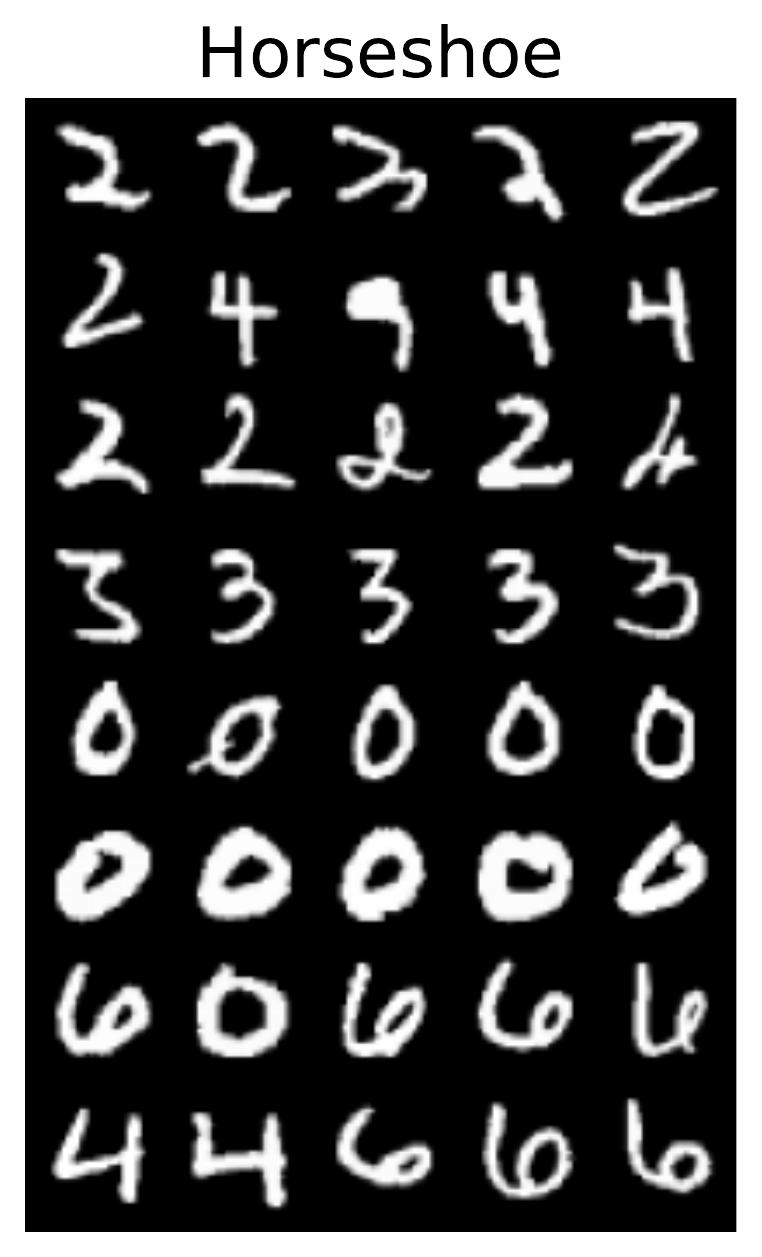}
		\includegraphics[width=0.3\linewidth]{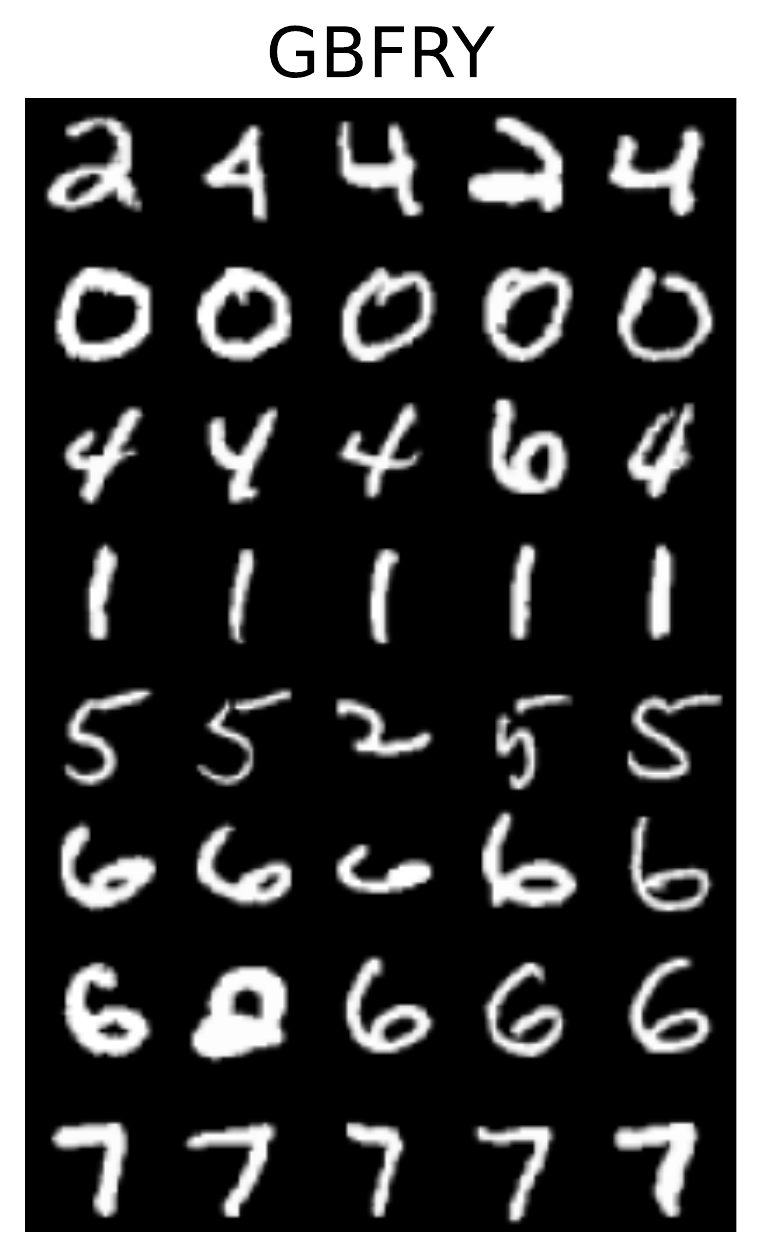}
		\includegraphics[width=0.3\linewidth]{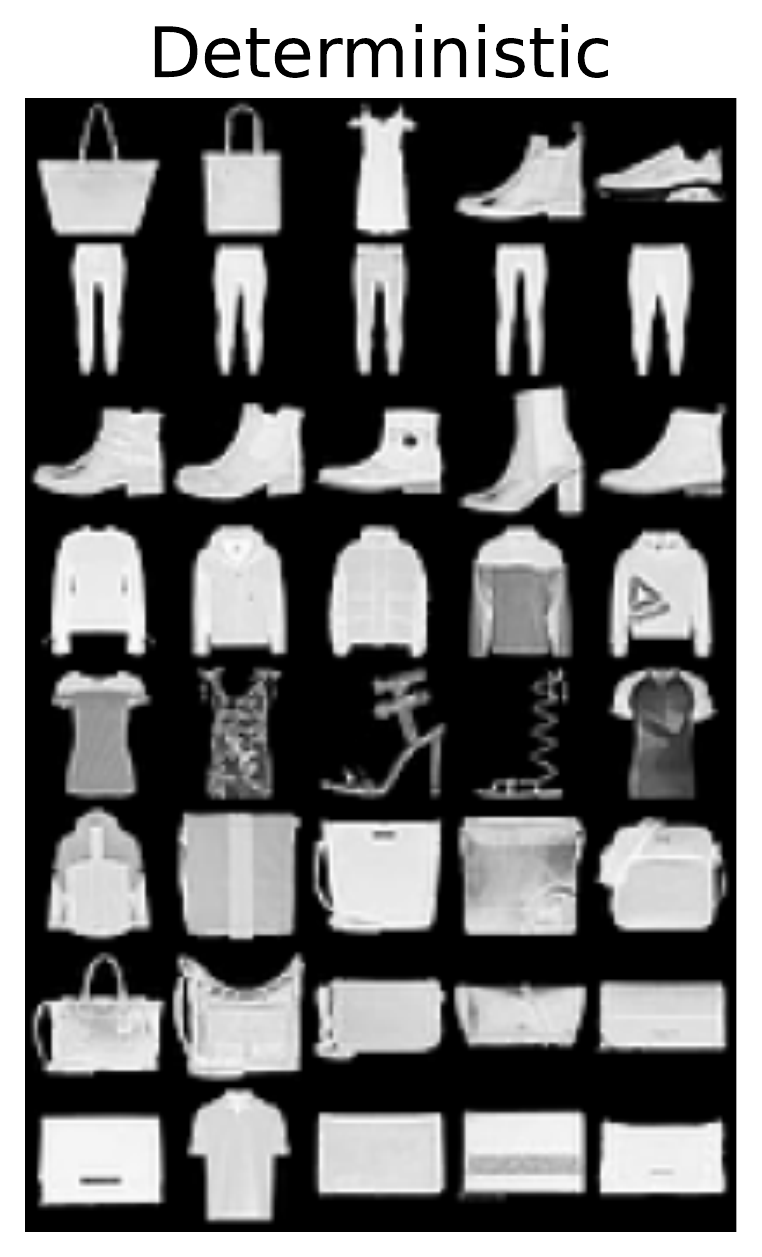}
		\includegraphics[width=0.3\linewidth]{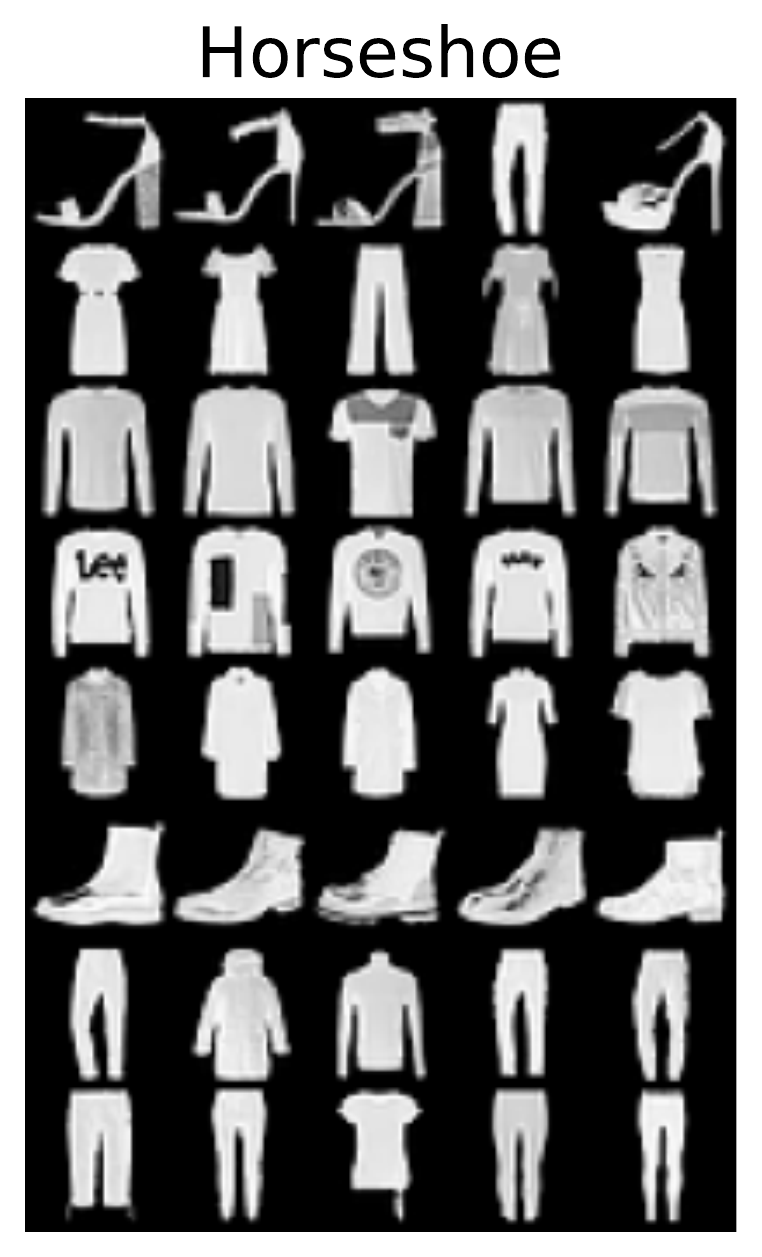}
		\includegraphics[width=0.3\linewidth]{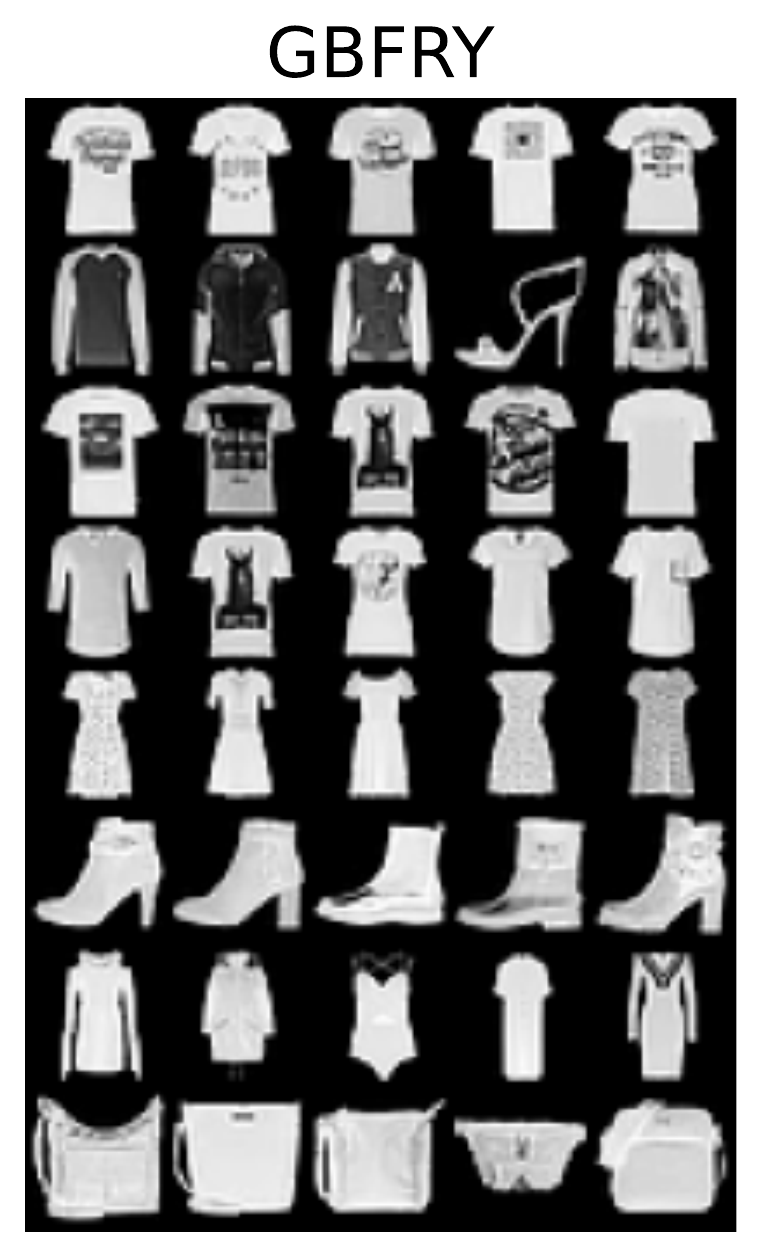}
		\caption{Top activating neurons and corresponding top activationg images. MNIST (left column) and Fashion MNIST (right column). Rows are neurons and columns are corresponding top activating images. GBFRY represents generalised BFRY.}
		\label{fig:bayes_neurons}
	\end{figure}

\vskip 0.2in

\bibliography{sparsity}

\end{document}